\pgfplotsset{compat = newest}
\newcolumntype{L}[1]{>{\raggedright\let\newline\\\arraybackslash\hspace{0pt}}m{#1}}
\newcolumntype{C}[1]{>{\centering\let\newline\\\arraybackslash\hspace{0pt}}m{#1}}
\newcolumntype{R}[1]{>{\raggedleft\let\newline\\\arraybackslash\hspace{0pt}}m{#1}}
\definecolor{babyblue}{rgb}{0.54, 0.81, 0.94}
\definecolor{bananamania}{rgb}{0.98, 0.91, 0.71}
\definecolor{darkpastelgreen}{rgb}{0.01, 0.75, 0.24}
\definecolor{darkcandyapplered}{rgb}{0.64, 0.0, 0.0}
\definecolor{darkred}{RGB}{140,0,0}
\newcommand{\oful}{{\small\textsc{OFUL}}\xspace}
\newcommand{\linucb}{{\small\textsc{LinUCB}}\xspace}
\newcommand{\hls}{{\small\textsc{HLS}}\xspace}
\newcommand{\cmb}{{\small\textsc{CMB}}\xspace}
\newcommand{\wys}{{\small\textsc{WYS}}\xspace}
\newcommand{\bbk}{{\small\textsc{BBK}}\xspace}
\newcommand{\A}{\mathcal{A}}
\newcommand{\X}{\mathcal{X}}
\newcommand{\transp}{\mathsf{T}}
\renewcommand{\Re}{\mathbb{R}}
\newcommand{\algo}{\textsc{LEADER}\xspace}
\newcommand{\regbal}{\textsc{RegBal}\xspace}
\newcommand{\regbalelim}{\textsc{RegBalElim}\xspace}
\newcommand{\corral}{\textsc{Corral}\xspace}
\newcommand{\expthree}{\textsc{EXP3.P}\xspace}
\newcommand{\expfour}{\textsc{EXP4.IX}\xspace}
\newcommand{\argmin}{\operatornamewithlimits{argmin}}
\newcommand{\argmax}{\operatornamewithlimits{argmax}}
\newtheorem{definition}{Definition}
\newtheorem{assumption}{Assumption}
\newtheorem{proposition}{Proposition}
\newtheorem{corollary}{Corollary}
\newtheorem{theorem}{Theorem}
\newtheorem{lemma}{Lemma}
\newtheorem*{theorem*}{Theorem}
\newtheorem*{proposition*}{Proposition}
\newtheorem*{corollary*}{Corollary}
\title{Leveraging Good Representations in Linear Contextual Bandits}
\author[1]{Matteo Papini\thanks{Work done when Matteo Papini was with Facebook AI Research.}}
\author[1]{Andrea Tirinzoni}
\author[1]{Marcello Restelli}
\author[2]{Alessandro Lazaric}
\author[2]{Matteo Pirotta}
\affil[1]{Politecnico di Milano}
\affil[2]{Facebook AI Research}
\date{}
\begin{document}

\doparttoc 
\faketableofcontents 

\maketitle

\begin{abstract}
	The linear contextual bandit literature is mostly focused on the design of efficient learning algorithms for a given representation.
	However, a contextual bandit problem may admit multiple linear representations, each one with different characteristics that directly impact the regret of the learning algorithm. In particular, recent works showed that there exist ``good'' representations for which constant problem-dependent regret can be achieved.
	In this paper, we first provide a systematic analysis of the different definitions of ``good'' representations proposed in the literature. We then propose a novel selection algorithm able to adapt to the best representation in a set of $M$ candidates. We show that the regret is indeed never worse than the regret obtained by running \textsc{LinUCB} on the best representation (up to a $\ln M$ factor). As a result, our algorithm achieves constant regret whenever a ``good'' representation is available in the set. Furthermore, we show that the algorithm may still achieve constant regret by implicitly constructing a ``good'' representation, even when none of the initial representations is ``good''. Finally, we empirically validate our theoretical findings in a number of standard contextual bandit problems.
\end{abstract}


\vspace{-0.2in}
\section{Introduction}
\vspace{-0.03in}
The stochastic contextual bandit is a general framework to formalize sequential decision-making problems in which at each step the learner observes a context drawn from a fixed distribution, it plays an action, and it receives a noisy reward. The goal of the learner is to maximize the reward accumulated over $n$ rounds, and the performance is typically measured by the regret w.r.t.\ playing the optimal action in each context.
This paradigm has found application in a large range of domains, including recommendation systems, online advertising, and clinical trials~\citep[e.g.,][]{bouneffouf2019a-survey}. 
Linear contextual bandit~\citep{lattimore2020bandit} is one of the most studied instances of contextual bandit due to its efficiency and strong theoretical guarantees. In this setting, the reward for each context $x$ and action $a$ is assumed to be representable as the linear combination between $d$-dimensional features $\phi(x,a)\in \Re^d$ and an unknown parameter $\theta^\star \in \Re^d$. In this case, we refer to $\phi$ as a realizable representation.
Algorithms based on the optimism-in-the-face-of-uncertainty principle such as \linucb~\citep{chu2011linucb} and \oful~\citep{abbasi2011improved}, have been proved to achieve minimax regret bound $O\big(Sd\sqrt{n} \ln(nL)\big)$ and problem-dependent regret $O\big(\frac{S^2 d^2}{\Delta} \ln^2 (nL) \big)$, where $\Delta$ is the minimum gap between the reward of the best and second-best action across contexts, and $L$ and $S$ are upper bounds to the $\ell_2$-norm of the features $\phi$ and $\theta^{\star}$, respectively.

Unfortunately, the dimension $d$, and the norm upper bounds $L$ and $S$, are not the only characteristics of a representation to have an effect on the regret and existing bounds may fail at capturing the impact of the context-action features on the performance of the algorithm. In fact, as illustrated in Fig.~\ref{fig:example_intro}, running \linucb with different realizable representations with same parameters $d$ and $S$ may lead to significantly different performance. Notably, there are ``good'' representations for which \linucb achieves \textit{constant} regret, i.e., not scaling with the horizon $n$. Recent works identified different conditions on the representation that can be exploited to achieve constant regret for \linucb~\citep{hao2020adaptive,wu2020stochastic}. Similar conditions have also been leveraged to prove other interesting learning properties, such as sub-linear regret for greedy algorithms~\citep{bastani2020mostly}, or regret guarantees for model selection between linear and multi-arm representations~\citep{chatterji2020osom,ghosh2020problem}.
While all these conditions, often referred to as \textit{diversity conditions}, depend on how certain context-arm features span the full $\Re^d$ space, there is no systematic analysis of their connections and of which ones can be leveraged to achieve constant regret in linear contextual bandits.

In this paper, we further investigate the concept of ``good'' representations in linear bandit and we provide the following contributions: \textbf{1)} We review the diversity conditions available in the literature, clarify their relationships, and discuss how they are used. We then focus on our primary goal, which is to characterize the assumptions needed to achieve constant regret for \linucb. \textbf{2)} We introduce a novel algorithm that effectively selects the best representation in a given set, thus achieving constant regret whenever at least one ``good'' representation is provided. \textbf{3)} Furthermore, we show that, in certain problems, the algorithm is able to combine given representations to implicitly form a ``good'' one, 
thus achieving constant problem-dependent regret even when running \linucb on any of the representations would not. \textbf{4)} Finally, we empirically validate our theoretical findings on a number of contextual bandit problems.

\begin{figure}[t]
\begin{center}	
	\includegraphics{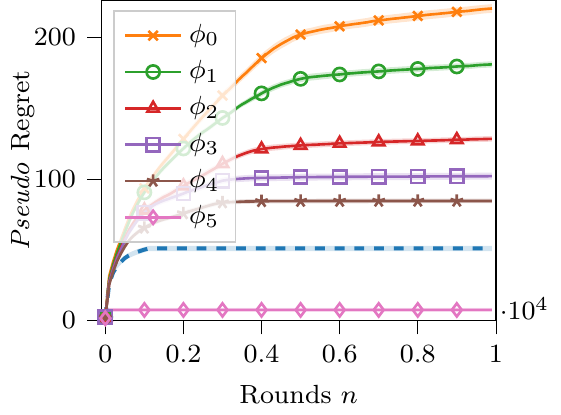}
	\caption{Regret of \linucb with different \emph{realizable} representations with same dimension $d$ and parameter bound $S$. The dashed blue line is \algo, our proposed representation selection algorithm.
	Details in App.~\ref{app.exp.toy}.
	}	\label{fig:example_intro}
\end{center}
	\vspace{-0.2in}
\end{figure}

\textbf{Related work.}
The problem of selecting the best representation in a given set can be seen as a specific instance of the problem of \textit{model selection} in bandits. In model selection, the objective is to choose the best candidate in a set of \textit{base learning algorithms}. 
At each step, a \textit{master algorithm} is responsible for selecting a base algorithm, which in turn prescribes the action to play and the reward is then provided as feedback to the base algorithms. 
Examples of model selection methods include adversarial masters --e.g., \textsc{EXP4}~\citep{auer2002nonstochastic,maillardM11} and \corral~\citep{agarwal2017corral,pacchiano2020stochcorral}-- and stochastic masters~\citep{abbasiyadkori2020regret,lee2020online,bibaut2020rateadaptive,pacchiano2020regret}. For a broader discussion refer to App.~\ref{app:relatedwork} or~\citep[Sec. 2]{pacchiano2020regret}.
Most of these algorithms achieve the regret of the best base algorithm up to a polynomial dependence on the number $M$ of base algorithms~\citep{agarwal2017corral}. While existing model selection methods are general and can be applied to any type of base algorithms,~\footnote{Most of existing methods only require prior knowledge of the regret of the optimal base algorithm or a bound on the regret of all base algorithms. \corral also requires the base algorithms to satisfy certain stability conditions.} they may not be effective in problems with a specific structure.

An alternative approach is to design the master algorithm for a specific category of base algorithms. An instance of this case is the representation-selection problem, where the base algorithms only differ by the representation used to estimate the reward. \citet{foster2019nested} and ~\citet{ghosh2020problem} consider a set of nested representations, where the best representation is the one with the smallest dimensionality for which the reward is realizable. 
Finally, \citet{chatterji2020osom} focus on the problem of selecting between a linear and a multi-armed bandit representation. In this paper, we consider an alternative representation-selection problem in linear contextual bandits, where the objective is to exploit constant-regret ``good'' representations.
Differently from our work, \citet{lattimore2020learning} say that a linear representation is ``good'' if it has a low \emph{misspecification} (i.e., it represents the reward up to a small approximation error), while we focus on \emph{realizable} representations for which \linucb achieves constant-regret.


\vspace{-0.1in}
\section{Preliminaries}\label{sec:preliminaries}
\vspace{-0.03in}
We consider the stochastic contextual bandit problem (\emph{contextual problem} for short) with context space $\X$ and finite action set $\A = [K] = \{1, \ldots, K\}$.
At each round $t \geq 1$, the learner observes a context $x_t$ sampled i.i.d.\ from a distribution $\rho$ over $\X$, it selects an arm $a_t \in [K]$ and it receives a reward $y_t = \mu(x_t,a_t) + \eta_t$ where $\eta_t$ is a $\sigma$-subgaussian noise. The learner's objective is to minimize the pseudo-regret $R_n = \sum_{t=1}^n \mu^\star(x_t) - \mu(x_t, a_t)$ for any $n >0$,
where $\mu^\star(x_t) := \max_{a \in [K]} \mu(x_t,a)$. We define the minimum gap as $\Delta = \inf_{x \in \X: \rho(x) > 0,a \in [K], \Delta(x,a)>0} \{\Delta(x,a)\}$ where $\Delta(x,a) = \mu^\star(x) - \mu(x,a)$.
A \emph{realizable} $d_\phi$-dimensional linear representation is a feature map $\phi:\X\times[K]\to\Re^{d_\phi}$ for which there exists an unknown parameter vector $\theta_\phi^\star\in\Re^{d_\phi}$ such that $\mu(x,a) = \langle \phi(x,a), \theta^\star_\phi \rangle$. When a realizable linear representation is available, the problem is called (stochastic) linear contextual bandit and can be solved using, among others, optimistic algorithms like \linucb\citep{chu2011linucb} or \oful~\citep{abbasi2011improved}.

Given a realizable representation $\phi$, at each round $t$, \linucb builds an estimate $\theta_{t\phi}$ of $\theta^\star_{\phi}$ by ridge regression using the observed data. Denote by $V_{t\phi} = \lambda I_{d_\phi} + \sum_{k=1}^{t-1} \phi(x_k,a_k) \phi(x_k,a_k)^\transp$ the $(\lambda > 0)$-regularized design matrix at round $t$, then $\theta_{t\phi} = V_{t\phi}^{-1} \sum_{k=1}^{t-1} \phi(x_k,a_k) y_k$.
Assuming that $\|\theta^\star_{\phi}\|_2\leq S_{\phi}$ and $\sup_{x,a} \|\phi(x,a)\|_2 \leq L_{\phi}$, \linucb builds a confidence ellipsoid $\mathcal{C}_{t\phi}(\delta) = \big\{ \theta \in \Re^{d_{\phi}} : \big\|\theta_{t\phi} - \theta\big\|_{V_{t\phi}} \leq \beta_{t\phi}(\delta) \big\}$. As shown in~\citep[Thm. 1]{abbasi2011improved}, when
\begin{equation*}\label{eq:linucb.beta}
    \beta_{t\phi}(\delta) := \sigma\sqrt{2\ln\left(\frac{\det(V_{t\phi})^{1/2}\det(\lambda I_{d_{\phi}})^{-1/2}}{\delta}\right)} + \sqrt{\lambda} S_{\phi},
\end{equation*}
then $\mathbb{P}(\forall t\geq 1, \theta^\star_{\phi} \in \mathcal{C}_{t\phi}(\delta)) \geq 1-\delta$. At each step $t$, \linucb plays the action with the highest upper-confidence bound $a_t = \argmax_{a \in [K]}\max_{\theta \in \mathcal{C}_{t\phi}(\delta)} \langle \phi(x_t,a), \theta \rangle$,
and it is shown to achieve a regret bounded as reported in the following proposition.

\begin{proposition}[\citealp{abbasi2011improved}, Thm. 3, 4]\label{prop:oful.log.regret}
For any linear contextual bandit problem with $d_{\phi}$-dimensional features, $\sup_{x,a}\|\phi(x,a)\|_2\leq L_{\phi}$, an unknown parameter vector $\|\theta^\star_{\phi}\|_2 \leq S_{\phi}$, with probability at least $1-\delta$, \linucb suffers regret
%
	$R_n = O\big(S_{\phi}d_{\phi}\sqrt{n} \ln(nL_{\phi}/\delta)\big)$.
%
Furthermore, if the problem has a minimum gap $\Delta > 0$, then the regret is bounded as\footnote{The logarithmic bound reported in Prop.~\ref{prop:oful.log.regret} is slightly different than the one in~\citep{abbasi2011improved} since we do not assume that the optimal feature is unique.}
%
	$R_n = O\bigg(\frac{S_{\phi}^2 d^2_{\phi}}{\Delta} \ln^2 (nL_{\phi}/\delta) \bigg)$.
%
\end{proposition}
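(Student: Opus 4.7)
The plan is to follow the standard OFUL analysis of \citet{abbasi2011improved}, which proceeds in three modular steps: a high-probability confidence guarantee, a per-round regret bound via optimism, and a summation argument that uses the elliptical potential lemma. Throughout I will write $\phi_t = \phi(x_t, a_t)$ and drop the $\phi$ subscript where convenient. First, the self-normalized concentration bound (Theorem 1 in the cited paper) gives $\Prob(\forall t \geq 1,\ \theta^\star_\phi \in \mathcal{C}_{t\phi}(\delta)) \geq 1-\delta$, so from here on I condition on this event. Second, by the optimistic action rule, there exists $\wt\theta_t \in \mathcal{C}_{t\phi}(\delta)$ with $\langle \phi_t, \wt\theta_t \rangle \geq \langle \phi(x_t, a^\star_t), \theta^\star_\phi\rangle$, where $a^\star_t$ is any optimal action in $x_t$. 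Using the ellipsoid width and Cauchy--Schwarz in the $V_{t\phi}$-norm, the per-round regret satisfies
\begin{equation*}
  r_t := \mu^\star(x_t) - \mu(x_t, a_t) \;\leq\; 2\beta_{t\phi}(\delta)\,\min\bigl\{1,\, \|\phi_t\|_{V_{t\phi}^{-1}}\bigr\},
\end{equation*}
where the minimum with $1$ comes from the crude bound $r_t \leq 2 L_\phi S_\phi$ (absorbed in constants if one rescales, or one can use $\min\{1, \cdot\}$ directly). Simplifying $\beta_{t\phi}(\delta)$ via the determinant--trace inequality $\det(V_{t\phi}) \leq (\lambda + nL_\phi^2/d_\phi)^{d_\phi}$ yields $\beta_{t\phi}(\delta) = O\bigl(\sigma\sqrt{d_\phi\ln(nL_\phi/\delta)} + \sqrt{\lambda}S_\phi\bigr)$.

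For the $\sqrt{n}$ bound, I apply Cauchy--Schwarz once more:
\begin{equation*}
  R_n \;\leq\; \sqrt{n \sum_{t=1}^n r_t^2} \;\leq\; 2\beta_{n\phi}(\delta)\sqrt{n \sum_{t=1}^n \min\bigl\{1,\|\phi_t\|_{V_{t\phi}^{-1}}^2\bigr\}}.
\end{equation*}
The sum is controlled by the elliptical potential lemma, $\sum_{t=1}^n \min\{1, \|\phi_t\|_{V_{t\phi}^{-1}}^2\} \leq 2 d_\phi \ln(1 + nL_\phi^2/(\lambda d_\phi))$, and plugging the bound on $\beta_{n\phi}(\delta)$ gives the claimed $O\bigl(S_\phi d_\phi \sqrt{n}\ln(nL_\phi/\delta)\bigr)$ rate.

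For the gap-dependent bound, I use the trick that on the good event either $r_t = 0$ (when $a_t$ is itself optimal) or $r_t \geq \Delta$ by definition of the minimum positive gap; in both cases $r_t \leq r_t^2/\Delta$. Summing,
\begin{equation*}
  R_n \;\leq\; \frac{1}{\Delta}\sum_{t=1}^n r_t^2 \;\leq\; \frac{4\beta_{n\phi}(\delta)^2}{\Delta}\sum_{t=1}^n \min\bigl\{1,\|\phi_t\|_{V_{t\phi}^{-1}}^2\bigr\} \;\leq\; \frac{8\beta_{n\phi}(\delta)^2 d_\phi}{\Delta}\ln\!\Bigl(1 + \tfrac{nL_\phi^2}{\lambda d_\phi}\Bigr),
\end{equation*}
and substituting $\beta_{n\phi}(\delta)^2 = O\bigl(d_\phi\ln(nL_\phi/\delta) + S_\phi^2\bigr)$ yields the $O\bigl(S_\phi^2 d_\phi^2\ln^2(nL_\phi/\delta)/\Delta\bigr)$ rate after collecting constants.

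The only non-routine point, flagged in the footnote, is that we do not require the optimal action to be unique in each context. This matters because the standard proof sometimes invokes a per-context gap between the best and second-best action; here I instead use the global definition $\Delta = \inf\{\Delta(x,a) : \Delta(x,a) > 0\}$, so the case $r_t = 0$ is handled trivially and the inequality $r_t \leq r_t^2/\Delta$ applies on the complementary set without any appeal to uniqueness. All other steps (self-normalized concentration, optimism, elliptical potential) are representation-agnostic and transfer verbatim, which is the main reason this proposition can be quoted essentially unchanged from \citet{abbasi2011improved}.
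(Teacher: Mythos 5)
Your proposal is correct and follows essentially the same route as the paper's own proof: optimism plus the confidence ellipsoid to get the per-round bound $r_t \leq 2\beta_{t\phi}(\delta)\min\{c,\|\phi_t\|_{V_{t\phi}^{-1}}\}$ (the paper clips at $\Delta_{\max}$ where you clip at a constant, an immaterial difference), the elliptical potential lemma to control the sum of squares, Cauchy--Schwarz for the $\sqrt{n}$ rate, and the inequality $r_t \leq r_t^2/\Delta$ for the gap-dependent rate. Your handling of non-unique optimal actions via the global gap definition is exactly the paper's point in the footnote, and the paper likewise accepts the resulting extra factor of $d_\phi$ relative to Theorem 5 of \citet{abbasi2011improved}.
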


\vspace{-0.1in}
In the rest of the paper, we assume w.l.o.g. that all terms $\lambda$, $\Delta_{\max}=\max_{x,a} \Delta(x,a)$, $S_\phi$, $\sigma$ are larger than $1$ to simplify the expression of the bounds.

\vspace{-0.1in}
\section{Diversity Conditions}
\vspace{-0.03in}
Several assumptions, usually referred to as \textit{diversity conditions}, 
have been proposed to define linear bandit problems with specific properties that can be leveraged to derive improved learning results. While only a few of them were actually leveraged to derive constant regret guarantees for \linucb (others have been used to prove e.g., sub-linear regret for the greedy algorithm, or regret guarantees for model selection algorithms), they all rely on very similar conditions on how certain context-action features span the full $\Re^{d_\phi}$ space.
In this section, we provide a thorough review of these assumptions, their connections, and how they are used in the literature. As diversity conditions are getting more widely used in bandit literature, we believe this review may be of independent interest. Sect.~\ref{sec:hls} will then specifically focus on the notion of \emph{good} representation for \linucb.


We first introduce additional notation. For a realizable representation $\phi$, let $\phi^\star(x):= \phi(x,a^\star_x)$, where $a^\star_x \in \argmax_{a \in [K]}\mu(x,a)$ is an optimal action, be the vector of \emph{optimal features} for context $x$. In the following we make the assumption that $\phi^\star(x)$ is unique. Also, let $\X^\star(a) =\{x \in \X : \mu(x,a) = \mu^\star(x)\}$ denote the set of contexts where $a$ is optimal. Finally, for any matrix $A$, we denote by $\lambda_{\min}(A)$ its minimum eigenvalue.
For any contextual problem with reward $\mu$ and context distribution $\rho$, the diversity conditions introduced in the literature are summarized in Tab.~\ref{tab:diversity.conditions} together with how they were leveraged to obtain regret bounds in different settings.\footnote{In some cases, we adapted conditions originally defined in the disjoint-parameter setting, where features only depend on the context (i.e., $\phi(x)$) and the unknown parameter $\theta^\star_a$ is different for each action $a$, to the shared-parameter setting (i.e., where features are functions of both contexts and actions) introduced in Sect.~\ref{sec:preliminaries}.}

\begin{figure*}[t]
    \begin{minipage}{0.75\textwidth}
    \centering
\renewcommand{\arraystretch}{1.2}
\begin{footnotesize}
\begin{tabular}{|C{1.5cm}|C{7.4cm}|C{2.4cm}|}
\hline
Name & Definition & Application \\
\hline
\hline
Non-redundant & $\lambda_{\min}\Big(1/K\sum_{a\in[K]}\mathbb{E}_{x\sim\rho} \big[\phi(x,a) \phi(x,a)^\transp \big]\Big)>0$ & \\
\hline
\cmb & $\forall a$, $\lambda_{\min}\Big(\mathbb{E}_{x\sim\rho} \big[\phi(x,a) \phi(x,a)^\transp \big]\Big)>0$ & \textit{Model selection}\\
\hline
\bbk & $\forall a, \boldsymbol{u}\in\Reals^d$, $\lambda_{\min}\Big(\EV_{x}\left[\phi(x,a)\phi(x,a)^\transp\indi{\phi(x,a)^\transp \boldsymbol{u}\ge 0}\right]\Big)>0$ & \textit{Logarithmic regret for greedy} \\ 
\hline
\hls & $\lambda_{\min}\Big(\mathbb{E}_{x\sim\rho} \big[\phi^\star(x) \phi^\star(x)^\transp \big]\Big)>0$ & \textit{Constant regret for \textsc{LinUCB}} \\
\hline
\wys & $\forall a$, $\lambda_{\min}\Big(\mathbb{E}_{x\sim\rho} \big[\phi(x,a) \phi(x,a)^\transp \indi{x\in\X^\star(a)} \big]\Big)>0$ & \textit{Constant regret for \textsc{LinUCB}}\\
\hline
\hline
\end{tabular}
\end{footnotesize}
\caption{Diversity conditions proposed in the literature adapted to the shared-parameter setting. The names refer to the authors who first introduced similar conditions.}\label{tab:diversity.conditions}
\end{minipage}
~~
\begin{minipage}{0.2\textwidth}
    \centering
    \begin{tikzpicture}
        \draw[draw=babyblue!70!blue, double, fill=babyblue, fill opacity=0.4] (0,0.4) ellipse (40pt and 40pt);
        \draw[draw=orange, dashed, fill=bananamania, fill opacity=0.5] (0, -.7) ellipse (40pt and 40pt);
        \draw[draw=Green, dotted, fill=darkpastelgreen, fill opacity=0.2] (0,-0.6) ellipse (29pt and 31pt);
        \node[blue] (hls) at (0,1.2) {\hls};
        \node[Orange] (cmb) at (0,-1.9) {\cmb};
        \node[Green] (bbk) at (0, -1.3) {\bbk};
        \node[draw=darkcandyapplered,fill=darkcandyapplered,fill opacity=0.2, text opacity=1, text=darkcandyapplered,circle] (wys) at (0,-0.25) {\wys};
        \draw[draw=gray] (-1.6,1.9) rectangle (1.6,-2.3);
        \node[anchor=west,gray] at (-1.6,2.1) {\small Non-redundant};
    \end{tikzpicture}
    \caption{Categorization of diversity conditions.}
    \label{fig:rep.asm.classification}
\end{minipage}
\end{figure*}

We first notice that all conditions refer to the smallest eigenvalue of a design matrix constructed on specific context-action features. In other words, diversity conditions require certain features to span the full $\Re^{d_\phi}$ space. The non-redundancy condition is a common technical assumption~\citep[e.g.,][]{foster2019nested} and it simply defines a problem whose dimensionality cannot be reduced without losing information.  
Assuming the context distribution $\rho$ is full support, \bbk and \cmb are structural properties of the representation that are independent from the reward. For example, \bbk requires that, for each action, there must be feature vectors lying in all orthants of $\Re^{d_\phi}$. In the case of finite contexts, this implies there must be at least $2^{d_\phi}$ contexts.
\wys and \hls involve the notion of reward optimality. In particular, \wys requires that all actions are optimal for at least a context (in the continuous case, for a non-negligible set of contexts), while \hls only focuses on optimal actions.

We now review how these conditions (or variations thereof) were applied in the literature.
\cmb is a rather strong condition that requires the features associated with each individual action to span the whole $\Re^{d_\phi}$ space. \citet{chatterji2020osom} leverage a \cmb-like assumption to prove regret bounds for \textsc{OSOM}, a model-selection algorithm that unifies multi-armed and linear contextual bandits. More precisely, they consider a variation of \cmb, where the context distribution induces stochastic feature vectors for each action that are independent and centered. The same condition was adopted by~\citet{ghosh2020problem} to study representation-selection problems and derive algorithms able to adapt to the (unknown) norm of $\theta_{\phi}^\star$ or select the smallest realizable representation in a set of nested representations.  
\citet[][Assumption 3]{bastani2020mostly} introduced a condition similar to \bbk for the \emph{disjoint-parameter} case. In their setting, they prove that a non-explorative greedy algorithm achieves $O(\ln(n))$ problem-dependent regret in linear contextual bandits (with $2$ actions).\footnote{Whether this is enough for the optimality of the greedy algorithm in the shared-parameter setting is an interesting problem, but it is beyond the scope of this paper.} 
\citet[][Theorem 3.9]{hao2020adaptive} showed that \hls representations can be leveraged to prove constant problem-dependent regret for \linucb in the shared-parameter case. 
Concurrently, \citet{wu2020stochastic} showed that, under \wys, \linucb achieves constant expected regret in the disjoint-parameter case. A \wys-like condition was also used  by~\citet[][Assumption 4]{bastani2020mostly} to extend the result of sublinear regret for the greedy algorithm to more than two actions.
The relationship between all these conditions is derived in the following lemma.
\begin{lemma}\label{prop:rep.asm.properties}
For any contextual problem with reward $\mu$ and context distribution $\rho$, let $\phi$
be a realizable linear representation. The relationship between the diversity conditions in Tab.~\ref{tab:diversity.conditions} is summarized in Fig.~\ref{fig:rep.asm.classification}, where each inclusion is in a strict sense and each intersection is non-empty.
\end{lemma}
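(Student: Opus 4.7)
The plan is to verify each containment drawn in Fig.~\ref{fig:rep.asm.classification} by elementary manipulations of positive-semidefinite (PSD) matrices, and then to certify that each inclusion is strict and each region is non-empty by exhibiting a small collection of explicit examples. The proof is mostly a bookkeeping exercise once the right decomposition is in place.

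For the containments, the workhorse facts are that for symmetric PSD matrices $A, B$, $\lambda_{\min}(A+B) \ge \max\{\lambda_{\min}(A),\lambda_{\min}(B)\}$, and that $B-A \succeq 0$ implies $\lambda_{\min}(B) \ge \lambda_{\min}(A)$. Under uniqueness of $\phi^\star(x)$, the sets $\{\X^\star(a)\}_{a\in[K]}$ partition $\X$, so
\begin{equation*}
\EV_{x\sim\rho}\!\left[\phi^\star(x)\phi^\star(x)^\transp\right] \;=\; \sum_{a\in[K]} \EV_{x\sim\rho}\!\left[\phi(x,a)\phi(x,a)^\transp \indi{x\in\X^\star(a)}\right].
\end{equation*}
From this identity \wys immediately implies \hls (each summand has positive $\lambda_{\min}$); comparing each summand to its unconditional counterpart also gives \wys $\Rightarrow$ \cmb. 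Similarly \hls $\Rightarrow$ Non-redundant via $\frac{1}{K}\sum_a \EV[\phi(x,a)\phi(x,a)^\transp] \succeq \frac{1}{K}\EV[\phi^\star(x)\phi^\star(x)^\transp]$, and \cmb $\Rightarrow$ Non-redundant by averaging PSD matrices with positive $\lambda_{\min}$. Finally, \bbk $\Rightarrow$ \cmb follows from the decomposition $\EV[\phi(x,a)\phi(x,a)^\transp] = \EV[\phi\phi^\transp \indi{\phi^\transp u \ge 0}] + \EV[\phi\phi^\transp \indi{\phi^\transp u < 0}]$ for any fixed $u$, the first term of which is PSD with positive $\lambda_{\min}$ under \bbk. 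These arguments cover every arrow depicted in Fig.~\ref{fig:rep.asm.classification}.

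For the strict separations and non-emptiness of each region, I would build a family of small instances with $d_\phi = 2$, $K = 2$, and a finite context set. I would produce: (i) a problem that is Non-redundant but in neither \hls nor \cmb, by making optimal features collinear and some action's covariance rank-deficient; (ii) a problem in \cmb $\setminus$ \bbk by concentrating some action's features in a single half-space, so the \bbk truncation loses a direction; (iii) a problem in \hls $\cap$ \cmb but not \wys, obtained by making one action never optimal so its \wys integrand vanishes, while optimal features and unconditional action covariances remain full rank; and (iv) one problem in \hls $\setminus$ \cmb and another in \cmb $\setminus$ \hls to show these sets are incomparable. In each case the verification amounts to computing two $2\times 2$ determinants and checking full rank of an explicit expectation.

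The main obstacle is not technical depth but consistency: I must ensure every example is \emph{realizable}, i.e., the designated optimal action at each context is genuinely optimal under some $\theta^\star_\phi$ matching the chosen features. A single parametric family of two- or three-context, two-action, two-dimensional instances, tuned by the angle of one feature vector and the sign of the reward parameter, should simultaneously witness all strict inclusions and all non-empty regions in Fig.~\ref{fig:rep.asm.classification}, after which the proof reduces to finite case analysis of $\lambda_{\min}$ of a handful of small matrices.
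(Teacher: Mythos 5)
Your first half (the containments) is sound and essentially matches the paper's argument, which works with $\spann$ of feature sets rather than $\lambda_{\min}$ of second-moment matrices; the two formulations are equivalent since $\lambda_{\min}(\EV[\phi\phi^\transp])>0$ iff no nonzero $v$ has $\phi^\transp v=0$ $\rho$-a.s. One small caveat: the sets $\X^\star(a)$ need not partition $\X$ (a context may have several optimal actions, all sharing the feature vector $\phi^\star(x)$ under the paper's uniqueness assumption), so your identity should be the two-sided bound $\EV[\phi^\star\phi^{\star\transp}] \preceq \sum_a \EV[\phi(x,a)\phi(x,a)^\transp\indi{x\in\X^\star(a)}] \preceq K\,\EV[\phi^\star\phi^{\star\transp}]$; the implication \wys $\Rightarrow$ \hls survives with a factor $K$.

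The genuine gap is in the second half. Your example list omits every region that requires \emph{membership} in \bbk: you never exhibit a representation that \emph{is} \bbk, so you do not establish that \bbk is non-empty (needed for the strictness of $\bbk\subset\cmb$), nor $\bbk\setminus\hls\neq\emptyset$, nor $\hls\cap\bbk\neq\emptyset$, nor $\hls\cap\bbk\nsubseteq\wys$, nor $\bbk\cap\hls\cap\wys\neq\emptyset$, all of which Fig.~\ref{fig:rep.asm.classification} asserts. Worse, these examples cannot come from the ``two- or three-context, two-action, two-dimensional'' family you propose: \bbk in $\Re^2$ requires that for \emph{every} direction $\boldsymbol{u}$ the features of each action lying in the half-space $\{\phi^\transp\boldsymbol{u}\ge 0\}$ still span $\Re^2$, and with only two or three feature vectors per action one can always choose $\boldsymbol{u}$ so that at most one linearly independent vector survives the truncation (this is why the paper notes that \bbk forces at least $2^{d_\phi}$ contexts). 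The paper's constructions for these regions use four contexts with features $\pm e_1,\pm e_2$ per action, plus a separate argument that checking the four diagonal directions $\boldsymbol{u}\in\{[\pm1,\pm1]\}$ suffices. Realizability is the easy part; the missing ingredient is a correct recipe for the \bbk side of the figure.
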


This lemma reveals non-trivial connections between the diversity conditions, better understood through the examples provided in the proof (see App.~\ref{app:rep.structures}). \bbk is indeed stronger than \cmb, and thus it is sufficient for the model selection results by~\citet{chatterji2020osom}. 
By superficially examining their definitions, \cmb may appear stronger than \hls, but the two properties are actually non-comparable, as there are representations that satisfy one condition but not the other. The implications of Fig.~\ref{fig:rep.asm.classification} on constant-regret guarantees are particularly relevant for our purposes.
There are representations that satisfy \bbk or \cmb and are neither \hls nor \wys and thus may not enable constant regret for \linucb. 
We notice that \wys is a stronger condition than \hls. Although \wys may be necessary for \linucb to achieve constant regret in the disjoint-parameter case, \hls is sufficient for the shared-parameter case we consider in this paper. For this reason, in the following section we  adopt \hls to define \emph{good} representations for \linucb and provide a more complete characterization.
\vspace{-0.1in}
\section{\textit{Good} Representations for Constant Regret}\label{sec:hls}
\vspace{-0.03in}
The \hls condition was introduced by~\citet{hao2020adaptive}, who provided a first analysis of its properties. In this section, we complement those results by providing a complete proof of a constant regret bound, a proof of the fact that \hls is actually necessary for constant regret, and a novel characterization of the existence of \hls representations.
In the following we define $\lambda_{\phi,\hls} := \lambda_{\min}\big(\mathbb{E}_{x\sim\rho} \big[\phi^\star(x) \phi^\star(x)^\transp \big]\big)$, which is strictly positive for \hls representations. 

\subsection{Constant Regret Bound}
We begin by deriving a constant problem-dependent regret bound for \linucb under the \hls condition.

\begin{lemma}\label{prop:hls.regret}
    Consider a contextual bandit problem with realizable linear representation $\phi$ satisfying the \hls condition (see Tab.~\ref{tab:diversity.conditions}). Assume $\Delta > 0$, $\max_{x,a}\|\phi(x,a)\|_2 \leq L$ and $\|\theta^\star_\phi\|_2 \leq S$. Then, with probability at least $1-2\delta$, the regret of \oful after $n \geq 1$ steps is at most

	\vspace{-0.2in}
	{\small
	\begin{align*}
		R_n \leq 
		&\frac{32\lambda\Delta_{\max}^2 S_{\phi}^2\sigma^2}{\Delta} 
		\left(2\ln\left(\frac{1}{\delta}\right)
		+d_{\phi} \ln \left( 1+\frac{{\color{darkred}\tau_{\phi}} L_{\phi}^2}{\lambda d_{\phi}} \right) \right)^2,
	\end{align*}
	}

	\vspace{-0.15in}
	where $\Delta_{\max} = \max_{x,a} \Delta(x,a)$ is the maximum gap and
	{\small
	\begin{align*}
		\tau_{\phi} \leq \max\bigg\{
			&\frac{384^2 d_\phi^2L_\phi^2S_\phi^2\sigma^2 \lambda}{\lambda_{\phi,\hls}\Delta^2} \ln^2\left(\frac{64 d_\phi^2L_\phi^3\sigma S_\phi \sqrt{\lambda}}{\sqrt{\lambda_{\phi,\hls}}\Delta\delta} \right),\\
			&~~\frac{768 L_\phi^4}{\lambda_{\phi,\hls}^2}\ln\left(\frac{512 d_\phi L_\phi^4}{\delta \lambda_{\phi,\hls}^2}\right) \bigg\}.
	\end{align*}
	}
\end{lemma}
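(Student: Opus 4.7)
The plan is to combine the standard optimistic regret analysis of \linucb (Prop.~\ref{prop:oful.log.regret}) with a burn-in argument that turns the \hls condition into an eventual lower bound on $\lambda_{\min}(V_{t\phi})$. First I would condition on the good event $\{\forall t \geq 1, \theta^\star_\phi \in \mathcal{C}_{t\phi}(\delta)\}$, which holds with probability at least $1-\delta$ by~\citet[Thm.~1]{abbasi2011improved}. On this event, optimism and the definition of \oful's action selection rule give the instantaneous regret bound $\Delta(x_t,a_t) \leq 2\beta_{t\phi}(\delta)\,\|\phi(x_t,a_t)\|_{V_{t\phi}^{-1}}$; hence whenever $a_t$ is suboptimal, the minimum-gap assumption forces
\[
\|\phi(x_t,a_t)\|_{V_{t\phi}^{-1}} \;\geq\; \frac{\Delta}{2\beta_{t\phi}(\delta)}.
\]

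The crux is to show that, with probability at least $1-\delta$, there exists a deterministic round $\tau_\phi$ after which the above inequality becomes impossible, so that $a_t$ is optimal for every $t > \tau_\phi$. To produce such a $\tau_\phi$, I would split the design matrix into contributions from optimal and suboptimal rounds,
\[
V_{t\phi} = \lambda I + \!\!\sum_{s < t:\, a_s \text{ optimal}}\!\! \phi^\star(x_s)\phi^\star(x_s)^{\transp} + \!\!\sum_{s<t:\, a_s \text{ suboptimal}}\!\! \phi(x_s,a_s)\phi(x_s,a_s)^{\transp},
\]
discard the (positive semidefinite) suboptimal term, and control the first sum via matrix concentration. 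Since the $x_s$ are i.i.d.\ and $\phi^\star(x_s)\phi^\star(x_s)^{\transp} \preceq L_\phi^2 I$ with expectation whose minimum eigenvalue is $\lambda_{\phi,\hls}$, a matrix Chernoff/Bernstein bound implies that, once the number of optimal pulls $N_t^{\mathrm{opt}}$ exceeds $O\!\bigl((L_\phi^4/\lambda_{\phi,\hls}^2)\ln(d_\phi/\delta)\bigr)$, the optimal-pull covariance has minimum eigenvalue at least $N_t^{\mathrm{opt}}\lambda_{\phi,\hls}/2$ --- this is precisely the source of the second term inside the $\max$ defining $\tau_\phi$. To lower bound $N_t^{\mathrm{opt}}$, I would invoke the elliptical-potential-based counting argument underlying Prop.~\ref{prop:oful.log.regret}: each suboptimal pull contributes $\|\phi\|_{V^{-1}}^2 \geq \Delta^2/(4\beta_{t\phi}(\delta)^2)$ to $\sum_s \|\phi\|_{V^{-1}}^2 \leq 2 d_\phi \ln\det(V_{t\phi}/\lambda)$, bounding the number of suboptimal pulls by $O(d_\phi \beta_{t\phi}(\delta)^2/\Delta^2)$, hence $N_t^{\mathrm{opt}} \geq t/2$ once $t$ beats that same quantity --- which yields the first term in $\tau_\phi$.

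Combining the two facts, for all $t \geq \tau_\phi$ we have $\lambda_{\min}(V_{t\phi}) \geq t\,\lambda_{\phi,\hls}/4$, so $\|\phi(x_t,a_t)\|_{V_{t\phi}^{-1}} \leq 2L_\phi/\sqrt{t\lambda_{\phi,\hls}}$, which sinks below $\Delta/(2\beta_{t\phi}(\delta))$ by the very choice of $\tau_\phi$; therefore no suboptimal action is ever played after round $\tau_\phi$ and the total regret equals the regret accrued up to $\tau_\phi$. I would then finish by applying the problem-dependent regret bound of Prop.~\ref{prop:oful.log.regret} with the horizon truncated to $\tau_\phi$, using the explicit form of $\beta_{\tau_\phi}(\delta)^2 \lesssim \sigma^2\bigl(2\ln(1/\delta) + d_\phi \ln(1+\tau_\phi L_\phi^2/(\lambda d_\phi))\bigr) + \lambda S_\phi^2$, so that the displayed expression $\tfrac{32\lambda \Delta_{\max}^2 S_\phi^2 \sigma^2}{\Delta}(\cdots)^2$ drops out. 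The main obstacle is the matrix-concentration step: because the actions $a_s$ are chosen adaptively, one must carefully separate the i.i.d.\ randomness of $x_s$ (which governs the optimal-pull contribution $\phi^\star(x_s)\phi^\star(x_s)^{\transp}$, a function of the context only) from the adaptive randomness in $a_s$, and handle the union bound over rounds so that $\tau_\phi$ is a single deterministic threshold valid with high probability.
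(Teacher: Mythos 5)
Your overall architecture — good event, instantaneous-regret bound, discarding the suboptimal-pull contribution to $V_{t\phi}$, showing $\lambda_{\min}(V_{t\phi})$ grows linearly after a constant burn-in, concluding zero instantaneous regret after $\tau_\phi$, and truncating the horizon in Prop.~\ref{prop:oful.log.regret} — is exactly the paper's (Lemmas~\ref{lem:mineig.basic} and~\ref{lem:zeroregret.basic}). However, your key concentration step has a genuine gap that you flag as ``the main obstacle'' but do not resolve, and as stated it would fail. You propose to apply a matrix Chernoff/Bernstein bound directly to the optimal-pull covariance $\sum_{s<t:\,a_s\text{ optimal}}\phi^\star(x_s)\phi^\star(x_s)^{\transp}$, asserting that its conditional expectation per term has minimum eigenvalue $\lambda_{\phi,\hls}$. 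This is false: the set of rounds on which $a_s$ is optimal is selected adaptively and the event $\{a_s=a^\star_{x_s}\}$ is correlated with $x_s$ itself, so the contexts appearing in that sum are a \emph{biased} sample from $\rho$. Concretely, $\EV\big[\indi{a_s=a^\star_{x_s}}\phi^\star(x_s)\phi^\star(x_s)^{\transp}\,\big|\,\mathcal{F}_{s-1}\big]$ need not have minimum eigenvalue close to $\lambda_{\phi,\hls}$ even when most pulls are optimal; the algorithm could in principle play optimally only on contexts whose optimal features lie in a fixed subspace, leaving the restricted covariance rank-deficient no matter how large $N_t^{\mathrm{opt}}$ is.

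The paper's proof of Lemma~\ref{lem:mineig.basic} sidesteps this by never concentrating over the adaptively selected subset. It writes
\begin{align*}
\sum_{k\le t}\indi{a_k=a^\star_{x_k}}\phi^\star(x_k)\phi^\star(x_k)^{\transp}
= \sum_{k\le t}\phi^\star(x_k)\phi^\star(x_k)^{\transp}
- \sum_{k\le t}\indi{a_k\ne a^\star_{x_k}}\phi^\star(x_k)\phi^\star(x_k)^{\transp},
\end{align*}
bounds the subtracted term deterministically by $g_{t\phi}(\delta)L_\phi^2 I_{d_\phi}$ using the suboptimal-pull count (your elliptical-potential argument supplies this), and applies matrix Azuma only to $\sum_{k\le t}\phi^\star(x_k)\phi^\star(x_k)^{\transp}$, which is a sum over \emph{all} rounds of a function of the context alone and hence a genuine centered (i.i.d.) matrix martingale with increments bounded by $4L_\phi^4 I_{d_\phi}$. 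This yields $\lambda_{\min}(V_{t+1,\phi})\ge \lambda + t\lambda_{\phi,\hls} - 8L_\phi^2\sqrt{t\log(2d_\phi t/\delta)} - L_\phi^2 g_{t\phi}(\delta)$, after which your final steps (solving $t\lambda_{\phi,\hls}> o(t)$ for the explicit $\tau_\phi$, union-bounding the two failure events to get $1-2\delta$, and plugging $\min\{n,\tau_\phi\}$ into the logarithmic regret bound) go through exactly as you describe. Without this add-and-subtract decomposition, the burn-in threshold you derive is not justified.
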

We first notice that $\tau_\phi$ is independent from the horizon $n$, thus making the previous bound a constant only depending on the problem formulation (i.e., gap $\Delta$, norms $L_\phi$ and $S_\phi$) and the value $\lambda_{\phi,\hls}$ which measures ``how much'' the representation $\phi$ satisfies the \hls condition. Furthermore, one can always take the minimum between the constant regret in Lem.~\ref{prop:hls.regret} and any other valid regret bound for \oful (e.g., $O(\log(n)/\Delta))$), which may be tighter for small values of $n$. While Lem.~\ref{prop:hls.regret} provides high-probability guarantees, we can easily derive a constant expected-regret bound by running \linucb with a decreasing schedule for $\delta$ (e.g., $\delta_t \propto 1/t^3$) and with a slightly different proof (see App.~\ref{app:hls.regret} and the proof sketch below).

\textbf{Proof sketch (full proof in App.~\ref{app:hls.regret}).}
Following~\citet{hao2020adaptive}, the idea is to show that the instantaneous regret $r_{t+1} = \langle \theta^\star, \phi^\star(x_{t+1}) - \phi(x_{t+1},a_{t+1}) \rangle$ is zero for sufficiently large (but constant) time $t$. By using the standard regret analysis, we have
\begin{align*}
r_{t+1} \leq 2\beta_{t+1}(\delta)\norm{\phi(x_{t+1},a_{t+1})}_{V_{t+1}^{-1}} \leq \frac{2L\beta_{t+1}(\delta)}{\sqrt{\lambda_{\min}(V_{t+1})}}.
\end{align*}
Given the minimum-gap assumption, a sufficient condition for $r_{t+1} = 0$ is that the previous upper bound is smaller than $\Delta$, which gives $\lambda_{\min}(V_{t+1}) > 4L^2 \beta_{t+1}^2(\delta)/\Delta^2$.
Since $\Delta>0$, the problem-dependent regret bound in Prop.~\ref{prop:oful.log.regret} holds, and the number of pulls to suboptimal arms up to time $t$ is bounded by $g_t(\delta) = O\big((d\ln(t/\delta)/\Delta)^2\big)$. Hence, the optimal arms are pulled linearly often and, by leveraging the \hls assumption, we are able to show that the minimum eigenvalue of the design matrix grows linearly in time as
\begin{align*}
	\lambda_{\min}(V_{t+1})
	&\ge \lambda + t\lambda_{\hls} - 8L^2\sqrt{t\ln\left(\frac{2dt}{\delta}\right)} - L^2g_t(\delta).
\end{align*}
By relating the last two equations, we obtain an inequality of the form $t\lambda_{\hls} - o(t) > o(t)$. If we define $\tau < \infty$ as the smallest (deterministic) time such that this inequality holds, we have that after $\tau$ the immediate regret is zero, thus concluding the proof. Note that, if we wanted to bound the expected regret, we could set $\delta_t \propto 1/t^3$ and the above inequality would still be of the same form (although the resulting $\tau$ would be slightly different).

\paragraph{Comparison with existing bounds.}

\citet[][Theorem 3.9]{hao2020adaptive} prove that \linucb with \hls representations achieves $\lim\sup_{n\to\infty}R_n<\infty$, without characterizing the time at which the regret vanishes. Instead, our Lem.~\ref{prop:hls.regret} provides an explicit problem-dependent constant regret bound. \citet[][Theorem 2]{wu2020stochastic} consider the disjoint-parameter setting and rely on the \wys condition. While they indeed prove a constant regret result, their bound depends on the the minimum probability of observing a context (or, in the continuous case, a properly defined meta-context). This reflects the general tendency, in previous works, to frame diversity conditions simply as a property of the context distribution $\rho$. On the other hand, our characterization of $\tau$ in terms of $\lambda_{\phi,\hls}$ (Lem.~\ref{prop:hls.regret}) allows relating the regret to the ``goodness'' of the representation $\phi$ for the problem at hand.

\subsection{Removing the Minimum-Gap Assumption}\label{sec:zero.gap}

Constant-regret bounds for \linucb rely on a minimum-gap assumption ($\Delta>0$). In this section we show that \linucb can still benefit from \hls representations when $\Delta=0$, but a margin condition holds~\citep[e.g.,][]{rigollet2010nonparametric,ReeveM018}. Intuitively, we require that the probability of observing a context $x$ decays proportionally to its minimum gap $\Delta(x) = \min_{a} \Delta(x,a)$.
\begin{assumption}[Margin condition] \label{asm:margin.gap}
    There exists $C,\alpha>2$ such that for all $\epsilon>0$:
        $\rho\big(\{x \in \X : \Delta(x) \leq \epsilon \}\big) \leq C\epsilon^{\alpha}$.
\end{assumption}
The following theorem provides a problem-dependent regret bound for \linucb under this margin assumption.
\begin{theorem}\label{th:hls.margin}
	Consider a linear contextual bandit problem satisfying the margin condition (Asm.~\ref{asm:margin.gap}).
	Assume $\max_{x,a}\|\phi(x,a)\|_2 \leq L_{\phi}$ and $\|\theta_{\phi}^\star\|_2 \leq S_{\phi}$.
	Then, given a representation $\phi$, with probability at least $1-3\delta$, the regret of \oful after $n \geq 1$ steps is at most
    \begin{align*}
    R_n \leq {O}\bigg( \Big( \lambda(\Delta_{\max}S_{\phi}\sigma d_{\phi})^2 {\color{darkred}n^{1/\alpha}}  + \sqrt{Cd_{\phi}} \Big)\ln^2 (L_{\phi}n/\delta) \bigg).
    \end{align*}
    When $\phi$ is \hls ($\lambda_{\phi,\hls}>0$), let $\tau_{\phi} \propto (\lambda_{\phi,\hls})^{\frac{\alpha}{2 - \alpha}}$, then
     \begin{align*}
    R_n \leq {O}\Big(\Delta_{\max} {\color{darkred}\tau_{\phi}} +\sqrt{Cd_{\phi}}\ln^2(L_{\phi}n/\delta) \Big).
    \end{align*}
\end{theorem}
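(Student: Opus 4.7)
The plan is to split the rounds by a gap threshold and use the margin condition to control the contribution of rounds with small gap. Fix $\epsilon>0$, let $E=\{t:\Delta(x_t)\le\epsilon\}$ and $H=\{t:\Delta(x_t)>\epsilon\}$. By Assumption~\ref{asm:margin.gap}, each indicator $\indi{t\in E}$ has mean at most $C\epsilon^{\alpha}$, so Bernstein's inequality yields $|E|\le nC\epsilon^{\alpha}+O\bigl(\sqrt{nC\epsilon^{\alpha}\ln(1/\delta)}+\ln(1/\delta)\bigr)$ with probability $1-\delta$. Independently of $\epsilon$, the standard OFUL per-round bound $\Delta(x_t,a_t)\le 2\beta_t(\delta)\|\phi(x_t,a_t)\|_{V_t^{-1}}$, Cauchy--Schwarz, and the elliptical-potential lemma $\sum_{t}\|\phi(x_t,a_t)\|_{V_t^{-1}}^{2}\le 2d_\phi\ln(1+nL_\phi^{2}/(\lambda d_\phi))$ give the template
\[\sum_{t\in E}\Delta(x_t,a_t)\le 2\beta_n(\delta)\sqrt{|E|\cdot 2d_\phi\ln\bigl(1+nL_\phi^{2}/(\lambda d_\phi)\bigr)}.\]

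For the first bound the $H$-contribution is handled directly: on $H$ any suboptimal $a_t$ yields $\Delta(x_t,a_t)\ge\Delta(x_t)>\epsilon$, so
\[\sum_{t\in H}\Delta(x_t,a_t)\le \frac{1}{\epsilon}\sum_{t\in H}\Delta(x_t,a_t)^{2}\le \frac{4\beta_n^{2}(\delta)}{\epsilon}\sum_{t}\|\phi(x_t,a_t)\|_{V_t^{-1}}^{2},\]
which is exactly Proposition~\ref{prop:oful.log.regret} applied with $\Delta$ replaced by $\epsilon$. Choosing $\epsilon=n^{-1/\alpha}$ collapses $|E|$ to $O(C+\sqrt{C\ln(1/\delta)})$ and produces the advertised $\sqrt{Cd_\phi}\ln^{2}(L_\phi n/\delta)$ and $O(\lambda(\Delta_{\max}S_\phi\sigma d_\phi)^{2}n^{1/\alpha}\ln^{2}(L_\phi n/\delta))$ terms.

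For the \hls bound the $H$-contribution is instead handled by refining the proof of Lemma~\ref{prop:hls.regret}. The contrapositive of the OFUL confidence bound is: if $\Delta(x_t)>2\beta_t(\delta)/\sqrt{\lambda_{\min}(V_t)}$, then $a_t$ must be optimal and round $t$ contributes nothing. Under \hls, a matrix-Chernoff inequality applied to the i.i.d.\ sum $\sum_{k}\phi^{\star}(x_k)\phi^{\star}(x_k)^{\transp}$, corrected by the suboptimal-pull penalty $L_\phi^{2}g_t$, shows $\lambda_{\min}(V_t)\gtrsim \lambda_{\phi,\hls}\,t$ once $t$ exceeds some threshold $\tau$. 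For $t>\tau$, nonzero regret therefore requires $\Delta(x_t)\le\epsilon_t:=2\beta_t(\delta)/\sqrt{c\,\lambda_{\phi,\hls}\,t}$, an event of probability at most $C\epsilon_t^{\alpha}=O\bigl(C(\ln t)^{\alpha}(\lambda_{\phi,\hls}\,t)^{-\alpha/2}\bigr)$. Since $\alpha>2$ the series $\sum_{t>\tau}\epsilon_t^{\alpha}$ converges, so the number $N$ of bad post-$\tau$ rounds is $O(C)$ with high probability; one more application of Cauchy--Schwarz on these $N$ rounds delivers the residual $O(\sqrt{Cd_\phi}\ln^{2}(L_\phi n/\delta))$. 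Bounding the pre-$\tau$ regret trivially by $\Delta_{\max}\tau$ and solving for the smallest $\tau$ that simultaneously dominates the matrix-Chernoff fluctuation, the suboptimal-pull correction and the margin tail gives exactly $\tau\propto\lambda_{\phi,\hls}^{-\alpha/(\alpha-2)}=\lambda_{\phi,\hls}^{\alpha/(2-\alpha)}$.

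The main obstacle is this last step: controlling $\lambda_{\min}(V_t)$ in the absence of a global minimum gap. In Lemma~\ref{prop:hls.regret} the bound on $g_t$ comes from Proposition~\ref{prop:oful.log.regret} using $\Delta>0$; here that control has to be replaced by the $H$-estimate applied with the sliding gap $\epsilon_t$, together with a separate margin-based accounting of suboptimal pulls on $E$. It is the simultaneous balance between the matrix-Chernoff error $\sqrt{t\ln(d_\phi/\delta)}$, the $L_\phi^{2}g_t$ penalty and the $\sum_{t>\tau}\epsilon_t^{\alpha}$ tail requirement that ultimately forces the exponent $\alpha/(2-\alpha)$ appearing in the statement.
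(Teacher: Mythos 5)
Your proposal is correct and follows essentially the same route as the paper: the first bound is obtained by exactly the paper's split at the fixed threshold $\epsilon=n^{-1/\alpha}$, with large-gap rounds handled by the $1/\epsilon$-scaled logarithmic bound and small-gap rounds by Bernstein plus the restricted elliptical-potential lemma, while the second bound likewise rests on the margin-corrected growth $\lambda_{\min}(V_t)\gtrsim\lambda_{\phi,\hls}\,t$ (obtained, as you correctly identify, by discarding low-gap contexts at a fixed cutoff $\epsilon\propto(\lambda_{\phi,\hls}/C)^{1/\alpha}$ so that only the polylogarithmically many suboptimal pulls on high-gap rounds must be subtracted), with the exponent $\alpha/(2-\alpha)$ arising from the same balance of $t\lambda_{\phi,\hls}$ against the sublinear terms. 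The only material difference is that the paper uses the deterministic sliding threshold $\epsilon_t=t^{-1/\alpha}$ for the residual low-gap rounds, giving $N_n=O(C\log n)$, whereas your self-consistent threshold $\widetilde{O}\bigl((\lambda_{\phi,\hls}t)^{-1/2}\bigr)$ yields a summable tail and $N=O(C)$ (provided you fix the threshold to its deterministic envelope so that Bernstein applies to independent indicators) --- both variants suffice for the stated bound.
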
 

We first notice that in general, \linucb suffers $\wt{O}(n^{1/\alpha})$ regret, which can be significantly larger than in the minimum-gap case. On the other hand, with \hls representations, \linucb achieves logarithmic regret, regardless of the value of $\alpha$. The intuition is that, when the \hls condition holds, the algorithm collects sufficient information about $\theta^\star_{\phi}$ by pulling the optimal arms in rounds with large minimum gap, which occur with high probability by the margin condition. This yields at most constant regret in such rounds (first term above), while it can be shown that the regret in steps when the minimum gap is very small is at most logarithmic (second term above).

\subsection{Further Analysis of the \hls Condition}

While Lem.~\ref{prop:hls.regret} shows that \hls is sufficient for achieving constant regret, the following proposition shows that it is also necessary.  
While this property was first mentioned by~\citet{hao2020adaptive} as a remark in a footnote, we provide a formal proof in App.~\ref{app:hls.regret.iif}.	 

\begin{proposition}
	\label{prop:constant.iif.hls}
	For any contextual problem with finite contexts, full-support context distribution, and given a realizable representation $\phi$, \linucb achieves sub-logarithmic regret if and only if $\phi$ satisfies the \hls condition.
\end{proposition}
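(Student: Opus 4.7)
The \emph{if} direction is immediate from Lem.~\ref{prop:hls.regret}: under the \hls condition, \linucb's regret is bounded by a problem-dependent constant, which is in particular $o(\log n)$.

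For the \emph{only if} direction, my plan is to argue the contrapositive via a two-instance change-of-measure (in the Gaussian-noise case, w.l.o.g.\ for lower-bound purposes). If $\lambda_{\phi,\hls}=0$, there exists a unit vector $v$ with $v^\transp \phi^\star(x)=0$ for every $x$ in the finite full support of $\rho$. After a preliminary reduction to the span of $\phi$ (so that no direction is orthogonal to all features), I will extract a pair $(x_0, a_0)$ with $v^\transp \phi(x_0, a_0)>0$ (WLOG the sign); $a_0$ is necessarily suboptimal at $x_0$, since $\phi^\star(x_0) \perp v$. The key construction is an alternative realizable parameter $\bar\theta = \theta^\star_\phi + c v$ with $c$ chosen large enough that $a_0$ becomes the unique optimal action at $x_0$ under $\bar\theta$, with a strictly positive gap $\bar\Delta$. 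By design, $\theta^\star_\phi$ and $\bar\theta$ induce identical reward distributions at every pair $(x,a)$ with $v^\transp \phi(x,a)=0$---in particular at every arm that is optimal under $\theta^\star_\phi$---and differ only on $\mathcal{B} := \{(x,a) : v^\transp \phi(x,a)\neq 0\}$, every element of which is suboptimal under $\theta^\star_\phi$ with gap at least $\Delta$.

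Next, the chain rule for KL yields
\begin{equation*}
KL\bigl(\mathbb{P}_{\theta^\star_\phi}^n, \mathbb{P}_{\bar\theta}^n\bigr) \;=\; \frac{c^2}{2\sigma^2}\,\mathbb{E}_{\theta^\star_\phi}\!\Big[\sum_{t=1}^n (v^\transp \phi(x_t, a_t))^2\Big] \;\le\; \frac{c^2 L_\phi^2}{2\sigma^2 \Delta}\, R_n,
\end{equation*}
where I used that each pull in $\mathcal{B}$ contributes at least $\Delta$ to $R_n$. I then apply the data-processing inequality on $\mathcal{E}_n := \{N_{a_0, x_0}(n) \ge \rho(x_0) n/2\}$: under $\bar\theta$, Prop.~\ref{prop:oful.log.regret} (with gap $\bar\Delta>0$) together with Chernoff concentration on the context count $N_{x_0}(n)$ forces $\mathbb{P}_{\bar\theta}(\mathcal{E}_n) \ge 1 - O(\log n /n)$, while under $\theta^\star_\phi$ the contradiction hypothesis $R_n = o(\log n)$ together with Markov gives $\mathbb{P}_{\theta^\star_\phi}(\mathcal{E}_n) = o(\log n/n)$. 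The binary KL between these two probabilities is $(1+o(1))\log n$, contradicting the $o(\log n)$ upper bound above.

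The hard part will be this final quantitative matching: the binary KL yields a $\log n$ factor only because both tails $\mathbb{P}_{\theta^\star_\phi}(\mathcal{E}_n)$ and $1-\mathbb{P}_{\bar\theta}(\mathcal{E}_n)$ decay at an inverse-polynomial-in-$n$ rate, which requires carefully combining \linucb's high-probability pull-count guarantees (applied to each instance in turn) with Chernoff concentration of $N_{x_0}(n)$. A secondary, essentially technical, subtlety is the preliminary reduction to the case where the features span $\Re^{d_\phi}$, which is needed to guarantee the existence of the informative pair $(x_0, a_0)$.
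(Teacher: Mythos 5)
Your \emph{if} direction matches the paper's. For the \emph{only if} direction you take a genuinely different route: the paper does not run a change-of-measure argument itself, but instead imports the asymptotic problem-dependent lower bound of \citet{hao2020adaptive} (the Graves--Lai-style optimization problem $v^\star(\theta^\star)$ over allocations $\eta(x,a)$) as a black box, and then shows that when $\phi$ is not \hls the value $v^\star$ is bounded away from zero. Their key step is the one your vector $v$ plays implicitly: they pick $(\bar x,\bar a)$ with $\phi(\bar x,\bar a)\notin\spann\{\phi^\star(x)\}$, note that the feasibility constraint $\|\phi(\bar x,\bar a)\|^2_{(\lambda I+V_\eta)^{-1}}\le\Delta(\bar x,\bar a)^2/2$ cannot be met by an allocation supported only on optimal arms (since $\phi(\bar x,\bar a)$ has a component of size $\epsilon>0$ along a zero-eigenvalue eigenvector of $V_\eta$, giving $\|\cdot\|^2\ge\epsilon^2/\lambda$ for $\lambda$ arbitrarily small), and conclude that some suboptimal arm must receive constant mass, hence $v^\star\ge c'\Delta>0$. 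Your approach re-derives, from scratch, exactly the instance of that lower bound needed here: the perturbation $\bar\theta=\theta^\star_\phi+cv$ is the standard confusing instance, and your KL computation and binary-KL/data-processing step are the standard machinery behind the cited lower bound. What the paper's route buys is that all the quantitative tail matching you correctly identify as ``the hard part'' (getting both $\Prob_{\theta^\star_\phi}(\mathcal{E}_n)$ and $1-\Prob_{\bar\theta}(\mathcal{E}_n)$ to decay polynomially so the binary KL produces a genuine $\log n$) is delegated to the cited result, at the price of assuming \linucb is \emph{consistent} ($o(n^\alpha)$ expected regret on every instance). What your route buys is a self-contained and more transparent argument that makes the role of the direction $v$ explicit.

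Two caveats, both of which your proof shares with the paper's rather than introducing anew. First, the existence of your pair $(x_0,a_0)$ with $v^\transp\phi(x_0,a_0)\neq 0$ (equivalently, the paper's $\phi(\bar x,\bar a)\notin\spann\{\phi^\star(x)\}$) fails for redundant representations whose features all lie in the span of the optimal features; by the paper's Corollary~\ref{cor:mixing.redundant}, \linucb then achieves constant regret even though $\phi$ is not \hls, so the ``only if'' direction implicitly excludes this degenerate case --- your ``reduction to the span of $\phi$'' does not fully dispose of it. Second, the consistency issue: with a \emph{fixed} confidence parameter $\delta$, \linucb's bad event has constant probability, so $1-\Prob_{\bar\theta}(\mathcal{E}_n)$ is only $O(\delta)$ and the binary KL yields a constant rather than $\log n$; one needs a vanishing schedule $\delta_t$ (as in the paper's expected-regret analysis) for either proof to go through at the stated strength. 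Neither caveat is a defect relative to the paper's own proof.
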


Finally, we derive the following important existence result. 

\begin{lemma}\label{prop:existence.hls}
	For any contextual bandit problem with optimal reward \footnote{This condition is technical and it can be easily relaxed.} $\mu^\star(x)\neq 0$ for all $x\in\X$, that has either
	\emph{i)} a finite context set with at least $d$ contexts with nonzero probability, or
	\emph{ii)} a Borel context space and a non-degenerate context distribution\footnote{For instance, if $\mathcal{X}=\Reals^m$ and the context distribution must have positive variance in all directions.},
	for any dimension $d\ge 1$, there exists an infinite number of $d$-dimensional realizable \hls representations.
\end{lemma}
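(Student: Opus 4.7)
The plan is to build one explicit realizable \hls representation and then generate infinitely many by a trivial rescaling. I would fix the unknown parameter as $\theta^\star_\phi := e_1 \in \mathbb{R}^d$. Realizability then reads $\phi(x,a)^\transp e_1 = \mu(x,a)$, so the first coordinate of $\phi(x,a)$ is pinned to $\mu(x,a)$ while the remaining $d-1$ coordinates are free. Setting $\phi(x,a) := \mu(x,a) e_1$ for every suboptimal action disposes of those immediately, so the \hls property reduces entirely to a careful choice of the optimal features $\phi^\star(x)$: I just need the first coordinate to equal $\mu^\star(x)$ and the matrix $\Sigma := \mathbb{E}_{x\sim\rho}[\phi^\star(x)\phi^\star(x)^\transp]$ to be positive definite.

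For the construction I would first partition $\X$ into $d$ Borel sets $A_1,\dots,A_d$ with $\rho(A_i)>0$. In case (i), put the $d$ guaranteed positive-mass contexts into distinct $A_i$'s and absorb the rest arbitrarily. In case (ii), non-degeneracy of $\rho$ precludes the distribution from being concentrated on fewer than $d$ points, so such a partition exists (for $\X=\mathbb{R}^m$ with positive variance in every direction, iteratively bisect along any coordinate). Define $\phi^\star(x) := \mu^\star(x) e_1$ for $x\in A_1$ and $\phi^\star(x) := \mu^\star(x)(e_1 + e_i)$ for $x\in A_i$, $i=2,\dots,d$. A direct quadratic-form calculation yields, for any $v\in\mathbb{R}^d$,
\[
v^\transp \Sigma v \;=\; \rho(A_1)\,\mathbb{E}\!\left[\mu^\star(x)^2 \mid A_1\right] v_1^2 \;+\; \sum_{i=2}^d \rho(A_i)\,\mathbb{E}\!\left[\mu^\star(x)^2 \mid A_i\right] (v_1+v_i)^2.
\]
Because $\mu^\star(x)\neq 0$ everywhere, each coefficient is strictly positive, so $v^\transp \Sigma v=0$ forces $v_1=0$ and $v_1+v_i=0$ for all $i\ge 2$, i.e., $v=0$. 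Hence $\Sigma\succ 0$ and $\phi$ is \hls, with $\theta^\star_\phi=e_1$ witnessing realizability.

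To obtain infinitely many representations, observe that for every $c\in\mathbb{R}\setminus\{0\}$ the pair $(c\phi,\,\theta^\star_\phi/c)$ is again realizable and satisfies $\lambda_{c\phi,\hls} = c^2\lambda_{\phi,\hls}>0$; varying $c$ produces an uncountable family of distinct \hls representations. (If a strictly stronger notion of distinctness were required, one can alternatively vary the free $d-1$ coordinates of $\phi^\star$ on, say, $A_1$ by adding any nonzero measurable perturbation orthogonal to $e_1$, which preserves both realizability and positive definiteness of $\Sigma$.)

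The main obstacle I anticipate is the partition step in case (ii): its validity hinges on how strictly ``non-degenerate context distribution'' is interpreted. Under the paper's footnote reading (positive variance in every direction on $\mathbb{R}^m$), splitting into $d$ positive-mass Borel pieces is elementary and requires no deeper measure-theoretic argument; under a more abstract Borel setting one would need to exclude distributions supported on fewer than $d$ atoms, which is a mild regularity assumption implicit in ``non-degenerate.'' Everything else in the proof is essentially linear algebra and should go through uniformly in both cases.
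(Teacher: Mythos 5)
Your proof is correct and follows the same overall strategy as the paper's (App.~\ref{app:rep.structures.hls.existence}): pin $\theta^\star$ to the first coordinate direction, let that coordinate carry the reward, and use the remaining $d-1$ coordinates as region indicators so that the optimal-feature second-moment matrix becomes full rank; then generate infinitely many representations by linear transformations. The differences are in execution, and they are worth noting. The paper verifies full rank by exhibiting $d$ optimal feature vectors forming a strictly diagonally dominant matrix (with an auxiliary constant $L>\max_i|\mu^\star(x_i)|$), whereas you compute the quadratic form $v^\transp\Sigma v$ directly and use the linear independence of $e_1, e_1+e_2,\dots,e_1+e_d$; both work. More substantively, in the continuous case the paper assumes $x\mapsto\mu^\star(x)$ is continuous and uses the metric structure to carve out $d$ \emph{disjoint open balls} around support points where $|\mu^\star|$ is bounded below, while you only need a measurable partition of $\X$ into $d$ positive-$\rho$-measure Borel sets together with $\mu^\star(x)\neq 0$ everywhere (so each $\int_{A_i}\mu^\star(x)^2\,\mathrm{d}\rho>0$); this dispenses with continuity and is in that respect slightly more general. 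You correctly identify the one fragile point: the existence of the partition in case (ii). Under the footnote's literal reading (positive variance in all directions), a measure supported on $m+1$ affinely independent atoms in $\Reals^m$ would defeat a partition into $d>m+1$ positive-mass pieces; under the paper's intended reading (the appendix clarifies that non-degeneracy means $\supp(\rho)$ contains an open ball, hence infinitely many support points), your partition exists and the argument goes through. Finally, your rescaling $(c\phi,\theta^\star/c)$ is a special case of the paper's Lemma on invertible linear transformations preserving \hls, and suffices for the ``infinitely many'' claim.
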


This result crucially shows that the \hls condition is ``robust'', since in any contextual problem, it is possible to construct an infinite number of representations satisfying the \hls condition. In App.~\ref{app:rep.structures.hls.existence}, we indeed provide an oracle procedure for constructing an \hls representation. This result also supports the starting point of next section, where we assume that a learner is provided with a set of representations that may contain at least a ``good'' representation, i.e., an \hls representation. 

\vspace{-0.1in}
\section{Representation Selection}
\vspace{-0.03in}
In this section, we study the problem of \emph{representation selection} in linear bandits.
We consider a linear contextual problem with reward $\mu$ and context distribution $\rho$.
Given a set of $M$ realizable linear representations $\{\phi_i : \X \times [K] \to \Re^{d_i}\}$, the objective is to design a learning algorithm able to perform as well as the best representation, and thus achieve constant regret when a ``good'' representation is available. 
As usual, we assume $\theta^\star_i \in \Re^{d_i}$ is unknown, but the algorithm is provided with a bound on the parameter and feature norms of the different representations.

\subsection{The \texorpdfstring{\algo}{LEADER} Algorithm}
We introduce \algo (\emph{Linear rEpresentation bAnDit mixER}), see Alg.~\ref{alg:our.algo}.
At each round $t$, \algo 
builds an estimate $\theta_{ti}$ of the unknown parameter $\theta_i^\star$ of each representation $\phi_i$.\footnote{We use the subscript $i \in [M]$ instead of $\phi_i$ to denote quantities related to representation $\phi_i$.} These estimates are by nature off-policy, and thus \emph{all} the samples $(x_l, a_l, y_l)_{l < t}$ can be used to solve \emph{all} ridge regression problems. 
For each $\phi_i$, define $V_{ti} = \lambda I_{d_i} + \sum_{l=1}^{t-1} \phi_i(x_l,a_l) \phi_i(x_l, a_l)^\transp$, $\theta_{ti}$ and $\mathcal{C}_{ti}(\delta/M)$ as in Sec.~\ref{sec:preliminaries}.
Since all the representations are realizable, we have that $\mathbb{P}\left(\forall i \in [M], \theta^\star_i \in \mathcal{C}_{ti}(\delta/M) \right) \geq 1-\delta$.
As a consequence, for each representation $\phi_i$ we can build an upper-confidence bound to the reward such that, $\forall x \in \X,a \in \A$, with high probability
\begin{equation}
        \mu(x,a)
    \leq \max_{\theta \in \mathcal{C}_{ti}(\delta/M)} \langle \phi_i(x,a),\theta\rangle := U_{ti}(x,a).
\end{equation} 
Given this, \algo uses the tightest available upper-confidence bound to evaluate each action and then it selects the one with the largest value, i.e., 
\begin{align}\label{eq:algo.actionselection}
    a_t \in \argmax_{a \in [K]} \min_{i \in [M]} \{U_{ti}(x_t,a)\}.
\end{align}
Let $i_t = \argmin_{i \in [M]} \{U_{ti}(x_t,a_t)\}$ be the representation associated to the pulled arm $a_t$.
Interestingly, despite $a_t$ being optimistic, in general it  may not correspond to the optimistic action of representation $\phi_{i_t}$, i.e., $a_t \notin \argmax_{a} \{ U_{t,i_t}(x_t,a) \}$. If a representation provides an estimate that is good along the direction associated to a context-action pair, but possibly very uncertain on other actions, \algo is able to leverage this key feature to reduce the overall uncertainty and achieve a tighter optimism.
Space and time complexity of \algo scales linearly in the number of representations, although the updates for each representation could be carried out in parallel.

\paragraph{Regret bound.}
For ease of presentation, we assume a non-zero minimum gap ($\Delta > 0$).
The analysis can be generalized to $\Delta = 0$ as done in Sec.~\ref{sec:zero.gap}.
Thm.~\ref{thm:algo.regret.positivegap} establishes the regret guarantee of \algo (Alg.~\ref{alg:our.algo}).

\begin{theorem}\label{thm:algo.regret.positivegap}
Consider a contextual bandit problem with reward $\mu$, context distribution $\rho$ and $\Delta >0$.
Let $(\phi_i)$ be a set of $M$ linearly realizable representations such that $\max_{x,a} \|\phi_i(x,a)\|_2 \leq L_i$ and $\|\theta^\star_i\|_i \leq S_i$. Then, for any $n \geq 1$, with probability $1-2\delta$, \algo suffers a regret

\vspace{-0.15in}
{\small
\begin{align*}
    R_n \leq 
    &\min_{i\in[M]}\bigg\{
        \frac{32\lambda\Delta_{\max}^2 S_i^2 \sigma^2}{\Delta} 
        \times \\
    & \times \left(2\ln\left(\frac{M}{\delta}\right)
        +d_i \ln \left( 1+\frac{{\color{darkred}\min\{\tau_i, n\} }L_i^2}{\lambda d_i} \right) \right)^2 \bigg\}
\end{align*}
}

\vspace{-0.1in}
where $\tau_i \propto (\lambda_{i,\hls} \Delta)^{-2}$ if $\phi_i$ is \hls and $\tau_i = +\infty$ otherwise. 
\end{theorem}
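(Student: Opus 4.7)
The plan is to lift the single-representation analysis behind Lem.~\ref{prop:hls.regret} to the multi-representation setting, exploiting the fact that \algo selects actions using the \emph{tightest} upper-confidence bound across all representations, and that all design matrices $V_{ti}$ are off-policy (so every ridge estimate $\theta_{ti}$ remains valid no matter what sequence of actions \algo plays).

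First, I would establish the validity of all confidence sets simultaneously. Applying Thm.~1 of~\citet{abbasi2011improved} to each representation at confidence level $\delta/M$ and taking a union bound, with probability at least $1-\delta$ we have $\theta^\star_i\in\mathcal{C}_{ti}(\delta/M)$ for every $t\ge 1$ and every $i\in[M]$. On this event, $U_{ti}(x,a)\ge\mu(x,a)$ for all $(t,i,x,a)$, and the definition of $a_t$ in~\eqref{eq:algo.actionselection} yields the optimistic chain
\[
\mu^\star(x_t)\le\min_{j}U_{tj}(x_t,a^\star_{x_t})\le\min_{j}U_{tj}(x_t,a_t)\le U_{ti}(x_t,a_t)\qquad\forall i\in[M].
\]
The standard OFUL inequality then gives $r_t\le 2\beta_{ti}(\delta/M)\|\phi_i(x_t,a_t)\|_{V_{ti}^{-1}}$, and since $i$ is arbitrary,
\[
r_t\;\le\;2\min_{i\in[M]}\beta_{ti}(\delta/M)\,\|\phi_i(x_t,a_t)\|_{V_{ti}^{-1}}.
\]

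Second, I would derive a problem-dependent bound separately for each fixed $i$. Since $\Delta>0$, $r_t\in\{0\}\cup[\Delta,\Delta_{\max}]$ and so $r_t\le r_t^2/\Delta$. Summing and using $\beta_{ti}$ non-decreasing,
\[
R_n\;\le\;\frac{4\beta_{ni}(\delta/M)^2}{\Delta}\sum_{t=1}^{n}\|\phi_i(x_t,a_t)\|_{V_{ti}^{-1}}^{2}.
\]
The elliptical potential lemma applied to $V_{ti}$ bounds the sum by $2d_i\ln(1+nL_i^2/(\lambda d_i))$; plugging in $\beta_{ni}(\delta/M)$ reproduces the stated expression with $\min\{\tau_i,n\}$ replaced by $n$, which is exactly the right form when $\tau_i=+\infty$ (i.e., when $\phi_i$ is not \hls).

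Third, for any \hls representation $i$ I would show that a deterministic time $\tau_i=\Theta((\lambda_{i,\hls}\Delta)^{-2})$ (up to polylog factors) exists such that $r_t=0$ for every $t\ge\tau_i$; truncating the elliptical-potential sum at $\tau_i$ then yields the $\min\{\tau_i,n\}$ factor. Mimicking the proof of Lem.~\ref{prop:hls.regret}, the bound $r_t\le 2L_i\beta_{ti}(\delta/M)/\sqrt{\lambda_{\min}(V_{ti})}$ makes $\lambda_{\min}(V_{ti})>4L_i^2\beta_{ti}(\delta/M)^2/\Delta^2$ a sufficient condition for $r_t=0$. Decomposing contributions from optimal and suboptimal pulls,
\[
V_{ti}\;\succeq\;\lambda I+\sum_{l=1}^{t-1}\phi_i^\star(x_l)\phi_i^\star(x_l)^\transp-L_i^2 N^{\mathrm{sub}}_{t-1}I,
\]
a matrix Azuma/Freedman bound on the i.i.d.\ sum gives $\lambda_{\min}\bigl(\sum_l\phi_i^\star(x_l)\phi_i^\star(x_l)^\transp\bigr)\ge t\lambda_{i,\hls}-O\bigl(L_i^2\sqrt{t\ln(d_i t/\delta)}\bigr)$, while the problem-dependent bound established in the previous step (applied to the best non-\hls representation, or to $i$ itself) forces $N^{\mathrm{sub}}_{t-1}=O(\mathrm{polylog}(t)/\Delta^2)$. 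Solving $t\lambda_{i,\hls}-o(t)>O(d_i^2\ln^2(t)/\Delta^2)$ produces the claimed $\tau_i$. Taking the minimum over $i\in[M]$ of the per-representation bound concludes the argument.

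The main obstacle will be the matrix-concentration step: although $\{x_l\}$ is i.i.d., the selector $\{a_l=a^\star_{x_l}\}$ is history-dependent, so either one bounds the \emph{full} i.i.d.\ sum $\sum_l\phi_i^\star(x_l)\phi_i^\star(x_l)^\transp$ and subtracts the (already-bounded) suboptimal clean-up term as above, or one invokes a matrix-Freedman inequality for adaptive sequences; both routes eat one extra failure probability $\delta$ (explaining the $1-2\delta$ in the statement) and propagate the union-bound correction $\ln(M/\delta)$ through $\beta_{ti}(\delta/M)$ into the final bound.
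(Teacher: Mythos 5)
Your proposal is correct and follows essentially the same route as the paper: a union bound over the $M$ confidence sets at level $\delta/M$, the optimistic chain through $\min_j U_{tj}(x_t,\cdot)$ yielding $r_t\le 2\min_i\beta_{ti}(\delta/M)\|\phi_i(x_t,a_t)\|_{V_{ti}^{-1}}$ (the paper's Lemma~\ref{lem:selection.main}), the per-representation elliptical-potential bound, and for \hls representations the linear growth of $\lambda_{\min}(V_{ti})$ obtained by bounding the full i.i.d.\ sum of optimal features via matrix Azuma and subtracting the $L_i^2 g_{ti}(\delta/M)$ correction for suboptimal pulls (the paper's Lemmas~\ref{lem:mineig.basic} and~\ref{lem:zeroregret.basic} instantiated with $\delta/M$). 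Your accounting of the two failure events explaining the $1-2\delta$ also matches the paper's proof.
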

This shows that the problem-dependent regret bound of \algo is not worse than the one of the best representation (see Prop.~\ref{prop:oful.log.regret}), up to a $\ln M$ factor.
This means that the cost of representation selection is almost negligible.
Furthermore, Thm.~\ref{thm:algo.regret.positivegap} shows that \algo not only achieves a constant regret bound when an \hls representation is available, but this bound scales as the one of the \textit{best} \hls representation. In fact, notice that the ``quality'' of an \hls representation does not depend only on known quantities such as $d_i$, $L_i$, $S_i$, but crucially on \hls eigenvalue $\lambda_{i,\hls}$, which is usually not known in advance, as it depends on the features of the optimal arms.

\begin{algorithm}[tb]
    \begin{small}
    \begin{algorithmic}
        \STATE {\bfseries Input:} representations $(\phi_i)_{i \in [M]}$ with values $(L_i,S_i)_{i\in [M]}$, regularization factor $\lambda \geq 1$, confidence level $\delta \in (0,1)$.
        \caption{\algo Algorithm}\label{alg:our.algo}
        \STATE Initialize $V_{1i} = \lambda I_{d_i}$, $\theta_{1i} = 0_{d_i}$ for each $i \in [M]$
        \FOR{$t=1, \ldots$}
            \STATE Observe context $x_t$
            \STATE Pull action
            $a_t \in \argmax_{a \in [K]} \min_{i \in [M]} \{U_{ti}(x_t,a)\}$
            \STATE Observe reward $r_t$ and, for each $i\in [M]$, set\\
                $V_{t+1,i}= V_{ti} + \phi_i(x_t,a_t)  \phi_i(x_t,a_t)^\transp$ and\\
                $\theta_{t+1,i} =V_{t+1,i}^{-1} \sum_{l=1}^t \phi_i(x_l,a_l) r_l$
        \ENDFOR
    \end{algorithmic}
    \end{small}
\end{algorithm}

\subsection{Combining Representations}
In the previous section, we have shown that \algo can perform as well as the best representation in the set. 
However, by inspecting the action selection rule (Eq.~\ref{eq:algo.actionselection}), we notice that, to evaluate the reward of an action in the current context, \algo selects the representation with the smallest uncertainty, thus potentially using different representations for different context-action pairs.
This leads to the question: \emph{can \algo do better than the best representation in the set?}

We show that, in certain cases, \algo is able to combine representations and achieve constant regret when none of the individual representations would. 
The intuition is that a subset of ``locally good'' representations can be combined to recover a condition similar to \hls.
This property is formally stated in the following definition.
\begin{definition}[Mixing \hls]\label{def:hls.mix}
    Consider a linear contextual problem with reward $\mu$ and context distribution $\rho$, and a set of $M$ realizable linear representations $\phi_1, \ldots, \phi_M$.
    Define $M_i = \mathbb{E}_{x\sim \rho}\Big[\phi_i^\star(x)\phi_i^\star(x)^\transp\Big]$
    and let $Z_i=\{(x,a)\in\X \times\A \mid \phi_i(s,a)\in\Imm(M_i)\}$ be the set of context-action pairs whose features belong to the column space of $M_i$, i.e., that lie in the span of optimal features. We say that the set $(\phi_i)$ satisfies the mixed-\hls condition if $\X\times\A \subseteq \bigcup_{i=1}^M Z_i$.
\end{definition}
Let $\lambda_i^+ = \lambda_{\min}^+(M_i)$ be the minimum \emph{nonzero} eigenvalue of $M_i$.
Intuitively, the previous condition relies on the observation that every representation satisfies a ``restricted'' \hls condition on the context-action pairs $(x,a)$ whose features $\phi_i(x,a)$ are spanned by optimal features $\phi^\star(x)$. In this case, the characterizing eigenvalue is $\lambda_i^+$, instead of the smallest eigenvalue $\lambda_{i,\hls}$ (which may be zero). If every context-action pair is in the restriction $Z_i$ of some representation, we have the mixed-\hls property. In particular, if representation $i$ is \hls, ${\lambda}_i^+=\lambda_{i,\hls}$ and $Z_i=\mathcal{S}\times\mathcal{A}$. So, \hls is a special case of mixed-\hls.
In App.~\ref{app:rep.selection.mixing}, we provide simple examples of sets of representations satisfying Def.~\ref{def:hls.mix}.
Note that, strictly speaking, there is not a single ``mixed representation'' solving the whole problem. Even defining one would be problematic since each representation may have a different parameter and even a different dimension. Instead, each representation ``specializes'' on a different portion of the context-action space. If together they cover the whole space, the benefits of \hls are recovered, as illustrated in the following theorem.

\begin{theorem}\label{thm:algo.mix.regret.constant}
    Consider a stochastic bandit problem with reward $\mu$, context distribution $\rho$ and $\Delta >0$.
    Let $(\phi_i)$ be a set of $M$ realizable linear representations satisfying the mixed-\hls property in Def.~\ref{def:hls.mix}.
    Then, with probability at least $1-2\delta$, there exists a time $\tau < \infty$ independent from $n$ such that, for any $n \geq 1$, the pseudo-regret of \algo is bounded as
    %
    \begin{align*}
        R_n \leq 
        &\min_{i\in[M]} \bigg\{
            \frac{32\lambda\Delta_{\max}^2 S_i^2 \sigma^2}{\Delta} 
            \times \\
        & ~~\times \left(2\ln\left(\frac{M}{\delta}\right)
            +d_i \ln \left( 1+\frac{{\color{darkred}\tau} L_i^2}{\lambda d_i} \right) \right)^2 \bigg\}.
    \end{align*}
\end{theorem}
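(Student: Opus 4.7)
The plan is to extend the argument of Lem.~\ref{prop:hls.regret} to a setting where no individual representation is \hls, but the collection covers the context-action space in the sense of Def.~\ref{def:hls.mix}. I start from the instantaneous regret inequality underlying the proof of Thm.~\ref{thm:algo.regret.positivegap}. On the good event that all $M$ confidence ellipsoids are valid (probability at least $1-\delta$ by a union bound), the min-over-representations action rule of \algo yields
\begin{align*}
r_t \leq 2 \min_{i \in [M]} \beta_{ti}(\delta/M)\,\|\phi_i(x_t, a_t)\|_{V_{ti}^{-1}}.
\end{align*}
Since $\Delta > 0$, whenever this right-hand side is strictly smaller than $\Delta$ we must have $r_t = 0$. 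The goal is therefore to exhibit a deterministic $\tau < \infty$, independent of $n$, such that the bound above is below $\Delta$ for every $t \geq \tau$.

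For each step $t$, the mixed-\hls property guarantees an index $i_t \in [M]$ with $\phi_{i_t}(x_t, a_t) \in \Imm(M_{i_t})$, and it suffices to control $\beta_{t, i_t}\|\phi_{i_t}(x_t, a_t)\|_{V_{t, i_t}^{-1}}$ for this choice. I decompose $V_{ti} = \lambda I + H_{ti}^{+} + Z_{ti}$, where $H_{ti}^{+} := \sum_{l < t: a_l = a_{x_l}^\star} \phi_i^\star(x_l) \phi_i^\star(x_l)^\transp$ is the contribution of the optimal plays and $Z_{ti} \succeq 0$ collects the suboptimal ones. The crucial geometric observation is that $\phi_i^\star(x) \in \Imm(M_i)$ almost surely (a direct consequence of $M_i = \EV_x[\phi_i^\star(x) \phi_i^\star(x)^\transp]$), so $\Imm(H_{ti}^{+}) \subseteq U_i := \Imm(M_i)$. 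Dropping $Z_{ti}$ in a lower bound on $V_{ti}$ and exploiting the resulting block-diagonal structure relative to the splitting $\Re^{d_i} = U_i \oplus U_i^{\perp}$, for any $v \in U_i$:
\begin{align*}
v^\transp V_{ti}^{-1} v \leq v^\transp (\lambda I + H_{ti}^{+})^{-1} v \leq \frac{\|v\|_2^{\,2}}{\lambda + \lambda_{\min}(H_{ti}^{+}|_{U_i})}.
\end{align*}

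Next I lower bound $\lambda_{\min}(H_{ti}^{+}|_{U_i})$ by mimicking the two-term decomposition used in the proof of Lem.~\ref{prop:hls.regret}. Matrix Hoeffding applied to the i.i.d.\ samples $\{\phi_i^\star(x_l)\phi_i^\star(x_l)^\transp\}_{l<t}$ gives, with probability at least $1-\delta$ and uniformly over $i$ and $t$,
\begin{align*}
\sum_{l=1}^{t-1} \phi_i^\star(x_l)\phi_i^\star(x_l)^\transp \succeq (t-1) M_i - \epsilon_t I, \qquad \epsilon_t = O\!\big(L_i^2 \sqrt{t \ln(d_i M t/\delta)}\big).
\end{align*}
At the same time, invoking Thm.~\ref{thm:algo.regret.positivegap} in its purely logarithmic form (valid even when every $\tau_i = +\infty$, i.e., without assuming any representation is \hls), the number of suboptimal pulls up to round $t$ satisfies $N_t^{-} \leq R_t/\Delta = O(\ln^2(t)/\Delta^2)$. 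Subtracting the suboptimal contribution bounded by $L_i^2 N_t^{-} I$ and restricting to $U_i$, where $M_i|_{U_i}$ has smallest eigenvalue $\lambda_i^{+}$, yields $\lambda_{\min}(H_{ti}^{+}|_{U_i}) \geq (t-1)\lambda_i^{+} - \epsilon_t - L_i^2 N_t^{-}$, which grows linearly in $t$ at rate $\lambda_i^{+}$.

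Plugging back gives $\|\phi_{i_t}(x_t,a_t)\|_{V_{t,i_t}^{-1}} = O(L_{i_t}/\sqrt{t\,\lambda_{i_t}^{+}})$, and with $\beta_{t,i_t}(\delta/M) = O(\ln t)$ the per-step bound is $O(\ln(t)/\sqrt{t})$. Maximizing over the (finite) set of indices defines a deterministic finite $\tau$, depending only on problem constants ($\min_i \lambda_i^{+}$, $\Delta$, $M$, $\delta$, $L_i$, $S_i$, $\sigma$, $\lambda$), after which $r_t = 0$ for all $t \geq \tau$. The final bound then follows by noting that $R_n = R_{\min\{n,\tau\}}$ and applying the regret analysis of Thm.~\ref{thm:algo.regret.positivegap} with horizon $\tau$ in place of $n$, at which point every $\min\{\tau_i, \tau\}$ collapses to $\tau$ and we obtain precisely the stated expression. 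The main obstacle is the subspace-restricted spectral analysis of $V_{ti}^{-1}$: unlike in the single \hls case, the design matrix does not grow in every direction, and one must carefully leverage the mixed-\hls property to certify that $\phi_{i_t}(x_t,a_t)$ lies exactly in the subspace $U_{i_t}$ on which linear growth is guaranteed.
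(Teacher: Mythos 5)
Your proof is correct and follows the same overall strategy as the paper's: the instantaneous-regret bound from the min-over-representations rule, the decomposition of the design matrix into optimal and suboptimal pulls, matrix concentration of $\sum_l \phi_i^\star(x_l)\phi_i^\star(x_l)^\transp$ around $tM_i$, the logarithmic bound on suboptimal pulls, the observation that $\phi_i^\star(x)\in\Imm(M_i)$ almost surely so that the design matrix grows linearly at rate $\lambda_i^+$ on $\Imm(M_i)$, and the final truncation of the regret at $\tau=\max_i\check{\tau}_i$ (cf. Lemmas~\ref{lem:mixing.loewner} and~\ref{lem:mixing.zeroregret}).

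The one place where you genuinely diverge is the linear-algebraic step bounding $v^\transp V_{ti}^{-1}v$ for $v\in\Imm(M_i)$. The paper lower-bounds $V_{t+1,i}$ by the matrix $B_{ti}=\lambda I + tM_i - (\cdots)I$, which is eventually indefinite (its eigenvalues along $\ker(M_i)$ decrease without bound), and then invokes a Kantorovich-type inequality (Lemma~\ref{lem:kanto}) together with Lemma~\ref{lem:minposeig} to control the quadratic form of $B_{ti}^{-1}$ on the positive eigenspace. You instead drop only the suboptimal-pull contribution $Z_{ti}\succeq 0$, keep the genuinely positive-definite matrix $\lambda I + H_{ti}^+$ whose image lies in $U_i=\Imm(M_i)$, and invert block-wise with respect to $U_i\oplus U_i^\perp$. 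This sidesteps the delicate inversion of a non-PSD lower bound, avoids the Kantorovich ratio entirely (your per-step bound scales as $L_i/\sqrt{t\lambda_i^+}$ rather than the paper's $L_i^3/(\lambda_i^+)^{3/2}\cdot t^{-1/2}$ up to logs, so your implicit $\check{\tau}_i$ is slightly tighter), and is arguably the cleaner argument. Everything else, including the union-bound accounting for the $1-2\delta$ confidence and the final application of the Theorem~\ref{thm:algo.regret.positivegap} analysis with horizon $\tau$, matches the paper.
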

First, note that we are still scaling with the characteristics of the best representation in the set (i.e., $d_i$, $L_i$ and $S_i$). However, the time $\tau$ to constant regret is a global value rather than being different for each representation. 
This highlights that mixed-\hls is a global property of the set of representations rather than being individual as before. In particular, whenever no representation is (globally) \hls (i.e., $\lambda_{i,\hls} =0$ for all $\phi_i$), we can show that in the worst case $\tau$ scales as $(\min_{i} \lambda^+_i)^{-2}$. In practice, we may expect \algo to even behave better than that since \textbf{i)} not all the representations may contribute actively to the mixed-\hls condition; and \textbf{ii)} multiple representations may cover the same region of the context-action space. In the latter case, since \algo leverages all the representations at once, its regret would rather scale with the largest minimum non-zero eigenvalue $\lambda^+_i$ among all the representations covering such region. 
We refer to App.~\ref{app:rep.selection.mixing} for a more complete discussion.

\subsection{Discussion}
Most of the model selection algorithms reviewed in the introduction could be readily applied to select the best representation for \linucb. However, the generality of their objective comes with several shortcomings when instantiated in our specific problem (see App.~\ref{app:relatedwork} for a more detailed comparison).
First, model selection methods achieve the performance of the best algorithm, up to a polynomial dependence on the number $M$ of models. This already makes them a weaker choice compared to \algo, which, by leveraging the specific structure of the problem, suffers only a logarithmic dependence on $M$. 
Second, model selection algorithms are often studied in a worst-case analysis, which reveals a high cost for adaptation. For instance, corralling algorithms~\citep{agarwal2017corral,pacchiano2020stochcorral} pay an extra $\sqrt{n}$ regret, which would make them unsuitable to target the constant regret of good representations. Similar costs are common to other approaches~\citep{abbasiyadkori2020regret,pacchiano2020regret}. It is unclear whether a problem-dependent analysis can be carried out and whether this could shave off such dependence.
Third, these algorithms are generally designed to adapt to a specific best base algorithm. At the best of our knowledge, there is no evidence that model selection methods could combine algorithms to achieve better performance than the best candidate, a behavior that we proved for \algo in our setting. 

On the other hand, model selection algorithms effectively deal with non-realizable representations in certain cases~\citep[e.g.,][]{foster2020misspecified,abbasiyadkori2020regret,pacchiano2020regret}, while \algo is limited to the realizable case. While a complete study of the model misspecification case is beyond the scope of this paper, in App.~\ref{app:elimination}, we discuss how a variation of the approach presented in~\citep{agarwal2012predictablerew} could be paired to \algo to discard misspecified representations and possibly recover the properties of ``good'' representations.


\begin{figure*}[t]
	\includegraphics[width=.252\textwidth]{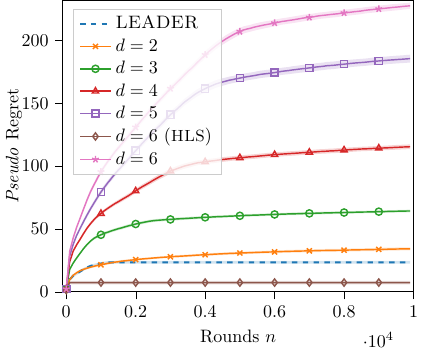}
	\includegraphics[width=.239\textwidth]{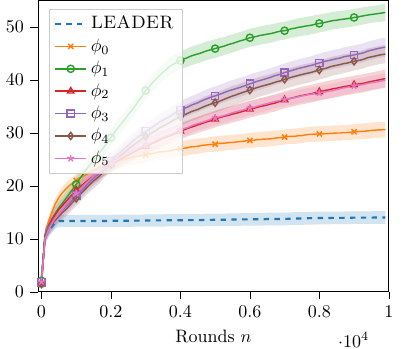}
	\includegraphics[width=.245\textwidth]{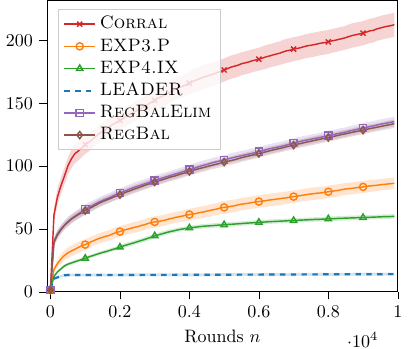}
	\includegraphics[width=.245\textwidth]{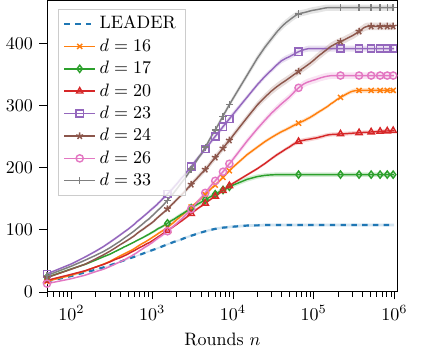}
	\vspace{-0.1in}
	\caption{Regret of \algo and model-selection baselines on different linear contextual bandit problems. (\emph{left}) Synthetic problem with varying dimensions. (\emph{middle left}) Representation mixing. (\emph{middle right}) Comparison to model selection baselines. (\emph{right}) Jester dataset.}
	\label{fig:all}
\end{figure*}

\vspace{-0.1in}
\section{Experiments} \label{sec:experiments}
\vspace{-0.03in}

In this section, we report experimental results on two synthetic and one dataset-based problems. For each problem, we evaluate the behavior of \algo with \linucb and model selection algorithms: \expfour~\citep{neu2015explore}, \corral and \expthree in the stochastic version by~\citet{pacchiano2020stochcorral} and Regret Balancing with and without elimination (\regbalelim and \regbal)~\citep[][]{abbasiyadkori2020regret,pacchiano2020regret}. See App.~\ref{app:experiments} for a detailed discussion and additional experiments. All results are averaged over $20$ independent runs, with shaded areas corresponding to $2$ standard deviations. We always set the parameters to $\lambda=1$,
$\delta=0.01$,
and $\sigma=0.3$.
All the representations we consider are normalized to have $\norm{\theta^\star_i}=1$.

\textbf{Synthetic Problems.} We define a randomly-generated contextual bandit problem, for which we construct sets of  realizable linear representations with different properties (see App.~\ref{app.exp.toy} for details).
The purpose of these experiments is twofold: to show the different behavior of \linucb with different representations, and to evaluate the ability of \algo of selecting and mixing representations.

\emph{Varying dimension.}
We construct six representations of varying dimension from $2$ up to $6$. Of the two representations of dimension $d=6$, one is \hls. Fig.~\ref{fig:all}(\emph{left}) shows that in this case, \linucb with the \hls representation outperforms any non-\hls representation, even if they have smaller dimension. 
This property is inherited by \algo, which performs better than \linucb with non-\hls representations even of much smaller dimension $2$. 

\emph{Mixing representations.}
We construct six representations of the same dimension $d=6$, \emph{none of which} is \hls. However, they are constructed so that together they satisfy the weaker mixed-\hls assumption (Def.~\ref{def:hls.mix}). Fig.~\ref{fig:all}(\emph{middle left})  shows that, as predicted by Thm.~\ref{thm:algo.mix.regret.constant}, \algo leverages different representations in different context-action regions and it thus performs significantly better than any \linucb using non-\hls representations.
The superiority of \algo w.r.t.\ the model-selection baselines is evident in this case (Fig.~\ref{fig:all}(\emph{middle right}) ), since only \algo is able to mix representations, whereas model-selection algorithms target the best in a set of ``bad'' representations. Additional experiments in App.~\ref{app:experiments} confirm that \algo consistently outperforms all model-selection algorithms.

\textbf{Jester Dataset.} In the last experiment, we extract multiple linear representations from the Jester dataset \citep{goldberg2001eigentaste}, which consists of joke ratings in a continuous range from $-10$ to $10$ for a total of $100$ jokes and 73421 users. For a subset of 40 jokes and 19181 users rating all these 40 jokes, we build a linear contextual problem as follows. First, we fit a $32\times 32$ neural network to predict the ratings from features extracted via a low-rank factorization of the full matrix. Then, we take the last layer of the network as our ``ground truth'' linear model and fit multiple smaller networks to clone its predictions, while making sure that the resulting misspecification is small. We thus obtain $7$ representations with different dimensions among which, interestingly, we find that $6$ are \hls. Figure~\ref{fig:all}(\emph{right})  reports the comparison between \algo using all representations and \linucb with each single representation on a log-scale. Notably, the ability of \algo to mix representations makes it perform better than the best candidate, while transitioning to constant regret much sooner. Finally, the fact that \hls representations arise so ``naturally'' raises the question of whether this is a more general pattern in context-action features learned from data.

\section{Conclusion}
We provided a complete characterization of ``good'' realizable representations for \linucb, ranging from existence to a sufficient and necessary condition to achieve problem-dependent constant regret. We introduced \algo, a novel algorithm that, given a set of realizable linear representations, is able to adapt to the best one and even leverage their combination to achieve constant regret under the milder mixed-\hls condition.
While we have focused on \linucb, other algorithms (e.g., LinTS~\citep{AbeilleL17}) as well as other settings (e.g., low-rank RL~\citep{jin2020provably}) 
may also benefit from \hls-like assumptions.
We have mentioned an approach for eliminating misspecified representations, but a non-trivial trade-off may exist between the level of misspecification and the goodness of the representation. A slightly imprecise but very informative representation may be preferable to most bad realizable ones. Finally, we believe that moving from selection to representation learning --e.g., provided a class of features such as a neural network-- is an important direction both from a theoretical and practical perspective.

\bibliography{bibliography}

\begin{thebibliography}{39}
\providecommand{\natexlab}[1]{#1}
\providecommand{\url}[1]{\texttt{#1}}
\expandafter\ifx\csname urlstyle\endcsname\relax
  \providecommand{\doi}[1]{doi: #1}\else
  \providecommand{\doi}{doi: \begingroup \urlstyle{rm}\Url}\fi

\bibitem[Abbasi{-}Yadkori et~al.(2011)Abbasi{-}Yadkori, P{\'{a}}l, and
  Szepesv{\'{a}}ri]{abbasi2011improved}
Yasin Abbasi{-}Yadkori, D{\'{a}}vid P{\'{a}}l, and Csaba Szepesv{\'{a}}ri.
\newblock Improved algorithms for linear stochastic bandits.
\newblock In \emph{{NIPS}}, pages 2312--2320, 2011.

\bibitem[Abbasi{-}Yadkori et~al.(2012)Abbasi{-}Yadkori, P{\'{a}}l, and
  Szepesv{\'{a}}ri]{abbasi2012online}
Yasin Abbasi{-}Yadkori, D{\'{a}}vid P{\'{a}}l, and Csaba Szepesv{\'{a}}ri.
\newblock Online-to-confidence-set conversions and application to sparse
  stochastic bandits.
\newblock In \emph{{AISTATS}}, volume~22 of \emph{{JMLR} Proceedings}, pages
  1--9. JMLR.org, 2012.

\bibitem[Abbasi-Yadkori et~al.(2020)Abbasi-Yadkori, Pacchiano, and
  Phan]{abbasiyadkori2020regret}
Yasin Abbasi-Yadkori, Aldo Pacchiano, and My~Phan.
\newblock Regret balancing for bandit and rl model selection, 2020.

\bibitem[Abeille and Lazaric(2017)]{AbeilleL17}
Marc Abeille and Alessandro Lazaric.
\newblock Linear thompson sampling revisited.
\newblock In \emph{{AISTATS}}, volume~54 of \emph{Proceedings of Machine
  Learning Research}, pages 176--184. {PMLR}, 2017.

\bibitem[Agarwal et~al.(2012{\natexlab{a}})Agarwal, Dud{\'\i}k, Kale, Langford,
  and Schapire]{agarwal2012contextual}
Alekh Agarwal, Miroslav Dud{\'\i}k, Satyen Kale, John Langford, and Robert
  Schapire.
\newblock Contextual bandit learning with predictable rewards.
\newblock In \emph{Artificial Intelligence and Statistics}, pages 19--26. PMLR,
  2012{\natexlab{a}}.

\bibitem[Agarwal et~al.(2012{\natexlab{b}})Agarwal, Dud{\'{\i}}k, Kale,
  Langford, and Schapire]{agarwal2012predictablerew}
Alekh Agarwal, Miroslav Dud{\'{\i}}k, Satyen Kale, John Langford, and Robert~E.
  Schapire.
\newblock Contextual bandit learning with predictable rewards.
\newblock In \emph{{AISTATS}}, volume~22 of \emph{{JMLR} Proceedings}, pages
  19--26. JMLR.org, 2012{\natexlab{b}}.

\bibitem[Agarwal et~al.(2017)Agarwal, Luo, Neyshabur, and
  Schapire]{agarwal2017corral}
Alekh Agarwal, Haipeng Luo, Behnam Neyshabur, and Robert~E. Schapire.
\newblock Corralling a band of bandit algorithms.
\newblock In \emph{{COLT}}, volume~65 of \emph{Proceedings of Machine Learning
  Research}, pages 12--38. {PMLR}, 2017.

\bibitem[Arora et~al.(2020)Arora, Marinov, and Mohri]{arora2020corralling}
Raman Arora, Teodor~V. Marinov, and Mehryar Mohri.
\newblock Corralling stochastic bandit algorithms.
\newblock \emph{CoRR}, abs/2006.09255, 2020.

\bibitem[Auer et~al.(2002)Auer, Cesa-Bianchi, Freund, and
  Schapire]{auer2002nonstochastic}
Peter Auer, Nicolo Cesa-Bianchi, Yoav Freund, and Robert~E Schapire.
\newblock The nonstochastic multiarmed bandit problem.
\newblock \emph{SIAM journal on computing}, 32\penalty0 (1):\penalty0 48--77,
  2002.

\bibitem[Bastani et~al.(2020)Bastani, Bayati, and Khosravi]{bastani2020mostly}
Hamsa Bastani, Mohsen Bayati, and Khashayar Khosravi.
\newblock Mostly exploration-free algorithms for contextual bandits.
\newblock \emph{Management Science}, 2020.

\bibitem[Bibaut et~al.(2020)Bibaut, Chambaz, and van~der
  Laan]{bibaut2020rateadaptive}
Aur{\'{e}}lien~F. Bibaut, Antoine Chambaz, and Mark~J. van~der Laan.
\newblock Rate-adaptive model selection over a collection of black-box
  contextual bandit algorithms.
\newblock \emph{CoRR}, abs/2006.03632, 2020.

\bibitem[Bouneffouf and Rish(2019)]{bouneffouf2019a-survey}
Djallel Bouneffouf and Irina Rish.
\newblock A survey on practical applications of multi-armed and contextual
  bandits, 2019.

\bibitem[Cantador et~al.(2011)Cantador, Brusilovsky, and
  Kuflik]{Cantador:RecSys2011}
Iv\'{a}n Cantador, Peter Brusilovsky, and Tsvi Kuflik.
\newblock 2nd workshop on information heterogeneity and fusion in recommender
  systems (hetrec 2011).
\newblock In \emph{Proceedings of the 5th ACM conference on Recommender
  systems}, RecSys 2011, New York, NY, USA, 2011. ACM. URL
  \url{http://ir.ii.uam.es/hetrec2011/index.html}. Last.fm website,
  \url{http://www.lastfm.com}.

\bibitem[Chatterji et~al.(2020)Chatterji, Muthukumar, and
  Bartlett]{chatterji2020osom}
Niladri~S. Chatterji, Vidya Muthukumar, and Peter~L. Bartlett.
\newblock {OSOM:} {A} simultaneously optimal algorithm for multi-armed and
  linear contextual bandits.
\newblock In \emph{{AISTATS}}, volume 108 of \emph{Proceedings of Machine
  Learning Research}, pages 1844--1854. {PMLR}, 2020.

\bibitem[Chatzigeorgiou(2016)]{Chatzigeorgiou16}
Ioannis Chatzigeorgiou.
\newblock Bounds on the lambert function and their application to the outage
  analysis of user cooperation.
\newblock \emph{CoRR}, abs/1601.04895, 2016.

\bibitem[Chen(2013)]{chen2013note}
Feixiang Chen.
\newblock A note on matrix versions of kantorovich-type inequality.
\newblock \emph{JOURNAL OF MATHEMATICAL INEQUALITIES}, 7\penalty0 (2):\penalty0
  283--288, 2013.

\bibitem[Chu et~al.(2011)Chu, Li, Reyzin, and Schapire]{chu2011linucb}
Wei Chu, Lihong Li, Lev Reyzin, and Robert~E. Schapire.
\newblock Contextual bandits with linear payoff functions.
\newblock In \emph{{AISTATS}}, volume~15 of \emph{{JMLR} Proceedings}, pages
  208--214. JMLR.org, 2011.

\bibitem[Degenne et~al.(2020)Degenne, M{\'e}nard, Shang, and
  Valko]{degenne2020gamification}
R{\'e}my Degenne, Pierre M{\'e}nard, Xuedong Shang, and Michal Valko.
\newblock Gamification of pure exploration for linear bandits.
\newblock In \emph{International Conference on Machine Learning}, pages
  2432--2442. PMLR, 2020.

\bibitem[Eaton and Perlman(1973)]{eaton1973non}
Morris~L Eaton and Michael~D Perlman.
\newblock The non-singularity of generalized sample covariance matrices.
\newblock \emph{The Annals of Statistics}, pages 710--717, 1973.

\bibitem[Foster et~al.(2019)Foster, Krishnamurthy, and Luo]{foster2019nested}
Dylan~J. Foster, Akshay Krishnamurthy, and Haipeng Luo.
\newblock Model selection for contextual bandits.
\newblock In \emph{NeurIPS}, pages 14714--14725, 2019.

\bibitem[Foster et~al.(2020)Foster, Gentile, Mohri, and
  Zimmert]{foster2020misspecified}
Dylan~J. Foster, Claudio Gentile, Mehryar Mohri, and Julian Zimmert.
\newblock Adapting to misspecification in contextual bandits.
\newblock In \emph{NeurIPS}, 2020.

\bibitem[Ghosh et~al.(2020)Ghosh, Sankararaman, and
  Ramchandran]{ghosh2020problem}
Avishek Ghosh, Abishek Sankararaman, and Kannan Ramchandran.
\newblock Problem-complexity adaptive model selection for stochastic linear
  bandits.
\newblock \emph{CoRR}, abs/2006.02612, 2020.

\bibitem[Goldberg et~al.(2001)Goldberg, Roeder, Gupta, and
  Perkins]{goldberg2001eigentaste}
Kenneth~Y. Goldberg, Theresa Roeder, Dhruv Gupta, and Chris Perkins.
\newblock Eigentaste: {A} constant time collaborative filtering algorithm.
\newblock \emph{Inf. Retr.}, 4\penalty0 (2):\penalty0 133--151, 2001.

\bibitem[Hao et~al.(2020)Hao, Lattimore, and Szepesv{\'{a}}ri]{hao2020adaptive}
Botao Hao, Tor Lattimore, and Csaba Szepesv{\'{a}}ri.
\newblock Adaptive exploration in linear contextual bandit.
\newblock In \emph{{AISTATS}}, volume 108 of \emph{Proceedings of Machine
  Learning Research}, pages 3536--3545. {PMLR}, 2020.

\bibitem[Hoorfar and Hassani(2008)]{hoorfar2008inequalities}
Abdolhossein Hoorfar and Mehdi Hassani.
\newblock Inequalities on the lambert w function and hyperpower function.
\newblock \emph{J. Inequal. Pure and Appl. Math}, 9\penalty0 (2):\penalty0
  5--9, 2008.

\bibitem[Jin et~al.(2020)Jin, Yang, Wang, and Jordan]{jin2020provably}
Chi Jin, Zhuoran Yang, Zhaoran Wang, and Michael~I. Jordan.
\newblock Provably efficient reinforcement learning with linear function
  approximation.
\newblock In \emph{{COLT}}, volume 125 of \emph{Proceedings of Machine Learning
  Research}, pages 2137--2143. {PMLR}, 2020.

\bibitem[Lattimore and Szepesvari(2017)]{lattimore2017end}
Tor Lattimore and Csaba Szepesvari.
\newblock The end of optimism? an asymptotic analysis of finite-armed linear
  bandits.
\newblock In \emph{Artificial Intelligence and Statistics}, pages 728--737.
  PMLR, 2017.

\bibitem[Lattimore and Szepesv{\'a}ri(2020)]{lattimore2020bandit}
Tor Lattimore and Csaba Szepesv{\'a}ri.
\newblock \emph{Bandit algorithms}.
\newblock Cambridge University Press, 2020.

\bibitem[Lattimore et~al.(2020)Lattimore, Szepesvari, and
  Weisz]{lattimore2020learning}
Tor Lattimore, Csaba Szepesvari, and Gellert Weisz.
\newblock Learning with good feature representations in bandits and in rl with
  a generative model.
\newblock In \emph{International Conference on Machine Learning}, pages
  5662--5670. PMLR, 2020.

\bibitem[Lee et~al.(2020)Lee, Pacchiano, Muthukumar, Kong, and
  Brunskill]{lee2020online}
Jonathan~N. Lee, Aldo Pacchiano, Vidya Muthukumar, Weihao Kong, and Emma
  Brunskill.
\newblock Online model selection for reinforcement learning with function
  approximation, 2020.

\bibitem[Maillard and Munos(2011)]{maillardM11}
Odalric{-}Ambrym Maillard and R{\'{e}}mi Munos.
\newblock Adaptive bandits: Towards the best history-dependent strategy.
\newblock In \emph{{AISTATS}}, volume~15 of \emph{{JMLR} Proceedings}, pages
  570--578. JMLR.org, 2011.

\bibitem[Neu(2015)]{neu2015explore}
Gergely Neu.
\newblock Explore no more: Improved high-probability regret bounds for
  non-stochastic bandits.
\newblock In \emph{{NeurIPS}}, pages 3168--3176, 2015.

\bibitem[Pacchiano et~al.(2020{\natexlab{a}})Pacchiano, Dann, Gentile, and
  Bartlett]{pacchiano2020regret}
Aldo Pacchiano, Christoph Dann, Claudio Gentile, and Peter Bartlett.
\newblock Regret bound balancing and elimination for model selection in bandits
  and rl, 2020{\natexlab{a}}.

\bibitem[Pacchiano et~al.(2020{\natexlab{b}})Pacchiano, Phan, Abbasi{-}Yadkori,
  Rao, Zimmert, Lattimore, and Szepesv{\'{a}}ri]{pacchiano2020stochcorral}
Aldo Pacchiano, My~Phan, Yasin Abbasi{-}Yadkori, Anup Rao, Julian Zimmert, Tor
  Lattimore, and Csaba Szepesv{\'{a}}ri.
\newblock Model selection in contextual stochastic bandit problems.
\newblock In \emph{NeurIPS}, 2020{\natexlab{b}}.

\bibitem[Reeve et~al.(2018)Reeve, Mellor, and Brown]{ReeveM018}
Henry W.~J. Reeve, Joe Mellor, and Gavin Brown.
\newblock The k-nearest neighbour {UCB} algorithm for multi-armed bandits with
  covariates.
\newblock In \emph{{ALT}}, volume~83 of \emph{Proceedings of Machine Learning
  Research}, pages 725--752. {PMLR}, 2018.

\bibitem[Rigollet and Zeevi(2010)]{rigollet2010nonparametric}
Philippe Rigollet and Assaf Zeevi.
\newblock Nonparametric bandits with covariates.
\newblock \emph{arXiv preprint arXiv:1003.1630}, 2010.

\bibitem[Tirinzoni et~al.(2020)Tirinzoni, Pirotta, Restelli, and
  Lazaric]{tirinzoni2020asymptotically}
Andrea Tirinzoni, Matteo Pirotta, Marcello Restelli, and Alessandro Lazaric.
\newblock An asymptotically optimal primal-dual incremental algorithm for
  contextual linear bandits.
\newblock \emph{Advances in Neural Information Processing Systems}, 33, 2020.

\bibitem[Tropp(2012)]{tropp2012user}
Joel~A. Tropp.
\newblock User-friendly tail bounds for sums of random matrices.
\newblock \emph{Found. Comput. Math.}, 12\penalty0 (4):\penalty0 389--434,
  2012.

\bibitem[Wu et~al.(2020)Wu, Yang, and Shen]{wu2020stochastic}
Weiqiang Wu, Jing Yang, and Cong Shen.
\newblock Stochastic linear contextual bandits with diverse contexts.
\newblock In \emph{{AISTATS}}, volume 108 of \emph{Proceedings of Machine
  Learning Research}, pages 2392--2401. {PMLR}, 2020.

\end{thebibliography}
\bibliographystyle{plainnat}

\clearpage
\onecolumn
\appendix

\addcontentsline{toc}{section}{Appendix} 

\part{Appendix}
\parttoc

\section{Comparison to Related Work} \label{app:relatedwork}

In this section, we provide a more detailed review of the literature of model-selection in contextual bandits.
Note that this literature has mainly focused on analyzing the minimax regime. Studying whether it is possible and/or how to leverage a problem-dependent analysis is outside the scope of this work.
We thus present a review of the theoretical results of these algorithms, once applied to the representation selection problem considered in this paper.

In general, a model-selection algorithm has access to a set of $M$ base contextual-bandit algorithms (in the following, simply \emph{bases}), and, at each step, it selects a base and plays the corresponding action.
Our bases are instances of \linucb with different (realizable) representations.

One of the representative algorithms in model selection is \corral~\citep{agarwal2017corral}. However, the stability conditions required by \corral are not satisfied by \linucb. For this reason, we consider the stochastic version in~\citep{pacchiano2020stochcorral}, which can be applied to any base algorithm after applying a \emph{smoothing wrapper}.
This approach requires the knowledge of an upper bound to the regret of the best base algorithm.
For simplicity, we consider the $\wt{\mathcal{O}}(\sqrt{n})$ worst-case bound~\citep[e.g.,][Thm. 3]{abbasi2011improved}. Let $c(\delta)$ be the leading constant from that bound.
Neglecting logarithmic terms and setting the initial learning rate of \corral to $\eta=\sqrt{M/n}/c(\delta)$, according to Theorem 5.3. from~\citet{pacchiano2020stochcorral}:
\begin{equation*}
	R_n \le \widetilde{\mathcal{O}}\left((1+c(\delta))\sqrt{Mn} + \frac{\sqrt{Mn}}{c(\delta)}\right).
\end{equation*}
\citet{pacchiano2020stochcorral} showed a minimax lower bound $\Omega(T)$ for adversarial masters. Despite being a minimax result, whether the inherent $\sqrt{T}$ of adversarial masters can be shaved off through a refined analysis is an open question, as well as how to leveraged more adaptive regret bounds. 
Another major downside of \corral is the $\sqrt{M}$ dependence on the number of representations, compared to the logarithmic dependence of \algo. Clearly, these downsides are compensated by the more general applicability of these algorithms. See~\citep{arora2020corralling,foster2020misspecified} for examples of usage of corralling techniques.
\citet{pacchiano2020stochcorral} also propose to use \expthree as a master algorithm in place of \corral, using the same smoothing wrapper for base algorithms and the same regret oracle. By setting the explicit-exploration parameter of \expthree to $p=n^{-1/3}M^{-2/3}c(\delta)^{2/3}$, according to Theorem 5.3. from~\citet{pacchiano2020stochcorral}: $R_n \le \wt{\mathcal{O}}\left(\sqrt{Mn} + M^{1/3}n^{2/3}c(\delta)^{2/3}\right)$.
The same considerations made for \corral apply to this case.

Recently, techniques based on the idea of regret balancing has been used for model selection~\citep{abbasiyadkori2020regret,pacchiano2020regret}.
Denote by $\mathcal{H}_{ti} = (x_k,a_k,r_k)_{k\in N_i(t)}$ as the history associated to base algorithm $i$, i.e., restricted to rounds where $i$ was selected.
The Regret Balancing algorithm by~\citet{abbasiyadkori2020regret} (\regbal for short) requires access to a (high-probability, possibly data dependent) upper bound $u:\mathcal{H}_{ti^\star}\mapsto\Reals$ on the regret of the best base algorithm $i^\star\in[M]$. The regret of \regbal is then, with high probability:
\begin{equation*}
	R_n \le M \max_{i\in[M]}u(\mathcal{H}_{ni}),
\end{equation*}
that is, bounded like the best base on the worst subjective history. While this bound is very implicit, it shows: \emph{i)} \regbal pays a linear dependence on the number of base algorithms; \emph{ii)} even using a problem dependent bound, the final result will scale as the worse base algorithm. 

The second regret balancing approach is by~\citep{pacchiano2020regret}. Their Regret Balancing and Elimination by~\citet{pacchiano2020regret} (\regbalelim for short)  requires an upper bound $u_i: \mathcal{H}_i\mapsto\Reals$ on the regret of \emph{each} base algorithm (can be different for each algorithm). Their algorithms is able to eliminate the bases for which $u_i$ is incorrectly specified, and compete with the best \emph{correct} upper bound. Their algorithm improves w.r.t.\ \regbal and, by simplifying a lot, may allow to scale as the regret of the best base algorithm. However, \emph{i)} the dependence on $M$ is still linear and \emph{ii)} whether the idea of regret balancing will allow to break the barrier of logarithmic regret is an open question. As shown in their paper (e.g., Table 1), even if the regret upper bound is $O(n^{\beta})$ with $\beta < 1/2$, their bound has a (minimax) $O(\sqrt{n})$ dependence.

Finally, we would like to mention also the literature about learning with expert advice.
In particular, \expfour~\citep{neu2015explore} is an algorithm for bandits with expert advice, itself an improvement over the original \textsc{EXP4}~\citep{auer2002nonstochastic}. At each step, each expert prescribes an action (more in general, a distribution over actions) based on the whole history. \expfour observes the prescriptions of all the experts and decides which action to play. In our case, the experts are the bases. Similarly to \algo, \expfour is not strictly a model-selection algorithm since the action it plays may be different from the one prescribed by any base. 
In our notation, the (worst-case, high-probability) regret bound from~\citep[][Thm. 2]{neu2015explore} is:
\begin{equation*}
	R_n \le 2\sqrt{2Kn\log M} + \left(\sqrt{\frac{2Kn}{\ln n}}+1\right)\ln(2/\delta) + u_n,
\end{equation*}
where $u_n$ is an upper bound on the regret of \linucb with the best representation. Even in the case the best representation is \hls (i.e., $u_n$ is a constant), the regret bound is still $\widetilde{\mathcal{O}}(\sqrt{n})$. Moreover, \expfour has an explicit dependence on the number of actions $K$. An upside is the logarithmic dependence on the number of representations $M$.


\section{Analysis of Several Diversity Conditions}

\subsection{Categorization}
\label{app:rep.structures}
In this Section, we provide a complete proof of Lemma~\ref{prop:rep.asm.properties}.
The (counter)examples we provide to prove that the intersections depicted in Figure~\ref{fig:rep.asm.classification} are nonempty, we believe, can also improve understanding of the diversity conditions. 

In comparing representations, we fix a contextual problem with reward $\mu$ and context distribution $\rho$, and a dimension $d\ge 1$. We use set notation for conciseness.\footnote{For instance, we use \hls to denote the set of $d$-dimensional realizable representations that are \hls for the given contextual problem.}
First, we provide in Table~\ref{tab:diversity.span} equivalent definitions, in terms of the span of particular sets of features, of the properties defined in Figure~\ref{tab:diversity.conditions}. Without loss of generality, we assume that the context distribution $\rho$ is full-support, i.e., $\supp(\rho)=\mathcal{X}$. Otherwise, it is enough to replace $\mathcal{X}$ with $\supp(\rho)$ in the definitions. 

\begin{table}[h]
	\centering
	\begin{tabular}{|c|c|}
		\hline
		Name & Definition \\
		\hline
		\hline
		Non-redundant & $\spann\left\{\phi(x,a)\mid x\in\mathcal{X},a\in\mathcal{A}\right\} = \Reals^{d}$ \\
		\hline
		\cmb & $\forall a$, $\spann\left\{\phi(x,a)\mid x\in\mathcal{X}\right\} = \Reals^{d}$ \\
		\hline
		\bbk & $\forall a, \boldsymbol{u}\in\Reals^{d}$, $\spann\left\{\phi(x,a)\mid x\in\mathcal{X},\phi(x,a)^\transp \boldsymbol{u}\ge 0\right\} = \Reals^{d}$ \\ 
		\hline
		\hls & $\spann\left\{\phi^{\star}(x)\mid x\in\mathcal{X}\right\} = \Reals^{d}$ \\
		\hline
		\wys & $\forall a$, $\spann\left\{\phi(x,a)\mid x\in\X^\star(a)\right\} = \Reals^{d}$ \\
		\hline
		\hline
	\end{tabular}
	\caption{Diversity conditions in terms of the span of particular sets of features.}
	\label{tab:diversity.span}
\end{table}
	
The examples we provide are always $2$-dimensional representations with a corresponding parameter $\theta^{\star}=[1,1]$, and refer to the contextual problem having the resulting reward function and uniform context distribution. Optimal features are \underline{underlined} for convenience.
 
\paragraph{$\cmb \subset \text{non-redundant}$.}
For any $a\in\mathcal{A}$, $\spann\left\{\phi(x,a)\mid x\in\mathcal{X}\right\}\subseteq \spann\left\{\phi(x,a')\mid x\in\mathcal{X},a'\in\mathcal{A}\right\}$. In words, if the features of a single arm span $\Reals^d$, so does the whole set of features. 
Not all non-redundant representations are \cmb, as testified by the following representation:
\begin{align*}
&\underline{\phi(x_1,a_1)} = [1,1] &&\color{darkred}\phi(x_1,a_2)=[\nicefrac{1}{2},\nicefrac{1}{2}]\\
&\phi(x_2,a_1) = [0,1] &&\color{darkred}\underline{\phi(x_2,a_2)}=[1,1]
\end{align*}
which is non-redundant since all features together span $\Reals^2$, but not \cmb since those of arm $a_2$ alone fail to do so.

\paragraph{$\hls \subset \text{Non-redundant}$.}
Since $\phi^{\star}(x)=\phi^{\star}(x,a_{x}^{\star})$ for some $a^{\star}_x\in\mathcal{A}$, $\spann\left\{\phi^{\star}(x)\mid x\in\mathcal{X}\right\}\subseteq \spann\left\{\phi(x,a')\mid x\in\mathcal{X},a'\in\mathcal{A}\right\}$. In words, if optimal features span $\Reals^d$, so does the whole set of features.
Not all non-redundant representations are \hls, as testified by the following representation:
\begin{align*}
&\color{darkred}\underline{\phi(x_1,a_1)} = [1,1] &&\phi(x_1,a_2)=[\nicefrac{1}{2},\nicefrac{1}{2}]\\
&\phi(x_2,a_1) = [0,1] &&\color{darkred}\underline{\phi(x_2,a_2)}=[1,1]
\end{align*}
which is non-redundant since all features together span $\Reals^2$, but not \hls since optimal features alone fail to do so.

\paragraph{$\cmb\nsubseteq \hls$.}
The following representation is \cmb but not \hls:
\begin{align}
&\color{darkred}\underline{\phi(x_1,a_1)} = [1,1] &&\phi(x_1,a_2)=[1,0]\nonumber\\
&\phi(x_2,a_1) = [0,1] &&\color{darkred}\underline{\phi(x_2,a_2)}=[1,1]\label{eq:rep.cmb}
\end{align}
since the features of each arm taken alone span $\Reals^2$, but optimal features fail to do so.

\paragraph{$\hls\nsubseteq \cmb$.}
The following representation is \hls but not \cmb:
\begin{align*}
&\underline{\phi(x_1,a_1)} = [2,0] &&\color{darkred}\phi(x_1,a_2)=[\nicefrac{1}{2},\nicefrac{1}{2}]\\
&\underline{\phi(x_2,a_1)} = [0,2] &&\color{darkred}\phi(x_2,a_2)=[\nicefrac{1}{2},\nicefrac{1}{2}]
\end{align*}
since optimal features span $\Reals^2$, but the features of arm $a_2$ alone fail to do so. 

\paragraph{$\wys \subset \cmb\cap\hls$.}
Assume $\phi$ is \wys. For all $a$, since $\X^\star(a)\subseteq \X$, $\spann\{\phi(x,a)|x\in\X^\star(a)\}\subseteq\spann\{\phi(x,a)|x\in\X\}$. Hence $\phi$ is \cmb. If $x\in\X^\star(a)$, then $\phi(x,a)=\phi^\star(x)$. So, for any $a$, $\spann\{\phi(x,a)|x\in\X^\star(a)\}\subseteq\spann\{\phi^\star(x)|x\in\X\}$. Hence $\phi$ is \hls. The inclusion is strict, as testified by the following representation, which is both \hls and \cmb:
\begin{align*}
&\color{darkred}\underline{\phi(x_1,a_1)} = [2,0] &&\phi(x_1,a_2)=[1,0]\\
&\underline{\phi(x_2,a_1)} = [0,2] &&\phi(x_2,a_2)=[0,1]
\end{align*}
but is not \wys since $a_1$ is only optimal for $x_1$, and $\phi(x_1,a_1)$ cannot span $\Reals^2$ alone.

\paragraph{$\cmb\cap\hls\neq\emptyset$.}
See the previous example.

\paragraph{$\bbk\subset\cmb$.}
For all action $a$ and $\boldsymbol{u}\in\Reals^d$, $\spann\left\{\phi(x,a)\mid x\in\mathcal{X},\phi(x,a)^\transp \boldsymbol{u}\ge 0\right\}\subseteq \spann\left\{\phi(x,a)\mid x\in\mathcal{X}\right\}$. So if a representation is \bbk, it is also \cmb. The converse is not true in general: none of the examples we have provided so far are \bbk, since all features lie in the first quadrant of $\Reals^2$, hence $\spann\left\{\phi(x,a)\mid x\in\mathcal{X},\phi(x,a)^\transp\boldsymbol{u} \ge 0\right\}=\emptyset$ for $\boldsymbol{u}=[-1,-1]$. In particular, \eqref{eq:rep.cmb} is \cmb but not \bbk. This, together with $\hls\nsubseteq\cmb$, also shows that $\hls\nsubseteq\bbk$.

\paragraph{$\bbk\nsubseteq\hls$}
The following representation:
\begin{align}
&\color{darkred}\underline{\phi(x_1,a_1)} = [2,0] &&\phi(x_1,a_2)=[0,1]\nonumber\\
&\phi(x_2,a_1) = [0,1] &&\color{darkred}\underline{\phi(x_2,a_2)}=[2,0]\nonumber\\
&\color{darkred}\underline{\phi(x_2,a_1)} = [-1,0] &&\phi(x_2,a_2)=[0,-2]\nonumber\\
&\phi(x_2,a_1) = [0,-2] &&\color{darkred}\underline{\phi(x_2,a_2)}=[-1,0]\label{eq:rep.bbk.nohls}
\end{align}
is \bbk but not \hls. To show that it is not \hls, we just notice that optimal features fail to span $\Reals^2$. To show that it is \bbk, we can easily check that $\{\phi(x,a)\mid x\in\mathcal{X},\phi(x,a)^\transp\boldsymbol{u} \ge 0\}$ spans $\Reals^2$ for both actions and $\boldsymbol{u}\in\{[1,1],[-1,1],[-1,-1],[1,-1]\}$. Any other vector can be obtained as $A\boldsymbol{u}$ from one of these four vectors, where $A$ is a p.s.d. diagonal matrix. Since all features $\phi$ are aligned with the axes of $\Reals^2$, $\phi^\transp\boldsymbol{u} = \phi_i\times u_i$, for some index $i\in\{1,2\}$. Similarly, $\Reals^d$, $\phi(x,a)^\transp A\boldsymbol{u} = \phi_i\times A_{ii}\times u_i$. Since $A_{ii}\ge 0$, non-negativity of the former guarantees non-negativity of the latter. This example also shows that \bbk is not empty in general.

\paragraph{$\hls\cap\bbk\nsubseteq\wys$.}
The following representation:
\begin{align}
&\underline{\phi(x_1,a_1)} = [2,0] &&\phi(x_1,a_2)=[1,0]\nonumber\\
&\underline{\phi(x_2,a_1)} = [0,2] &&\phi(x_2,a_2)=[0,1]\nonumber\\
&\underline{\phi(x_2,a_1)} = [-1,0] &&\phi(x_2,a_2)=[-2,0]\nonumber\\
&\underline{\phi(x_2,a_1)} = [0,-1] &&\phi(x_2,a_2)=[0,-2]\label{eq:rep.bbk.nowys}
\end{align}
is \hls and \bbk but not \wys. To show that it is \bbk, we can use the same argument used for~\eqref{eq:rep.bbk.nohls}. To show that it is not \wys, we notice that $a_2$ is never optimal, so $\spann\left\{\phi(x,a_2)\mid x\in\X^\star(a_2)\right\}=\emptyset$.
This example also shows that $\hls\cap\bbk$ is not empty in general, and that $\bbk\nsubseteq\wys$.

\paragraph{$\wys\nsubseteq\bbk$}
The following representation:
\begin{align*}
&\underline{\phi(x_1,a_1)} = [2,0] &&\phi(x_1,a_2)=[1,0]\\
&\underline{\phi(x_2,a_1)} = [0,2] &&\phi(x_2,a_2)=[0,1]\\
&\phi(x_2,a_1) = [1,0] &&\underline{\phi(x_2,a_2)}=[2,0]\\
&\phi(x_2,a_1) = [0,1] &&\underline{\phi(x_2,a_2)}=[0,2]
\end{align*}
is \wys since each arm admits two orthogonal optimal features, but not \bbk since all the features lie in the first quadrant of $\Reals^2$. This example also shows that \wys is not empty in general.

\paragraph{$\bbk\cap\hls\cap\wys\neq \emptyset$.}
The following representation:
\begin{align*}
&\underline{\phi(x_1,a_1)} = [2,0] &&\phi(x_1,a_2)=[1,0]\\
&\underline{\phi(x_2,a_1)} = [0,2] &&\phi(x_2,a_2)=[0,1]\\
&\phi(x_2,a_1) = [-2,0] &&\underline{\phi(x_2,a_2)}=[-1,0]\\
&\phi(x_2,a_1) = [0,-2] &&\underline{\phi(x_2,a_2)}=[0,-1]
\end{align*}
is \hls, \bbk and \wys. This can be shown using the arguments employed for the previous examples.

\subsection{Existence of \hls Representations} \label{app:rep.structures.hls.existence}
In this section, we prove a slightly more general version of Theorem~\ref{prop:existence.hls}.

We first prove the existence of an \hls representation in the case of finite contexts.
\begin{lemma}\label{lem:existence.finite}
For any dimension $d\ge 1$, any contextual problem such that:
\begin{itemize}
	\item $\X$ is finite,
	\item $|\supp(\rho)|\ge d$, and
	\item there exists $x\in\supp(\rho)$ with $\mu^\star(x)\neq 0$,
\end{itemize} 
admits a $d$-dimensional realizable \hls representation.
\end{lemma}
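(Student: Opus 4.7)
The plan is a direct constructive proof. The key observation is that realizability $\mu(x,a) = \langle \phi(x,a), \theta^\star\rangle$ only pins down the component of each feature along $\theta^\star$, leaving the $d-1$ orthogonal directions free to be used to make the optimal features $\phi^\star(x)$ span $\Reals^d$. I will fix $\theta^\star = e_1$, the first canonical basis vector in $\Reals^d$, so the first coordinate of every $\phi(x,a)$ is forced to equal $\mu(x,a)$, and then tune the remaining coordinates on $d$ well-chosen contexts to secure the \hls condition.

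Concretely, since $|\supp(\rho)|\geq d$ and by hypothesis there exists some $x\in\supp(\rho)$ with $\mu^\star(x)\neq 0$, I pick $d$ distinct contexts $x_1,\ldots,x_d\in\supp(\rho)$ with $\mu^\star(x_1)\neq 0$. For each $x\in\X$, fix some optimal action $a^\star_x\in\argmax_{a\in[K]}\mu(x,a)$. Define
\[
    \phi(x_1, a^\star_{x_1}) := \mu^\star(x_1)\,e_1, \qquad \phi(x_i, a^\star_{x_i}) := \mu^\star(x_i)\,e_1 + e_i \quad \text{for } i=2,\ldots,d,
\]
and for every remaining pair $(x,a)$ set $\phi(x,a) := \mu(x,a)\,e_1$, with the tie-breaking convention that any non-designated action achieving $\mu^\star(x)$ inherits the same feature as $a^\star_x$ (so that $\phi^\star(x)$ is unambiguously defined, per the uniqueness convention stated earlier).

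Verification then proceeds in three short steps. First, realizability: by construction the first coordinate of $\phi(x,a)$ is $\mu(x,a)$, hence $\langle\phi(x,a),e_1\rangle = \mu(x,a)$. Second, linear independence of $\phi^\star(x_1),\ldots,\phi^\star(x_d)$: stacking them as columns yields an upper-triangular matrix with diagonal entries $(\mu^\star(x_1),1,\ldots,1)$, whose determinant equals $\mu^\star(x_1)\neq 0$. Third, the \hls condition: writing $M := \EV_{x\sim\rho}[\phi^\star(x)\phi^\star(x)^\transp]$, for any $v\in\Reals^d$,
\[
    v^\transp M v \;=\; \sum_{x\in\supp(\rho)} \rho(x)\,\langle v,\phi^\star(x)\rangle^2 \;\geq\; \sum_{i=1}^{d} \rho(x_i)\,\langle v,\phi^\star(x_i)\rangle^2.
\]
Each $\rho(x_i)>0$, and the $\phi^\star(x_i)$ span $\Reals^d$, so the right-hand side is strictly positive for any $v\neq 0$, yielding $\lambda_{\min}(M)>0$.

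None of these steps is technically deep; the only conceptual care is that the third hypothesis, namely $\mu^\star(x)\neq 0$ for some support context, is precisely what keeps the triangular determinant nonvanishing, and therefore what prevents realizability from collapsing the optimal features onto a proper subspace. It is also essentially necessary: if $\mu^\star$ vanished on all of $\supp(\rho)$, then $\theta^\star$ would have to be orthogonal to every $\phi^\star(x)$ in the support, which under \hls would force $\theta^\star=0$ and hence $\mu\equiv 0$, consistent with the footnote that the hypothesis is technical and can be relaxed.
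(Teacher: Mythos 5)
Your proof is correct and follows essentially the same route as the paper's: fix $\theta^\star=e_1$ so the first coordinate carries the reward, use the remaining $d-1$ coordinates as per-context indicator-type features on $d$ chosen support contexts so that the optimal features form a (triangular, hence full-rank) matrix, and conclude $\lambda_{\min}>0$ by restricting the expectation to those $d$ contexts. The only cosmetic differences are that the paper adds the indicator coordinate $L\cdot\indi{x=x_i}$ to \emph{all} actions of context $x_i$ and invokes strict diagonal dominance, whereas you perturb only the optimal feature and read off a triangular determinant.
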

\begin{proof}
	Consider a contextual problem $\mathcal{P}$ with the properties stated above. Without loss of generality, $\rho(x)>0$ for $x\in\{x_1,\dots,x_d\}$, and $\mu^\star(x_1)\neq 0$.
	
	In this case, to prove that a $d$-dimensional realizable linear representation $\phi$ is \hls, it suffices to show that $\spann\{\phi^\star(x_i)\}_{i=1}^d=\Reals^d$. Indeed:
	\begin{equation}
	\lambda_{\min}\left(\EV_{x\sim\rho}[\phi^{\star}(x)\phi^{\star}(x)^\transp]\right) \ge \lambda_{\min}\left(\sum_{i=1}^d\rho(x_i)\phi^{\star}(x_i)\phi^{\star}(x_i)^\transp\right) \ge \rho_{\min}\times\lambda_{\min}\left(\sum_{i=1}^d\phi^{\star}(x_i)\phi^{\star}(x_i)^\transp\right),\label{eq:existence.finite.1}
	\end{equation}
	where $\rho_{\min}\coloneqq\min_{i\in[d]}\rho(x_i)>0$. If $\phi^\star(x_1),\dots\phi^\star(x_d)$ span $\Reals^d$, the matrix $\Phi$ having these vectors as rows is full rank, hence so is $\sum_{i=1}^d\phi^{\star}(x_i)\phi^{\star}(x_i)^\transp=\Phi^\transp\Phi$, and the minimum eigenvalue in~\eqref{eq:existence.finite.1} is positive.
	
	We first consider the case $d=1$. A realizable one-dimensional linear representation for $\mathcal{P}$ is $\phi(x,a)=\mu(x,a)$ for all $x,a$, with $\theta^\star=1$. This is \hls since $\phi^\star(x_1)=\mu^\star(x_1)\neq 0$ spans $\Reals$. 
	
	Now consider the case $d>1$. Let $L$ be a constant such that $L>\max_{i\in[d]}|\mu^\star(x_i)|$. Notice that $L>|\mu^\star(x_1)|>0$. Within the scope of this proof, we use $\phi_i$ to denote the $i$-th element of vector $\phi$. Consider the linear representation with $\phi_1(x,a)=\mu(x,a)$ and $\phi_i(x,a)=L\times \indi{x=x_i}$ for $i=2,\dots,d$ and all $x,a$. This is a $d$-dimensional linear representation for $\mathcal{P}$ with $\theta^\star=[1,0\dots,0]^\transp$. The optimal features of the first $d$ contexts, taken as rows, form the following matrix:
	\begin{equation*}
		\begin{bmatrix}
			\mu^*(x_1) & 0 & 0 &\dots & 0 \\
			\mu^*(x_2) & L & 0 &\dots & 0 \\
			\mu^*(x_3) & 0 & L &\dots & 0 \\
			\vdots & \vdots & \vdots &\ddots & \vdots \\
			\mu^*(x_d) & 0 & 0 & \dots & L
		\end{bmatrix}
	\end{equation*}
	which is strictly diagonally dominant, hence full-rank. Equivalently, $\phi^\star(x_1),\dots,\phi^\star(x_d)$ span $\Reals^d$, completing the proof.
\end{proof}

Next, we prove a similar result for the continuous case.

\begin{lemma}\label{lem:existence.borel}
	For any dimension $d\ge 1$, any contextual problem such that:
	\begin{itemize}
		\item $\X$ is a standard Borel space with measure $\rho$ and metric $\partial$,
		\item $x\mapsto\mu^\star(x)$ is continuous on $\X$, and 
		\item there exist $x_1,\dots,x_d\in\supp(\rho)$ with $\mu^\star(x_i)\neq 0$ for $i=1,\dots,d$,
	\end{itemize} 
	admits a $d$-dimensional realizable \hls representation.
\end{lemma}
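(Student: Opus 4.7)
The plan is to mirror the construction used in the finite-case Lemma~\ref{lem:existence.finite}, replacing the coordinate-wise indicator features with continuous bump functions adapted to the metric $\partial$ on $\X$. Concretely, I would pick $L > \max_i |\mu^\star(x_i)|$, a radius $0 < \epsilon < \tfrac{1}{3}\min_{i \neq j}\partial(x_i,x_j)$ so that the closed balls $B_i = \{x : \partial(x,x_i) \le \epsilon\}$ are mutually disjoint (and contain no other probe point), and define the $d$-dimensional map
\[
   \phi_1(x,a) = \mu(x,a), \qquad \phi_i(x,a) = L\, f_i(x) \ \text{for}\ i = 2,\dots,d,
\]
where $f_i(x) = \max\{0, 1 - \partial(x,x_i)/\epsilon\}$ is a Lipschitz bump with $f_i(x_i)=1$ and $\supp(f_i)\subseteq B_i$. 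Taking $\theta^\star = [1,0,\dots,0]^\transp$ gives $\langle \phi(x,a),\theta^\star\rangle = \mu(x,a)$, so realizability is immediate and is unaffected by the shape of the $f_i$'s (which only affects components annihilated by $\theta^\star$). The one-dimensional case reduces to $\phi(x,a) = \mu(x,a)$, $\theta^\star = 1$: continuity of $\mu^\star$ and $\mu^\star(x_1)\neq 0$ yield a neighborhood of $x_1$ on which $|\mu^\star| \ge |\mu^\star(x_1)|/2$, and this neighborhood has positive $\rho$-mass since $x_1 \in \supp(\rho)$, giving $\EV_\rho[(\mu^\star(x))^2] > 0$.

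For $d \ge 2$, the core step is to prove that $M := \EV_{x\sim\rho}[\phi^\star(x)\phi^\star(x)^\transp]$ is positive definite. I would argue by contradiction: if some $v \in \Re^d \setminus \{0\}$ satisfies $v^\transp M v = 0$, then
\[
   g(x) \;:=\; v^\transp \phi^\star(x) \;=\; v_1 \mu^\star(x) + L \sum_{i=2}^d v_i f_i(x)
\]
vanishes $\rho$-almost surely. Since the coordinates $\phi_i(x,a)=L f_i(x)$ do not depend on $a$, the map $x \mapsto \phi^\star(x)$ is continuous (its first component $\mu^\star$ is continuous by hypothesis, and the rest are Lipschitz by construction), so $g$ is continuous on $\X$. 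Continuity then promotes the almost-sure identity to a pointwise one on $\supp(\rho)$: the set $\{g \neq 0\}$ is open and has $\rho$-measure zero, hence cannot meet $\supp(\rho)$, and in particular $g(x_j) = 0$ for every $j = 1,\dots,d$. By the disjoint-support choice of $\epsilon$, $f_i(x_j) = \indi{i=j}$ at the probe points, so evaluating at $x_1$ gives $v_1 \mu^\star(x_1) = 0$, forcing $v_1 = 0$ because $\mu^\star(x_1) \neq 0$; evaluating then at $x_j$ for $j \ge 2$ gives $L v_j = 0$, forcing $v_j = 0$. This contradicts $v \neq 0$, so $M \succ 0$ and $\phi$ is \hls. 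An infinite family of \hls representations is obtained by letting $\epsilon$ (or $L$) range in a non-degenerate interval.

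The main obstacle is the measure-theoretic step of lifting an almost-sure identity to a pointwise one at the probe points $x_i$; this is exactly where the continuity hypothesis on $\mu^\star$ must be used, and it is also the reason we take the auxiliary features $Lf_i$ to be functions of $x$ only, so that no discontinuity can be introduced by the (potentially ambiguous) optimal-action selector $a^\star_x$. A secondary and purely technical point is making sure the chosen bumps are Borel measurable on a standard Borel space; this is automatic, since any standard Borel space admits a compatible Polish metric, making $f_i$ Lipschitz in particular Borel.
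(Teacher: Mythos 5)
Your proof is correct, and your representation is essentially the paper's: first coordinate equal to the reward (so $\theta^\star=e_1$ gives realizability for free), remaining coordinates localized around the probe points and scaled by a large $L$, with the metric used to make the localizing sets disjoint. Where you genuinely diverge is in how positive-definiteness of $M=\EV_{x\sim\rho}[\phi^\star(x)\phi^\star(x)^\transp]$ is verified. The paper argues directly and quantitatively: it restricts the integral to the disjoint balls $F_i$, dominates $\phi^\star(x)\phi^\star(x)^\transp$ on each $F_i$ by a fixed rank-one matrix $\boldsymbol{u}(i)\boldsymbol{u}(i)^\transp$, and shows the $\boldsymbol{u}(i)$ form a strictly diagonally dominant (hence full-rank) matrix, yielding the explicit lower bound $\lambda_{\min}(M)\ge\rho_{\min}\,\lambda_{\min}\bigl(\sum_i\boldsymbol{u}(i)\boldsymbol{u}(i)^\transp\bigr)>0$. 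You instead argue by contradiction: a null direction $v$ makes $g=v^\transp\phi^\star$ vanish $\rho$-a.s., continuity of $g$ (here is where you, unlike the paper, need continuous bumps rather than indicators, and where you correctly note that only the reward coordinate sees the possibly ill-behaved optimal-action selector) promotes this to $g=0$ on $\supp(\rho)$, and evaluating at the probe points solves a triangular system forcing $v=0$. Your route is cleaner for the purely qualitative existence claim and makes the role of the support/continuity hypotheses more transparent; the paper's route additionally produces an explicit lower bound on $\lambda_{\phi,\hls}$, which is the quantity that actually enters the constant-regret bound of Lemma~\ref{prop:hls.regret}, so it tells you not just that the constructed representation is \hls but roughly how good it is. Both arguments share the same implicit assumptions (the $x_i$ are distinct, and $\mu^\star$ is square-integrable so that $M$ is well defined), so neither of these is a gap relative to the paper.
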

\begin{proof}
	Consider a contextual problem $\mathcal{P}$ with the properties stated above. Without loss of generality, let $x_1,\dots,x_d\in\supp(\rho)$ and $|\mu^\star(x_i)|>\epsilon_i>0$ for $i=1,\dots,d$.
	
	We first consider the case $d=1$.
	Define the open interval $I_1\coloneqq(\epsilon_1,\infty)$.
	Since $\mu^\star$ is continuous, $|\mu^\star|$ is also continuous. Hence, the inverse image $E_1=|\mu^\star|^{-1}(I_1)$ is an open subset of $\X$. In particular, since $x_1\in E_1$, $E_1$ is an open neighborhood of $x_1$. 
	Since $x_1\in\supp(\rho)$, by definition of support\footnote{In a Borel space, the support of the measure is defined as the smallest set for which every open neighbourhood of every point in the set has positive measure.
	}, $\rho(E_1)>0$. Moreover, by definition of $E_1$, $|\mu^\star(x)|>\epsilon_1$ for all $x\in E_1$. Consider the representation $\phi(x,a)=\mu(x,a)$ for all $x,a$. This is a one-dimensional realizable linear representation for $\mathcal{P}$ with $\theta^\star=1$. We have that:
	\begin{equation*}
		\lambda_{\min}\left(\EV_{x\sim\rho}[\phi^{\star}(x)\phi^{\star}(x)^\transp]\right) = \int_{\mathcal{X}} \mu^\star(x)^2 \mathrm{d}\rho 
		\ge \int_{E_1}\mu^\star(x)^2\mathrm{d}\rho
		> \rho(E_1)\times \epsilon_1^2 > 0,
	\end{equation*}
	and the representation is \hls.
	
	Now consider the case $d>1$.
	For $i=1,\dots,d$, define the open interval $I_i\coloneqq(\epsilon_i,\infty)$.  As in the scalar case, $E_i=|\mu^\star|^{-1}(I_i)$ is an open neighborhood of $x_i$ of positive measure such that $\mu^\star(x)>\epsilon_i$ for all $x\in E_i$. However, these neighborhoods may not be disjoint. 
	
	To fix this, we will use the fact that $\X$ is a metric space. Let $r_0=\min\{\partial(x_i,x_j)\mid i,j=1,\dots,d\}$ be the minimum distance between any of the $d$ points.
	Also, let $r_i$ be the radius of the smallest open ball contained in $E_i$. Let $r=\min\{r_0,r_1,\dots,r_d\}$. Finally, let $F_i=\mathcal{B}_{x_i,r}=\{x\in\X|\partial(x,x_i)<r\}$ be the open ball of radius $r$ centered in $x_i$.
	The $F_1,\dots,F_d$ defined in this way are \emph{disjoint} open neighborhoods of $x_1,\dots,x_d$ of positive measure such that, for all $x\in F_i$, $|\mu^\star(x)|>\epsilon_i$.
	
	Let $L$ be a constant such that $L>\max_{i\in[d]}\epsilon_i$. Notice that $L>0$. Within the scope of this proof, we use $\phi_i$ to denote the $i$-th element of vector $\phi$. Consider the linear representation with $\phi_1(x,a)=\mu(x,a)$ and $\phi_i(x,a)=L\times \indi{x\in F_i}$ for $i=2,\dots,d$ and all $x,a$. This is a $d$-dimensional realizable linear representation for $\mathcal{P}$, with $\theta^\star=[1,0\dots,0]^\transp$. 
	Notice that, for all $x\in F_i$, $\norm{\phi{^\star}(x)}^2 = \mu^\star(x)^2+L^2>\epsilon_i^2+L^2$. 
	For $i=1,\dots,d$, define vector $\boldsymbol{u}(i)$ as follows: $\boldsymbol{u}_1(i)=\epsilon_i$ and $\boldsymbol{u}_j(i)=L\times \indi{x\in F_j}$ for $j=1,\dots,d$.  
	For any vector $\boldsymbol{u}\in\Reals^d$, the only nonzero eigenvalue of $\boldsymbol{u}\boldsymbol{u}^\transp$ is $\norm{\boldsymbol{u}}^2$. 
	So if $x\in F_i$, $\phi^\star(x)\phi^\star(x)^\transp\succ \boldsymbol{u}(i)\boldsymbol{u}(i)^\transp$. Finally, we have that:
	\begin{align}
	\lambda_{\min}\left(\EV_{x\sim\rho}[\phi^{\star}(x)\phi^{\star}(x)^\transp]\right) &= \lambda_{\min}\left(\int_{\mathcal{X}} \phi^\star(x)\phi^\star(x)^\transp \mathrm{d}\rho\right)\nonumber\\ 
	&\ge \lambda_{\min}\left(\sum_{i=1}^d\int_{F_i} \phi^\star(x)\phi^\star(x)^\transp \mathrm{d}\rho\right)\label{eq:existence.borel.disjoint}\\
	&\ge  \lambda_{\min}\left(\sum_{i=1}^d\int_{F_i} \boldsymbol{u}(i)\boldsymbol{u}(i)^\transp \mathrm{d}\rho\right) \nonumber\\
	&= \lambda_{\min}\left(\sum_{i=1}^d\rho(F_i)\boldsymbol{u}(i)\boldsymbol{u}(i)^\transp\right) \nonumber\\
	&\ge \rho_{\min}\times\lambda_{\min}\left(\sum_{i=1}^d\boldsymbol{u}(i)\boldsymbol{u}(i)^\transp\right),\label{eq:existence.borel.final}
	\end{align}
	where $\rho_{\min}=\min_{i\in[d]}\rho(F_i)>0$. In~\eqref{eq:existence.borel.disjoint}, we used the fact that the $F_i$ are disjoint. It remains to show that the matrix $U$ obtained by taking $\boldsymbol{u}(1),\dots,\boldsymbol{u}(i)$ as rows is full-rank. But this yields the following matrix:
	\begin{equation}
		\begin{bmatrix}
		\epsilon_1 & 0 & 0 &\dots & 0 \\
		\epsilon_2 & L & 0 &\dots & 0 \\
		\epsilon_3 & 0 & L &\dots & 0 \\
		\vdots & \vdots & \vdots &\ddots & \vdots \\
		\epsilon_d & 0 & 0 & \dots & L
		\end{bmatrix}
	\end{equation}
	which, by definition of $L$, is strictly diagonally dominant, hence full-rank. So, the matrix $U^\transp U=\sum_{i=1}^d\boldsymbol{u}(i)\boldsymbol{u}(i)^\transp$ in~\eqref{eq:existence.borel.final} also is full-rank, its minimum eigenvalue is positive, and $\phi$ is \hls.
\end{proof}

The following lemma implies that, once there exist an \hls representation for a problem, there exist infinite equivalent \hls representations (at least one for each $d\times d$ invertible matrix):

\begin{lemma}\label{lem:hls.properties.linear}
	Any invertible linear transformation of an \hls representation yields an \hls representation.
\end{lemma}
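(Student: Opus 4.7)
The plan is to verify three things in sequence: (i) the transformed feature map is still a realizable linear representation for the same reward function, (ii) the optimal-action sets, and hence the optimal features, are preserved under the transformation, and (iii) the second-moment matrix of the transformed optimal features is positive definite.

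Concretely, let $\phi$ be a $d$-dimensional \hls representation with parameter $\theta^\star_\phi$, and let $A\in\Reals^{d\times d}$ be invertible. Define $\psi(x,a) := A\phi(x,a)$. For realizability, set $\theta^\star_\psi := A^{-\transp}\theta^\star_\phi$; then
\begin{equation*}
\langle \psi(x,a),\theta^\star_\psi\rangle = \langle A\phi(x,a), A^{-\transp}\theta^\star_\phi\rangle = \langle \phi(x,a),\theta^\star_\phi\rangle = \mu(x,a).
\end{equation*}
Because $\psi$ and $\phi$ induce exactly the same reward function, the set of optimal actions $\argmax_a \mu(x,a)$ in each context $x$ is unchanged. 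In particular, we have the pointwise identity $\psi^\star(x) = A\phi^\star(x)$ for all $x\in\X$.

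Next I would compute the second-moment matrix of the optimal features of $\psi$. By linearity of expectation,
\begin{equation*}
\EV_{x\sim\rho}\!\left[\psi^\star(x)\psi^\star(x)^\transp\right] = A\,\EV_{x\sim\rho}\!\left[\phi^\star(x)\phi^\star(x)^\transp\right]A^\transp = A\,M_\phi\,A^\transp,
\end{equation*}
where $M_\phi := \EV_{x\sim\rho}[\phi^\star(x)\phi^\star(x)^\transp]$. By the \hls assumption on $\phi$, $M_\phi$ is (symmetric and) positive definite. For any nonzero $v\in\Reals^d$, invertibility of $A$ (hence of $A^\transp$) implies $A^\transp v\neq 0$, so
\begin{equation*}
v^\transp (A M_\phi A^\transp) v = (A^\transp v)^\transp M_\phi (A^\transp v) \geq \lambda_{\min}(M_\phi)\,\norm{A^\transp v}^2 > 0.
\end{equation*}
Therefore $\lambda_{\min}(AM_\phi A^\transp)>0$, which is precisely the \hls condition for $\psi$.

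There is no real obstacle here: the only point worth stressing in the write-up is that invertibility of $A$ is used twice in an essential way, namely to define the transformed parameter $\theta^\star_\psi=A^{-\transp}\theta^\star_\phi$ (so that realizability transfers) and to guarantee $A^\transp v\neq 0$ in the quadratic-form argument (so that strict positive definiteness is preserved). A quantitative by-product of the last display, which I would state as a short remark, is the eigenvalue bound $\lambda_{\psi,\hls}\geq \sigma_{\min}(A)^2\,\lambda_{\phi,\hls}$, making explicit how the ``goodness'' of the representation transforms under $A$.
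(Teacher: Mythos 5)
Your proof is correct and follows essentially the same route as the paper's: transfer realizability via the inverse-transpose parameter, observe that the optimal features transform as $A\phi^\star(x)$, and conclude that the congruence $AM_\phi A^\transp$ preserves positive definiteness. Your version is slightly more explicit (the quadratic-form argument in place of the paper's ``product of full-rank square matrices is full-rank,'' plus the quantitative bound $\lambda_{\psi,\hls}\geq \sigma_{\min}(A)^2\lambda_{\phi,\hls}$), but it is the same idea.
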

\begin{proof}
	Let $\phi:\mathcal{X}\times[K]\to \Reals^d$ be a realizable linear representation for a problem $\mathcal{P}$ with context space $\mathcal{X}$ and $K$ arms, with corresponding parameter $\theta^\star$. Any invertible matrix $A\in\Reals^{d\times d}$ defines an invertible linear transformation $T_A$ in the following sense: 
	\begin{equation}
		T_{A}(\phi)(x,a) = A^\transp\phi(x,a).
	\end{equation}
	First, note that $\wt{\phi}=T_{A}(\phi)$ is still a ($d$-dimensional) realizable linear representation for the same problem with parameter $\wt{\theta}^\star=A^{-1}\theta^\star$, since for all $x,a$:
	\begin{equation}
		\mu(x,a) = \phi(x,a)^\transp\theta^\star = \phi(x,a)^\transp AA^{-1}\theta^\star = \wt{\phi}(x,a)^\transp\wt{\theta}^\star.
	\end{equation}
	Now note that:
	\begin{align*}
		\lambda_{\min}\left(\EV_{x\sim\rho}[\wt{\phi}^{\star}(x)\wt{\phi}^{\star}(x)^\transp]\right) &=
		\lambda_{\min}\left(\EV_{x\sim\rho}[A^\transp\phi^{\star}(x)\phi^{\star}(x)^\transp A]\right).
	\end{align*}
	If the matrix in the LHS is full-rank, so is the matrix in the RHS, because $A$ is full-rank and the product of full-rank \emph{square} matrices is always full-rank. Hence if $\phi$ is HLS, so is $\wt{\phi}$.
\end{proof}

These results are summarized in Lemma~\ref{prop:existence.hls} in the paper. For simplicity, we assumed that $\mu^\star(x)\neq 0$ for all $x\in\X$. In the continuous case, we also assumed that $\rho$ is non-degenerate, which means $\supp(\rho)$ contains an open ball. This two assumptions together guarantee there exist $d$ (in fact, infinite) points in the support of $\rho$ with nonzero optimal reward.

Finally, we can remove all assumptions on nonzero optimal rewards by introducing a concept of equivalence between contextual problems. We say two reward functions $\mu$ and $\mu'$ are equivalent provided $\mu'(x,a)=\mu(x,a)+C$ for all $x,a$ and some $C\in\Reals$. We say two contextual problems are equivalent if they have the same context space, the same arm set, the same context distribution, and equivalent reward functions. So, if the assumptions on nonzero optimal rewards from Lemma~\ref{prop:existence.hls},~\ref{lem:existence.finite}, or~\ref{lem:existence.borel} are not satisfied, we can always find an \emph{equivalent} problem that admits an \hls representation by adding an appropriate constant offset to all rewards.

\subsection{Additional Properties of \hls Representations}
In this section, we prove additional properties of \hls representations that will be useful to construct illustrative representation-selection problems (App.~\ref{app.exp.toy}).

\begin{lemma}\label{lem:hls.properties.derank}
	Under the same assumptions of Lemma~\ref{lem:existence.finite}, for every $d>1$, every contextual problem, assuming \emph{non-optimal} features span $\Reals^d$, admits an infinite number of $d$-dimensional non-redundant representations that are not \hls.
\end{lemma}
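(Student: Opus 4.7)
The plan is to construct explicitly one $d$-dimensional realizable representation that is non-redundant via its non-optimal features yet has all optimal features confined to a $1$-dimensional subspace (so not \hls), and then to produce infinitely many variants by composing with invertible linear transformations.

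Borrowing the template from Lemma~\ref{lem:existence.finite}, fix $d$ contexts $x_1,\dots,x_d\in\supp(\rho)$ and set $\theta^\star = e_1\in\Reals^d$, $\phi_1(x,a) = \mu(x,a)$; this ensures realizability with the single first coordinate. For $i=2,\dots,d$, pick a suboptimal action $a_i^{-}$ at $x_i$ and define $\phi_i(x,a) = L\cdot\indi{(x,a) = (x_i,a_i^{-})}$ for a large constant $L$. By construction $\phi^\star(x) = \mu^\star(x)\,e_1$ for every $x$, hence $\EV_{x\sim\rho}[\phi^\star(x)\phi^\star(x)^\transp]$ has rank at most $1$ and its minimum eigenvalue is zero, so the representation is \emph{not} \hls.

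Next I verify non-redundancy via non-optimal features. The $d-1$ vectors $\phi(x_i,a_i^{-}) = \mu(x_i,a_i^{-})\,e_1 + L\,e_i$ for $i=2,\dots,d$, together with any single non-optimal feature $\phi(\bar x,\bar a)$ whose reward $\mu(\bar x,\bar a)$ is nonzero, form a $d\times d$ lower-triangular matrix with nonzero diagonal entries (for $L$ large enough), hence full rank. The existence of such a pair $(\bar x,\bar a)$ is exactly what the hypothesis ``non-optimal features span $\Reals^d$'' supplies; without it, the first coordinate would be unreachable from the non-optimal set. Thus the non-optimal features span $\Reals^d$, and in particular the representation is non-redundant.

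Finally, following the argument of Lemma~\ref{lem:hls.properties.linear}, for every invertible matrix $A\in\Reals^{d\times d}$ the representation $\widetilde\phi(x,a)=A^\transp\phi(x,a)$ with parameter $\widetilde\theta^\star=A^{-1}\theta^\star$ is again realizable; since $A^\transp$ is invertible it preserves the span of both the optimal and the non-optimal features, so $\widetilde\phi$ is still non-redundant (with non-optimal features spanning $\Reals^d$) and still not \hls (optimal features lie in the $1$-dimensional subspace $\spann\{A^\transp e_1\}$, which is proper since $d>1$). Different choices of $A$ yield pairwise distinct representations, giving the claimed infinite family. The only genuine obstacle is the hypothesis on non-optimal features: without it, all non-optimal rewards could vanish simultaneously, forcing one to give up recovering $e_1$ from the non-optimal set and instead enlarge the construction in a less transparent way.
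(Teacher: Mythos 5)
Your construction is correct in spirit but takes a genuinely different route from the paper. The paper does not build a non-\hls representation from scratch: it starts from the \hls representation guaranteed by Lemma~\ref{lem:existence.finite} and ``deranks'' it, replacing a block $\Phi^\star_q$ of optimal features by the rank-one projection $\frac{\mu^\star_q(\mu^\star_q)^\transp}{\norm{\mu^\star_q}^2}\Phi^\star_q$, which preserves realizability (with the \emph{same} $\theta^\star$) and leaves every non-optimal feature untouched. That buys two things your direct construction does not: (i) the rank of $\EV_{x\sim\rho}[\phi^\star(x)\phi^\star(x)^\transp]$ can be driven to any target $k<d$, not just to $1$; and (ii) since only optimal features are modified, the hypothesis that non-optimal features span $\Reals^d$ is inherited verbatim from the starting representation, with nothing further to verify. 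Your approach is more elementary and self-contained --- realizability via $\theta^\star=e_1$ and $\phi_1=\mu$, optimal features pinned to $\spann\{e_1\}$, indicators on suboptimal pairs supplying the remaining $d-1$ directions --- and the final step (invertible linear maps via Lemma~\ref{lem:hls.properties.linear} to get an infinite family) coincides with the paper's.

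There is one concrete gap in your construction as written: you require, for each of the $d-1$ distinct contexts $x_2,\dots,x_d$, the existence of a suboptimal action $a_i^-$ at $x_i$. Nothing in the hypotheses guarantees this distribution of suboptimal actions across contexts --- all ties could occur at $x_2,\dots,x_d$ while every non-optimal pair lives at $x_1$, and the spanning hypothesis on non-optimal features can still hold for \emph{some} realizable representation in that case. The fix is immediate: attach the $d-1$ indicator coordinates to any $d-1$ \emph{distinct non-optimal context-action pairs} (the spanning hypothesis forces at least $d$ of these to exist), with no requirement that they sit at distinct contexts; the triangular-matrix argument goes through unchanged. Two smaller points: the diagonal entries $L$ are nonzero for any $L\neq 0$, so ``$L$ large enough'' is not needed; and your justification that the hypothesis supplies a non-optimal pair $(\bar x,\bar a)$ with $\mu(\bar x,\bar a)\neq 0$ deserves one more line --- for \emph{any} realizable representation with parameter $\vartheta\neq 0$, if every non-optimal feature were orthogonal to $\vartheta$ (i.e., had zero reward) they could span at most a $(d-1)$-dimensional subspace, so the spanning hypothesis does imply such a pair exists.
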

\begin{proof}
	Lemma~\ref{lem:existence.finite} shows that every contextual problem admits a $d$-dimensional \hls (hence, non-redundant) representation. We will show how to turn an \hls representation $\phi$ of dimension $d>1$ into an equivalent representation that is still non-redundant but not \hls, having $\rank(M_\phi)\le k<d$ for a $k$ of choice, where $M_\phi=\EV_{x\sim\rho}[\phi^\star(x)\phi^\star(x)^\transp]$.
	As shown in the proof of Lemma~\ref{lem:hls.properties.linear}, an invertible linear transformation does not affect the rank of $M_\phi$. So, one can easily obtain an infinite number of equivalent representations with the same property.  
	
	Let $\Phi^\star\in\Reals^{N\times d}$ be a matrix having as rows the optimal features, after removing contexts that are not in the support of $\rho$. Let $q=N-k+1$, where $k$ is the desired rank, and $\Phi^\star_{q}$ denotes $\Phi^\star$ with all the $d$ columns but only the first $q$ rows. We will only modify $\Phi^\star_q$, leaving all other features unchanged. Let $\mu^\star\in\Reals^N$ be the vector of optimal rewards, and $\mu^\star_q$ be the sub-array of its first $q$ elements. The modified features are:
	\begin{equation*}
		\wt{\Phi}^\star_q = \frac{\mu^\star_q(\mu^\star_q)^\transp}{\norm{\mu^\star_q}^2} \Phi^\star_q.
	\end{equation*}
	The new representation $\wt{\phi}$ obtained in this way is equivalent to $\phi$ (with $\wt{\theta}^\star=\theta^\star$) since:
	\begin{equation*}
		\wt{\Phi}^\star_q \theta^\star = \frac{\mu^\star_q(\mu^\star_q)^\transp}{\norm{\mu^\star_q}^2} \Phi^\star_q\theta^\star = \frac{\mu^\star_q(\mu^\star_q)^\transp\mu^\star_q}{\norm{\mu^\star_q}^2} = \mu^\star_q = \Phi^\star_q\theta^\star,
	\end{equation*}
	and all other features are unchanged. However, note that $\wt{\Phi}^\star_q$ has rank one, since it is obtained by multiplication with the rank-one matrix $\mu^\star_q(\mu^\star_q)^\transp$. This means that the first $q=N-k+1$ optimal features of $\wt{\phi}$ are all linearly dependent. With the remaining $k-1$ features, the rank of $\wt{\Phi}^\star$ is at most $k<d$, so $\wt{\phi}$ is not \hls. Since we only modified optimal features, assuming non-optimal features of $\phi$ span $\Reals^d$, $\wt{\phi}$ is still non-redundant.
\end{proof}

\begin{lemma}\label{lem:hls.properties.random}
	Let $\Phi\in\Reals^{NK\times d}$ be a random matrix whose elements are sampled i.i.d. from a non-degenerate distribution (e.g., a standard normal). Let $\theta^\star\in\Reals^d$ be any vector and consider the contextual problem with context set $[N]$, action set $[K]$, uniform context distribution $\rho$, and $\mu:(x,a)\mapsto \Phi[xK + a]^\transp\theta^\star$, where $[\cdot]$ selects rows. Then, almost surely, the representation $\phi:(x,a)\mapsto\Phi[xK+a]$ is \hls for this problem.
\end{lemma}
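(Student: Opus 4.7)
The plan is to reduce the \hls condition to a full-rank condition on a random $N\times d$ submatrix of $\Phi$ and then combine a measure-theoretic argument for a single selection pattern with a finite union bound over all possible optimal-arm choices.

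\textbf{Step 1 (Reduction to rank).} Since $\rho$ is uniform on $[N]$, we have
\[
\EV_{x\sim\rho}[\phi^\star(x)\phi^\star(x)^\transp] \;=\; \frac{1}{N}\sum_{x=1}^N \phi^\star(x)\phi^\star(x)^\transp \;=\; \frac{1}{N}(\Phi^\star)^\transp \Phi^\star,
\]
where $\Phi^\star\in\Reals^{N\times d}$ is the matrix whose $x$-th row is $\phi^\star(x)^\transp$. Thus $\phi$ is \hls if and only if $\rank(\Phi^\star)=d$ (which implicitly requires $N\ge d$; otherwise no $d$-dimensional realizable representation can be \hls).

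\textbf{Step 2 (Finite family of candidate submatrices).} Every function $\sigma:[N]\to[K]$ determines a submatrix $\Phi_\sigma\in\Reals^{N\times d}$ whose $x$-th row is $\Phi[xK+\sigma(x)]^\transp$; there are exactly $K^N$ such selections. The optimal-feature selection $\sigma^\star(x)\in\argmax_{a}\langle \Phi[xK+a],\theta^\star\rangle$ is one of them (and a.s.\ unique for every $x$, because ties occur on a measure-zero set under a non-degenerate continuous law), so $\Phi^\star=\Phi_{\sigma^\star}$.

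\textbf{Step 3 (Full rank a.s.\ for a fixed selection).} Fix any $\sigma\in[K]^N$. Because distinct contexts pick distinct rows of $\Phi$, the $Nd$ entries of $\Phi_\sigma$ are i.i.d.\ draws from the given non-degenerate distribution. Assuming $N\ge d$, the set of rank-deficient $N\times d$ matrices is $\{X:\det(X^\transp X)=0\}$, the zero locus of a nontrivial polynomial in $Nd$ variables, hence a proper algebraic subvariety of $\Reals^{Nd}$ of Lebesgue measure zero. Under a distribution absolutely continuous with respect to Lebesgue measure (the intended meaning of ``non-degenerate''), this event has probability zero, so $\Prob(\rank(\Phi_\sigma)<d)=0$.

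\textbf{Step 4 (Union bound).} Taking a union bound over the $K^N$ possible $\sigma$, with probability one \emph{every} $\Phi_\sigma$ has rank $d$ simultaneously. In particular $\rank(\Phi^\star)=\rank(\Phi_{\sigma^\star})=d$, so $\phi$ is \hls almost surely.

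The only subtle point, and the one place where care is needed, is the interpretation of ``non-degenerate'': the argument requires the law of the entries of $\Phi$ to assign zero mass to any proper algebraic subvariety of $\Reals^{Nd}$, which is automatic for any distribution absolutely continuous with respect to Lebesgue measure, such as the standard normal mentioned in the statement. Given this, the proof is a routine union bound over the finite set of argmax patterns, and no concentration or probabilistic-method machinery is needed.
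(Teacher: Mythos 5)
Your proof is correct, but it resolves the key probabilistic step differently from the paper. The paper's proof is a two-line argument: it observes that $\phi$ is \hls iff the matrix $\Phi^\star$ of optimal features is full-rank, asserts that the rows of $\Phi^\star$ are i.i.d.\ draws from a non-degenerate $d$-variate distribution, and invokes a classical result (Eaton) that such a matrix is full-rank almost surely. This glosses over a selection-bias subtlety: the row chosen for context $x$ is the $\argmax$ over the $K$ rows of that context, so its law is not the original row law (though it remains absolutely continuous, being dominated by $K$ times the common law, and the rows for distinct contexts stay independent). Your union bound over all $K^N$ deterministic selection patterns $\sigma$ sidesteps this entirely: for each fixed $\sigma$ the entries of $\Phi_\sigma$ really are i.i.d., rank deficiency is the zero locus of $\det(X^\transp X)$ and hence a Lebesgue-null algebraic variety, and the finite union bound then covers whichever pattern the random $\argmax$ happens to realize. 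The price is a slightly longer argument; the gain is that you never have to reason about the conditional law of the selected rows. Your closing remark on the meaning of ``non-degenerate'' is also well taken: a distribution that is merely non-constant (e.g., Rademacher) does not suffice, since a $d\times d$ sign matrix is singular with positive probability; absolute continuity with respect to Lebesgue measure is what both your argument and the paper's citation actually require, and both proofs implicitly need $N\ge d$.
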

\begin{proof}
	Let $\Phi^\star$ be the submatrix of $\Phi$ with only rows $xK+a$ such that $a$ is an optimal action for $x$. Notice that $\phi$ is \hls if and only if $\Phi^\star$ is full-rank. But the rows of $\Phi^\star$ are sampled i.i.d. from a non degenerate $d$-variate distribution (e.g., a $d$-variate Gaussian with positive variance in all directions). So, $\Phi^\star$ is full-rank almost surely~\citep{eaton1973non}.
\end{proof}

\section{Constant Regret with a \hls Representation}\label{app:hls.regret}

\subsection{Preliminary Results}\label{app:hls.regret.prelim}

\begin{proposition}[\citealp{abbasi2011improved}]\label{prop:oful.good}
	Consider any linear contextual bandit problem with noise standard deviation $\sigma$.
	Run \linucb with confidence parameter $\delta$, regularization parameter $\lambda$, and a $d_{\phi}$-dimensional realizable linear representation such that $\sup_{x,a}\|\phi(x,a)\|_2\leq L_{\phi}$,
	and $\|\theta^\star_{\phi}\|_2 \leq S_{\phi}$.
	The following \emph{good event} holds with probability at least $1-\delta$:
	\begin{equation}
		\mathcal{G}_{\phi}(\delta) \coloneqq \left\{
			\forall t\ge 1,
			\norm{\theta_{t\phi} - \theta^\star_{\phi}}_{V_{t\phi}} \le \beta_{t\phi}(\delta) 
		\right\},
	\end{equation}
	where
	\begin{align}
	\beta_{t\phi}(\delta) &:= \sigma\sqrt{2\ln\left(\frac{\det(V_{t\phi})^{1/2}\det(\lambda I_{d_{\phi}})^{-1/2}}{\delta}\right)} + \sqrt{\lambda} S_{\phi} \\
	&\le \sigma\sqrt{2\log(1/\delta)+d_{\phi}\log(1+(t-1)L_{\phi}^2/(\lambda d_{\phi}))} + \sqrt{\lambda}S_{\phi}.\label{eq:beta.bound}
	\end{align}
\end{proposition}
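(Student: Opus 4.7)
The plan is to establish the bound on $\|\theta_{t\phi}-\theta^\star_\phi\|_{V_{t\phi}}$ by decomposing the ridge estimator into a noise contribution and a regularization-induced bias, and then controlling each piece separately. Writing $\phi_k=\phi(x_k,a_k)$ and $y_k=\langle\phi_k,\theta^\star_\phi\rangle+\eta_k$, I would expand
\begin{equation*}
\theta_{t\phi}=V_{t\phi}^{-1}\sum_{k=1}^{t-1}\phi_k y_k = V_{t\phi}^{-1}\Big(\sum_{k=1}^{t-1}\phi_k\phi_k^\transp\Big)\theta^\star_\phi + V_{t\phi}^{-1}\sum_{k=1}^{t-1}\phi_k\eta_k,
\end{equation*}
and, since $V_{t\phi}-\lambda I_{d_\phi}=\sum_{k=1}^{t-1}\phi_k\phi_k^\transp$, rewrite $\theta_{t\phi}-\theta^\star_\phi = V_{t\phi}^{-1}S_t - \lambda V_{t\phi}^{-1}\theta^\star_\phi$, where $S_t=\sum_{k=1}^{t-1}\phi_k\eta_k$. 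Taking the $V_{t\phi}$-weighted norm and applying the triangle inequality then yields
\begin{equation*}
\|\theta_{t\phi}-\theta^\star_\phi\|_{V_{t\phi}} \le \|S_t\|_{V_{t\phi}^{-1}} + \lambda\|\theta^\star_\phi\|_{V_{t\phi}^{-1}}.
\end{equation*}

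The second term is the easy one: since $V_{t\phi}\succeq \lambda I_{d_\phi}$, we have $V_{t\phi}^{-1}\preceq \lambda^{-1}I_{d_\phi}$, so $\lambda\|\theta^\star_\phi\|_{V_{t\phi}^{-1}}\le\sqrt{\lambda}\,\|\theta^\star_\phi\|_2\le\sqrt{\lambda}\,S_\phi$, which contributes the second summand in the definition of $\beta_{t\phi}(\delta)$.

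The main obstacle is bounding the noise term $\|S_t\|_{V_{t\phi}^{-1}}$ uniformly in $t$. Note that $(\phi_k)$ is predictable with respect to the filtration generated by past contexts, actions and noise, and $(\eta_k)$ is conditionally $\sigma$-subgaussian, so $S_t$ is a vector-valued martingale. I would apply the self-normalized concentration inequality for such martingales, proved via the method of mixtures: one considers, for an arbitrary $\lambda'>0$, the exponential supermartingale $M_t^{\boldsymbol{\xi}}=\exp(\langle\boldsymbol{\xi},S_t\rangle-\tfrac{\sigma^2}{2}\|\boldsymbol{\xi}\|_{V_{t\phi}-\lambda I}^2)$, mixes it against a Gaussian prior on $\boldsymbol{\xi}$ with covariance $\lambda^{-1}I_{d_\phi}$, and evaluates the resulting Gaussian integral in closed form. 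Stopping-time arguments (Ville's inequality on the mixture supermartingale) then give, with probability at least $1-\delta$ and simultaneously for all $t\ge 1$,
\begin{equation*}
\|S_t\|_{V_{t\phi}^{-1}}^2 \le 2\sigma^2\ln\!\Big(\tfrac{\det(V_{t\phi})^{1/2}\det(\lambda I_{d_\phi})^{-1/2}}{\delta}\Big),
\end{equation*}
which is exactly the first summand in $\beta_{t\phi}(\delta)$. Combining the two bounds yields the statement of the good event.

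Finally, to obtain the explicit form \eqref{eq:beta.bound}, I would invoke the standard determinant-trace inequality: because $\trace(V_{t\phi})=\lambda d_\phi+\sum_{k=1}^{t-1}\|\phi_k\|_2^2\le\lambda d_\phi+(t-1)L_\phi^2$, the AM--GM inequality applied to the eigenvalues of $V_{t\phi}$ gives $\det(V_{t\phi})\le(\lambda+(t-1)L_\phi^2/d_\phi)^{d_\phi}$. Substituting this into the logarithm produces the horizon-dependent but data-independent upper bound stated in~\eqref{eq:beta.bound}. The self-normalized martingale inequality is the only nontrivial ingredient; everything else is algebraic manipulation of the ridge-regression normal equations.
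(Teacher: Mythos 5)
Your proposal is correct and reproduces the standard argument behind this result, which the paper simply cites from \citet{abbasi2011improved} rather than reproving: the ridge-regression decomposition into a self-normalized martingale term plus a $\sqrt{\lambda}S_\phi$ regularization bias, the method-of-mixtures concentration bound with Ville's inequality for uniformity in $t$, and the determinant--trace inequality for \eqref{eq:beta.bound}. No gaps.
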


\begin{proposition}[\citealp{abbasi2011improved}]\label{prop:oful.instreg}
Under the same assumptions of Proposition~\ref{prop:oful.good}, assuming the good event $\mathcal{G}_{\phi}(\delta)$ holds, for all $t\ge 1$, the instantaneous regret of \linucb is bounded as:
\begin{equation*}
	r_{t} \le 2\beta_{t\phi}(\delta)\norm{\phi(x_t,a_t)}_{V_{t\phi}^{-1}}.
\end{equation*}
	
\end{proposition}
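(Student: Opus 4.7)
The plan is to follow the standard optimism-based regret decomposition. First I would write the instantaneous regret as $r_t = \langle \phi^\star(x_t), \theta^\star_\phi \rangle - \langle \phi(x_t, a_t), \theta^\star_\phi \rangle$ using realizability. The key idea is that, under the good event $\mathcal{G}_\phi(\delta)$, the parameter $\theta^\star_\phi$ lies in the confidence ellipsoid $\mathcal{C}_{t\phi}(\delta)$, so $\mu^\star(x_t) = \max_{a} \langle \phi(x_t,a), \theta^\star_\phi \rangle \leq \max_{a} \max_{\theta \in \mathcal{C}_{t\phi}(\delta)} \langle \phi(x_t,a), \theta \rangle$. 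By \linucb's action-selection rule, this last quantity equals $\max_{\theta \in \mathcal{C}_{t\phi}(\delta)} \langle \phi(x_t, a_t), \theta \rangle = \langle \phi(x_t, a_t), \widetilde{\theta}_t \rangle$, where $\widetilde{\theta}_t \in \mathcal{C}_{t\phi}(\delta)$ denotes the optimistic parameter actually selected for the pulled arm.

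Next I would combine these inequalities to obtain $r_t \leq \langle \phi(x_t, a_t), \widetilde{\theta}_t - \theta^\star_\phi \rangle$. Applying Cauchy--Schwarz in the norm induced by $V_{t\phi}$ and its inverse yields
\begin{equation*}
r_t \leq \norm{\phi(x_t,a_t)}_{V_{t\phi}^{-1}} \norm{\widetilde{\theta}_t - \theta^\star_\phi}_{V_{t\phi}}.
\end{equation*}
Finally, by the triangle inequality and the fact that both $\widetilde{\theta}_t$ and $\theta^\star_\phi$ belong to $\mathcal{C}_{t\phi}(\delta)$, both having $V_{t\phi}$-distance at most $\beta_{t\phi}(\delta)$ from the ridge estimate $\theta_{t\phi}$, we get $\norm{\widetilde{\theta}_t - \theta^\star_\phi}_{V_{t\phi}} \leq 2\beta_{t\phi}(\delta)$, which closes the argument.

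There is no real obstacle in this proof: everything reduces to unpacking the optimism principle and applying Cauchy--Schwarz in the appropriate norm. The only subtlety worth being careful about is ensuring that the action $a_t$ is chosen \emph{jointly} with an optimistic parameter $\widetilde{\theta}_t$ so that $(a_t, \widetilde{\theta}_t) \in \argmax_{a, \theta \in \mathcal{C}_{t\phi}(\delta)} \langle \phi(x_t,a), \theta\rangle$, which follows from the definition of the action-selection rule. Once that is set up, the rest is mechanical.
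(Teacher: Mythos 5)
Your proof is correct and follows exactly the standard optimism argument from \citet{abbasi2011improved} that the paper cites for this proposition (and that the paper itself reprises, in generalized form, in the proof of Lemma~\ref{lem:selection.main}): optimism via the action-selection rule, Cauchy--Schwarz in the $V_{t\phi}$-weighted norms, and the triangle inequality giving the factor $2\beta_{t\phi}(\delta)$. Nothing is missing.
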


\paragraph{Proof of Proposition~\ref{prop:oful.log.regret}.}
\begin{proof}
	This is a variant of Theorem 5 by~\citet{abbasi2011improved} where we do not assume the existence of a unique optimal feature vector over $\X$.
	The proof is easier, and the result has a worse dependence on the feature dimension.
	
	The assumptions of this proposition are the same as Prop.~\ref{prop:oful.good}. 
	We will show that the logarithmic regret bound holds under the good event $\mathcal{G}_{\phi}(\delta)$, which in turn holds with probability at least $1-\delta$.
	Here and in the rest of the paper, we further assume that $\Delta_{\max}\ge1$, $S_\phi\ge 1, \lambda\ge 1$, and $\sigma\ge1$.
	These are technical assumptions that can be easily removed by properly clipping the constants in the regret bound.
	
	From Prop.~\ref{prop:oful.instreg} and $\beta_{t\phi}(\delta)\ge 1$:
	\begin{align*}
		r_t &\le 2\beta_{t\phi}(\delta)\norm{\phi(x_t,a_t)}_{V_{t\phi}^{-1}} \\
		&\le \min\left\{2\beta_{t\phi}(\delta)\norm{\phi(x_t,a_t)}_{V_{t\phi}^{-1}}, \Delta_{\max}\right\} \\
		&\le 2\Delta_{\max}\beta_{t\phi}(\delta)\min\left\{\norm{\phi(x_t,a_t)}_{V_{t\phi}^{-1}},1\right\}.
	\end{align*}
	From the fact that $\beta_{t\phi}(\delta)$ is increasing and the Elliptical Potential Lemma~\citep[e.g.,][Lemma 11]{abbasi2011improved}:
	\begin{align*}
		\sum_{k=1}^tr_k^2 
		&\le 4\Delta_{\max}^2\beta_{t\phi}(\delta)^2 \sum_{k=1}^t\min\left\{\norm{\phi(x_t,a_t)}_{V_{t\phi}^{-1}}^2,1\right\}\\
		&\le 8\Delta_{\max}^2\beta_{t\phi}(\delta)^2d_\phi\log(1+tL_\phi^2/(\lambda d_\phi)).
	\end{align*}
	Since either $r_t\ge\Delta$ or $r_t=0$:
	\begin{align*}
		R_t &= \sum_{k=1}^tr_k 
		\le \sum_{k=1}^t\frac{r_k^2}{\Delta} \le \frac{8\Delta_{\max}^2\beta_{t\phi}(\delta)^2d_\phi\log(1+tL_\phi^2/(\lambda d_\phi))}{\Delta} \\
		&\le \frac{8\Delta_{\max}^2\left(\sigma\sqrt{2\log(1/\delta)+d_{\phi}\log(1+(t-1)L_{\phi}^2/(\lambda d_{\phi}))} + \sqrt{\lambda}S_{\phi}\right)^2d_\phi\log(1+L_\phi^2/(\lambda d_\phi))}{\Delta} \\
		&\le \frac{32\Delta_{\max}^2\lambda S_\phi^2\sigma^2\left(2\log(1/\delta)+d_\phi \log(1+tL_\phi^2/(\lambda d_\phi))\right)^2}{\Delta}.
	\end{align*} 
\end{proof}

\begin{proposition}[\citealp{abbasi2011improved}]\label{prop:pulls.basic}
Under the same assumptions of Proposition~\ref{prop:oful.good}, assuming the good event $\mathcal{G}_{\phi}(\delta)$ holds, for all $t\ge 1$, the number of suboptimal pulls of \linucb up to time $t$ is at most:
\begin{equation*}
	g_{t\phi}(\delta)\coloneqq\frac{32\Delta_{\max}^2\lambda S_\phi^2\sigma^2\left(2\log(1/\delta)+d_\phi \log(1+tL_\phi^2/(\lambda d_\phi))\right)^2}{\Delta^2}.
\end{equation*}
\end{proposition}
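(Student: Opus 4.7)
The plan is to recycle the main chain of inequalities from the proof of Proposition~\ref{prop:oful.log.regret}, and then cash in the minimum-gap assumption one more time. Concretely, I would first show that on the good event $\mathcal{G}_{\phi}(\delta)$ the sum of squared instantaneous regrets is controlled. Starting from Proposition~\ref{prop:oful.instreg}, clipping $r_k$ by $\Delta_{\max}$, squaring, using monotonicity of $\beta_{t\phi}(\delta)$ in $t$, and applying the Elliptical Potential Lemma exactly as in the proof of Proposition~\ref{prop:oful.log.regret} gives
\begin{equation*}
\sum_{k=1}^{t} r_k^{2} \,\le\, 8\,\Delta_{\max}^{2}\,\beta_{t\phi}(\delta)^{2}\, d_{\phi}\,\ln\!\Bigl(1+\tfrac{tL_{\phi}^{2}}{\lambda d_{\phi}}\Bigr).
\end{equation*}
I would then upper-bound $\beta_{t\phi}(\delta)^{2}$ via $(a+b)^{2}\le 2a^{2}+2b^{2}$ combined with the standing assumption $\lambda,S_{\phi},\sigma\ge 1$, obtaining $\beta_{t\phi}(\delta)^{2}\le 4\lambda S_{\phi}^{2}\sigma^{2}\bigl(2\ln(1/\delta)+d_{\phi}\ln(1+tL_{\phi}^{2}/(\lambda d_{\phi}))\bigr)$, which is exactly the estimate used in the log-regret derivation.

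The second ingredient is the minimum-gap assumption. Whenever \linucb plays a suboptimal arm at round $k$, the instantaneous regret satisfies $r_k\ge \Delta$, so $r_k^{2}\ge \Delta^{2}\,\indi{a_k\text{ suboptimal}}$ pointwise (and $r_k=0$ otherwise). Letting $N_{t}$ denote the number of suboptimal pulls up to time $t$, summing over $k\le t$ yields
\begin{equation*}
\Delta^{2}\,N_{t}\,\le\,\sum_{k=1}^{t} r_k^{2}.
\end{equation*}
Dividing by $\Delta^{2}$ and plugging in the two previous bounds produces exactly $g_{t\phi}(\delta)$, giving the claim.

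No real technical obstacle is expected, since both ingredients are already in hand. The derivation essentially replays the last step of the proof of Proposition~\ref{prop:oful.log.regret}: there one divides $\sum_k r_k^{2}$ by $\Delta$ to bound $R_t$, while here one divides by $\Delta^{2}$ to bound $N_t$, which accounts for the extra $1/\Delta$ appearing in $g_{t\phi}(\delta)$ relative to the log-regret bound. The only care required is to keep the bookkeeping of constants consistent so the final expression matches $g_{t\phi}(\delta)$ verbatim.
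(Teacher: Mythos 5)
Your proof is correct and follows essentially the same route as the paper: the paper bounds $n_t \le R_t/\Delta$ and invokes the logarithmic regret bound of Prop.~\ref{prop:oful.log.regret} (itself obtained from the $\sum_k r_k^2$ elliptical-potential chain), which is exactly your derivation with the intermediate $R_t$ inlined. The constants match, so nothing further is needed.
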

\begin{proof}
	A similar bound can be found in the proof of Theorem 5 by~\citet{abbasi2011improved}. Again, we do not assume a unique optimal feature vector.
	
	Let $n_t$ be the number of suboptimal pulls up to time $t$. Since $R_t\ge n_t\Delta$:
	\begin{align*}
		n_t \le \frac{R_t}{\Delta}.
	\end{align*}
	The regret bound from Prop.~\ref{prop:oful.log.regret}, which holds under $\mathcal{G}_\phi(\delta)$, completes the proof.
\end{proof}

\begin{lemma}\label{lem:mineig.basic}
Under the same assumptions of Proposition~\ref{prop:oful.good}, assuming the good event $\mathcal{G}_{\phi}(\delta)$ holds, with probability $1-\delta$, for all $t\ge 1$:
\begin{equation}
\lambda_{\min}(V_{t+1,\phi}) \ge \lambda + t\lambda_{\phi,\hls} - 8L_\phi^2\sqrt{t\log(2d_\phi t/\delta)} - L_\phi^2 g_{t\phi}(\delta),
\end{equation}
where $g_{t\phi}(\delta)$ is from Prop.~\ref{prop:pulls.basic}.
\end{lemma}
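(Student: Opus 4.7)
The plan is to first decompose the Gram matrix into contributions from optimal and suboptimal pulls, then bound the suboptimal contribution in operator norm and the optimal contribution via matrix concentration on the i.i.d.\ optimal-feature matrices $\phi^\star(x_k)\phi^\star(x_k)^\transp$.

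First I would write, for every $k \le t$, $\phi(x_k,a_k)\phi(x_k,a_k)^\transp = \phi^\star(x_k)\phi^\star(x_k)^\transp + \big(\phi(x_k,a_k)\phi(x_k,a_k)^\transp - \phi^\star(x_k)\phi^\star(x_k)^\transp\big)$, where the second summand vanishes whenever $a_k$ is optimal at $x_k$. Because $\phi(x_k,a_k)\phi(x_k,a_k)^\transp \succeq 0$ and $\phi^\star(x_k)\phi^\star(x_k)^\transp \preceq L_\phi^2 I$, summing and dropping the nonnegative term at suboptimal rounds yields the psd inequality
\begin{equation*}
V_{t+1,\phi} \;\succeq\; \lambda I_{d_\phi} + \sum_{k=1}^{t} \phi^\star(x_k)\phi^\star(x_k)^\transp \; - \; L_\phi^2\, N_{\mathrm{sub}}(t)\, I_{d_\phi},
\end{equation*}
where $N_{\mathrm{sub}}(t)$ counts suboptimal pulls up to round $t$. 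Under $\mathcal{G}_\phi(\delta)$, Proposition~\ref{prop:pulls.basic} gives $N_{\mathrm{sub}}(t) \le g_{t\phi}(\delta)$ deterministically, so the last term is controlled.

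Next I would lower-bound the middle term using matrix concentration. Since contexts $x_k$ are i.i.d.\ from $\rho$, the matrices $X_k \coloneqq \phi^\star(x_k)\phi^\star(x_k)^\transp$ are i.i.d., self-adjoint, p.s.d., with $\|X_k\| \le L_\phi^2$, and mean $M_\phi \coloneqq \mathbb{E}[\phi^\star(x)\phi^\star(x)^\transp]$ satisfying $\lambda_{\min}(M_\phi) = \lambda_{\phi,\hls}$. The centered variables $Y_k = X_k - M_\phi$ are bounded by $\|Y_k\| \le 2L_\phi^2$, so matrix Hoeffding/Azuma yields, for a single $t$ and any $s \ge 0$,
\begin{equation*}
    \mathbb{P}\!\left(\Big\|\sum_{k=1}^{t} Y_k\Big\| \ge s\right) \le 2 d_\phi \exp\!\left(-\frac{s^2}{32\, t\, L_\phi^4}\right).
\end{equation*}
Applying this with per-round failure probability $\delta/(t(t+1))$ and taking a union bound over $t \ge 1$ (whose total is at most $\delta$) gives, simultaneously for all $t$, $\|\sum_{k=1}^{t} Y_k\| \le 8 L_\phi^2 \sqrt{t \log(2 d_\phi t/\delta)}$, where I use $\log(2 d_\phi t(t+1)/\delta) \le 2 \log(2 d_\phi t/\delta)$.

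Finally, by Weyl's inequality, $\lambda_{\min}\!\big(\sum_{k=1}^{t} X_k\big) \ge t\,\lambda_{\phi,\hls} - \|\sum_{k=1}^{t} Y_k\|$, so combining the three bounds above and using that $\lambda_{\min}(\cdot)$ is superadditive on a sum of p.s.d.\ matrices gives exactly the claimed
\begin{equation*}
\lambda_{\min}(V_{t+1,\phi}) \ge \lambda + t\lambda_{\phi,\hls} - 8 L_\phi^2 \sqrt{t \log(2 d_\phi t/\delta)} - L_\phi^2\, g_{t\phi}(\delta).
\end{equation*}
I expect the only subtle step to be making the matrix-concentration bound uniform in $t$ with the right logarithmic factor; using a Borel--Cantelli-style per-time allocation $\delta_t \propto \delta/t^2$ absorbs the extra $\log t$ into the stated $\log(2 d_\phi t/\delta)$ up to an absolute constant. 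Everything else is a routine decomposition once the suboptimal pulls are controlled by $g_{t\phi}(\delta)$.
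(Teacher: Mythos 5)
Your proof is correct and follows essentially the same route as the paper's: decompose $V_{t+1,\phi}$ into optimal and suboptimal contributions, control the suboptimal part via $g_{t\phi}(\delta)$ under the good event, and apply a matrix Hoeffding/Azuma bound to the centered optimal-feature sum with a $\delta_t\propto\delta/t^2$ union bound over time. The only cosmetic difference is that you phrase the concentration step as a two-sided operator-norm bound combined with Weyl's inequality, whereas the paper bounds $\lambda_{\max}$ of the centered deviation directly; the constants and the final statement coincide.
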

\begin{proof}
	Since $V_{t+1,\phi}$ is symmetric:
	\begin{align}
		V_{t+1,\phi} 
		&=\lambda I_{d_\phi} + \sum_{k=1}^{t}\phi(x_k,a_k)\phi(x_k,a_k)^\transp \nonumber\\
		&= \lambda I_{d_\phi} + \sum_{k=1}^{t}\indi{a_k=a^\star_{x_k}}\phi(x_k,a_k)\phi(x_k,a_k)^\transp + \sum_{k=1}^{t}\indi{a_k\neq a^\star_{x_k}}\phi(x_k,a_k)\phi(x_k,a_k)^\transp \nonumber\\
		&\succeq \lambda I_{d_\phi} + \sum_{k=1}^{t}\indi{a_k=a^\star_{x_k}}\phi^\star(x_k)\phi^\star(x_k)^\transp \nonumber\\
		&=\lambda I_{d_\phi} + \sum_{k=1}^{t}\phi^\star(x_k)\phi^\star(x_k)^\transp
		-\sum_{k=1}^{t}\indi{a_k\neq a^\star_{x_k}}\phi^\star(x_k)\phi^\star(x_k)^\transp \nonumber\\
		&\succeq \lambda I_{d_\phi} + \sum_{k=1}^{t}\phi^\star(x_k)\phi^\star(x_k)^\transp
		-g_{t\phi}(\delta)L_\phi^2I_{d_\phi} \nonumber\\
		&= \lambda I_{d_\phi} +
		t\EV_{x\sim\rho}\left[\phi^\star(x)\phi^\star(x)^\transp\right] - 
		\sum_{k=1}^{t}\left(\EV_{x\sim\rho}\left[\phi^\star(x)\phi^\star(x)^\transp\right]-\phi^\star(x_k)\phi^\star(x_k)^\transp\right)
		-L_\phi^2g_{t\phi}(\delta)I_{d_\phi} \nonumber\\
		&=\lambda I_{d_\phi} +
		t\EV_{x\sim\rho}\left[\phi^\star(x)\phi^\star(x)^\transp\right] - 
		\sum_{k=1}^{t}X_k
		-L_\phi^2g_{t\phi}(\delta)I_{d_\phi},\label{eq:mineig.basic.matrix}
	\end{align}
	where $g_{t\phi}(\delta)$ is the upper bound on suboptimal pulls from Prop.~\ref{prop:pulls.basic}, which holds under the good event $\mathcal{G}_{\phi}(\delta)$, and $X_k=\EV_{x\sim\rho}\left[\phi^\star(x)\phi^\star(x)^\transp\right]-\phi^\star(x_k)\phi^\star(x_k)^\transp$. Since the matrix in~\eqref{eq:mineig.basic.matrix} is still symmetric, by definition of $\lambda_{\phi,\hls}$:
	\begin{align*}
		\lambda_{\min}(V_{t+1,\phi}) \ge \lambda + t\lambda_{\phi,\hls} -\lambda_{\max}\left(\sum_{k=1}^{t}X_k\right)- L_\phi^2g_{t\phi}(\delta).
	\end{align*}
	We bound the third term using a matrix Azuma inequality by~\citet{tropp2012user}. First, notice that $\EV_k[X_k] = 0$. Also, since $X_k$ is symmetric:\[X_k^2 \preceq \lambda_{\max}(X_k^2)I_{d_\phi} \preceq \norm{X_k}^2I_{d_\phi} \preceq 4L_\phi^4I_{d_\phi}.\]
	Hence, from Prop.~\ref{prop:mazuma}, with probability at least $1-\delta'_t$, for all $t\ge1$:\footnote{Notice that this is true regardless of $\mathcal{G}_\phi(\delta)$. This will be useful to bound expected regret in App.~\ref{app:hls.regret.expected}.}
	\begin{equation*}
		\lambda_{\max}\left(\sum_{k=1}^{t}X_k\right) \le 4L_\phi^2\sqrt{2t\log(d_\phi/\delta'_t)}.
	\end{equation*}
	We set $\delta'_t=\delta/(2t^2)$ and perform a union bound over time. Finally, with probability at least $1-\delta$, for all $t\ge1$:
	\begin{equation*}
		\lambda_{\max}\left(\sum_{k=1}^{t}X_k\right) \le 4L_\phi^2\sqrt{2t\log(2d_\phi t^2/\delta)}\le 8L_\phi^2\sqrt{t\log(2d_\phi t/\delta)}.
	\end{equation*}
\end{proof}

\subsection{High-Probability Regret Bound}
\begin{lemma}\label{lem:zeroregret.basic}
	Consider a contextual bandit problem with realizable linear representation $\phi$ satisfying the \hls condition. Assume $\Delta > 0$, $\max_{x,a}\|\phi(x,a)\|_2 \leq L_\phi$ and $\|\theta^\star_\phi\|_2 \leq S_\phi$. Then, assuming the good event $\mathcal{G}_\phi(\delta)$ holds, with probability at least $1-\delta$, there exists a (constant) time $\tau_\phi$ such that, for all $t\ge\tau_\phi$, the instantaneous regret of \linucb run with confidence parameter $\delta$ is $r_{t+1}=0$.
\end{lemma}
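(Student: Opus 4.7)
\textbf{Proof plan for Lemma~\ref{lem:zeroregret.basic}.} The strategy is to combine the instantaneous-regret bound of Prop.~\ref{prop:oful.instreg}, a crude but $\lambda_{\min}$-based control of the feature norm, the logarithmic bound on suboptimal pulls from Prop.~\ref{prop:pulls.basic}, and the linear lower bound on the minimum eigenvalue from Lem.~\ref{lem:mineig.basic}. Under the good event, I work under the additional event of probability $1-\delta$ from Lem.~\ref{lem:mineig.basic}, so the total failure probability will be $\delta$ inside the already-conditioned $\mathcal{G}_\phi(\delta)$, i.e., $2\delta$ unconditionally (matching the statement of Lem.~\ref{prop:hls.regret}).

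\textbf{Step 1: reduce $r_{t+1}=0$ to a minimum-eigenvalue condition.} By Prop.~\ref{prop:oful.instreg} and the elementary inequality $\|v\|_{V^{-1}}\le\|v\|_2/\sqrt{\lambda_{\min}(V)}$ for any positive-definite $V$, under $\mathcal{G}_\phi(\delta)$,
\begin{equation*}
r_{t+1}\;\le\;2\beta_{t+1,\phi}(\delta)\,\|\phi(x_{t+1},a_{t+1})\|_{V_{t+1,\phi}^{-1}}\;\le\;\frac{2L_\phi\beta_{t+1,\phi}(\delta)}{\sqrt{\lambda_{\min}(V_{t+1,\phi})}}.
\end{equation*}
Since the gap assumption forces $r_{t+1}\in\{0\}\cup[\Delta,\Delta_{\max}]$, a strict inequality $2L_\phi\beta_{t+1,\phi}(\delta)/\sqrt{\lambda_{\min}(V_{t+1,\phi})}<\Delta$ implies $r_{t+1}=0$. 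Equivalently, it suffices to show
\begin{equation*}
\lambda_{\min}(V_{t+1,\phi})\;>\;\frac{4L_\phi^2\beta_{t+1,\phi}(\delta)^2}{\Delta^2}.
\end{equation*}

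\textbf{Step 2: compare linear growth vs.~polylog growth.} Lem.~\ref{lem:mineig.basic} gives, with probability at least $1-\delta$ under $\mathcal{G}_\phi(\delta)$,
\begin{equation*}
\lambda_{\min}(V_{t+1,\phi})\;\ge\;\lambda+t\,\lambda_{\phi,\hls}-8L_\phi^2\sqrt{t\log(2d_\phi t/\delta)}-L_\phi^2 g_{t\phi}(\delta).
\end{equation*}
Because $\phi$ is \hls, $\lambda_{\phi,\hls}>0$. Both $g_{t\phi}(\delta)$ (Prop.~\ref{prop:pulls.basic}) and $\beta_{t+1,\phi}(\delta)^2$ (Prop.~\ref{prop:oful.good}, eq.~\eqref{eq:beta.bound}) grow only polylogarithmically in $t$, while the $\sqrt{t\log t}$ correction is sublinear. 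Hence the function
\begin{equation*}
f(t)\;:=\;t\lambda_{\phi,\hls}-8L_\phi^2\sqrt{t\log(2d_\phi t/\delta)}-L_\phi^2 g_{t\phi}(\delta)-\frac{4L_\phi^2\beta_{t+1,\phi}(\delta)^2}{\Delta^2}
\end{equation*}
tends to $+\infty$, so there exists a \emph{deterministic} time $\tau_\phi<\infty$, independent of $n$, such that $f(t)>0$ for all $t\ge\tau_\phi$. Combining Steps 1 and 2 yields $r_{t+1}=0$ for every $t\ge\tau_\phi$.

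\textbf{Step 3: explicit bound on $\tau_\phi$.} To produce the explicit expression of $\tau_\phi$ quoted in Lem.~\ref{prop:hls.regret}, I would split the condition $f(t)>0$ into the three requirements $t\lambda_{\phi,\hls}/4\ge 8L_\phi^2\sqrt{t\log(2d_\phi t/\delta)}$, $t\lambda_{\phi,\hls}/4\ge L_\phi^2 g_{t\phi}(\delta)$, and $t\lambda_{\phi,\hls}/2\ge 4L_\phi^2\beta_{t+1,\phi}(\delta)^2/\Delta^2$. Each is of the form ``$t\ge c_1\log^2(c_2 t)$'' and can be inverted using the standard lemma that $t\ge 2a\log(2a\log a)$ implies $t\ge a\log t$ (iterated once), producing a closed-form upper bound on the smallest valid $t$. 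Taking the maximum of the three yields the $\tau_\phi\le\max\{\dots\}$ expression stated in the paper.

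\textbf{Main obstacle.} The qualitative existence of $\tau_\phi$ is almost immediate from Lem.~\ref{lem:mineig.basic}; the real work is Step 3, namely producing a clean, dimension-sensible explicit bound. The delicate point is handling the coupling between $\beta_{t+1,\phi}(\delta)$ and $V_{t+1,\phi}$: using the determinant-free bound \eqref{eq:beta.bound} for $\beta_{t+1,\phi}(\delta)$ decouples it from $V_{t+1,\phi}$ at the price of a logarithmic factor, which is what makes Step 3 tractable. A minor, purely bookkeeping, subtlety is the union bound: the bound on $\lambda_{\min}(V_{t+1,\phi})$ used in Step 2 holds on an event of probability $1-\delta$ that is \emph{separate} from $\mathcal{G}_\phi(\delta)$, so the statement of Lem.~\ref{lem:zeroregret.basic} should be read as ``assuming $\mathcal{G}_\phi(\delta)$, with probability $1-\delta$ over the remaining randomness'', consistent with the $1-2\delta$ conclusion in Lem.~\ref{prop:hls.regret}.
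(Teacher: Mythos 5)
Your proposal is correct and follows essentially the same route as the paper: reduce $r_{t+1}=0$ to the eigenvalue condition $\lambda_{\min}(V_{t+1,\phi})>4L_\phi^2\beta_{t+1,\phi}(\delta)^2/\Delta^2$ via Prop.~\ref{prop:oful.instreg}, invoke Lem.~\ref{lem:mineig.basic} to get linear growth of $\lambda_{\min}(V_{t+1,\phi})$ against sublinear right-hand-side terms, and conclude by the \hls condition $\lambda_{\phi,\hls}>0$; your probability bookkeeping (the extra $1-\delta$ event on top of $\mathcal{G}_\phi(\delta)$) also matches the paper's. The only cosmetic difference is in extracting the explicit $\tau_\phi$: the paper first absorbs the $\beta^2$ term into $L_\phi^2 g_{t\phi}(\delta)$ and then does a two-case Lambert-$W$ inversion (App.~\ref{sec:tau}), whereas you propose a three-way split with an iterated-logarithm inversion, which yields the same order of bound.
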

\begin{proof}
	From Prop.~\ref{prop:oful.instreg}, under the good event $\mathcal{G}_{\phi}(\delta)$:
	\begin{align*}
	r_{t+1} 
	&\le 2\beta_{t+1,\phi}(\delta)\norm{\phi(x_{t+1},a_{t+1})}_{V_{t+1,\phi}^{-1}} 
	\\
	&\le 2\beta_{t+1,\phi}(\delta)L_\phi\sqrt{\lambda_{\max}(V_{t+1,\phi}^{-1})} \\
	&\le 2\beta_{t+1,\phi}(\delta)\frac{L_\phi}{\sqrt{\lambda_{\min}(V_{t+1,\phi})}}.
	\end{align*}
	Since either $r_{t+1}\ge\Delta$ or $r_{t+1}=0$, we just need to show that, for all $t\ge\tau_\phi$:
	\begin{equation*}
		2\beta_{t+1,\phi}(\delta)\frac{L_\phi}{\sqrt{\lambda_{\min}(V_{t+1,\phi})}} < \Delta.
	\end{equation*}
	From Lemma~\ref{lem:mineig.basic}, which holds with probability at least $1-\delta$ under $\mathcal{G}_\phi(\delta)$, rearranging:
	\begin{equation*}
		t\lambda_{\phi,\hls} > \frac{4\beta_{t+1,\phi}(\delta)^2L_\phi^2}{\Delta^2}  + 8L_\phi^2\sqrt{t\log(2d_\phi t/\delta)} + L_\phi^2 g_{t\phi}(\delta)- \lambda.
	\end{equation*}
	By replacing $\beta_{t+1,\phi}(\delta)$ with its bound from~\ref{eq:beta.bound} and $g_{t\phi}(\delta)$ with its definition from Prop.~\ref{prop:pulls.basic} a sufficient condition is:
	\begin{align}
		t\lambda_{\phi,\hls} &> \frac{4\left(
			\sigma\sqrt{2\log(1/\delta)+d_{\phi}\log(1+tL_{\phi}^2/(\lambda d_{\phi}))} + \sqrt{\lambda}S_{\phi}	
		\right)^2L_\phi^2}{\Delta^2} + 8L_\phi^2\sqrt{t\log(2d_\phi t/\delta)} 
		\nonumber\\&\qquad
		+  
			\frac{32\Delta_{\max}^2\lambda L_\phi^2 S_\phi^2\sigma^2\left(2\log(1/\delta)+d_\phi \log(1+tL_\phi^2/(\lambda d_\phi))\right)^2}{\Delta^2}
		- \lambda.\label{eq:zeroregret.final}
	\end{align} 
	Since $\phi$ is \hls, $\lambda_{\phi,\hls}>0$ and the LHS is linear in $t$, while the RHS is sublinear. This means we can find a sufficiently large constant $\tau_\phi$ such that~\eqref{eq:zeroregret.final} holds for all $t\ge\tau_\phi$. One such time is derived explicitly in Appendix~\ref{sec:tau}.
\end{proof}

\paragraph{Proof of Lemma~\ref{prop:hls.regret}.}
\begin{proof}
	First assume the good event $\mathcal{G}_\phi(\delta)$ holds.
	From Lemma~\ref{lem:zeroregret.basic}, with probability $1-\delta$, the instantaneous regret is zero after $\tau_\phi$. So, we can replace $n$ with $\tau_{\phi}$ in the anytime regret upper bound from Prop.~\ref{prop:oful.log.regret}, which also holds under $\mathcal{G}_\phi(\delta)$. More precisely, if $n\ge\tau_{\phi}$:
	\begin{equation*}
		R_n = \sum_{t=1}^nr_t = \sum_{t=1}^{\tau_\phi}r_t + \underbrace{\sum_{t=\tau_{\phi}+1}^n r_t}_{0}  = \sum_{t=1}^{\tau_\phi}r_t.
	\end{equation*}
	So, in any case:
	\begin{equation*}
		R_n \le \sum_{t=1}^{\min\{n,\tau_\phi\}}r_t = R_{\min\{n,\tau_\phi\}},
	\end{equation*}
	which can be bounded with Prop.~\ref{prop:oful.log.regret} to obtain:
		\begin{align*}
	R_n \leq 
	&\frac{32\Delta_{\max}^2\lambda S_{\phi}^2\sigma^2}{\Delta} 
	\left(2\ln\left(\frac{1}{\delta}\right)
	+d_{\phi} \ln \left( 1+\frac{\min\{\tau_{\phi},n\} L_{\phi}^2}{\lambda d_{\phi}} \right) \right)^2.
	\end{align*}
	
	From Prop.~\ref{prop:oful.good}, the good event may fail with probability at most $\delta$. Lemma~\ref{lem:zeroregret.basic} may fail anyway with probability at most $\delta$. From a union bound, the overall failure probability is at most $2\delta$.
\end{proof}

\subsection{Expected Regret Bound}\label{app:hls.regret.expected}
To establish an expected-regret guarantee, we need to consider a slight variant of \linucb that employs an adaptive confidence parameter schedule $(\delta_t)_{t=1}^\infty$. For now, we just assume $\delta_t\in(0,1)$ for all $t$ and is a decreasing function of $t$.

We will need the following good events:
\begin{itemize}
	\item $\mathcal{G}_{t\phi} \coloneqq \left\{\norm{\theta_{t\phi} - \theta^\star_{\phi}}_{V_{t\phi}} \le \beta_{t\phi}(\delta_t)\right\}$. From~\citep{abbasi2011improved} we know that this holds with probability at least $1-\delta_t$. Under this event, we have the instantaneous regret upper bound from Prop.~\ref{prop:oful.instreg} restricted to time $t$.
	\item $\mathcal{F}_{t\phi}\coloneqq \left\{n_t \le g_{t\phi}(\delta_t)\right\}$, where $n_t$ is the number of suboptimal pulls of \linucb up to time $t$ and $g_{t\phi}$ is defined in Prop.~\ref{prop:pulls.basic}. This event holds if $\mathcal{G}_{k\phi}$ holds for all $k\le t$. Indeed:
	\begin{align}
		n_t 
		&\le \frac{R_t}{\Delta}
		\le\sum_{k=1}^t\frac{r_k^2}{\Delta} \nonumber\\
		&\le\sum_{k=1}^{t}\frac{4\Delta_{\max}^2\beta_{k\phi}(\delta_k)^2\min\left\{\norm{\phi(x_k,a_k)}_{V_{k\phi}^{-1}}^2,1\right\}}{\Delta}\nonumber\\
		&\le 4\Delta_{\max}^2\beta_{t\phi}(\delta_t)^2\sum_{k=1}^{t}\frac{\min\left\{\norm{\phi(x_k,a_k)}_{V_{k\phi}^{-1}}^2,1\right\}}{\Delta}\label{eq:expected.events.1}\\
		&\le g_{t\phi}(\delta_t),\nonumber 
	\end{align}
	where~\eqref{eq:expected.events.1} uses the fact that $\delta_t$ is decreasing in $t$ and $\beta_{t\phi}(\delta)$ is increasing in $t$ and decreasing in $\delta$, and the last inequality uses the same algebraic manipulations of the proof of Prop.~\ref{prop:pulls.basic}. Hence, from a union bound, $\Prob(\mathcal{F}_{t\phi})\ge 1-\sum_{k=1}^t\delta_k$.
	\item $\mathcal{E}_{t\phi}\coloneqq \left\{\lambda_{\max}\left(\sum_{k=1}^{t}\EV_{x\sim\rho}\left[\phi^\star(x)\phi^\star(x)^\transp\right]-\phi^\star(x_k)\phi^\star(x_k)^\transp\right) \le 4L_\phi^2\sqrt{2t\log(d_\phi/\delta'_t)}\right\}$. This is the event we used to concentrate contexts in the proof of Lemma~\ref{lem:mineig.basic}. From Prop.~\ref{prop:mazuma}, it holds with probability $\delta_t'$. Since it is independent from the behavior of \linucb, we can set $\delta'_t$ to any convenient value regardless of the confidence schedule $\delta_t$.
\end{itemize}

\begin{lemma}\label{prop:hls.regret.expected}
	Consider a contextual bandit problem with realizable linear representation $\phi$ satisfying the \hls condition. Assume $\Delta > 0$, $\max_{x,a}\|\phi(x,a)\|_2 \leq L$ and $\|\theta^\star_\phi\|_2 \leq S$. Then, the expected regret of \oful run with adaptive confidence schedule $\delta_t=1/t^3$ after $n \geq 1$ steps is at most:
	\begin{equation*}
		\EV R_n \le\frac{32\Delta_{\max}^2\lambda S_{\phi}^2\sigma^2}{\Delta} 
		\left(6\ln(\min\{\wt{\tau}_\phi,n\})
		+d_{\phi} \ln \left( 1+\frac{\min\{\wt{\tau}_\phi,n\} L_{\phi}^2}{\lambda d_{\phi}} \right) \right)^2 + 26,
	\end{equation*}
	where $\wt{\tau}_\phi$ is a constant independent from $n$.
\end{lemma}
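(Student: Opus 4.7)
The proof parallels that of Lemma~\ref{prop:hls.regret}, replacing the constant confidence $\delta$ by the schedule $\delta_t=1/t^3$ and converting the high-probability guarantee into an expectation. I would work with the per-step events $\mathcal{G}_{t\phi}$, $\mathcal{F}_{t\phi}$, $\mathcal{E}_{t\phi}$ from App.~\ref{app:hls.regret.expected}, choosing $\delta_t=\delta'_t=1/t^3$ so that $\sum_t\delta_t,\sum_t\delta'_t\le\zeta(3)$ are absolute constants. The first step is to adapt Lemma~\ref{lem:zeroregret.basic} to this schedule: under $\mathcal{G}_{t+1,\phi}\cap\mathcal{F}_{t\phi}\cap\mathcal{E}_{t\phi}$, the sufficient condition~\eqref{eq:zeroregret.final} for $r_{t+1}=0$ still applies with $\log(1/\delta)$ replaced by $3\log(t+1)$, and since the left-hand side is linear in $t$ while the right-hand side remains polylogarithmic, a finite deterministic threshold $\wt{\tau}_\phi$ independent of $n$ still exists.

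I would then decompose
\begin{equation*}
    \EV R_n \le \EV R_{\min\{n,\wt{\tau}_\phi\}} + \sum_{t=\wt{\tau}_\phi+1}^{n}\EV[r_t].
\end{equation*}
For the first summand, applying Prop.~\ref{prop:oful.log.regret} at time $\min\{n,\wt{\tau}_\phi\}$ with $\delta=1/\min\{n,\wt{\tau}_\phi\}^3$ substitutes $2\log(1/\delta)\mapsto 6\log\min\{n,\wt{\tau}_\phi\}$ and reproduces the leading squared-log expression of the claim, while the high-probability-to-expectation conversion adds an $O(1)$ penalty absorbed into the additive constant. For the second summand, the burn-in argument of the first step gives $\EV[r_t]\le\Delta_{\max}\bigl(\Prob(\mathcal{G}_{t,\phi}^c)+\Prob(\mathcal{F}_{t-1,\phi}^c)+\Prob(\mathcal{E}_{t-1,\phi}^c)\bigr)$ for every $t>\wt{\tau}_\phi$; the $\mathcal{G}$ and $\mathcal{E}$ terms are each $O(1/t^3)$, so their sums over $t$ are absolute constants also absorbed into the $+26$.

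The main obstacle is controlling $\sum_{t>\wt{\tau}_\phi}\Prob(\mathcal{F}_{t-1,\phi}^c)$. Since $\mathcal{F}_{t-1,\phi}$ is defined cumulatively through the intersection $\bigcap_{k\le t-1}\mathcal{G}_{k\phi}$ (see~\eqref{eq:expected.events.1}), its complement has probability $\sum_{k\le t-1}\delta_k\le\zeta(3)$, which is bounded but does not decrease in $t$; summing naively over $t$ would produce an $O(n)$ contribution and destroy the constant-in-$n$ character of the bound. The resolution is to exploit the inclusion $\mathcal{F}_{t-1,\phi}^c\subseteq\bigcup_{k\le t-1}\mathcal{G}_{k\phi}^c$ and a stopping-time exchange of summation: letting $\tau^\star=\min\{k:\mathcal{G}_{k\phi}^c\}$, one writes
\begin{equation*}
    \sum_{t>\wt{\tau}_\phi}\EV\bigl[r_t\,\indi{\mathcal{F}_{t-1,\phi}^c}\bigr]=\sum_k \Prob(\tau^\star=k)\,\EV\Bigl[\textstyle\sum_{t>\max(k,\wt{\tau}_\phi)}r_t\,\Big|\,\tau^\star=k\Bigr],
\end{equation*}
and controls the inner sum using the anytime problem-dependent guarantee of Prop.~\ref{prop:oful.log.regret} restarted from $\tau^\star$. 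Because $\Prob(\tau^\star=k)\le\delta_k=1/k^3$ is summable, the weighted tail collapses into an absolute constant that the statement lumps into the $+26$. Combining all pieces yields the displayed bound.
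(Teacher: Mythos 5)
Your overall architecture coincides with the paper's: the same three per-round events $\mathcal{G}_{t\phi},\mathcal{F}_{t\phi},\mathcal{E}_{t\phi}$, the same adaptation of the zero-regret threshold to the schedule $\delta_t=1/t^3$ (with $2\log(1/\delta)$ becoming $6\log t$), the same split of $\EV R_n$ at $\wt{\tau}_\phi$, and essentially the same treatment of the pre-$\wt{\tau}_\phi$ block and of the $\mathcal{G}$ and $\mathcal{E}$ tails. Where you depart is in handling $\sum_{t>\wt{\tau}_\phi}\EV[r_t\indi{\mathcal{F}_{t-1,\phi}^c}]$: the paper simply union-bounds $\Prob(\mathcal{F}_{t\phi}^c)\le\sum_{k\le t}\delta_k$ and sums over $t$, whereas you correctly observe that this probability does not decay in $t$ and instead propose a stopping-time decomposition over the first failure $\tau^\star$ of some $\mathcal{G}_{k\phi}$.

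The gap is in your resolution, not your diagnosis. Conditioning on $\{\tau^\star=k\}$ is conditioning on a confidence-set failure, and Prop.~\ref{prop:oful.log.regret} is a high-probability statement predicated on the good event holding at \emph{all} rounds; it gives no control of $\EV\bigl[\sum_{t>\max(k,\wt{\tau}_\phi)}r_t\mid\tau^\star=k\bigr]$, and the least-squares estimates are never restarted, so a failure at round $k$ cannot be quarantined from the subsequent trajectory. Even granting, optimistically, a conditional bound of order $\log^2 n/\Delta$ for the inner sum, weighting by $\Prob(\tau^\star=k)\le k^{-3}$ and summing over $k$ yields $O(\log^2 n)$, which is \emph{not} an $n$-independent constant and cannot be absorbed into the $+26$; the only unconditionally valid bound, $\EV[\cdot\mid\tau^\star=k]\le\Delta_{\max}n$, gives $O(n)$ after the weighted sum --- exactly the blow-up you set out to avoid. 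So as written your argument does not close this step. (For what it is worth, the paper's own algebra here --- replacing $\sum_{k=1}^t k^{-3}$ by $\sum_{k=t}^\infty k^{-3}$ inside the sum over $t$, turning a divergent double sum into a convergent one --- is also not a valid inequality, so this step genuinely requires more care than either treatment provides: one needs either a per-round bound on $\Prob(\mathcal{F}_{t\phi}^c)$ that decays in $t$, or a direct in-expectation bound on the number of suboptimal pulls that does not route through the intersection of all past good events.)
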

\begin{proof}
	Fix a time $t$ and assume the three good events $\mathcal{G}_{t+1,\phi}$, $\mathcal{F}_{t\phi}$, and $\mathcal{E}_{t\phi}$ hold. Following the proof of Lemma~\ref{lem:zeroregret.basic} we can show that, thanks to $\mathcal{G}_{t+1,\phi}$:
		\begin{align*}
	r_{t+1} 
	\le 2\beta_{t+1,\phi}(\delta_{t+1})\frac{L_\phi}{\sqrt{\lambda_{\min}(V_{t+1,\phi})}}.
	\end{align*}
	Following the proof of Lemma~\ref{lem:mineig.basic} we can show that, thanks to $\mathcal{G}_{t+1,\phi}$ and $\mathcal{E}_{t\phi}$:
	\begin{equation*}
		\lambda_{\min}(V_{t+1,\phi}) \ge \lambda + t\lambda_{\phi,\hls} - 4L_\phi^2\sqrt{2t\log(d_\phi/\delta'_t)} - L_\phi^2 n_t,
	\end{equation*}
	and $n_t\le g_{t\phi}(\delta_t)$ thanks to $\mathcal{F}_{t\phi}$. Putting all together, as observed in the proof of Lemma~\ref{lem:zeroregret.basic}, a sufficient condition for $r_{t+1}=0$ is:
	\begin{equation*}
	t\lambda_{\phi,\hls} > \frac{4\beta_{t+1,\phi}(\delta_{t+1})^2L_\phi^2}{\Delta^2}  + 4L_\phi^2\sqrt{2t\log(d_\phi/\delta'_t)} + L_\phi^2 g_{t\phi}(\delta_t)- \lambda.
	\end{equation*}
	With our choice of $\delta_t$, by setting $\delta_t'=1/t^2$, this becomes:
		\begin{align}
	t\lambda_{\phi,\hls} &> \frac{4\left(
		\sigma\sqrt{6\log(t+1)+d_{\phi}\log(1+tL_{\phi}^2/(\lambda d_{\phi}))} + \sqrt{\lambda}S_{\phi}	
		\right)^2L_\phi^2}{\Delta^2} + 8L_\phi^2\sqrt{2t\log(d_\phi t)}
	\nonumber\\&\qquad
	+  
	\frac{32\Delta_{\max}^2\lambda L_\phi^2 S_\phi^2\sigma^2\left(6\log(t)+d_\phi \log(1+tL_\phi^2/(\lambda d_\phi))\right)^2}{\Delta^2}
	- \lambda.\label{eq:zeroregret.expected.final}
	\end{align}
	The RHS is still sublinear in $t$, so, if $\phi$ is \hls, we can find a sufficiently large constant $\wt{\tau}_\phi$ such that~\eqref{eq:zeroregret.expected.final} is always satisfied for $t\ge\wt{\tau_{\phi}}$.
	
	If $n\ge\wt{\tau}_{\phi}$, we can decompose the expected regret as follows:
	\begin{align}
		\EV R_n 
		&\le \underbrace{\EV\sum_{t=1}^{\wt{\tau}_{\phi}}r_t}_{(a)} + \underbrace{\EV\sum_{t=\wt{\tau}_{\phi}+1}^{n}r_t}_{(b)}.
	\end{align}
	The first summation is just the expected regret of \linucb up to time $\wt{\tau}_\phi$:
	\begin{align*}
		(a) &= \EV\sum_{t=1}^{\wt{\tau}_{\phi}}\indi{\mathcal{G}_{t\phi}}r_t + \EV\sum_{t=1}^{\wt{\tau}_{\phi}}\indi{\neg\mathcal{G}_{t\phi}}r_t \\
		&\le \sum_{t=1}^{\wt{\tau}_{\phi}}\beta_{k\phi}(\delta_k)\norm{\phi(x_k,a_k)}_{V_k^{-1}} + 2\sum_{t=1}^{\wt{\tau}_{\phi}}\delta_t\\
		&\le \beta_{\wt{\tau}_{\phi},\phi}(\delta_{\wt{\tau}_{\phi}})\sum_{t=1}^{\wt{\tau}_\phi}\norm{\phi(x_k,a_k)}_{V_k^{-1}} + 2\sum_{t=1}^{\infty}\delta_t\\
		&\le \frac{32\Delta_{\max}^2\lambda S_{\phi}^2\sigma^2}{\Delta} 
		\left(6\ln(\wt{\tau}_{\phi})
		+d_{\phi} \ln \left( 1+\frac{\wt{\tau}_{\phi} L_{\phi}^2}{\lambda d_{\phi}} \right) \right)^2 + 2\sum_{t=1}^{\infty}t^{-3}\\
		&\le\frac{32\Delta_{\max}^2\lambda S_{\phi}^2\sigma^2}{\Delta} 
		\left(6\ln(\wt{\tau}_{\phi})
		+d_{\phi} \ln \left( 1+\frac{\wt{\tau}_{\phi} L_{\phi}^2}{\lambda d_{\phi}} \right) \right)^2 + 3,
	\end{align*}
	where we used the same algebraic manipulations used to prove Prop.~\ref{prop:oful.log.regret}.
	For the second summation:
	\begin{align*}
		(b) 
		&= \EV\sum_{t=\wt{\tau}_{\phi}}^{n-1}r_{t+1} \\
		&= \EV\sum_{t=\wt{\tau}_{\phi}}^{n-1}\underbrace{\indi{\mathcal{G}_{t+1,\phi}\cap\mathcal{F}_{t\phi}\cap\mathcal{E}_{t\phi}}r_{t+1}}_{=0} + 
		\EV\sum_{t=\wt{\tau}_{\phi}}^{n-1}\indi{\neg\mathcal{G}_{t+1,\phi}\cup\neg\mathcal{F}_{t\phi}\cup\neg\mathcal{E}_{t\phi}}r_{t+1} \\
		&\le 2\sum_{t=1}^{\infty}\left(\delta_{t} + \sum_{k=1}^tk^{-3} + t^{-2}\right)\\
		&\le 2\sum_{t=1}^{\infty}t^{-3} + 2\sum_{t=1}^{\infty}\sum_{k=1}^t\frac{1}{k^3} + 4
		\le 2\sum_{t=1}^{\infty}\sum_{k=t}^\infty k^{-3} + 7
		\le 2\sum_{t=1}^{\infty}t^{-3/2}\sum_{k=t}^\infty k^{-3/2} + 7
		\le 6\sum_{t=1}^{\infty}t^{-3/2} + 7 \le 23.
	\end{align*}
	In any case, from the same argument used to bound $(a)$:
	\begin{equation*}
	\EV R_n \le\frac{32\Delta_{\max}^2\lambda S_{\phi}^2\sigma^2}{\Delta} 
	\left(6\ln(n)
	+d_{\phi} \ln \left( 1+\frac{n L_{\phi}^2}{\lambda d_{\phi}} \right) \right)^2 + 3.
	\end{equation*}
	Putting everything together, we obtain our statement.
\end{proof}

\subsection{Margin Condition (Proof of Theorem \ref{th:hls.margin})}\label{app:hls.regret.margin}

Before proving Th.~\ref{th:hls.margin}, we need to generalize some of the previous results for the case where the minimum gap can be arbitrarily small but the margin condition (Asm.~\ref{asm:margin.gap}) holds. First, we show two immediate results that bound the regret suffered by \linucb on rounds where the minimum gap is above a given value.

\begin{lemma}\label{lem:epsreg}
Under the good event $G_\phi(\delta)$, for all $\epsilon>0$ and $t > 0$, the $t$-step regret of \linucb on rounds where the minimum gap is at least $\epsilon$ can be bounded as
    \begin{equation}
        R_t^{\epsilon} := \sum_{k=1}^t\indi{\Delta(x_k)\ge\epsilon}r_k \le \frac{32\lambda(\Delta_{\max}S_\phi\sigma)^2}{\epsilon}\left(2\log(1/\delta)+d_\phi\log(1+tL_\phi^2/(\lambda d_\phi))\right)^2.
    \end{equation}
    Moreover, the number of sub-optimal pulls performed by \linucb on rounds where the minimum gap is at least $\epsilon$ can be bounded as
    \begin{equation}
    g_{t\phi}^{\epsilon}(\delta) := \sum_{k=1}^t\indi{\Delta(x_k)>\epsilon, r_k>0} \le \frac{32\lambda(\Delta_{\max}S_\phi\sigma)^2}{\epsilon^2}\left(2\log(1/\delta)+d_\phi\log(1+tL_\phi^2/(\lambda d_\phi))\right)^2
\end{equation}
\end{lemma}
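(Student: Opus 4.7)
The plan is to mirror the proofs of Prop.~\ref{prop:oful.log.regret} and Prop.~\ref{prop:pulls.basic}, but localize the summations to the rounds satisfying the gap condition $\Delta(x_k) \ge \epsilon$. The key observation is that, since the minimum gap on such a round is at least $\epsilon$, the instantaneous regret $r_k$ is either zero (when the pulled arm is optimal) or at least $\epsilon$; this lets us replace the role of the global minimum gap $\Delta$ in those proofs by the local threshold $\epsilon$.

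For the regret bound, I would start from the instantaneous regret inequality from Prop.~\ref{prop:oful.instreg} under $\mathcal{G}_\phi(\delta)$, then clip it by $\Delta_{\max}$ exactly as in the proof of Prop.~\ref{prop:oful.log.regret}, giving $r_k \le 2\Delta_{\max}\beta_{k\phi}(\delta)\min\{\|\phi(x_k,a_k)\|_{V_{k\phi}^{-1}},1\}$. Then apply the trick $r_k \le r_k^2/\epsilon$ on the indicator set $\{\Delta(x_k) \ge \epsilon\}$ (valid since $r_k$ is either zero or at least $\epsilon$ on that set), drop the indicator to get an upper bound, and finish with the Elliptical Potential Lemma to bound $\sum_{k=1}^t \min\{\|\phi(x_k,a_k)\|_{V_{k\phi}^{-1}}^2, 1\}$ by $2 d_\phi \log(1+tL_\phi^2/(\lambda d_\phi))$, along with the explicit bound on $\beta_{t\phi}(\delta)$ from~\eqref{eq:beta.bound}. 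This yields the stated constant $32\lambda(\Delta_{\max}S_\phi\sigma)^2/\epsilon$ multiplying the squared log term.

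For the pulls bound, the argument is immediate from the regret bound: on the set $\{\Delta(x_k) \ge \epsilon, r_k > 0\}$, we have $r_k \ge \epsilon$, so
\begin{equation*}
\epsilon \cdot g_{t\phi}^{\epsilon}(\delta) = \epsilon \sum_{k=1}^t \indi{\Delta(x_k) \ge \epsilon, r_k > 0} \le \sum_{k=1}^t \indi{\Delta(x_k) \ge \epsilon} r_k = R_t^{\epsilon},
\end{equation*}
and dividing by $\epsilon$ and applying the first part of the lemma gives the desired bound with $\epsilon^2$ in the denominator.

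There is essentially no obstacle here: everything reduces to carefully localizing the standard \oful analysis to the indicator set $\{\Delta(x_k) \ge \epsilon\}$ and noting that the Elliptical Potential Lemma still applies when we upper bound the localized sum by the full sum of squared weighted norms. The only minor subtlety is that one does \emph{not} get to localize the elliptical potential sum (since the design matrix is updated on every round), but this is harmless because the bound is already of the right form after clipping.
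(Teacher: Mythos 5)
Your proof is correct and follows essentially the same route as the paper's: the paper also uses $r_k \le r_k^2/\epsilon$ on the set $\{\Delta(x_k)\ge\epsilon\}$, drops the indicator to reduce to the full sum $\sum_k r_k^2$ bounded as in Prop.~\ref{prop:oful.log.regret}, and derives the pull count from $\epsilon\, g_{t\phi}^{\epsilon}(\delta)\le R_t^{\epsilon}$. Your remark that the elliptical potential sum cannot itself be localized (but need not be, since the indicator is dropped first) is exactly the implicit content of the paper's one-line argument.
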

\begin{proof}
It is easy to see that
\begin{align}
    R_t^{\epsilon} = \sum_{k=1}^t\indi{\Delta(x_k)\ge\epsilon}r_k \le \sum_{k=1}^t\indi{r_k>0, \Delta(x_k)\ge\epsilon}\frac{r_k^2}{\epsilon} \le \frac{1}{\epsilon}\sum_{k=1}^tr_k^2.
\end{align}
From here the first result follows by reproducing the proof of the regret bound in Prop.~\ref{prop:oful.log.regret}. The second result is immediate from
\begin{align}
        R_t^{\epsilon} = \sum_{k=1}^t\indi{\Delta(x_k)\ge\epsilon, r_k>0}r_k \ge \epsilon g_{t\phi}^\epsilon(\delta)
    \implies g_{t\phi}^\epsilon(\delta) \le \frac{R_t^\epsilon}{\epsilon}.
    \end{align}
\end{proof}

Next, we generalize Lem.~\ref{lem:mineig.basic}.

\begin{lemma}\label{lem:mineig.margin}
Under Asm.~\ref{asm:margin.gap}, for any $\epsilon>0$ and $\delta \in (0,1)$, with probability at least $1-\delta$, for every $t>0$,
\begin{equation}
		\lambda_{\min}(V_{t+1,\phi}) \ge \lambda + t(\lambda_{\phi,\hls} - L_\phi^2C\epsilon^{\alpha}) -4(1+C\epsilon^\alpha)L_\phi^2\sqrt{t\log(2d_\phi t/\delta)} - L_\phi^2g_{t\phi}^\epsilon(\delta).
	\end{equation}
\end{lemma}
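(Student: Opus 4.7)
My plan is to follow the structure of Lemma~\ref{lem:mineig.basic}, adding a gap-threshold decomposition so that the margin condition can replace the minimum-gap assumption. Starting from the PSD inequality
\[
V_{t+1,\phi} \succeq \lambda I + \sum_{k=1}^t \indi{a_k = a_{x_k}^\star}\phi^\star(x_k)\phi^\star(x_k)^\top,
\]
I would restrict the right-hand side to rounds with $\Delta(x_k) \ge \epsilon$ (dropping PSD terms) and then rewrite the restricted sum as
\[
\sum_{k=1}^t \indi{\Delta(x_k)\ge\epsilon}\phi^\star(x_k)\phi^\star(x_k)^\top \;-\; \sum_{k=1}^t \indi{a_k\ne a_{x_k}^\star,\,\Delta(x_k)\ge\epsilon}\phi^\star(x_k)\phi^\star(x_k)^\top.
\]
Under the good event $\mathcal{G}_\phi(\delta)$, Lemma~\ref{lem:epsreg} bounds the number of suboptimal pulls with gap $\ge \epsilon$ by $g_{t\phi}^\epsilon(\delta)$, so the subtracted sum is $\preceq L_\phi^2 g_{t\phi}^\epsilon(\delta) I$. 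This directly accounts for the last term in the statement.

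For the main sum, I would set $Y_k = \indi{\Delta(x_k)\ge\epsilon}\phi^\star(x_k)\phi^\star(x_k)^\top$ and use the margin assumption to control its mean: since
\[
\mathbb{E}[Y_k] = M_\phi - \mathbb{E}\bigl[\indi{\Delta(x)<\epsilon}\phi^\star(x)\phi^\star(x)^\top\bigr],
\]
and the subtracted term has operator norm at most $L_\phi^2\rho(\Delta(x)<\epsilon) \le L_\phi^2 C\epsilon^\alpha$ by Asm.~\ref{asm:margin.gap}, we obtain $\lambda_{\min}(t\,\mathbb{E}Y_k) \ge t(\lambda_{\phi,\hls} - L_\phi^2 C\epsilon^\alpha)$. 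This delivers the shifted main term in the statement.

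The remaining task is to control $\sum_k(Y_k - \mathbb{E}Y_k)$ by matrix concentration. My plan is to decompose $Y_k = \phi^\star(x_k)\phi^\star(x_k)^\top - \indi{\Delta(x_k)<\epsilon}\phi^\star(x_k)\phi^\star(x_k)^\top$ and apply the matrix Azuma inequality (Prop.~\ref{prop:mazuma}) separately to each centered sum. Crucially, the small-gap component $\indi{\Delta(x_k)<\epsilon}\phi^\star\phi^{\star\top}$ has random part of spectral norm $\le L_\phi^2$ and mean of spectral norm $\le L_\phi^2 C\epsilon^\alpha$; by the triangle inequality its centered increment is bounded by $L_\phi^2(1+C\epsilon^\alpha)$. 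Reusing the union bound over $t$ with $\delta_t' \propto \delta/t^2$ from Lemma~\ref{lem:mineig.basic} then yields the stated $4(1+C\epsilon^\alpha)L_\phi^2\sqrt{t\log(2d_\phi t/\delta)}$ concentration term, after which combining the three contributions gives the claimed lower bound on $\lambda_{\min}(V_{t+1,\phi})$.

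The main obstacle will be extracting the $(1+C\epsilon^\alpha)$ factor in the concentration coefficient rather than a looser $O(L_\phi^2)$ bound: a single naive application of matrix Azuma to $\sum_k(Y_k-\mathbb{E}Y_k)$ only uses the deterministic bound $\|Y_k-\mathbb{E}Y_k\|\le 2L_\phi^2$, discarding the information that the small-gap indicator is active only on a $C\epsilon^\alpha$-fraction of rounds in expectation. The two-piece split is what exposes this refinement; once it is in place, combining the shifted main term, the $(1+C\epsilon^\alpha)$-scaled concentration, and the suboptimal-pull overcount immediately produces the stated inequality.
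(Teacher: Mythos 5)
Your decomposition of $V_{t+1,\phi}$, the use of Lemma~\ref{lem:epsreg} to absorb the suboptimal pulls on large-gap rounds into $L_\phi^2 g^\epsilon_{t\phi}(\delta) I$, and the margin-condition bound $\lambda_{\min}\big(\EV_{x\sim\rho}[\indi{\Delta(x)\ge\epsilon}\phi^\star(x)\phi^\star(x)^\transp]\big) \ge \lambda_{\phi,\hls} - L_\phi^2 C\epsilon^\alpha$ are exactly the paper's proof. The one place you diverge is the concentration step, and there the argument as written does not deliver the stated constant. The premise of your ``main obstacle'' paragraph is backwards: $(1+C\epsilon^\alpha)L_\phi^2$ is \emph{larger} than $L_\phi^2$, so it is not a refinement that requires extra structure. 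A single application of matrix Azuma to $X_k = \EV[Y_k]-Y_k$ suffices, because $\EV[Y_k]$ and $Y_k$ are both positive semidefinite with operator norm at most $L_\phi^2$, hence $\|X_k\|\le \max\{\|\EV[Y_k]\|,\|Y_k\|\}\le L_\phi^2\le (1+C\epsilon^\alpha)L_\phi^2$; this is all the paper needs to obtain the $(1+C\epsilon^\alpha)$ coefficient (indeed loosely).

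Your two-piece split actively hurts rather than helps: you must add the two Azuma bounds (and split the confidence budget between them), so the total is at least the bound for the centered full-feature sum plus the bound for the centered small-gap sum. The first piece alone, handled as in Lemma~\ref{lem:mineig.basic}, already contributes $8L_\phi^2\sqrt{t\log(2d_\phi t/\delta)}$, which exceeds the claimed $4(1+C\epsilon^\alpha)L_\phi^2\sqrt{t\log(2d_\phi t/\delta)}$ whenever $C\epsilon^\alpha<1$. So the split proves an inequality of the same order but with a strictly worse constant than the one in the statement (harmless for Theorem~\ref{th:hls.margin}, but not the lemma as stated). The fix is simply to drop the split and apply Prop.~\ref{prop:mazuma} once to $\sum_k X_k$ with $C_k=(1+C\epsilon^\alpha)L_\phi^2 I_{d_\phi}$, followed by the union bound over $t$ with $\delta'_t=\delta/(2t^2)$, exactly as in Lemma~\ref{lem:mineig.basic}.
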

\begin{proof}
	Using a similar decomposition as in the proof of Lem.~\ref{lem:mineig.basic},
	\begin{align}
		V_{t+1,\phi} 
		&=\lambda I_{d_\phi} + \sum_{k=1}^{t}\phi(x_k,a_k)\phi(x_k,a_k)^\transp \nonumber\\
		&\succeq \lambda I_{d_\phi} + \sum_{k=1}^{t}\indi{\Delta(x_k)\geq\epsilon,r_k=0}\phi^\star(x_k)\phi^\star(x_k)^\transp \nonumber\\
		&=\lambda I_{d_\phi} + \sum_{k=1}^{t}\indi{\Delta(x_k)\geq\epsilon}\phi^\star(x_k)\phi^\star(x_k)^\transp
		-\sum_{k=1}^{t}\indi{\Delta(x_k)\geq\epsilon,r_k>0}\phi^\star(x_k)\phi^\star(x_k)^\transp \nonumber\\
		&\succeq \lambda I_{d_\phi} + \sum_{k=1}^{t}\indi{\Delta(x_k)\geq\epsilon}\phi^\star(x_k)\phi^\star(x_k)^\transp
		-g_{t\phi}^\epsilon(\delta)L_\phi^2I_{d_\phi} \nonumber\\
		&= \lambda I_{d_\phi} +
		t\EV_{x\sim\rho}\left[\indi{\Delta(x)\geq\epsilon}\phi^\star(x)\phi^\star(x)^\transp\right] - 
		\sum_{k=1}^{t}X_k
		-L_\phi^2g_{t\phi}^\epsilon(\delta)I_{d_\phi},\label{eq:mineig.basic.matrix}
	\end{align}
	where $g_{t\phi}^\epsilon(\delta)$ is given in Lem.~\ref{lem:epsreg} and $X_k :=\EV_{x\sim\rho}\left[\indi{\Delta(x)\geq\epsilon}\phi^\star(x)\phi^\star(x)^\transp\right]-\indi{\Delta(x_k)\geq\epsilon}\phi^\star(x_k)\phi^\star(x_k)^\transp$. Using the margin condition,
	\begin{align*}
    \lambda_{\min}\left(\EV_{x\sim \rho}\left[\indi{\Delta(x)\geq\epsilon}\phi^\star(x)\phi^\star(x)^T\right]\right) &\ge \lambda_{\phi,\hls} - \lambda_{\max}\left(\EV_{x\sim \rho}\left[\indi{\Delta(x)<\epsilon}\phi^\star(x)\phi^\star(x)^T\right]\right) \\
    &\ge \lambda_{\phi,\hls} -L_\phi^2 \mathbb{P}(\Delta(x) < \epsilon) \ge \lambda_{\phi,\hls} - L_\phi^2C\epsilon^{\alpha}.
\end{align*}
Thus,
	\begin{align*}
		\lambda_{\min}(V_{t+1,\phi}) \ge \lambda + t(\lambda_{\phi,\hls} - L_\phi^2C\epsilon^{\alpha}) -\lambda_{\max}\left(\sum_{k=1}^{t}X_k\right)- L_\phi^2g_{t\phi}^\epsilon(\delta).
	\end{align*}
	As in Lem.~\ref{lem:mineig.basic}, we can bound the third term using a matrix Azuma inequality by~\citet{tropp2012user}. Note that $\EV_k[X_k] = 0$ and $\norm{X_k}\le (1+\mathbb{P}(\Delta(x) < \epsilon))L_\phi^2 \le (1+C\epsilon^\alpha)L_\phi^2$. Hence, from Prop.~\ref{prop:mazuma}, with probability at least $1-\delta'_t$, for all $t\ge1$:
	\begin{equation*}
		\lambda_{\max}\left(\sum_{k=1}^{t}X_k\right) \le 2(1+C\epsilon^\alpha)L_\phi^2\sqrt{t\log(d_\phi/\delta'_t)}.
	\end{equation*}
	Setting $\delta'_t=\delta/(2t^2)$ and taking a union bound over time, we have that, with probability at least $1-\delta$, for all $t\ge1$,
	\begin{equation*}
		\lambda_{\max}\left(\sum_{k=1}^{t}X_k\right) \le 2(1+C\epsilon^\alpha)L_\phi^2\sqrt{t\log(2d_\phi t^2/\delta)}\le 4(1+C\epsilon^\alpha)L_\phi^2\sqrt{t\log(2d_\phi t/\delta)}.
	\end{equation*}
\end{proof}

Finally, we need the following upper bound on the regret suffered by \linucb on (possibly random) subsets of rounds.

\begin{lemma}[Cf. the proof of Lemma 13 in~\citet{tirinzoni2020asymptotically}]\label{lem:filter}
Let $\{E_t\}_{t\geq 1}$ be any sequence of events. Then, under the good event $G_\phi(\delta)$, the $n$-step regret of \linucb on rounds where the corresponding event holds can be upper bounded by
\begin{align}
    \sum_{t=1}^n\indi{E_t}r_t \le \sqrt{8\Delta_{\max}^2N_n d_\phi\log(1+N_n L_\phi^2/(\lambda d_\phi))}\left(\sqrt{\lambda}S_\phi +\sigma\sqrt{2\log(1/\delta)+d_\phi\log(1+n L_\phi^2/(\lambda d_\phi))}\right),
\end{align}
where $N_n := \sum_{t=1}^n\indi{E_t}$.
\end{lemma}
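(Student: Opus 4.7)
The plan is to mimic the standard \oful regret proof, but with a Cauchy–Schwarz split that isolates the count $N_n$ and a variant of the elliptical potential lemma that accounts for only the rounds in which the event $E_t$ fires. Throughout I will work on the good event $\mathcal{G}_\phi(\delta)$ so that Prop.~\ref{prop:oful.instreg} applies and the instantaneous regret is controlled by optimism.

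First I will bound $r_t$ for each round as in the proof of Prop.~\ref{prop:oful.log.regret}: using Prop.~\ref{prop:oful.instreg} together with the trivial bound $r_t\le \Delta_{\max}$,
\[
    r_t \le 2\Delta_{\max}\beta_{t\phi}(\delta)\min\bigl\{\|\phi(x_t,a_t)\|_{V_{t\phi}^{-1}},\,1\bigr\}.
\]
Multiplying by $\indi{E_t}$, squaring, and summing, and using monotonicity of $\beta_{t\phi}(\delta)$ in $t$,
\[
    \sum_{t=1}^n \indi{E_t} r_t^2 \le 4\Delta_{\max}^2 \beta_{n\phi}(\delta)^2 \sum_{t=1}^n \indi{E_t}\min\bigl\{\|\phi(x_t,a_t)\|_{V_{t\phi}^{-1}}^2,\,1\bigr\}.
\]
Then a Cauchy–Schwarz step gives $\sum_{t=1}^n \indi{E_t} r_t \le \sqrt{N_n}\sqrt{\sum_{t=1}^n \indi{E_t} r_t^2}$, which already produces the $\sqrt{N_n}$ in the target bound and isolates the $\beta_{n\phi}(\delta)$ factor that will give the second parenthesis via the usual bound on $\beta_{n\phi}$ stated in~\eqref{eq:beta.bound}.

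The crux is to control the filtered sum of squared weighted feature norms by $2d_\phi \log(1+N_n L_\phi^2/(\lambda d_\phi))$, not by the usual $2d_\phi\log(1+n L_\phi^2/(\lambda d_\phi))$. For this I will introduce the auxiliary event-restricted design matrix
\[
    \wt V_{t\phi} \;:=\; \lambda I_{d_\phi} + \sum_{k=1}^{t-1} \indi{E_k}\,\phi(x_k,a_k)\phi(x_k,a_k)^\transp,
\]
which satisfies $V_{t\phi} \succeq \wt V_{t\phi}$ and hence $\|\phi(x_t,a_t)\|_{V_{t\phi}^{-1}}^2 \le \|\phi(x_t,a_t)\|_{\wt V_{t\phi}^{-1}}^2$. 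Applying the standard elliptical potential lemma~\citep[Lem.~11]{abbasi2011improved} to the subsequence of $N_n$ feature vectors $\{\phi(x_t,a_t) : E_t \text{ holds}\}$ (of which only these contribute to the update of $\wt V$) yields
\[
    \sum_{t=1}^n \indi{E_t}\min\bigl\{\|\phi(x_t,a_t)\|_{\wt V_{t\phi}^{-1}}^2,\,1\bigr\} \;\le\; 2\log\frac{\det \wt V_{n+1,\phi}}{\det(\lambda I_{d_\phi})} \;\le\; 2d_\phi \log\!\left(1+\frac{N_n L_\phi^2}{\lambda d_\phi}\right).
\]
I expect this elliptical-potential step to be the main (mild) obstacle: one must verify that the standard proof goes through verbatim when one only adds $N_n$ rank-one updates, and that swapping $V_{t\phi}^{-1}$ for $\wt V_{t\phi}^{-1}$ via the operator inequality is legitimate inside the truncation $\min\{\cdot,1\}$, which it is because both terms are $\ge 0$ and the truncation is monotone.

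Combining the three displays and substituting the explicit bound \eqref{eq:beta.bound} on $\beta_{n\phi}(\delta)$,
\[
    \sum_{t=1}^n \indi{E_t} r_t \;\le\; \sqrt{N_n}\cdot 2\Delta_{\max}\beta_{n\phi}(\delta)\sqrt{2 d_\phi \log\!\bigl(1+N_n L_\phi^2/(\lambda d_\phi)\bigr)},
\]
and bounding $\beta_{n\phi}(\delta) \le \sqrt{\lambda}S_\phi + \sigma\sqrt{2\log(1/\delta)+d_\phi\log(1+nL_\phi^2/(\lambda d_\phi))}$ yields exactly the claimed inequality. No additional high-probability event is introduced beyond $\mathcal{G}_\phi(\delta)$, so the bound holds with the same probability at which that event is assumed to hold.
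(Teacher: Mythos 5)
Your proof is correct and follows essentially the same route as the paper's: Cauchy--Schwarz to pull out $\sqrt{N_n}$, the per-round optimistic bound $r_t\le 2\Delta_{\max}\beta_{t\phi}(\delta)\min\{\|\phi(x_t,a_t)\|_{V_{t\phi}^{-1}},1\}$, the event-restricted design matrix $\wt V_{t\phi}\preceq V_{t\phi}$ to swap norms, and the elliptical potential lemma applied to the $N_n$ filtered rounds. The only (immaterial) difference is whether $\wt V_{t\phi}$ sums up to $k=t-1$ or $k=t$; the Loewner comparison and the potential argument go through either way.
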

\begin{proof}
Using the standard regret decomposition for \linucb, 
    \begin{align*}
         \sum_{t=1}^n\indi{E_t}r_t &\le \sqrt{N_n\sum_{t=1}^n \indi{E_t} r_t^2} \le \sqrt{4N_n\Delta_{\max}^2\beta_{n\phi}^2(\delta)\sum_{t=1}^n \indi{E_t} \min\{\norm{\phi(x_t,a_t)}^2_{V_{t\phi}^{-1}}, 1\}} \\
        &\le \sqrt{4N_n\Delta_{\max}^2\beta_{n\phi}^2(\delta)\sum_{t=1}^n \indi{E_t} \min\{\norm{\phi(x_t,a_t)}^2_{\wt{V}_{t\phi}^{-1}}, 1\}} \\
        &\le \sqrt{8N_n\Delta_{\max}^2\beta_{n\phi}^2(\delta)d_\phi\log(1+N_n L_\phi^2/(\lambda d_\phi))},
    \end{align*}
    where $\wt{V}_{t\phi} := \lambda + \sum_{k=1}^t\indi{E_k}\phi(x_k,a_k)\phi(x_k,a_k)^T \preceq {V}_{t\phi}$ since ${V}_{t\phi} - \wt{V}_{t\phi} \succeq 0$. This implies $\wt{V}_{t\phi}^{-1}\succeq {V}_{t\phi}^{-1}$, which implies the norm inequality. The last inequality is from the elliptical potential lemma \citep{abbasi2011improved}.
\end{proof}

\paragraph{Proof of Theorem \ref{th:hls.margin}.}

We begin by proving the regret bound for \linucb with margin condition and without a \hls representation (first statement of Th.~\ref{th:hls.margin}). Then, we prove the regret bound with \hls representation (second statement of Th.~\ref{th:hls.margin}).

\paragraph{Regret bound for \linucb without \hls condition.}

Let $\epsilon > 0$ to be chosen later. We start by splitting the regret into rounds where the minimum gap is above $\epsilon$,
\begin{align}
    R_n = \underbrace{\sum_{t=1}^{n}\indi{\Delta(x_t)\geq\epsilon}r_t}_{(a)} 
    +\underbrace{\sum_{t=1}^{n}\indi{\Delta(x_t)<\epsilon}r_t}_{(b)}.
\end{align}
From Lem.~\ref{lem:epsreg},
\begin{align}
    (a) \le \frac{32\lambda(\Delta_{\max}S_\phi\sigma)^2}{\epsilon}\left(2\log(1/\delta)+d_\phi\log(1+nL_\phi^2/(\lambda d_\phi))\right)^2.
\end{align}
Term $(b)$ can be bounded using Lem.~\ref{lem:filter} with the sequence of events $\{\Delta(x_t)<\epsilon\}_{t\geq 1}$, 
\begin{align}
(b) \leq \sqrt{8\Delta_{\max}^2N_n d_\phi\log(1+N_n L_\phi^2/(\lambda d_\phi))}\left(\sqrt{\lambda}S_\phi +\sigma\sqrt{2\log(1/\delta)+d_\phi\log(1+n L_\phi^2/(\lambda d_\phi))}\right).
\end{align}
It only remains to bound $N_n$, i.e., the count of these events. We have,
\begin{align}
    N_n := \sum_{t=1}^n\indi{\Delta(x_t)<\epsilon} &\le \mathbb{P}(\Delta(x) < \epsilon)n + 2\sqrt{\mathbb{P}(\Delta(x) < \epsilon)(1-\mathbb{P}(\Delta(x) < \epsilon))n\log(2n/\delta)} +\frac{2}{3}\log(2n/\delta)\\
    &\le C\epsilon^\alpha n + 2\sqrt{C\epsilon^\alpha(1-C\epsilon^\alpha)n\log(2n/\delta)} +\frac{2}{3}\log(2n/\delta),
\end{align}
where the first inequality is from (any-time) Bernstein inequality and the second one is from the margin condition. The proof of the first result is concluded by setting $\epsilon = n^{-1/\alpha}$, for which the regret has the stated order. The final $1-3\delta$ probability is the union bound of three events: the good event for \linucb, concentration of contexts for Lem.~\ref{lem:mineig.margin}, and concentration of ``low-gap'' contexts to bound $N_n$.

\paragraph{Regret bound for \linucb with \hls condition.}

We follow similar steps as before, while considering a decreasing sequence $\epsilon_t = t^{-1/\alpha}$ instead of a fixed $\epsilon$. We split the immediate regret at time $t+1$ as
\begin{align*}
    r_{t+1} = \underbrace{\indi{\Delta(x_{t+1}) \geq \epsilon_{t+1}}r_{t+1}}_{(a)} + 
    +\underbrace{\indi{\Delta(x_{t+1})<\epsilon_{t+1})}r_{t+1}}_{(b)}.
\end{align*}
Let us start from (a). Since the minimum gap is larger than $\epsilon_{t+1}$, a sufficient condition for the immediate regret to be zero is (cf. proof of Lem.~\ref{lem:zeroregret.basic})
    \begin{align}
        \frac{2L_\phi\beta_{t+1,\phi}(\delta)}{\sqrt{\lambda_{\min}(V_{t+1,\phi})}}
        < \epsilon_{t+1}.
    \end{align}
    Rearranging and expanding the definition of $\epsilon_t$,
        \begin{align}
        \lambda_{\min}(V_{t+1,\phi}) >  4L_\phi^2 \beta_{t+1,\phi}^2(\delta) (t+1)^{2/\alpha}.
    \end{align}
Using Lem.~\ref{lem:mineig.margin} with $\epsilon = \left(\frac{\lambda_{\phi,\hls}}{2L_\phi^2 C}\right)^{1/\alpha}$,
\begin{equation}
		\lambda_{\min}(V_{t+1,\phi}) \ge \lambda + \frac{\lambda_{\phi,\hls}}{2}t -6L_\phi^2\sqrt{t\log(2d_\phi t/\delta)} - L_\phi^2g_{t\phi}^\epsilon(\delta),
	\end{equation}
where $g_{t\phi}^\epsilon(\delta) \leq {O}((C/\lambda_{\phi,\hls})^{2/\alpha}(\log t)^2)$ by our choice of $\epsilon$.
Therefore, since (i) $\alpha > 2$ and (ii) the minimum eigenvalue of $V_{t+1,\phi}$ grows linearly with $t$, there exists a constant time $\tau_\phi$ after which $r_{t} = 0$. Moreover, it is easy to see that $\tau_{\phi} \propto (\lambda_{\phi,\hls})^{\frac{\alpha}{2 - \alpha}}$. Therefore, summing (a) from $t=1$ to $t=n$ yields a total regret bounded by $\Delta_{\max}\tau_\phi$. It only remains to characterize the contribution of (b) to the regret. Using Lem.~\ref{lem:filter} on the sequence of events $\{\Delta(x_t)<\epsilon_t\}$,
\begin{align}
\sum_{t=1}^n \indi{\Delta(x_{t})<\epsilon_{t})}r_{t} \leq \sqrt{8\Delta_{\max}^2N_n d_\phi\log(1+N_n L_\phi^2/(\lambda d_\phi))}\left(\sqrt{\lambda}S_\phi +\sigma\sqrt{2\log(1/\delta)+d_\phi\log(1+n L_\phi^2/(\lambda d_\phi))}\right).
\end{align}
To bound $N_n$, note that $\{\Delta(x_t)<\epsilon_t\}$ is a sequence of independent random variables. Moreover $\mathbb{E}[N_n] = \sum_{t=1}^n \mathbb{P}\{\Delta(x_t)<\epsilon_t\} \leq C\sum_{t=1}^n \epsilon_t^\alpha$ and $\mathbb{V}[N_n] = \sum_{t=1}^n \mathbb{P}\{\Delta(x_t)<\epsilon_t\}(1-\mathbb{P}\{\Delta(x_t)<\epsilon_t\}) \leq C\sum_{t=1}^n \epsilon_t^\alpha$. Thus, using Bernstein's inequality,
\begin{align}
    N_n 
    \le \mathbb{E}[N_n] + 2\sqrt{\mathbb{V}[N_n]\log(2n/\delta)} +\frac{2}{3}\log(2n/\delta)\le C\sum_{t=1}^n \epsilon_t^\alpha + 2\sqrt{C\sum_{t=1}^n \epsilon_t^\alpha\log(2n/\delta)} +\frac{2}{3}\log(2n/\delta).
\end{align}
Since $\sum_{t=1}^n \epsilon_t^\alpha = \sum_{t=1}^n 1/t \leq \log n + 1$, we obtain the stated order of regret.
The final $1-3\delta$ probability is the union bound of the same three events considered in the proof of the first part of the theorem.

\subsection{\hls is Necessary and Sufficient}\label{app:hls.regret.iif}

Prop. \ref{prop:hls.regret} shows that \linucb suffers constant regret on any linear contextual bandit problem with strictly-positive minimum gap when the algorithm is run with a \hls realizable representation $\phi$. In this section, we prove that the \hls condition is also necessary for achieving constant regret. We do so by leveraging the asymptotic problem-dependent regret lower-bound for linear contextual bandits \citep{lattimore2017end,hao2020adaptive,tirinzoni2020asymptotically}, from which we show that any consistent algorithm (like \linucb) must suffer logarithmic regret if the representation is not \hls. We note that \citet{hao2020adaptive} proved that the asymptotic lower bound is sub-logarithmic when representation $\phi$ is \hls, while we could not find a formal proof that the asymptotic regret is logarithmic when $\phi$ is not \hls.

In addition to those in the main paper, we consider the same assumptions used to derive the asymptotic lower-bound for linear contextual bandits \citep{hao2020adaptive,tirinzoni2020asymptotically}:
\begin{enumerate}
\item The set of contexts $\mathcal{X}$ is finite;
\item The context distribution $\rho$ is full-support, i.e., $\rho(x) > 0$ for each $x\in\X$;
\item The reward noise $\eta_t$ is i.i.d. from $\mathcal{N}(0,1)$.
\end{enumerate}
We note that the finiteness of $\mathcal{X}$ implies that the minimum-gap assumption holds (i.e., $\Delta > 0$). We believe that the asymptotic lower-bound (and thus our results) could be generalized to infinite contexts with an assumption on the minimum positive gap, though this is out of the scope of this work.

We start by stating the asymptotic lower bound on the expected regret of any consistent bandit algorithm. Formally, we call an algorithm consistent if it suffers $o(n^\alpha)$ regret for any $\alpha \in (0,1)$ in any linear contextual problem satisfying the assumptions above.

\begin{proposition}[\citealp{hao2020adaptive}]\label{p:lower.bound}
Consider any consistent bandit algorithm and any linear contextual bandit problem satisfying the assumptions above. Then,
\begin{equation}\label{eq:regret.lower.bound}
\liminf_{n \rightarrow \infty}\frac{\mathbb{E} \big[ R_n \big]}{\log(n)} \geq v^\star(\theta^\star),
\end{equation} 
where $v^\star$ is the value of the optimization problem
	\begin{equation}\label{eq:optim-lb}\tag{P}
	\begin{aligned}
    &\underset{\eta(x,a) \geq 0}{\inf}&& \sum_{x\in\X}\sum_{a\in\A}\eta(x,a)\Delta(x,a)
    \quad \mathrm{s.t.} \quad
    \|\phi(x,a)\|_{V_\eta^{-1}}^2 \leq \frac{\Delta(x,a)^2}{2} \quad \forall x\in\X,a\in\A : \Delta(x,a) > 0,
	\end{aligned}
	\end{equation}
where $V_\eta := \sum_x\sum_a\eta(x,a)\phi(x,a)\phi(x,a)^T$.
\end{proposition}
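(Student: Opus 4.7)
The plan is to apply the classical Graves--Lai change-of-measure argument for problem-dependent bandit lower bounds, specialized to the linear contextual setting with Gaussian rewards. Since contexts are i.i.d.\ from $\rho$, the expected regret decomposes as $\mathbb{E}_{\theta^\star}[R_n] = \sum_{x,a} \mathbb{E}_{\theta^\star}[N_n(x,a)]\,\Delta(x,a)$, where $N_n(x,a)$ is the number of rounds where context $x$ is drawn and action $a$ is played. Setting $\eta_n(x,a) := \mathbb{E}_{\theta^\star}[N_n(x,a)]/\log n$, it suffices to show that, along any convergent subsequence, the limit $\eta^\star$ is feasible for the convex program \eqref{eq:optim-lb}; then $\liminf_n \mathbb{E}[R_n]/\log n = \liminf_n \sum_{x,a}\eta_n(x,a)\Delta(x,a) \geq v^\star(\theta^\star)$ by definition of $v^\star$.

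The main analytic tool is the Kaufmann--Capp\'e--Garivier change-of-measure inequality: for any alternative $\theta' \in \mathbb{R}^d$ and any event $E \in \mathcal{F}_n$,
\begin{equation*}
\sum_{x',a'} \mathbb{E}_{\theta^\star}[N_n(x',a')] \cdot \mathrm{KL}_{x',a'}(\theta^\star,\theta') \;\geq\; \mathrm{kl}\bigl(\mathbb{P}_{\theta^\star}(E),\mathbb{P}_{\theta'}(E)\bigr).
\end{equation*}
Under Gaussian noise $\mathrm{KL}_{x',a'}(\theta^\star,\theta') = \tfrac{1}{2}\langle \phi(x',a'),\theta'-\theta^\star\rangle^2$, so the left side equals $\tfrac{1}{2}\|\theta'-\theta^\star\|_{V_{\mathbb{E}[N_n]}}^2$. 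For each suboptimal pair $(x,a)$ and each ``confusing'' $\theta'$ making $a$ optimal at $x$, take $E = \{N_n(x,a) \geq n/2\}$: consistency forces $\mathbb{P}_{\theta^\star}(E) = o(n^{\alpha-1})$ for every $\alpha>0$, while under $\theta'$ action $a$ is pulled linearly often, giving $\mathbb{P}_{\theta'}(E) \to 1$. Hence $\mathrm{kl}(\mathbb{P}_{\theta^\star}(E),\mathbb{P}_{\theta'}(E)) \geq (1-o(1))\log n$, and dividing by $\log n$ yields $\tfrac{1}{2}\|\theta'-\theta^\star\|_{V_{\eta_n}}^2 \geq 1 - o(1)$.

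It remains to translate the quantifier over confusing $\theta'$ into the explicit constraint $\|\phi(x,a)\|_{V_{\eta_n}^{-1}}^2 \leq \Delta(x,a)^2/2$. Minimizing $\tfrac{1}{2}\|\theta'-\theta^\star\|_{V_{\eta_n}}^2$ over the half-space $\{\theta' : \langle \phi(x,a)-\phi(x,a^\star_x),\theta'-\theta^\star\rangle \geq \Delta(x,a)\}$ is a single-constraint QP whose closed-form value is $\Delta(x,a)^2/(2\|\phi(x,a)-\phi(x,a^\star_x)\|_{V_{\eta_n}^{-1}}^2)$; requiring this to be $\geq 1 - o(1)$ and identifying $\phi(x,a^\star_x)$ with the reference feature under the uniqueness assumption on optimal features produces the stated constraint (up to a harmless relaxation that absorbs the reference direction). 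A standard compactness argument on the bounded sequence $\{\eta_n\}$ then extracts a convergent subsequence whose limit $\eta^\star$ is feasible for \eqref{eq:optim-lb}. The main obstacle is this last reduction: obtaining exactly the form $\|\phi(x,a)\|_{V_\eta^{-1}}^2 \leq \Delta(x,a)^2/2$ rather than the natural Graves--Lai form involving $\phi(x,a)-\phi(x,a^\star_x)$ requires either a careful choice of confusing alternative that exploits the full-support contextual structure or recognizing the stated program as an equivalent (possibly relaxed) reformulation under the paper's finite-context assumptions. A secondary technicality is interchanging the infimum over $\theta'$ with the $n\to\infty$ limit, which is standard but calls for uniform-integrability-style controls on the convergent subsequence.
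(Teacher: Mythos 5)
The paper does not actually prove this proposition: it is imported verbatim from \citet{hao2020adaptive} (see the bracketed citation in the proposition header), and the appendix only builds on it to show that non-\hls representations force a strictly positive lower bound. So there is no in-paper proof to compare against; what can be assessed is whether your sketch would reconstruct the cited result.

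Your route --- the Graves--Lai / change-of-measure argument with the Kaufmann--Capp\'e--Garivier inequality, Gaussian per-round KL equal to $\tfrac12\langle\phi(x',a'),\theta'-\theta^\star\rangle^2$, and consistency forcing $\mathrm{kl}(\Prob_{\theta^\star}(E),\Prob_{\theta'}(E))\gtrsim\log n$ --- is indeed the standard way this bound is established in the linear setting, and the skeleton is sound. However, the step you flag as the ``main obstacle'' is a genuine unresolved gap in your write-up, and it is precisely the step that does the work in \citet{lattimore2017end} and \citet{hao2020adaptive}: minimizing $\tfrac12\norm{\theta'-\theta^\star}_{V_{\eta_n}}^2$ over the half-space of confusing alternatives yields the Graves--Lai constraint in terms of $\norm{\phi(x,a)-\phi^\star(x)}_{V_{\eta_n}^{-1}}^2$, not $\norm{\phi(x,a)}_{V_{\eta_n}^{-1}}^2$, and the two are not equal in general. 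The resolution is not a ``harmless relaxation'' but a separate consequence of consistency: since every suboptimal pair is played only $O(\log n)$ times while each context recurs $\rho(x)n$ times (full support, finite $\X$), the optimal features accumulate linearly in $V_{\mathbb{E}[N_n]}$, hence $\norm{\phi^\star(x)}_{V_{\mathbb{E}[N_n]}^{-1}}^2=O(1/n)\to 0$; by Cauchy--Schwarz the cross term involving $\phi^\star(x)$ vanishes after normalizing by $\log n$, and the constraint collapses to the stated form in $\phi(x,a)$ alone in the limit. Without this argument the program you derive is a different (and generally weaker or incomparable) one. A secondary slip: with $E=\{N_n(x,a)\ge n/2\}$ the event is vacuous whenever $\rho(x)<1/2$, since $N_n(x,a)$ cannot exceed the number of visits to context $x$; you need $E=\{N_n(x,a)\ge \rho(x)n/2\}$ (or similar) so that $\Prob_{\theta'}(E)\to 1$ under the confusing alternative.
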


We now show that, if $\phi$ is not \hls, then there exists a positive constant $c > 0$ such that $v^\star \geq c$. This implies that the asymptotic regret of any consistent strategy (including \linucb) grows at rate at least $c\log(n)$, hence proving our main claim.

\begin{proposition}
Consider any consistent bandit algorithm and any linear contextual bandit problem satisfying the assumptions above with a realizable representation $\phi$ that is not \hls. Then, there exists a positive constant $c > 0$ such that 
\begin{equation}\label{eq:regret.lower.bound}
\liminf_{n \rightarrow \infty}\frac{\mathbb{E} \big[ R_n \big]}{\log(n)} \geq c.
\end{equation} 
\end{proposition}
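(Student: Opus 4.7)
The plan is to invoke the asymptotic lower bound in Proposition~\ref{p:lower.bound} and show that when $\phi$ is not \hls, the value $v^\star(\theta^\star)$ of the LP~\eqref{eq:optim-lb} is bounded below by an explicit constant $c>0$; the $\liminf$ claim then follows immediately.

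The key structural observation is as follows. Since $\phi$ is not \hls, the matrix $\EV_{x\sim\rho}[\phi^\star(x)\phi^\star(x)^\transp]$ has a nontrivial null space, so there exists a unit vector $u\in\Reals^{d_\phi}$ with $u^\transp\phi^\star(x)=0$ for every $x$ in the support of $\rho$ (which is all of $\X$ by the full-support assumption). I will assume without loss of generality that $\phi$ is non-redundant; otherwise, one can project onto $\spann\{\phi(x,a):x\in\X,a\in[K]\}$, interpret $V_\eta^{-1}$ as a pseudoinverse, and run the same argument in the reduced space. Under non-redundancy, there must exist at least one pair $(x_0,a_0)$ with $\phi(x_0,a_0)^\transp u\neq 0$, and since $u\perp\phi^\star(x_0)$, this pair is necessarily suboptimal: $\Delta(x_0,a_0)>0$.

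Fix any feasible $\eta$ for~\eqref{eq:optim-lb}. First I would apply Cauchy--Schwarz in the form $\phi^\transp V^{-1}\phi\geq(\phi^\transp u)^2/(u^\transp V u)$ (feasibility of $(x_0,a_0)$ forces $\phi(x_0,a_0)\in\Imm(V_\eta)$, hence $u^\transp V_\eta u>0$) to the constraint at $(x_0,a_0)$, giving
\[
u^\transp V_\eta u \;\geq\; \frac{2\,(\phi(x_0,a_0)^\transp u)^2}{\Delta(x_0,a_0)^2}.
\]
On the other hand, every optimal pair $(x,a^\star_x)$ contributes $\eta(x,a^\star_x)(\phi^\star(x)^\transp u)^2=0$ to $u^\transp V_\eta u$, and using $|\phi(x,a)^\transp u|\leq L_\phi$:
\[
u^\transp V_\eta u \;=\;\!\!\!\sum_{(x,a):\,\Delta(x,a)>0}\!\!\!\eta(x,a)(\phi(x,a)^\transp u)^2 \;\leq\; L_\phi^2\!\!\!\sum_{(x,a):\,\Delta(x,a)>0}\!\!\!\eta(x,a).
\]
Chaining these two inequalities lower-bounds the $\eta$-mass on suboptimal pairs, and since the objective of~\eqref{eq:optim-lb} is at least $\Delta>0$ times this mass (by finiteness of $\X$), I obtain the uniform bound
\[
v^\star(\theta^\star) \;\geq\; \frac{2\,\Delta\,(\phi(x_0,a_0)^\transp u)^2}{L_\phi^2\,\Delta(x_0,a_0)^2} \;>\; 0,
\]
which is the required positive constant $c$.

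The main obstacle I anticipate is the non-redundancy reduction: one must rigorously handle the case where $\phi$ fails \hls only because of a direction orthogonal to \emph{all} features. This requires showing that~\eqref{eq:optim-lb} restricted to the effective subspace $\spann\{\phi(x,a)\}$ coincides with the original LP under the pseudoinverse interpretation, so that the argument above applies on this subspace. This is a routine but slightly tedious linear-algebraic verification, and after it is in place the rest of the proof is essentially a one-line application of Cauchy--Schwarz plus the orthogonality of $u$ to every optimal feature.
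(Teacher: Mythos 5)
Your proof is correct, but it takes a genuinely different route from the paper's. The paper also starts from Proposition~\ref{p:lower.bound}, but then argues by contradiction: it relaxes~\eqref{eq:optim-lb} to the single constraint at a pair $(\bar x,\bar a)$ with $\phi(\bar x,\bar a)\notin\spann\{\phi^\star(x)\}$, adds a regularizer $\lambda I$ to $V_\eta$, and shows via an eigendecomposition (Lemma~\ref{lemma:span-eig}) that if all suboptimal mass were zero the constraint would read $\epsilon^2/\lambda\le\Delta(\bar x,\bar a)^2/2$, which fails for small $\lambda$. Your argument instead picks a unit vector $u$ in the kernel of $\EV_{x\sim\rho}[\phi^\star(x)\phi^\star(x)^\transp]$ and chains the Cauchy--Schwarz bound $\|\phi(x_0,a_0)\|^2_{V_\eta^{-1}}\ge(\phi(x_0,a_0)^\transp u)^2/(u^\transp V_\eta u)$ with the observation that only suboptimal pairs contribute to $u^\transp V_\eta u$. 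This is tighter in a meaningful way: the paper's contradiction only establishes that the suboptimal mass of an optimal solution is not identically zero (the jump to a uniform constant $c'$ is left implicit there), whereas your chain of inequalities lower-bounds the suboptimal mass of \emph{every} feasible $\eta$ by an explicit quantity, and hence gives an explicit $c=2\Delta(\phi(x_0,a_0)^\transp u)^2/(L_\phi^2\Delta(x_0,a_0)^2)$ without needing the infimum to be attained.

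One caveat on the step you flag as ``routine but slightly tedious.'' Both your proof and the paper's rest on the existence of a pair whose feature lies outside $\spann\{\phi^\star(x)\}$; the paper simply asserts this follows from non-\hls, while you derive it from non-redundancy. But in the redundant case where $\spann\{\phi(x,a)\}=\spann\{\phi^\star(x)\}\subsetneq\Reals^{d_\phi}$, no such pair exists, your projection onto the effective subspace produces a representation that \emph{is} \hls there, and the argument rightly cannot go through --- indeed Corollary~\ref{cor:mixing.redundant} shows \linucb attains constant regret in exactly that regime, so the proposition itself fails without an implicit non-redundancy hypothesis. So your reduction is not a ``without loss of generality'': it is a genuine boundary of the statement. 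Since the paper's own proof silently carries the same hypothesis, this does not count against you, but you should state non-redundancy (or $\spann\{\phi(x,a)\}\neq\spann\{\phi^\star(x)\}$) as an explicit assumption rather than promising to remove it by projection.
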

\begin{proof}
We start by noting that, if $\phi$ is not \hls, then there exists at least one context $\bar{x}$ and one sub-optimal arm $\bar{a}$ such that $\phi(\bar{x},\bar{a}) \notin \mathrm{span}(\{\phi^\star(x)\}_{x\in\X})$. Clearly, for any $\lambda \geq 0$, $v^\star$ is larger than the value of the following optimization problem
	\begin{equation}
	\begin{aligned}
    &\underset{\eta(x,a) \geq 0}{\inf}&& \sum_{x\in\X}\sum_{a\in\A}\eta(x,a)\Delta(x,a)
    \quad \mathrm{s.t.} \quad
    \|\phi(\bar{x},\bar{a})\|_{(\lambda I + V_\eta)^{-1}}^2 \leq \frac{\Delta(\bar{x},\bar{a})^2}{2}.
	\end{aligned}
	\end{equation}
	This holds because we removed all constraints except the one for $\bar{x},\bar{a}$ and added a small regularization to the matrix $V_\eta$ (which can only decrease the norm). Let $\eta^\star(x,a)$ be an optimal solution of this optimization problem. We now prove that there exists a positive constant $c' > 0$ such that, for some context $x$ and sub-optimal arm $a$, $\eta^\star(x,a) \geq c'$. Let us proceed by contradiction. Suppose that all sub-optimal arms have $\eta^\star(x,a) = 0$. Since $\eta^\star$ is feasible, we must have that 
	\begin{align}
	\|\phi(\bar{x},\bar{a})\|_{\left(\lambda I + \sum_{x\in\X}\eta(x,a^\star(x))\phi^\star(x)\phi^\star(x)^T\right)^{-1}}^2 \leq \frac{\Delta(\bar{x},\bar{a})^2}{2}.
	\end{align}
Let $\lambda_i, u_i$ denote the eigenvalues/eigenvectors of the matrix $\sum_{x\in\X}\eta(x,a^\star(x))\phi^\star(x)\phi^\star(x)^T$. Note that at least one of the eigenvalues must be zero since the optimal features do not span $\mathbb{R}^d$. Moreover, since $\phi(\bar{x},\bar{a})$ is not in the span of the optimal arms, Lem. \ref{lemma:span-eig} ensures that there exists some $\epsilon > 0$ such that $|\phi(\bar{x},\bar{a})^T u_i| \geq \epsilon$ for at least one eigenvector $u_i$ associated with a zero eigenvalue. Then, the norm can be re-written as
\begin{align}
\|\phi(\bar{x},\bar{a})\|_{\left(\lambda I + \sum_{x\in\X}\eta(x,a^\star(x))\phi^\star(x)\phi^\star(x)^T\right)^{-1}}^2 = \sum_{i\in[d]} \frac{(\phi(\bar{x},\bar{a})^T u_i)^2}{\lambda + \lambda_i} \geq \frac{\epsilon^2}{\lambda}.
\end{align}
Since $\lambda$ was arbitrary, we can set it to any value $\lambda < \frac{2\epsilon^2}{\Delta(\bar{x},\bar{a})^2}$, for which we have a contradiction. Thus, we have proved that there always exists a positive constant $c' > 0$ such that, for some context $x$ and sub-optimal arm $a$, $\eta^\star(x,a) \geq c'$. This means that the value of the original optimization problem is at least $v^\star \geq c'\Delta > 0$. This concludes the proof.
\end{proof}

\subsection{An Explicit Bound on $\tau$ from Lemma 2}\label{sec:tau}
We need to find a $t$ that satisfies:
\begin{align*}
    t > \frac{1}{\lambda_{HLS}}\left(8L^2\sqrt{t\log({2dt}/{\delta})} + L^2g_t(\delta)+\frac{4L^2\beta_t(\delta)}{\Delta^2}-\lambda\right).
\end{align*}

Note that:
\begin{align*}
    \frac{4L^2\beta_t(\delta)}{\Delta^2} < \frac{4L^2}{\Delta^2} \left( \lambda S^2 + \sigma^2\log\left(\frac{d\lambda+tL^2}{\delta d} \right) \right) < L^2 g_t(\delta),
\end{align*}
thus a sufficient condition is:
\begin{align}
    t > \frac{1}{\lambda_{HLS}}\left(\underbrace{8L^2\sqrt{t\log({2dt}/{\delta})}}_{(A)} + \underbrace{2L^2g_t(\delta)}_{(B)}\right).\label{eq:tau.twocases}
\end{align}
We consider two cases.

\paragraph{Case 1.}
We first consider the case $(A)\ge (B)$. In this case, we just need to find a $t$ such that:
\begin{align*}
    &t > \frac{1}{\lambda_{HLS}} 16L^2\sqrt{t\log({2dt}/{\delta})}
    &\iff&& t > \frac{256 L^4}{\lambda_{\hls}^2} \log({2dt}/{\delta}).
\end{align*}
Inequalities of the form $at \geq \log(bt)$ (with $a, b>0$) can be solved using the Lambert $W$ function as follows:
\begin{align*}
    &e^{\ln(bt) - at} \leq 1\\
    \iff & bt e^{-at} \leq 1\\
    \iff & -at e^{-at} \geq -a/b := x.
\end{align*}
We get real solutions only if $x > -e^{-1}$. If $x \in (e^{-1}, 0)$, then $-at = W_{-1}(x)$. If $x>0$, then $-at = W_0(x)$.
We can make the bound more explicit by noting that $-1-\sqrt{2u}-u\leq W_{-1}(-e^{-u-1})\leq -1 - \sqrt{2u} - 2u/3$ for $u>0$~\citep{Chatzigeorgiou16} and, when $x > e$, $\ln(x) - \ln\ln(x) \leq W_0(x) \leq \ln(x) - \frac{1}{2} \ln\ln(x)$~\citep{hoorfar2008inequalities}.

Since $a = \frac{\lambda_{\hls}^2}{256 L^4}$ and $b=2d/\delta$, we have that $0 > -a/b = \frac{-\delta \lambda_{\hls}^2}{512 d L^4} > e^{-1}$ for reasonable values of the constants. Then $t \geq \frac{-W_{-1}(-a/b)}{a}$, or $t \geq \frac{1+\sqrt{2u}+u}{a}$ with $u=\ln(b/a) + 1$. Then a valid $t$ is:
\begin{align*}
    t \geq \frac{768 L^4}{\lambda_{\hls}^2}\ln\left(\frac{512 d L^4}{\delta \lambda_{\hls}^2}\right).
\end{align*}


\paragraph{Case 2:}
We now consider the case $(A)< (B)$ in~\eqref{eq:tau.twocases}. We seek a $t$ such that:
\begin{align*}
    t > \frac{4L^2g_t(\delta)}{\lambda_{\hls}},
\end{align*}
but notice that:
\begin{align*}
    &L^2g_t(\delta) < \frac{64L^2\sigma^2\lambda S^2}{\Delta^2} 4d^2 \log^2\left(\frac{d\lambda+tL^2}{\delta d} \right),
\end{align*}
so equivalently:
\begin{equation*}
	t \geq \frac{256 L^2\sigma^2\lambda S^2}{\lambda_{\hls}\Delta^2} 4d^2 \log^2\left(\frac{d\lambda+tL^2}{\delta d} \right).
\end{equation*}
We have an inequality of the kind  $\sqrt{ct} = \ln(bt)$. Let $y=\sqrt{t}$ and $a=\sqrt{c}$, then $ay = 2\ln(by)$, then:
\begin{align*}
    0 = \ln(by) - \frac{ay}{2}\iff -\frac{a}{2b} = \frac{-ay}{2} e^{-ay/2}.
\end{align*}
We have that $a = \sqrt{\frac{\lambda_{\hls}\Delta^2}{1024 d^2L^2\sigma^2\lambda S^2}}$, and since $\frac{d\lambda+tL^2}{\delta d} < \frac{2d L^2 t}{\delta}$, then $b = \frac{2d L^2}{\delta}$.
Note that $0 > -\frac{a}{2b} = -\frac{\Delta\delta\sqrt{\lambda_{\hls}}}{64 d^2L^3\sigma S \sqrt{\lambda}} > e^{-1}$ for reasonable values of the constants.
Then $-\frac{ay}{2} = W_{-1}\left(-\frac{a}{2b}\right)$ and:
\begin{align*}
&y\geq \frac{6}{a} \ln(2b/a) = \frac{384 dLS\sigma \sqrt{\lambda}}{\sqrt{\lambda_{\hls}}\Delta} \ln\left(\frac{64 d^2L^3\sigma S \sqrt{\lambda}}{\sqrt{\lambda_{\hls}}\Delta\delta} \right),
\end{align*}
so a valid $t$ is:
\begin{equation*}
t \geq \left(\frac{384 dLS\sigma \sqrt{\lambda}}{\lambda_{\hls}\Delta} \ln\left(\frac{64 d^2L^3\sigma S \sqrt{\lambda}}{\sqrt{\lambda_{\hls}}\Delta\delta} \right) \right)^2.
\end{equation*}

In conclusion we have that:
\begin{align}
    t \geq \max\left\{\frac{384^2 d^2L^2S^2\sigma^2 \lambda}{\lambda_{\hls}\Delta^2} \ln^2\left(\frac{64 d^2L^3\sigma S \sqrt{\lambda}}{\sqrt{\lambda_{\hls}}\Delta\delta} \right), \frac{768 L^4}{\lambda_{\hls}^2}\ln\left(\frac{512 d L^4}{\delta \lambda_{\hls}^2}\right) \right\}.
\end{align}

\section{\hls Representations and Best-Arm Identification}\label{app:hls.best-arm}

In this section, we show that the \hls condition also enables \linucb to solve best-arm identification (BAI) problems in the linear contextual bandit setting. More precisely, we show that, given a \hls representation, \linucb equipped with a generalized likelihood-ratio test~\citep[e.g.,][]{hao2020adaptive,tirinzoni2020asymptotically} stops after finite number of steps and returns the optimal arms of each context. In fact, thanks to the \hls condition, \linucb collects sufficient information about $\theta^\star_\phi$ by pulling the optimal arms alone, hence enabling us to prove that the algorithm stops after a finite time. 

For each parameter $\theta\in\mathbb{R}^d$, let $a^\star_x(\theta):= \argmax_{a\in\A}\phi(x,a)^T\theta$. We assume that the true optimal arms $a^\star_x(\theta^\star)$ are unique for each context $x\in\X$, with corresponding optimal feature vector $\phi^\star(x)$. Moreover, we shall remove the subscripts $\phi$ to simplify the notation.

Consider the following variant of the \linucb algorithm. At each step $t$, before choosing the next arm to pull, we perform the following test:
\begin{align}\label{eq:glrt}
\inf_{\theta \in \bar{\Theta}_{t}} \| {\theta}_{t} - \theta \|_{V_{t}} > \beta_{t}(\delta),
\end{align}
where 
\begin{align}
\bar{\Theta}_{t} := \left\{\theta \in \mathbb{R}^{d} \big|\ \exists x\in\X : a^\star_x(\theta) \neq a^\star_x(\theta_t)\right\}.
\end{align}
is the set of alternative parameters with respect to ${\theta}_{t}$, i.e., those where the optimal arm of at least one context differs from the one of ${\theta}_{t}$. This is the standard generalized likelihood-ratio test adopted for best-arm identification~\citep[e.g.,][]{degenne2020gamification} and in works focusing on asymptotic optimality~\citep[e.g.,][]{hao2020adaptive,tirinzoni2020asymptotically}. For finite contexts, it is known that the test can be re-written in the convenient form (Cf. (79) from \citet{tirinzoni2020asymptotically}):
\begin{align}\label{eq:glr-closed-form}
\min_{x\in\mathcal{X}}\min_{a\neq a^\star_x({\theta}_{t})} \frac{\big(\phi(x,a^\star_x(\theta_t)) - \phi(x,a)\big)^T\theta_t}{\|\phi(x, a^\star_x({\theta}_{t})) - \phi(s,a)\|_{V_{t}^{-1}}} > \beta_{t}(\delta).
\end{align}
Then, if the test triggers, we simply return the current least-square estimate $\theta_t$. Otherwise, we keep running \linucb in its original form.

Using the standard confidence set derived for \linucb \cite{abbasi2011improved}, it is easy to prove the following result.
\begin{lemma}[\linucb for BAI is $\delta$-correct]
If \eqref{eq:glrt} holds at time $t$, then, with probability at least $1-\delta$, for all $x\in\mathcal{X}$, $a^\star_x(\theta^\star) = a^\star_x(\theta_t)$.
\end{lemma}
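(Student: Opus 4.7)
The plan is a standard contradiction argument built on the confidence set guarantee for \linucb. I would first invoke Proposition~\ref{prop:oful.good} to fix the good event $\mathcal{G}_\phi(\delta)$, under which $\|\theta_t - \theta^\star\|_{V_t} \le \beta_t(\delta)$ for all $t \ge 1$, simultaneously. This event has probability at least $1-\delta$, and the entire claim will be proved on it.

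Next, I would assume the test \eqref{eq:glrt} triggers at some time $t$, i.e.\ $\inf_{\theta \in \bar{\Theta}_t} \|\theta_t - \theta\|_{V_t} > \beta_t(\delta)$, and suppose for contradiction that there exists some context $x \in \mathcal{X}$ for which $a^\star_x(\theta^\star) \ne a^\star_x(\theta_t)$. By the definition of $\bar{\Theta}_t$, this immediately places $\theta^\star \in \bar{\Theta}_t$. Then the infimum on the left-hand side of the test is at most $\|\theta_t - \theta^\star\|_{V_t}$, which by the good event is at most $\beta_t(\delta)$. This contradicts the strict inequality in the test, so no such $x$ can exist, and we conclude $a^\star_x(\theta^\star) = a^\star_x(\theta_t)$ for all $x \in \mathcal{X}$.

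There is essentially no obstacle here beyond being careful with the direction of the implication: we need the test to upper-bound the distance to every alternative, and the confidence set to place $\theta^\star$ inside a ball of radius $\beta_t(\delta)$; if an alternative were true, $\theta^\star$ would belong to $\bar{\Theta}_t$ and the two facts would be incompatible. The uniqueness of the optimal arm $a^\star_x(\theta^\star)$ (assumed at the start of the section) is what guarantees that $\bar{\Theta}_t$ correctly captures ``mis-identification of some context'' as a well-defined set membership condition. The only mild subtlety worth remarking on is that the result holds for the specific (random) $t$ at which the test fires: since the good event is uniform in $t$, the bound applies at the data-dependent stopping time without an additional union bound.
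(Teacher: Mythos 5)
Your proof is correct and follows essentially the same contradiction argument as the paper: if some context's optimal arm under $\theta_t$ differed from the true one, then $\theta^\star\in\bar{\Theta}_t$, so the infimum in the test would be at most $\|\theta_t-\theta^\star\|_{V_t}\le\beta_t(\delta)$ on the good event, contradicting the strict inequality. No gaps; your remark about the good event being uniform over $t$ (hence valid at the random stopping time) is a fair point that the paper leaves implicit.
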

\begin{proof}
By contradiction, suppose the statement does not hold. This means that, for some context $x\in\mathcal{X}$, the true optimal arm $a^\star_x(\theta^\star)$ is sub-optimal for ${\theta}_{t}$. By definition, this implies that $\theta^\star \in \bar{\Theta}_{t}$, so that,
\begin{align*}
\| {\theta}_{t} - \theta^\star \|_{V_t} \geq \inf_{\theta \in \bar{\Theta}_{t}} \| {\theta}_{t} - \theta \|_{V_{t}} > \beta_{t}(\delta),
\end{align*}
which holds with probability at most $\delta$ since $\theta^\star$ is contained in the confidence ellipsoid with probability at least $1-\delta$.
\end{proof}

The following result shows that \linucb run with a \hls representation and the generalized likelihood-ratio test stops in finite time and retrieves the true optimal arms with high probability.
\begin{lemma}
Let $\phi$ be \hls with $\lambda_{\mathrm{HLS}} := \lambda_{\min}(\EV[\phi^\star(x)\phi^\star(x)^T])>0$ and $\delta\in(0,1)$. Let $\tau\geq 1$ be such that, for all $t\geq \tau - 1$,
\begin{align*}
\lambda + t\lambda_{\hls} - 8L^2\sqrt{t\log(2d t/\delta)} - L^2 g_{t}(\delta) \geq \frac{16L^2\beta_{t+1}^2(\delta)}{\Delta^2}.
\end{align*}
Then, with probability $1-\delta$, \linucb for BAI stops in at most $\tau$ steps and returns a parameter whose optimal arms match the true ones.
\end{lemma}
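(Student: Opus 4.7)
The plan is to condition on the standard good event $\mathcal{G}_\phi(\delta)$ from Proposition~\ref{prop:oful.good} together with the minimum-eigenvalue concentration event from Lemma~\ref{lem:mineig.basic}. Their intersection holds with probability at least $1-O(\delta)$ by a union bound (one can rescale each failure probability to $\delta/2$ to recover exactly $1-\delta$ at the cost of slightly larger constants inside $\tau$). Since the $\delta$-correctness lemma stated just above already guarantees that whenever the test triggers the returned parameter identifies the true optimal arms, it suffices to prove that the GLR test in \eqref{eq:glr-closed-form} triggers by time $\tau$.

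The first step is to combine the inequality in the definition of $\tau$ with Lemma~\ref{lem:mineig.basic} (applied at $t=\tau-1$) to obtain
\begin{equation*}
\lambda_{\min}(V_{\tau}) \;\geq\; \frac{16\, L^2 \beta_{\tau}^2(\delta)}{\Delta^2}.
\end{equation*}
From here I would first argue that $\theta_\tau$ already ranks arms correctly in every context. Using Cauchy-Schwarz, the good event, and the trivial bound $\|\phi^\star(x) - \phi(x,a)\|_{V_\tau^{-1}} \leq 2L/\sqrt{\lambda_{\min}(V_\tau)}$, one gets, for every $x \in \X$ and every sub-optimal $a$,
\begin{equation*}
(\phi^\star(x) - \phi(x,a))^\transp \theta_{\tau} \;\geq\; \Delta(x,a) - \frac{2L\beta_\tau(\delta)}{\sqrt{\lambda_{\min}(V_\tau)}} \;\geq\; \frac{\Delta}{2} \;>\; 0.
\end{equation*}
Hence $a^\star_x(\theta_\tau) = a^\star_x(\theta^\star)$ for every $x$, which lets me substitute $\phi(x,a^\star_x(\theta_\tau)) = \phi^\star(x)$ inside the GLR statistic.

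The second step is to lower bound the GLR ratio itself. With the arm identities in hand, the numerator is precisely the quantity just bounded below by $\Delta/2$, while the denominator is still at most $2L/\sqrt{\lambda_{\min}(V_\tau)}$, so
\begin{equation*}
\min_{x,a\neq a^\star_x(\theta_\tau)} \frac{(\phi(x,a^\star_x(\theta_\tau)) - \phi(x,a))^\transp \theta_\tau}{\|\phi(x,a^\star_x(\theta_\tau)) - \phi(x,a)\|_{V_\tau^{-1}}} \;\geq\; \frac{\Delta \sqrt{\lambda_{\min}(V_\tau)}}{4L} \;>\; \beta_\tau(\delta),
\end{equation*}
with the last strict inequality being a direct rearrangement of the eigenvalue lower bound used in the first step. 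The test thus triggers no later than step $\tau$, and correctness is then immediate from the $\delta$-correctness lemma.

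The main obstacle is the bootstrap coupling between arm identification and test triggering: to rewrite the GLR statistic with $\phi^\star(x)$ in place of $\phi(x,a^\star_x(\theta_\tau))$ one must first show that the current estimator ranks arms correctly, and this is exactly why the eigenvalue lower bound in the definition of $\tau$ must carry the factor $16$ (to leave a $\Delta/2$ margin in the numerator) rather than the naive $4$ that a one-shot ``$\beta\cdot\|\cdot\|_{V^{-1}} < \Delta$'' argument would suggest. Once this bootstrap is settled, the rest is routine algebra on the ratio.
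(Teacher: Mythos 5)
Your proposal is correct and follows essentially the same route as the paper's proof: condition on the good event and the concentration event of Lemma~\ref{lem:mineig.basic}, use the defining inequality of $\tau$ to get $\lambda_{\min}(V_\tau)\geq 16L^2\beta_\tau^2(\delta)/\Delta^2$, bound the GLR numerator below by $\Delta/2$ and the denominator above by $2L/\sqrt{\lambda_{\min}(V_\tau)}$, and conclude the test triggers. The one point where you are more careful than the paper is the explicit bootstrap step establishing $a^\star_x(\theta_\tau)=a^\star_x(\theta^\star)$ before rewriting the statistic with $\phi^\star(x)$ --- the paper's chain $\Delta(x,a)-2\beta_{t+1}(\delta)\max_{a'}\|\phi(x,a')\|_{V_{t+1}^{-1}}\geq\Delta-\cdots$ implicitly assumes every $a\neq a^\star_x(\theta_{t+1})$ has true gap at least $\Delta$, which is exactly the fact you prove first.
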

\begin{proof}
The fact that the returned parameter is correct was already proved in the previous lemma, so we only focus on showing that \linucb eventually stops. In order to achieve so, we lower bound the left-hand side in \eqref{eq:glr-closed-form} by a function that grows linealy in time. Fix any time step $t$ such that $t+1\geq \tau$, $x\in\mathcal{X}$, $a\neq a^\star_{{\theta}_{t+1}}(s)$. Then, using Lem.~\ref{lem:mineig.basic},
\begin{align*}
\|\phi(x, a^\star_x({\theta}_{t+1})) - \phi(x,a)\|_{V_{t+1}^{-1}}^2 \leq \frac{4L^2}{\lambda_{\min}(V_{t+1})} \leq \frac{4L^2}{\lambda + t\lambda_{\hls} - 8L^2\sqrt{t\log(2d t/\delta)} - L^2 g_{t}(\delta)}.
\end{align*}
Similarly,
\begin{align*}
\big(\phi(x,a^\star_x(\theta_{t+1})) - \phi(x,a)\big)^T\theta_{t+1} &\geq \Delta(s,a) - 2{\beta_{t+1}(\delta)}\max_{a'\in[K]}\|\phi(x,a')\|_{V_{t}^{-1}} \geq \Delta - \frac{2L{\beta_{t+1}(\delta)}}{\sqrt{\lambda_{\min}(V_{t+1})}}\\ &\geq \Delta - \frac{2L{\beta_{t+1}(\delta)}}{\sqrt{ \lambda + t\lambda_{\hls} - 8L^2\sqrt{t\log(2d t/\delta)} - L^2 g_{t}(\delta) }} \geq \frac{\Delta}{2},
\end{align*}
where we used Lem.~\ref{lem:mineig.basic} and the fact that $t+1 \geq \tau$. Then, the left-hand side of \eqref{eq:glr-closed-form} is at least,
\begin{align*}
\frac{\Delta}{4L}\sqrt{\lambda + t\lambda_{\hls} - 8L^2\sqrt{t\log(2d t/\delta)} - L^2 g_{t}(\delta)} \geq \beta_{t+1}(\delta),
\end{align*}
where the last inequality holds for $t+1 \geq \tau$. This means that the test triggers for $t+1\geq\tau$, which concludes the proof.
\end{proof}
\section{Representation Selection}
\label{app:rep.selection}
In this section, we study the properties of \algo when a set of $M$ \emph{realizable} representations is provided.
Denote by $a_t$ the action selected by \algo at time $t$. The the instantaneous regret at time $t$ is
\begin{equation}
    r_t = \mu^\star(x_t) - \mu(x_t,a_t).
\end{equation}
Note that since all the representations are realizable, we have that $r_t = \langle \theta_i^\star, \phi_i(x_t, a^\star) - \phi_i(x_t,a_t)\rangle$ for any $i \in [M]$.\footnote{Recall that we abbreviate $\phi_i$ as just $i$ in subscripts.}
We will address misspecified representations in App.~\ref{app:elimination}.

\subsection{Leveraging Good Representations}\label{app:selection.hls}
The following lemma establishes a key property of \algo. We will use it to leverage access to \hls representations, but the same argument could be used to leverage other nice properties of candidate representations, or even to combine them (see App.~\ref{app:rep.selection.mixing}).

\begin{lemma}\label{lem:selection.main}
	Consider a contextual bandit problem with reward $\mu$, context distribution $\rho$ and $\Delta >0$.
	Let $(\phi_i)$ be a set of $M$ realizable linear representations such that $\max_{x,a} \|\phi_i(x,a)\|_2 \leq L_i$ and $\|\theta^\star_i\|_i \leq S_i$. 
	Then, with probability $1-\delta$, for all $t \geq 1$, the instantaneous regret of \algo run with confidence parameter $\delta$ is:
	\begin{equation*}
		r_t \le \min_{i\in[M]}\left\{
			2\beta_{ti}(\delta/M)\norm{\phi_i(x_t,a_t)}_{V_{ti}^{-1}}
		\right\}.
	\end{equation*}
\end{lemma}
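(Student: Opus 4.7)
The plan is to exploit two facts: \emph{(i)} every individual upper confidence bound $U_{ti}(x,a)$ is a valid upper bound on the true reward (under the appropriate good event), so the pointwise minimum $\min_i U_{ti}(x,a)$ is too; and \emph{(ii)} the action played by \algo is optimistic with respect to this tightened UCB. Combining these, the instantaneous regret can be controlled, for \emph{every} representation, by the width of that representation's confidence ellipsoid at the pulled arm.

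First, I would introduce the joint good event $\mathcal{G}(\delta) := \bigcap_{i\in[M]} \mathcal{G}_i(\delta/M)$ where $\mathcal{G}_i(\delta/M)$ is the good event of Prop.~\ref{prop:oful.good} for representation $\phi_i$. A union bound over the $M$ representations yields $\Prob(\mathcal{G}(\delta)) \ge 1-\delta$. On $\mathcal{G}(\delta)$, for every $i\in[M]$ and every $(x,a)$, realizability gives $\mu(x,a) = \langle \phi_i(x,a), \theta_i^\star\rangle \le \max_{\theta \in \mathcal{C}_{ti}(\delta/M)}\langle \phi_i(x,a), \theta\rangle = U_{ti}(x,a)$, hence $\mu(x,a) \le \min_{i\in[M]} U_{ti}(x,a)$.

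Next, I would use the action-selection rule \eqref{eq:algo.actionselection}: applying it at context $x_t$ with the optimal arm $a^\star_{x_t}$ as a candidate,
\begin{equation*}
    \mu^\star(x_t) \;\le\; \min_{i\in[M]} U_{ti}(x_t, a^\star_{x_t}) \;\le\; \max_{a\in[K]}\min_{i\in[M]} U_{ti}(x_t, a) \;=\; \min_{i\in[M]} U_{ti}(x_t, a_t).
\end{equation*}
Subtracting $\mu(x_t,a_t)$ yields $r_t \le \min_{i\in[M]}\bigl(U_{ti}(x_t,a_t) - \mu(x_t,a_t)\bigr)$.

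Finally, I would bound each term $U_{ti}(x_t,a_t) - \mu(x_t,a_t)$ by a standard ellipsoid argument. Let $\wt{\theta}_i \in \mathcal{C}_{ti}(\delta/M)$ achieve the maximum defining $U_{ti}(x_t,a_t)$. Using realizability and Cauchy--Schwarz in the $V_{ti}$-geometry,
\begin{equation*}
    U_{ti}(x_t,a_t) - \mu(x_t,a_t) = \langle \phi_i(x_t,a_t), \wt{\theta}_i - \theta_i^\star\rangle \le \|\phi_i(x_t,a_t)\|_{V_{ti}^{-1}}\,\|\wt{\theta}_i - \theta_i^\star\|_{V_{ti}}.
\end{equation*}
Since both $\wt{\theta}_i$ and $\theta_i^\star$ lie in $\mathcal{C}_{ti}(\delta/M)$ on $\mathcal{G}(\delta)$, the triangle inequality through $\theta_{ti}$ gives $\|\wt{\theta}_i - \theta_i^\star\|_{V_{ti}} \le 2\beta_{ti}(\delta/M)$. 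Combining this with the previous display and taking the minimum over $i$ concludes the proof. There is no real obstacle here; the only subtlety worth highlighting is that the ``optimism plus Cauchy--Schwarz'' argument applies representation-by-representation to the \emph{same} played arm $a_t$, even though $a_t$ need not be the optimistic arm of any single representation $\phi_i$ on its own.
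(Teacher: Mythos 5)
Your proposal is correct and follows essentially the same route as the paper's proof: the union-bounded joint good event, optimism of the max--min action rule against the optimal arm, and a Cauchy--Schwarz plus triangle-inequality bound on $U_{ti}(x_t,a_t)-\mu(x_t,a_t)$ applied representation-by-representation to the same pulled arm. The only difference is cosmetic (you isolate $\mu^\star(x_t)\le\min_i U_{ti}(x_t,a_t)$ before subtracting, whereas the paper runs the whole chain of inequalities for each fixed $i$), and your closing remark about $a_t$ not being the optimistic arm of any single representation matches the paper's own remark after the proof.
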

\begin{proof}
	Notice that \linucb is entirely off-policy, in the sense that the quality of its parameter estimates is not affected by the fact of observing feedback from actions selected by an external rule. Since \algo updates the estimates of all representations at each step, we can affirm that it runs $M$ parallel \linucb instances. Notice that each \linucb instance is run with confidence parameter $\delta/M$, where $\delta$ is the global confidence parameter of \algo. For this reason, the good event $\mathcal{G}_i(\delta/M)$ defined in Prop.~\ref{prop:oful.good} holds with probability $1-\delta/M$ for any $i\in[M]$. From a union bound, the intersection of this $M$ events: 
	\begin{equation}
		\mathcal{G}(\delta) \coloneqq \left\{\forall i \in [M], \forall t\ge 1, \norm{\theta_{ti}-\theta_i^\star}_{V_{ti}}\le\beta_{ti}(\delta/M)\right\},\label{eq:selection.good}
	\end{equation}
	holds with probability at least $1-\delta$. 
	Recall that $\mathcal{C}_{ti}(\delta)=\{\theta\in\Reals^{d_i}|\norm{\theta_{ti}-\theta}_{V_{ti}}\le\beta_{ti}(\delta)\}$. So $\theta_i^\star\in\mathcal{C}_{ti}(\delta/M)$ for all $i\in[M]$ under $\mathcal{G}(\delta)$.
	
	We proceed to bound the instantaneous regret at any time $t$ under $\mathcal{G}(\delta)$. 
	Since all the representations are realizable, for all $a\in[K]$, $\mu(x_t,a) = \phi_j(x_t,a)^\transp\theta^\star_j$ for any $j \in [M]$. So it is also true that, for any $i\in[M]$:
	\begin{align}
		\phi_i^\star(x_t)^\transp\theta_i^\star = \mu^\star(x_t) = \max_{a\in[K]}\mu(x,a)
		&=\max_{a\in[K]}\min_{j\in[M]}
			\phi_j(x_t,a)^\transp\theta^\star_j \nonumber\\
		&\le \max_{a\in[K]}\min_{j\in[M]} \max_{\theta\in\mathcal{C}_{tj}(\delta/M)} \phi_j(x_t,a)^\transp\theta \label{eq:selection.hls.1}\\
		&= \min_{j\in[M]} \max_{\theta\in\mathcal{C}_{tj}(\delta/M)} \phi_j(x_t,a_t)^\transp\theta \label{eq:selection.hls.2}, \\
		&\le \max_{\theta\in\mathcal{C}_{ti}(\delta/M)} \phi_i(x_t,a_t)^\transp\theta \nonumber\\
		&\le \phi_i(x_t,a_t)^\transp\wt{\theta}_{ti}\label{eq:selection.hls.3},
	\end{align}
	where~\eqref{eq:selection.hls.1} is from $\mathcal{G}(\delta)$, \eqref{eq:selection.hls.2} is from the arm-selection rule of \algo, and $\wt{\theta}_{ti}\coloneqq\arg\max_{\theta\in\mathcal{C}_{ti}(\delta/M)} \phi_i(x_t,a_t)^\transp\theta$. Finally, for any $i\in[M]$:
	\begin{align}
		r_t &= \phi_i^\star(x_t)^\transp\theta^\star_i -\phi_i(x_t,a_t)^\transp\theta^\star_i \nonumber\\
		&\le \phi_i(x_t,a_t)^\transp\left(\wt{\theta}_{ti}-\theta_i^\star\right) \label{eq:selection.hls.4}\\
		&\le \norm{\wt{\theta}_{ti}-\theta_i^\star}_{V_{ti}}\norm{\phi(x_t,a_t)}_{V_{ti}^{-1}} \nonumber\\
		&\le 2\beta_{ti}(\delta/M)\norm{\phi(x_t,a_t)}_{V_{ti}^{-1}}\label{eq:selection.hls.final},
	\end{align}
	where~\eqref{eq:selection.hls.4} is from~\eqref{eq:selection.hls.3}, and the last inequality is from $\mathcal{G}(\delta)$ and  $\wt{\theta}_{ti}\in\mathcal{C}_{ti}(\delta/M)$. Since~\eqref{eq:selection.hls.final} holds for all $i\in[M]$ with overall probability $1-\delta$, we can take a minimum over representations to conclude the proof. 
\end{proof}

\paragraph{Remark.} Inequality~\eqref{eq:selection.hls.3} clarifies in which sense \algo is optimistic: every representation $i\in[M]$ can overestimate the optimal reward of $x_t$ with the reward of the action $a_t$ selected by \algo, using an optimistic parameter $\wt{\theta}_{ti}$ from its own confidence ellipsoid $\mathcal{C}_{ti}(\delta/M)$. In this sense, $a_t$ is optimistic according to all representations, even if each representation alone could prescribe a different optimistic action, since $\wt{\theta}_{ti}$ may differ from $\arg\max_{\theta\in\mathcal{C}_{ti}(\delta/M)}\max_{a\in[K]}\phi_i(x_t,a)^\transp\theta$, the optimistic parameter of \linucb. The fact that these prescriptions may all differ from $a_t$ implies that all representations may concur to the final action selection.

\paragraph{Proof of Theorem~\ref{thm:algo.regret.positivegap}.}
The regret upper bound for \algo established by Thm.~\ref{thm:algo.regret.positivegap} is a simple consequence of Lemma~\ref{lem:selection.main} and the results established for the single-representation case in previous sections.
\begin{proof}
	From Lemma~\ref{lem:selection.main}, under the good event $\mathcal{G}$ defined in~\eqref{eq:selection.good}:
	\begin{equation}
		R_n = \sum_{t=1}^{n}r_t \le \sum_{t=1}^{n}
		\min_{i\in[M]}
		\beta_{ti}(\delta/M)\norm{\phi_i(x_t,a_t)}_{V_{ti}^{-1}}
		\le \min_{i\in[M]}\sum_{t=1}^{n}
		\beta_{ti}(\delta/M)\norm{\phi_i(x_t,a_t)}_{V_{ti}^{-1}}.
	\end{equation}
	We can then bound $\beta_{ti}(\delta/M)\norm{\phi_i(x_t,a_t)}_{V_{ti}^{-1}}$ individually for each representation $i\in[M]$. First of all, we can proceed as in the proof of Proposition~\ref{prop:oful.log.regret}. In this case, the regret of \algo is bounded by the smallest of the regret bounds (according to Proposition~\ref{prop:oful.log.regret}) of each \linucb instance if it was run alone with confidence parameter $\delta/M$. For \hls representations, we can proceed as in the proof of Lemma~\ref{prop:hls.regret} instead. For these representations, the time $\tau_i$ after which $\beta_{ti}(\delta/M)\norm{\phi_i(x_t,a_t)}_{V_{ti}^{-1}}=0$ is the same defined in Lemma~\ref{prop:hls.regret}, with $\delta$ replaced by $\delta/M$. The final statement unifies the two cases by defining $\tau_i=\infty$ for representations that are not \hls. The overall probability is $1-2\delta$, from a union bound. The first $1-\delta$ is for $\mathcal{G}$. The additional $1-\delta$ is to apply Lemma~\ref{lem:mineig.basic}, instantiated with $\delta\gets\delta/M$, to each \hls representation.
\end{proof}

\paragraph{Remark.}
By using the same arguments used for the single-representation case, one can easily obtain an expected-regret version of Theorem~\ref{thm:algo.regret.positivegap} (cf. App.~\ref{app:hls.regret.expected}) and relax the zero-gap assumption into a margin condition (cf.~App.~\ref{app:hls.regret.margin}).

\subsection{Mixing representations}\label{app:rep.selection.mixing}
To prove Theorem~\ref{thm:algo.mix.regret.constant}, we first need to establish variants of the preliminary results from App.~\ref{app:hls.regret.prelim} that hold without the \hls condition.

\begin{lemma}[Generalization of Lemma~\ref{lem:mineig.basic} in Loewner ordering]\label{lem:mixing.loewner}
	Under the same assumptions of Proposition~\ref{prop:oful.good}, assuming the good event $\mathcal{G}_{i}(\delta)$ holds, with probability $1-\delta$, for all $t\ge 1$:
	\begin{equation*}
	V_{t+1,i} \succeq \lambda I_{d_i} + t\EV_{x\sim \rho}[\phi_i^\star(x)\phi_i^\star(x)^\transp] - \left(8L_i^2\sqrt{t\log(2d_it/\delta)} + L_i^2 g_{ti}(\delta)\right)I_{d_i},
	\end{equation*}
	where $\succeq$ denotes the Loewner ordering.
\end{lemma}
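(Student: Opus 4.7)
The plan is to essentially replicate the argument of Lemma~\ref{lem:mineig.basic}, but stop one step earlier: instead of passing to minimum eigenvalues (which was only meaningful in the \hls case, where $\EV_{x\sim\rho}[\phi^\star(x)\phi^\star(x)^\transp]$ has full rank), I would keep the inequality at the level of the Loewner order. The key observation is that for any symmetric matrix $A$ one has $A\preceq \lambda_{\max}(A)\, I$, so a spectral-norm concentration bound on a symmetric martingale sum translates directly into a Loewner-order bound against a multiple of the identity. This is exactly what is needed: the mixing argument later must be able to project onto specific subspaces (the column spaces of the matrices $M_i$), and this is only possible if we retain the full matrix inequality rather than reducing it to a scalar.

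Concretely, I would start with the same decomposition used in Lemma~\ref{lem:mineig.basic}:
\begin{align*}
V_{t+1,i} &= \lambda I_{d_i} + \sum_{k=1}^{t} \phi_i(x_k,a_k)\phi_i(x_k,a_k)^\transp \\
&\succeq \lambda I_{d_i} + \sum_{k=1}^{t}\indi{a_k=a^\star_{x_k}}\phi_i^\star(x_k)\phi_i^\star(x_k)^\transp.
\end{align*}
Then, as before, I add and subtract $\sum_k \phi_i^\star(x_k)\phi_i^\star(x_k)^\transp$ to rewrite the sum as a term running over all rounds minus a correction restricted to suboptimal pulls. Under the good event $\mathcal{G}_i(\delta)$, Proposition~\ref{prop:pulls.basic} bounds the number of suboptimal pulls by $g_{ti}(\delta)$, and since each $\phi_i^\star(x_k)\phi_i^\star(x_k)^\transp \preceq L_i^2 I_{d_i}$, the correction is dominated by $L_i^2 g_{ti}(\delta)\, I_{d_i}$.

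Next, I would center the remaining sum by introducing $X_k := \EV_{x\sim\rho}[\phi_i^\star(x)\phi_i^\star(x)^\transp] - \phi_i^\star(x_k)\phi_i^\star(x_k)^\transp$, which is a symmetric martingale difference sequence with $\|X_k\|\le 2L_i^2$, so $X_k^2 \preceq 4 L_i^4 I_{d_i}$. Applying the matrix Azuma inequality of~\citet{tropp2012user} exactly as in the proof of Lemma~\ref{lem:mineig.basic} (with the same choice $\delta'_t = \delta/(2t^2)$ and union bound over $t$), I obtain, with probability at least $1-\delta$, the spectral-norm bound
\[
\Big\|\sum_{k=1}^{t} X_k\Big\| \le 8 L_i^2 \sqrt{t\log(2d_i t/\delta)} \qquad \forall t \ge 1.
\]
Since $\sum_k X_k$ is symmetric, this spectral bound immediately yields $\sum_k X_k \preceq 8L_i^2\sqrt{t\log(2d_i t/\delta)}\, I_{d_i}$, and hence $-\sum_k X_k \succeq -8L_i^2\sqrt{t\log(2d_i t/\delta)}\, I_{d_i}$.

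Combining the three pieces gives the claimed Loewner inequality. There is no real obstacle here beyond bookkeeping; the only subtle point — and the reason for stating the lemma separately from Lemma~\ref{lem:mineig.basic} — is the decision to retain the matrix form. Because the identity matrix $I_{d_i}$ dominates projections onto every subspace, the Loewner inequality can later be restricted to the column space $\Imm(M_i)$ of $M_i = \EV_{x\sim\rho}[\phi_i^\star(x)\phi_i^\star(x)^\transp]$, where the minimum nonzero eigenvalue $\lambda_i^+$ controls the growth — which is exactly what the mixed-\hls analysis of Theorem~\ref{thm:algo.mix.regret.constant} needs.
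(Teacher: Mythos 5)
Your proposal is correct and follows essentially the same route as the paper's proof: the same decomposition of $V_{t+1,i}$ into optimal and suboptimal pulls, the same bound $L_i^2 g_{ti}(\delta) I_{d_i}$ on the correction term, and the same matrix-Azuma concentration of the centered sum $\sum_k X_k$ against a multiple of the identity, retained in Loewner form rather than collapsed to a minimum eigenvalue. The only cosmetic difference is that you phrase the concentration step via the spectral norm while the paper states it directly for $\lambda_{\max}$; for a symmetric matrix these yield the identical conclusion.
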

\begin{proof}
	First, we can rewrite $V_{t+1,i}$ as
	\begin{align*}
	V_{t+1,i} &= \lambda I_{d_i} + \sum_{k=1}^t \phi_i(x_k,a_k)\phi_i(x_k,a_k)^\transp \\
	&= \lambda I_{d_i}  + \sum_{k=1}^t\indi{a_k= a^\star_{s_k}} \phi_i^\star(x_k) \phi_i^\star(x_k)^\transp +\sum_{k=1}^t\indi{a_k\neq a^\star_{s_k}}\phi_i(x_k,a_k) \phi_i(x_k,a_k)^\transp\\
	& \succeq \lambda I_{d_i}  + \sum_{k=1}^t \phi_i^\star(x_k) \phi_i^\star(x_k)^\transp - \sum_{k=1}^t\indi{a_k\neq a^\star_{s_k}} \phi_i^\star(x_k) \phi_i^\star(x_k)^\transp,
	\end{align*}
	where the last inequality is due to the positive-semidefiniteness of $\sum_{k=1}^t\indi{a_k\neq a^\star_{s_k}}\phi_i(x_k,a_k) \phi_i(x_k,a_k)^\transp$. Note that, for two positive semidefinite matrices $A,B$, we have $A - B \succeq A - \lambda_{\max}(B) I$. To see this, note that a sufficient condition for $A - B$ to be greater than $A - \lambda_{\max}(B) I$ in Loewner ordering is $\lambda_{\min}(A - B - A + \lambda_{\max}(B)I) \geq 0$, which is clearly true. Therefore,
	\begin{align*}
	V_{t+1,i} &\succeq \lambda I_{d_i}  + \sum_{k=1}^t \phi_i^\star(x_k) \phi_i^\star(x_k)^\transp - \lambda_{\max}\left(\sum_{k=1}^t\indi{a_k\neq a^\star_{s_k}} \phi_i^\star(x_k) \phi_i^\star(x_k)^\transp\right) I_{d_i} \\
	&\succeq \lambda I_{d_i}  + \sum_{k=1}^t \phi_i^\star(x_k) \phi_i^\star(x_k)^\transp - L_i^2g_{ti}(\delta) I_{d_i},
	\end{align*}
	where $g_{ti}$ is the upper bound on suboptimal pulls from Prop.~\ref{prop:pulls.basic}, which is valid under $\mathcal{G}_i$.
	We can now apply the same reasoning to lower-bound the sum of outer products of optimal feature vectors:
	\begin{align*}
	\sum_{k=1}^t \phi_i^\star(x_k) \phi_i^\star(x_k)^\transp &= \sum_{k=1}^t \phi_i^\star(x_k) \phi_i^\star(x_k)^\transp \pm \sum_{k=1}^t\EV_{s\sim\rho}[\phi_i^\star(s)\phi_i^\star(s)^\transp] \\
	&\succeq t\EV_{s\sim\rho}[\phi_i^\star(s)\phi_i^\star(s)^\transp] - \lambda_{\max}\left(\sum_{k=1}^t\underbrace{\EV_{s\sim\rho}[\phi_i^\star(s)\phi_i^\star(s)^\transp] - \phi_i^\star(x_k)\phi_i^\star(x_k)^\transp}_{X_k}\right)I_{d_i}.
	\end{align*}
	Clearly $\EV[X_k]=0$ since $s\sim\rho$. Also $X_k^2 \preceq \lambda_{\max}(X_k^2)I_{d_i} \preceq \norm{X_k}^2I \preceq 4L_i^4I$ since $X_k$ is symmetric. Hence, from matrix Azuma (Prop.~\ref{prop:mazuma}), w.p. $1-\delta'_t$:
	\begin{align*}
	\lambda_{\max}\left(\sum_{k=1}^t X_k\right) \leq 4L_i^2\sqrt{2t\log(d/\delta'_t)}.
	\end{align*}
	The proof is completed by union bound over time with $\delta'_t=\delta/(2t^2)$.
\end{proof}

The proof of the following key lemma provides some insights on how \algo is able to mix representations.

\begin{lemma}\label{lem:mixing.zeroregret}
	Make the same assumptions of Theorem~\ref{thm:algo.mix.regret.constant} and
	assume the good event $\mathcal{G}(\delta)$ defined in~\eqref{eq:selection.good} holds. With probability at least $1-\delta$, for each representation $i\in[M]$, there exists a constant $\check{\tau}_i$ such that, for all $t\ge\check{\tau}_i$, whenever $(x_{t+1},a_{t+1})\in Z_i$, the instantaneous regret of \algo is $r_{t+1}=0$. Moreover, for all $t\ge\max_{i\in[M]}{\check{\tau}_i}$, $r_{t+1}=0$.
\end{lemma}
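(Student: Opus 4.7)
The plan is to extend the zero-regret argument of Lemma~\ref{lem:zeroregret.basic} from globally \hls representations to context-action pairs that lie in the ``restricted \hls'' set $Z_i$, and then stitch the per-representation guarantees together via the mixed-\hls assumption. First I would invoke Lemma~\ref{lem:selection.main}, which under the good event $\mathcal{G}(\delta)$ yields, for every representation $i$, the bound $r_{t+1}\le 2\beta_{t+1,i}(\delta/M)\,\|\phi_i(x_{t+1},a_{t+1})\|_{V_{t+1,i}^{-1}}$. Fix $i$ and a round with $(x_{t+1},a_{t+1})\in Z_i$, so $\phi_i(x_{t+1},a_{t+1})\in \Imm(M_i)$. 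The task reduces to upper bounding $\|\phi_i(x_{t+1},a_{t+1})\|_{V_{t+1,i}^{-1}}^{2}$ by something of order $1/t$, despite the fact that $\phi_i$ is not necessarily \hls.

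The core step is a projection-aware refinement of Lemma~\ref{lem:mixing.loewner}. Since $M_i=\EV_{x\sim\rho}[\phi_i^\star(x)\phi_i^\star(x)^\transp]$, the optimal features $\phi_i^\star(x)$ lie in $\Imm(M_i)$ $\rho$-almost surely. Consequently, both the matrix-Azuma deviation $\sum_k(\phi_i^\star(x_k)\phi_i^\star(x_k)^\transp-M_i)$ and the suboptimal-pulls correction $\sum_{k:\,a_k\neq a^\star_{x_k}}\phi_i^\star(x_k)\phi_i^\star(x_k)^\transp$ are supported on $\Imm(M_i)$, so the identity factors $I_{d_i}$ appearing in the proof of Lemma~\ref{lem:mixing.loewner} can be sharpened to the orthogonal projector $\Pi_i$ onto $\Imm(M_i)$. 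A union bound over time then yields, with probability $1-\delta/M$,
\[
V_{t+1,i}\;\succeq\;\lambda\,I_{d_i}+t\,M_i-c_{t,i}\,\Pi_i,
\]
with $c_{t,i}$ collecting the same sublinear terms as in Lemma~\ref{lem:mixing.loewner}. In the eigenbasis of $M_i$, the right-hand side is block diagonal: its eigenvalues are at least $\lambda+t\lambda_i^+-c_{t,i}$ on $\Imm(M_i)$ and exactly $\lambda$ on $\ker(M_i)$, so it is positive definite as soon as $t\lambda_i^+>c_{t,i}-\lambda$. Applying the operator inequality $V\succeq B\succ 0 \Rightarrow V^{-1}\preceq B^{-1}$ to $\phi_i(x_{t+1},a_{t+1})\in\Imm(M_i)$, whose component along $\ker(M_i)$ vanishes, gives
\[
\|\phi_i(x_{t+1},a_{t+1})\|_{V_{t+1,i}^{-1}}^{2}\;\le\;\frac{L_i^{2}}{\lambda+t\lambda_i^+-c_{t,i}}.
\]

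Combining the two displays, a sufficient condition for $r_{t+1}<\Delta$---and hence for $r_{t+1}=0$ by the minimum-gap assumption---is $4L_i^{2}\beta_{t+1,i}(\delta/M)^{2}<\Delta^{2}(\lambda+t\lambda_i^+-c_{t,i})$. Since $\lambda_i^+>0$ while $c_{t,i}$ and $\beta_{t+1,i}(\delta/M)^{2}$ grow only poly-logarithmically in $t$, this inequality holds for all $t\ge\check{\tau}_i$, with $\check{\tau}_i$ a constant whose dominant scaling is $(\lambda_i^+\Delta)^{-2}$ up to logarithms, in parallel with the single-representation analysis. A union bound over the $M$ Azuma events (each at failure probability $\delta/M$) gives the stated overall probability $1-\delta$, and the second statement follows because, by the mixed-\hls assumption, every realised pair $(x_{t+1},a_{t+1})$ lies in some $Z_i$ and is therefore covered by the per-representation argument once $t\ge\max_i\check{\tau}_i$. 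The main obstacle is the projection refinement in the Loewner bound: without exploiting that the error terms are generated by optimal features, which live in $\Imm(M_i)$, a $-c_{t,i}I_{d_i}$ term would eventually overwhelm $\lambda$ on $\ker(M_i)$ and destroy positive definiteness whenever $\phi_i$ is not globally \hls, breaking the operator-inverse step.
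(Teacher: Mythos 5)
Your proof is correct, and on the one technically delicate step it takes a genuinely different route from the paper. Both arguments share the same skeleton: start from Lemma~\ref{lem:selection.main}, establish a Loewner lower bound on $V_{t+1,i}$ via optimal-feature concentration and the suboptimal-pull count, exploit $\phi_i(x_{t+1},a_{t+1})\in\Imm(M_i)$ to get $\norm{\phi_i(x_{t+1},a_{t+1})}_{V_{t+1,i}^{-1}}=\wt{O}(1/\sqrt{t})$, and invoke the gap to force $r_{t+1}=0$; the union bound over the $M$ Azuma events and the final stitching via $\X\times\A\subseteq\bigcup_i Z_i$ are identical. The difference is how the restriction to $\Imm(M_i)$ enters. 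The paper keeps the error terms as multiples of $I_{d_i}$, so its lower-bounding matrix $B_{ti}=\lambda I_{d_i}+tM_i-c_{t,i}I_{d_i}$ eventually acquires negative eigenvalues on $\ker(M_i)$; it then argues that $\boldsymbol{v}$ lives in the positive eigenspace of $B_{ti}$ and controls $\boldsymbol{v}^\transp B_{ti}^{-1}\boldsymbol{v}$ through the restricted Kantorovich inequality (Lemma~\ref{lem:kanto}) together with Lemma~\ref{lem:minposeig}. You instead observe that both error terms are generated by optimal features, which lie in $\Imm(M_i)$ ($\rho$-almost surely, since $v\in\ker(M_i)$ forces $v^\transp\phi_i^\star(x)=0$ a.e.), so they can be dominated by $c_{t,i}\Pi_i$ rather than $c_{t,i}I_{d_i}$; the resulting lower bound $\lambda I_{d_i}+tM_i-c_{t,i}\Pi_i$ stays positive definite (eigenvalue $\lambda$ on $\ker(M_i)$, at least $\lambda+t\lambda_i^+-c_{t,i}$ on $\Imm(M_i)$), so the standard monotonicity $V\succeq B\succ 0\Rightarrow V^{-1}\preceq B^{-1}$ applies directly. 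This buys you two things: it avoids the somewhat delicate step in the paper of passing to the inverse of an indefinite $B_{ti}$ (which the standard operator inequality does not literally cover and which the Kantorovich detour is there to repair), and it removes the extra $\lambda_{\max}/\lambda_{\min}$-type ratio from the final bound, giving a cleaner $r_{t+1}\le 2\beta_{t+1,i}(\delta/M)L_i/\sqrt{\lambda+t\lambda_i^+-c_{t,i}}$ with the same $\wt{O}(1/\sqrt{t})$ rate and an essentially equivalent $\check{\tau}_i$.
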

\begin{proof}
	Recall from Definition~\ref{def:hls.mix} that $M_i=\EV_{x\sim\rho}[\phi_i^\star(x)\phi_i^\star(x)^\transp]$, and $Z_i=\{(x,a)\in\mathcal{X}\times\mathcal{A}|\phi_i(x,a)\in\Imm(M_i)\}$, where $\Imm(M_i)$ denotes the column space of $M_i$. We will fix a representation $i\in[M]$, assume $(x_{t+1},a_{t+1})\in Z_i$, and show that $r_{t+1}=0$ if $t\ge\check{\tau}_i$. Since the $M$ representations together satisfy the mixed-\hls condition, $\bigcup_{i\in[M]}Z_i$ covers $\mathcal{X}\times\mathcal{A}$ and $r_{t+1}=0$ after the maximal $\check{\tau}_i$.
	
	From Lemma~\ref{lem:mixing.loewner} (instantiated with $\delta\gets\delta/M$ for each representation, from which the $1-\delta$ overall probability):
	\begin{equation}
	V_{t+1,i} \succeq \lambda I_{d_i} + tM_i - \left(8L_i^2\sqrt{t\log(2Md_it/\delta)} + L_i^2 g_{ti}(\delta/M)\right)I_{d_i} \coloneqq B_{ti}.
	\end{equation}
	Note that $B_{ti}$ is invertible for all $t\ge T_i$ for a sufficiently large constant $T_i$. To see this, interpret the RHS as a time-dependent affine transformation of $M_i$, and note that those eigenvalues of $M_i$ that are initially zero keep decreasing with time, while those that are initially nonzero start increasing forever after a finite time. 
	Without loss of generality, we assume that $T_i\le\check{\tau}_i$. Otherwise, it suffices to replace $\check{\tau}_i$ with $\max\{\check{\tau}_i,T_i\}$.
	
	Let $\boldsymbol{v}\coloneqq\phi_i(x_{t+1},a_{t+1})/\norm{\phi_i(x_{t+1},a_{t+1})}$. From the Loewner ordering and the invertibility:
	\begin{equation}
	\boldsymbol{v}^\transp V_{t+1,i}^{-1}\boldsymbol{v} \le \boldsymbol{v}^\transp B_{ti}^{-1}\boldsymbol{v}.
	\end{equation}
	Since by assumption $(x_{t+1},a_{t+1})\in Z_i$, $\boldsymbol{v}\in\Imm(M_i)$, hence $\boldsymbol{v}$ belongs to the span of $k$ orthonormal eigenvectors of $M_i$, where $k$ is the number of nonzero eigenvalues of $M_i$. Note that the orthonormal eigenvectors of $B_{ti}$ coincide with the orthonormal eigenvectors of $M_i$ since $B_{ti}$ is an affine transformation of $M_i$.
	This means that $\boldsymbol{v}$ belongs to the span of $k$ eigenvectors of $B_{ti}$.
	Also note that, for $t\ge T_i$, the nonzero eigenvalues of $M_i$ correspond to \emph{positive} eigenvalues of $B_{ti}$. This means that $\boldsymbol{v}$ belongs to the span of $k$ eigenvectors of $B_{ti}$ having positive corresponding eigenvalues. The smallest of these eigenvalues is:
	\begin{equation}
	\lambda + t\lambda^+_i - \left(8L_i^2\sqrt{t\log(2Md_it/\delta)} + L_i^2 g_{ti}(\delta/M)\right),
	\end{equation}
	where $\lambda^+_i$ denotes the smallest nonzero eigenvalue of $M_i$,
	and all of these eigenvalues are upper-bounded by:
	\begin{equation}
	\lambda + tL_i^2- \left(8L_i^2\sqrt{t\log(2Md_it/\delta)} + L_i^2 g_{ti}(\delta/M)\right).
	\end{equation}
	Now from Lemma~\ref{lem:kanto}:
	\begin{equation}
	\sqrt{\boldsymbol{v}^\transp V_{ti}^{-1}\boldsymbol{v}} \le \sqrt{\boldsymbol{v}^\transp B_{ti}^{-1}\boldsymbol{v}} \le \frac{\lambda + tL_i^2- \left(8L_i^2\sqrt{t\log(2Md_it/\delta)} + L_i^2 g_{ti}(\delta/M)\right)}{\lambda + t\lambda^+_i - \left(8L_i^2\sqrt{t\log(2Md_it/\delta)} + L_i^2 g_{ti}(\delta/M)\right)} \frac{1}{\sqrt{\boldsymbol{v}^\transp B_{ti}\boldsymbol{v}}}.
	\end{equation}
	Finally, since $\boldsymbol{v}$ is orthogonal to all the eigenvectors of $B_{ti}$ that correspond to zero eigenvalues, from Lemma~\ref{lem:minposeig}:
	\begin{align}
	\boldsymbol{v}^\transp B_{ti}\boldsymbol{v} &\ge \lambda + t\boldsymbol{v}^\transp M_i\boldsymbol{v} - \left(8L_i^2\sqrt{t\log(2Md_it/\delta)} + L_i^2 g_{ti}(\delta/M)\right) \\
	&\ge \lambda + t\min_{\substack{\boldsymbol{v}\in\Imm(M_i)\\\norm{\boldsymbol{v}}=1}}\boldsymbol{v}^\transp M_i\boldsymbol{v} - \left(8L_i^2\sqrt{t\log(2Md_it/\delta)} + L_i^2 g_{ti}(\delta/M)\right) \\
	&\ge \lambda + t\lambda^+_i - \left(8L_i^2\sqrt{t\log(2Md_it/\delta)} + L_i^2 g_{ti}(\delta/M)\right).
	\end{align}
	Finally, from Lemma~\ref{lem:selection.main}, under the same good event $\mathcal{G}(\delta)$:
	\begin{align}
	r_{t+1} &\le 2\sqrt{\beta_{t+1,i}}\norm{\phi(x_{t+1},a_{t+1})}_{V_{t+1}^{-1}} \nonumber\\
	&\le 
	2\sqrt{\beta_{t+1,i}}L_i\sqrt{\boldsymbol{v}^\transp V_{ti}^{-1}\boldsymbol{v}} \nonumber\\
	&\le
	\frac{2L_i\sqrt{\beta_{t+1,i}}\left[\lambda + tL_i^2- \left(8L_i^2\sqrt{t\log(2Md_it/\delta)} + L_i^2 g_{ti}(\delta/M)\right)\right]}
	{\left[\lambda + t\lambda^+_i - \left(8L_i^2\sqrt{t\log(2Md_it/\delta)} + L_i^2 g_{ti}(\delta/M)\right)\right]^{3/2}}\label{eq:mixing.deftau}	\\
	&=  \frac{\widetilde{O}(t)}{\Omega(t\sqrt{t})-\widetilde{O}(t)} = \widetilde{O}\left(\frac{1}{\sqrt{t}}\right).\nonumber
	\end{align}
	Hence we can find $\check{\tau}_i$ such that, for all $t\ge\check{\tau}_i$, $r_{t+1}<\Delta$. By definition of $\Delta$, $r_{t+1}=0$.
\end{proof}

\paragraph{Remark.}
Although we do not provide an explicit value for $\check{\tau}_i$ here, by examining~\eqref{eq:mixing.deftau} we can conclude that $\sqrt{\check{\tau}_i}\propto L_i^3/(\Delta(\lambda^+_i)^{3/2})\ge L_i^2/(\Delta\sqrt{\lambda^+_i})$ hence $\check{\tau}_i\propto L_i^4/(\Delta^2\lambda^+_i)$. So $\lambda^+_i$ (which is always nonzero for non-degenerate representations) has effectively replaced $\lambda_{i,\hls}$ for the context-action pairs belonging to $Z_i$. In this sense, we can say that $\phi_i$ is always "locally \hls" w.r.t. $Z_i$.

\paragraph{Proof of Theorem~\ref{thm:algo.mix.regret.constant}.}
\begin{proof}
	Assume the good event $\mathcal{G}(\delta)$ from~\eqref{eq:selection.good} holds.
	Let $\tau=\max_{i\in[M]}\check{\tau}_i$, where $\check{\tau}_i$ is defined as in Lemma~\ref{lem:mixing.zeroregret}. From the same lemma, with probability $1-\delta$, $r_{t+1}=0$ if $t\ge\tau$. Hence, if $n\ge\tau$:
	\begin{equation*}
		R_n = \sum_{t=1}^{n}r_t 
		\le \sum_{t=1}^{\tau}r_t.
	\end{equation*}
	So, in any case, from Lemma~\ref{lem:selection.main}:
	\begin{align*}
		R_{n} 
		\le \sum_{t=1}^{\min\{\tau,n\}}r_t
		\le 2\sum_{t=1}^{\min\{\tau,n\}}\min_{i\in[M]}\beta_{ti}(\delta/M)\norm{\phi_i(x_t,a_t)}_{V_{ti}^{-1}}
		\le 2\min_{i\in[M]}\sum_{t=1}^{\min\{\tau,n\}}\beta_{ti}(\delta/M)\norm{\phi_i(x_t,a_t)}_{V_{ti}^{-1}}.
	\end{align*}
	We can bound $\beta_{ti}(\delta/M)\norm{\phi_i(x_t,a_t)}_{V_{ti}^{-1}}$ individually for each representation as in Proposition~\ref{prop:oful.log.regret}, obtaining the desired statement. The overall probability is $1-2\delta$, from a union bound. The first $1-\delta$ is for $\mathcal{G}$. The additional $1-\delta$ was required to apply Lemma~\ref{lem:mixing.zeroregret}.
\end{proof}

A corollary of Theorem~\ref{thm:algo.mix.regret.constant} provides a generalization of Lemma~\ref{prop:hls.regret} to the case of redundant representations, which is stated by~\cite{hao2020adaptive} without an explicit proof.

\begin{corollary}\label{cor:mixing.redundant}
	Consider a contextual bandit problem with realizable linear representation $\phi_i$ such that, for all $x\in\X$ and $a\in\A$:
	\begin{equation}
		\phi_i(x,a) \in \spann\{\phi_i^\star(x)\mid x\in\supp(\rho)\}.\label{eq:mixing.redundant.condition}
	\end{equation}
	Assume $\Delta > 0$, $\max_{x,a}\|\phi_i(x,a)\|_2 \leq L_i$ and $\|\theta^\star_i\|_2 \leq S_i$. Then, with probability at least $1-2\delta$, the regret of \linucb after $n \geq 1$ steps is at most	
		\begin{align*}
		R_n \leq 
		&\frac{32\lambda\Delta_{\max}^2 S_{\phi}^2\sigma^2}{\Delta} 
		\left(2\ln\left(\frac{1}{\delta}\right)
		+d_{\phi} \ln \left( 1+\frac{{\color{darkred}\check{\tau}_{i}} L_{\phi}^2}{\lambda d_{\phi}} \right) \right)^2,
		\end{align*}
		where $\check{\tau}_i$ is defined as in the proof of Lemma~\ref{lem:mixing.zeroregret}.
\end{corollary}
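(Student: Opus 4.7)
The plan is to derive Corollary~\ref{cor:mixing.redundant} as a direct specialization of Theorem~\ref{thm:algo.mix.regret.constant} to a singleton family of representations. First I would identify the column space of $M_i = \EV_{x\sim\rho}[\phi_i^\star(x)\phi_i^\star(x)^\transp]$ with the span of optimal features: since $M_i$ is symmetric positive semidefinite, $\Imm(M_i) = \spann\{\phi_i^\star(x)\mid x\in\supp(\rho)\}$. This is the standard identity $\mathrm{range}(\sum_k u_k u_k^\transp) = \spann\{u_k\}$, extended to an integral against $\rho$: each eigenvector of $M_i$ with positive eigenvalue lies in the generating span, and the zero-eigenspace is exactly its orthogonal complement.

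With that identification, the hypothesis in~\eqref{eq:mixing.redundant.condition} becomes $\phi_i(x,a)\in\Imm(M_i)$ for every $(x,a)\in\X\times\A$, that is $\X\times\A\subseteq Z_i$ in the notation of Definition~\ref{def:hls.mix}. Therefore the singleton family $\{\phi_i\}$ automatically satisfies the mixed-\hls condition, with a single ``patch'' $Z_i$ already covering the entire context--action space.

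Next I would observe that \algo instantiated with $M=1$ reduces to plain \linucb on $\phi_i$: the outer $\min$ in~\eqref{eq:algo.actionselection} is vacuous, the confidence radius $\beta_{ti}(\delta/M)$ coincides with $\beta_{ti}(\delta)$, and the ridge estimator and design matrix evolve exactly as in \linucb. Hence the trajectory generated by \algo is indistinguishable from that of \linucb, and Theorem~\ref{thm:algo.mix.regret.constant} specialized to $M=1$ yields precisely the stated bound, with $\tau = \check{\tau}_i$ as constructed in the proof of Lemma~\ref{lem:mixing.zeroregret} (the characterizing eigenvalue in that construction becomes $\lambda_i^+$, the smallest \emph{nonzero} eigenvalue of $M_i$, which is automatically positive under the redundancy hypothesis).

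To the extent there is an obstacle, it is the range identification in the first step; the forward inclusion is trivial, but the reverse inclusion relies on invoking the spectral theorem and noting that any basis of $\Imm(M_i)$ can be expressed in terms of the generating vectors $\phi_i^\star(x)$. Everything else is bookkeeping: substituting $M=1$ into Theorem~\ref{thm:algo.mix.regret.constant} replaces $\ln(M/\delta)$ by $\ln(1/\delta)$, the outer minimum over $i\in[M]$ disappears, and $\check{\tau}_i$ appears directly in the logarithm, producing the claimed form of the bound.
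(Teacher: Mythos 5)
Your proposal is correct and follows essentially the same route as the paper: identify $\Imm(M_i)$ with $\spann\{\phi_i^\star(x)\mid x\in\supp(\rho)\}$, conclude that condition~\eqref{eq:mixing.redundant.condition} means $Z_i=\X\times\A$ so that $\phi_i$ alone satisfies the mixed-\hls condition, and then invoke Theorem~\ref{thm:algo.mix.regret.constant} with $M=1$ (where \algo coincides with \linucb). You actually spell out two steps the paper leaves implicit — the range identification via the spectral argument and the reduction of \algo to \linucb for a singleton set — which is a mild improvement in rigor but not a different proof.
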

\begin{proof}
	We will show that $\phi$ by itself satisfies the mixed-\hls condition. Then, this is just Theorem~\ref{thm:algo.mix.regret.constant} in the special case $M=1$.
	
	Let $M_i$ and $Z_i$ be as in Definition~\ref{def:hls.mix}. Note that $\Imm(M_i)$ is precisely $\spann\{\phi_i^\star(x)\mid x\in\supp(\rho)\}$. Hence, condition~\eqref{eq:mixing.redundant.condition} is saying that $Z_i=\X\times\A$. But this means that $\phi_i$ by itself satisfies the mixed-\hls condition. 
\end{proof}

Corollary~\ref{cor:mixing.redundant} that, even if $\phi_i$ is not \hls, but it spans the same subspace spanned by the whole set of (redundant) features, \linucb (hence, also \algo) can achieve constant regret. 

Finally, we provide some examples of sets of representations that have the mixed-\hls property.

\paragraph{Example 1.} The following are two equivalent (non-\hls) $2$-dimensional representations, with $\theta^\star_1=\theta^\star_2=[1,1]$, for a problem with $2$ contexts, $2$ arms, and uniform context distribution. Optimal features are underlined.
\begin{align*}
&\underline{\phi_1(x_1,a_1)} = [2,0] &\phi_1(x_1,a_2)=[1,0]
\hfill
&&\underline{\phi_2(x_1,a_1)} = [2,0]
&&\color{darkred}\phi_2(x_1,a_2) = [0,1] 
\\
&\underline{\phi_1(x_2,a_1)} = [2,0] &\color{darkred}\phi_1(x_2,a_2)=[0,1]
\hfill
&&\underline{\phi_2(x_2,a_1)}=[2,0]
&&\phi_2(x_2,a_2)=[1,0].
\end{align*}
The first representation is \hls restricted to context $x_1$, i.e., $Z_1=\{(x_1,a_1),(x_1,a_2),(x_2,a_1)\}$. The second is \hls restricted to context $x_2$, i.e., $Z_2=\{(x_1,a_1),(x_2,a_1),(x_2,a_2)\}$. Since $Z_1\bigcup Z_2=\X\times\A$, $\{\phi_1,\phi_2\}$ is mixed-\hls.

\paragraph{Example 2.} The following are three equivalent (non-\hls) $2$-dimensional representations, with $\theta^\star_1=\theta^\star_2==\theta^\star_3=[1,1]$, for a problem with $2$ contexts, $3$ arms, and uniform context distribution. Optimal features are underlined.
{\small\begin{align*}
&\underline{\phi_1(x_1,a_1)} = [2,0] &\phi_1(x_1,a_2)=[1,0]
&&\color{darkred}\phi_1(x_1,a_3) = [0,1]
\hfill
&&\underline{\phi_2(x_1,a_1)} = [0,2]
&&&\color{darkred}\phi_2(x_1,a_2)=[1,0]
&&&\phi_2(x_1,a_3)=[0,1]
\\
&\underline{\phi_1(x_2,a_1)} = [2,0] &\color{darkred}\phi_1(x_2,a_2)=[0,1]
&&\phi_1(x_2,a_3)=[1,0]
\hfill
&&\underline{\phi_2(x_2,a_1)}=[0,2]
&&&\phi_2(x_2,a_2)=[0,1]
&&&\color{darkred}\phi_2(x_2,a_3)=[1,0].
\end{align*}
}
In this case $Z_1=\{(x_1,a_1),(x_1,a_2),(x_2,a_1),(x_2,a_3)\}$ and $Z_2=\{(x_1,a_1),(x_1,a_3),(x_2,a_1),(x_2,a_2)\}$. Since $Z_1\bigcup Z_2=\X\times\A$, $\{\phi_1,\phi_2\}$ is mixed-\hls.

\paragraph{Example 3.}
Consider the following $2$-dimensional representation, with $\theta^\star_1=[1,1]$, for a problem with $2$ contexts, $2$ arms, and uniform context distribution. 
\begin{align*}
&\underline{\phi_1(x_1,a_1)} = [2,0] &&\phi_1(x_1,a_2)=[1,0]\\
&\underline{\phi_1(x_2,a_1)} = [2,0] &&\phi_1(x_2,a_2)=[1,0].
\end{align*}
It is redundant since the features only span $\Reals^1$. For the same reason, it is not \hls. However, optimal features also span $\Reals^1$. So, according to Corollary~\ref{cor:mixing.redundant}, \linucb and \algo can achieve constant regret with access to this distribution. Indeed, $Z_1=\X\times\A$ and $\phi_1$ by itself satisfies the mixed-\hls condition. Redundancy is still a problem: the regret will scale with $d=2$ even if all features lie in a one-dimensional subspace. Redundant representations are the subject of study of \emph{sparse bandits}~\citep[e.g.,][]{abbasi2012online}.

\section{Dealing with Misspecified Representations}\label{app:elimination}

In this section, we introduce a variant of \algo, called E-\algo, that handles misspecified representations. More precisely, we consider the realizable setting \citep{agarwal2012contextual} where we are given a set of $M$ representations such that at least one is realizable. Then, we combine \algo with a statistical test that, under certain conditions, allows eliminating misspecified representations. Finally, we analyze the regret of E-\algo and show that, under a minimum misspecification assumption, the algorithm essentially retains the same guarantees of \algo for the subset of realizable representations, while suffering only constant regret for eliminating the misspecified ones. While we use a similar test as the one designed by \citet{agarwal2012contextual} based on the mean squared error, our setting and analysis involve several additional complications. In fact, \citet{agarwal2012contextual} consider a finite set of representations, each being a direct mapping from context and actions to rewards. On the other hand, in our case this mapping is given only after pairing some features $\phi$ (i.e., our concept of representation) with some parameter $\theta$. Since these parameters are unknown and lie in a continuous space, our setting basically involves infinite possible representations (according to the definition of \citet{agarwal2012contextual}).

\subsection{Formal Setting and Assumptions}

We assume to have a set of representations $\{\phi_i\}_{i\in[M]}$ such that
\begin{align}
    \mu(x,a) &= \phi_i(x,a)^T\theta^\star_i + f_i(x,a) \quad \forall i\in[M],x\in\X,a\in[K].
\end{align}
As for the realizable setting, we suppose that $\|\theta^\star_i\|_2 \in \mathcal{B}_i := \{\theta\in\mathbb{R}^{d_i} : \|\theta\|_2 \leq S_i\}$ and $\|\phi_i(x,a)\|_2\le L_i$ for all $i\in[M]$. Moreover, we suppose that the non-linear component $f_i$ (i.e., the misspecification) satisfies $|f_i(x,a)| \leq \|f_i\|_{\infty} := \epsilon_i \leq \epsilon$. The learner knows the upper bounds $S_i$, $L_i$, while $f_i,\epsilon_i,\epsilon$ are all unknown.

The following are our two key assumptions.
\begin{assumption}[Realizability]\label{ass:realizability}
There exists a subset $\mathcal{I}^\star \in [M]$ with $|\mathcal{I}^\star| \geq 1$ such that $\epsilon_{i} = 0$ for each $i\in\mathcal{I}^\star$
\end{assumption}

\begin{assumption}[Minimum misspecification]\label{ass:min-missp}
For each misspecified representation $i\in[M]\setminus\mathcal{I}^\star$, there exists $\gamma_i > 0$ such that
\begin{align}
\min_{\theta \in \mathcal{B}_i}\min_{\pi\in\Pi} \mathbb{E}_{x\sim\rho}\left[\left(\phi_i(x,\pi(x))^T \theta - \mu(x,\pi(x))\right)^2 \right] \geq \gamma_i,
\end{align}
where $\Pi$ is the set of all mappings from $\X$ to $[K]$.
\end{assumption}
Intuitively, Asm.~\ref{ass:min-missp} requires that, no matter what actions the algorithm takes, in expectation over contexts no linear predictor can perfectly approximate the reward function. A sufficient condition for this to hold is that there exists some contexts which occur with positive probability and for which all actions are misspecified. Finally, as a technical assumption, we suppose that $|\mu(x,a)| \leq 1$ and $|y_t| \leq 1$ almost surely. This simplifies the analysis and the notation, though it can be easily relaxed.

\paragraph{Additional Notation.} 

Finally, introduce some additional terms which will be recurrent in this section.

\begin{itemize}
\item $E_t(\phi,\theta) := \frac{1}{t} \sum_{k=1}^t \left(\phi(x_k,a_k)^T \theta - y_k\right)^2$: mean squared error (MSE) of representation $(\phi,\theta)$ at time $t$;
\item $\wt{\theta}_{ti} := \argmin_{\theta \in \mathcal{B}_i} E_{t-1}(\phi_i,\theta)$: least-squares estimate projected onto the ball of valid parameters;
\item $\mathbb{E}_k$ and $\mathbb{V}_k$: expectation and variance conditioned on the full history up to (and not including) round $k$.
\end{itemize}

\subsection{Algorithm}

The pseudo-code of elimination-based \algo (E-\algo), which combines \algo with a statistical test for eliminating misspecified representations, is provided in Alg.~\ref{alg:elimination}.

\begin{algorithm}[tb]
    \begin{small}
    \begin{algorithmic}
        \STATE {\bfseries Input:} representations $(\phi_i)_{i \in [M]}$ with values $(L_i,S_i)_{i\in [M]}$, regularization factor $\lambda \geq 1$, confidence level $\delta \in (0,1)$.
        \caption{E-\algo}\label{alg:elimination}
        \STATE Initialize $V_{1i} = \lambda I_{d_i}$, $\theta_{1i} = 0_{d_i}$ for each $i \in [M]$ and $\Phi_1 = [M]$
        \FOR{$t=1, \ldots$}
            \STATE Observe context $x_t$
            \STATE Pull action
            $a_t \in \argmax_{a \in [K]} \min_{i \in \Phi_t} \{U_{ti}(x_t,a)\}$
            \STATE Observe reward $r_t$ and, for each $i\in [M]$\\
                $V_{t+1,i}= V_{ti} + \phi_i(x_t,a_t)  \phi_i(x_t,a_t)^\transp$ \\
                $\theta_{t+1,i} =V_{t+1,i}^{-1} \sum_{l=1}^t \phi_i(x_l,a_l) r_l$\\
                $\wt{\theta}_{t+1,i} = \argmin_{\theta \in \mathcal{B}_i} \sum_{l=1}^t \left(\phi(x_l,a_l)^T \theta - r_l\right)^2$\\
                ${\Phi}_{t+1} := \left\{ i\in\Phi_{t} : E_{t}(\phi_{i}, \wt{\theta}_{t+1,i}) \leq \min_{j\in[M]}\min_{\theta\in\mathcal{B}_j} \left( E_{t}(\phi_j,\theta) + \alpha_{t+1,i} \right) \right\}$
        \ENDFOR
    \end{algorithmic}
    \end{small}
\end{algorithm}

The algorithm maintains a set of active representations $\Phi_t$ defined as $\Phi_1 := [M]$ and, for $t \geq 2$,
\begin{align}
{\Phi}_t := \left\{ i\in\Phi_{t-1} : E_{t-1}(\phi_{i}, \wt{\theta}_{ti}) \leq \min_{j\in[M]}\min_{\theta\in\mathcal{B}_j} \left( E_{t-1}(\phi_j,\theta) + \alpha_{tj} \right) \right\},
\end{align}
where
\begin{align}
\alpha_{tj} := \frac{20}{t-1}\log\frac{8M^2(12L_jS_j(t-1))^{d_j}(t-1)^3}{\delta} + \frac{1}{t-1}.
\end{align}
Note that E-\algo needs to compute a projected (onto the ball of valid parameters) version of the least-squares estimates to define the set $\Phi_t$. While we need this to simplify the analysis, in practice one can use the standard regularized least-squares estimates $\theta_{ti}$ instead of $\wt{\theta}_{ti}$. Finally, E-\algo proceeds exactly as \algo, by selecting, at each time step, the action with the minimum UCB across all active representations. Computationally, the main difference between the two algorithms is that E-\algo needs to maintain estimates of the mean squares error $E_t$. These estimates can be updated incrementally when using regularized least squares, which makes E-\algo as efficient as \algo.

\subsection{Analysis}

\subsubsection{Realizable representations are never eliminated}

The first three lemmas show that any realizable representation $i \in \mathcal{I}^\star$ is never eliminated from $\Phi_t$ with high probability.

\begin{lemma}\label{lemma:mse-single}
Let $\phi,\theta\in\mathbb{R}^d$ be such that $|\phi(s,a)^T\theta| \leq 1$. Then, for each $i\in\mathcal{I}^\star$, $t\geq 1$, and $\delta \in (0,1)$,
\begin{align}
\mathbb{P}\left( E_t(\phi_{i}, \theta_{i}^\star) > E_t(\phi,\theta) + \frac{20}{t}\log\frac{4t}{\delta}\right) \leq \delta.
\end{align}
\end{lemma}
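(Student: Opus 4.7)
The plan is to rewrite the MSE gap as a sum of bounded martingale-like terms whose conditional variance is self-bounded by their conditional expectation, and then apply a Freedman/Bernstein-type martingale concentration.

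Fix $i\in\mathcal{I}^\star$ and let $\xi_k := y_k - \phi_i(x_k,a_k)^\transp\theta_i^\star$ be the zero-mean noise (realizability), which satisfies $|\xi_k|\le 2$ since $|y_k|\le 1$ and $|\phi_i(x_k,a_k)^\transp\theta_i^\star|\le 1$. Setting $D_k := \phi(x_k,a_k)^\transp\theta - \phi_i(x_k,a_k)^\transp\theta_i^\star$ and expanding,
\[
Z_k := (\phi(x_k,a_k)^\transp\theta - y_k)^2 - \xi_k^2 = D_k^2 - 2\xi_k D_k,
\]
so that $E_t(\phi_i,\theta_i^\star) - E_t(\phi,\theta) = -\tfrac{1}{t}\sum_{k=1}^t Z_k$ and the event in the statement is equivalent to $\sum_{k=1}^t Z_k < -20\log(4t/\delta)$. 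A direct calculation yields $\mathbb{E}_k[Z_k] = D_k^2 \ge 0$ (so the realizable representation is MSE-optimal in expectation), while $\mathbb{V}_k[Z_k] = 4D_k^2\,\mathbb{V}_k[\xi_k] \le 16\,\mathbb{E}_k[Z_k]$. This \emph{self-bounding} of conditional variance by conditional expectation is the structural property that makes the tail bound essentially $1/t$ rather than $1/\sqrt{t}$.

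Letting $W_k := Z_k - \mathbb{E}_k[Z_k]$, we obtain a martingale difference sequence with $|W_k|\le 8$ and $V_t := \sum_k \mathbb{V}_k[W_k] \le 16\sum_k \mathbb{E}_k[Z_k]$. I would apply a Freedman-type Bernstein inequality to $-\sum_k W_k$, which gives with probability at least $1-\delta$ a bound of the form $\sum_k W_k \ge -\sqrt{2V_t\log(1/\delta)} - \tfrac{8}{3}\log(1/\delta)$. Combining with $\sum_k Z_k = \sum_k \mathbb{E}_k[Z_k] + \sum_k W_k$, substituting $V_t \le 16\sum_k \mathbb{E}_k[Z_k]$, and using AM-GM to absorb the $\sqrt{V_t\log(1/\delta)}$ term into the non-negative $\sum_k \mathbb{E}_k[Z_k]$ leaves $\sum_k Z_k \ge -c\log(1/\delta)$ for an absolute constant $c$. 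Because $V_t$ is random rather than a deterministic envelope, a standard stratification of Freedman's inequality over a geometric grid for $V_t$ contributes the extra $\log(4t)$ factor, and careful book-keeping of the constants yields the stated threshold $\tfrac{20}{t}\log(4t/\delta)$.

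The main obstacle will be the peeling step: controlling the random variance proxy $V_t$ on a geometric scale and union-bounding tightly enough to land on the specific constant $20$ and the precise $\log(4t/\delta)$ factor in the lemma. Once that is in place, the remainder is routine algebraic manipulation of the Freedman tail bound using the self-bounding property to convert a variance-dependent deviation into a purely $\log(1/\delta)/t$ deviation.
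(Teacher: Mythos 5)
Your proposal is essentially the paper's proof (which follows Lemma 4.1 of Agarwal et al., 2012): the same decomposition of the MSE gap into the martingale terms $Z_k=(\phi(x_k,a_k)^\transp\theta-y_k)^2-(\phi_i(x_k,a_k)^\transp\theta_i^\star-y_k)^2$ with $\mathbb{E}_k[Z_k]\ge 0$, the same self-bounding property $\mathbb{V}_k[Z_k]\lesssim \mathbb{E}_k[Z_k]$, and the same Freedman-plus-AM-GM absorption. The only difference is that the paper invokes an anytime Freedman inequality whose statement already carries the $\log(4t/\delta)$ factor and cites Agarwal et al.'s Lemma 4.2 for the variance bound, so no separate peeling over the random variance proxy is needed; your extra stratification step is harmless but superfluous, and your looser variance constant ($16$ vs.\ the paper's $4$) would only perturb the final numerical constant.
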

\begin{proof}
This result can be obtained by reproducing the first part of the proof of Lemma 4.1 in \cite{agarwal2012contextual}. We report the main steps for completeness. 

Define $g(x,a) := \phi(x,a)^T\theta$, $g^\star(x,a) := \phi_{i}(x,a)^T\theta_{i}^\star$, and $Z_k := (g(x_k,a_k) - y_k)^2 - (g^\star(x_k,a_k) - y_k)^2$. Note that $\{Z_k\}$ is a martingale difference sequence
with $|Z_k|\leq 4$. 
Then, using Freedman's inequality (Lem.~\ref{lemma:freedman}), with probability at least $1-\delta$,
\begin{align*}
\sum_{k=1}^t \mathbb{E}_k[Z_k] - \sum_{k=1}^t Z_k \leq 2\sqrt{\sum_{k=1}^t \mathbb{V}_k[Z_k]\log\frac{4t}{\delta}} + 16\log \frac{4t}{\delta}.
\end{align*}
Using Lemma 4.2 of \cite{agarwal2012contextual}, we have that $\mathbb{V}_k[Z_k] \leq 4 \mathbb{E}_k[Z_k]$. Then, after a simple manipulation, the inequality above yields $-\sum_{k=1}^t Z_k \leq 20\log\frac{4t}{\delta}$. 
The proof is concluded by noting that $\sum_{k=1}^t Z_k = t(E_t(\phi,\theta) - E_t(\phi_{i}, \theta_{i}^\star))$.
\end{proof}
\begin{lemma}\label{lemma:mse-multi}
For each $\delta \in (0,1)$,
\begin{align}
\mathbb{P}\left(\exists t\geq 1, i\in\mathcal{I}^\star,j\in[M], \theta\in\mathcal{B}_j : E_t(\phi_{i}, \theta_{i}^\star) > E_t(\phi_j,\theta) + \frac{20}{t}\log\frac{8M^2(12L_jS_jt)^{d_j}t^3}{\delta} + \frac{1}{t}\right) \leq \delta.
\end{align}
\end{lemma}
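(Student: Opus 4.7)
The plan is to upgrade the pointwise (in both $t$ and $(\phi,\theta)$) bound of Lemma~\ref{lemma:mse-single} into a statement uniform over $t\ge 1$, $i\in\mathcal{I}^\star$, $j\in[M]$, and $\theta\in\mathcal{B}_j$. The union bounds over $t$ (countable) and over $i,j$ (finite) are standard, but $\mathcal{B}_j$ is continuous, so it must be handled by combining an $\epsilon$-net with a Lipschitz-continuity argument; the Lipschitz slack is exactly what will produce the additive $+1/t$ appearing in the target bound.

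Concretely, I would fix $t,j$ and build an $\epsilon$-net $\mathcal{N}_{t,j}\subseteq\mathcal{B}_j$ of resolution $\epsilon_{t,j}\asymp 1/(L_j^2 S_j t)$. Standard volumetric covering yields $|\mathcal{N}_{t,j}|\le (3S_j/\epsilon_{t,j})^{d_j}\le (12 L_j S_j t)^{d_j}$, matching the factor inside the logarithm of $\alpha_{t+1,j}$. For each tuple $(t,i,j,\tilde\theta)$ with $\tilde\theta\in\mathcal{N}_{t,j}$ I would apply Lemma~\ref{lemma:mse-single} with failure budget $\delta_{t,j}:=\delta/(2M^2|\mathcal{N}_{t,j}|t^2)$. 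The only subtle point is that Lemma~\ref{lemma:mse-single} assumes $|\phi^\transp\theta|\le 1$, whereas $\tilde\theta\in\mathcal{B}_j$ only guarantees $|\phi_j^\transp\tilde\theta|\le L_j S_j$; this is cleanly handled by replacing $\phi_j^\transp\tilde\theta$ by its clip to $[-1,1]$, which cannot increase the MSE since $|\mu|,|y_t|\le 1$ (alternatively, one may redo the Freedman step with $|Z_k|=O((L_jS_j)^2)$, the extra factor getting absorbed into the logarithm). Summing the failure probabilities gives a total of at most $\delta\sum_{t\ge1}1/t^2\le 2\delta$, absorbable by halving $\delta$.

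To transfer the bound from $\mathcal{N}_{t,j}$ to all $\theta\in\mathcal{B}_j$, I would use that $\theta\mapsto E_t(\phi_j,\theta)$ is Lipschitz on $\mathcal{B}_j$ with constant $O(L_j(L_jS_j+1))$, which follows from the factorization
\begin{align*}
(\phi_j^\transp\theta - y)^2-(\phi_j^\transp\theta' - y)^2=\phi_j^\transp(\theta-\theta')\cdot\bigl(\phi_j^\transp(\theta+\theta')-2y\bigr)
\end{align*}
together with $\|\phi_j\|\le L_j$, $\|\theta\|,\|\theta'\|\le S_j$ and $|y|\le 1$. With the above choice of $\epsilon_{t,j}$, the Lipschitz oscillation is at most $1/t$, producing exactly the advertised $+1/t$ correction. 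The main obstacle is purely bookkeeping: simultaneously calibrating $\epsilon_{t,j}$ so that (i) the covering number matches $(12L_jS_jt)^{d_j}$, (ii) the Lipschitz slack equals $1/t$, and (iii) the product, once combined with the decreasing failure budget $\delta/(t^2M^2)$, yields the logarithmic term $8M^2(12L_jS_jt)^{d_j}t^3/\delta$ written in $\alpha_{t+1,j}$. A secondary but minor subtlety is the quantifier over $i\in\mathcal{I}^\star$: since $|\mathcal{I}^\star|\le M$, this contributes at most one extra factor of $M$ to the union bound, already absorbed by the $M^2$ in the logarithm.
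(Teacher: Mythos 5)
Your proposal is correct and follows essentially the same route as the paper's proof: a union bound over $t$, $i\in\mathcal{I}^\star$, $j\in[M]$, and a $\xi$-cover of $\mathcal{B}_j$, with the Lipschitz oscillation of $\theta\mapsto E_t(\phi_j,\theta)$ (via the factorization $(\phi^\transp\theta-y)^2-(\phi^\transp\theta'-y)^2=\phi^\transp(\theta-\theta')\bigl(\phi^\transp(\theta+\theta')-2y\bigr)$) producing the $+1/t$ slack, and Lemma~\ref{lemma:mse-single} applied on the net with confidence budget $\delta/\bigl(2M^2(12L_jS_jt)^{d_j}t^2\bigr)$. The only discrepancy is your stated resolution $\epsilon_{t,j}\asymp 1/(L_j^2S_jt)$, which is inconsistent with the claimed covering number $(12L_jS_jt)^{d_j}$; the paper takes $\xi=1/(4L_jt)$, which matches both that covering number and the Lipschitz constant $4L_j$ one gets after bounding $|\phi_j^\transp\theta'-y_k|\le 2$ (exactly the clipping/boundedness point you already raise), so this is the bookkeeping calibration you flagged rather than a gap.
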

\begin{proof}
We shall use a covering argument for each representation $j\in[M]$. First note that, for any $\xi >0$, there always exists a finite set $\mathcal{C}_j \subset \mathbb{R}^{d_j}$ of size at most $(3S_j/\xi)^{d_j}$ such that, for each $\theta \in \mathcal{B}_j$, there exists ${\theta'}\in\mathcal{C}_j$ with $\|\theta-{\theta'}\|_2 \leq \xi$ (see e.g. Lemma 20.1 of \cite{lattimore2020bandit}). Moreover, suppose that all vectors in $\mathcal{C}_j$ have $\ell_2$-norm bounded by $S_j$ (otherwise we can always remove vectors with large norm). Now take any two vectors $\theta,{\theta'}\in\mathcal{B}_j$ with $\|\theta-{\theta'}\|_2 \leq \xi$. We have
\begin{align*}
E_t(\phi_j,\theta) &= \frac{1}{t} \sum_{k=1}^t \left(\phi_j(x_k,a_k)^T \theta \pm \phi_j(x_k,a_k)^T {\theta}' - y_k\right)^2\\ &= \frac{1}{t} \sum_{k=1}^t \left(\phi_j(x_k,a_k)^T (\theta - {\theta}')\right)^2 + \frac{1}{t} \sum_{k=1}^t \left(\phi_j(x_k,a_k)^T {\theta}' - y_k\right)^2 + \frac{2}{t} \sum_{k=1}^t \left(\phi_j(x_k,a_k)^T (\theta - {\theta}')\right)\left(\phi_j(x_k,a_k)^T {\theta}' - y_k\right)\\ &\geq E_t(\phi_j,{\theta}') + \frac{2}{t} \sum_{k=1}^t \left(\phi_j(x_k,a_k)^T (\theta - {\theta}')\right)\underbrace{\left(\phi_j(x_k,a_k)^T {\theta}' - y_k\right)}_{|\cdot|\leq 2}\\ &\geq E_t(\phi_j,{\theta}') - \frac{4}{t} \sum_{k=1}^t \|\phi_j(x_k,a_k)\|_{2} \|\theta - {\theta}'\|_2 \geq  E_t(\phi_j,{\theta}') - 4L_j\xi.
\end{align*}
Using $\xi = \frac{1}{4L_jt}$,
\begin{align*}
&\mathbb{P}\left(\exists t\geq 1, i\in\mathcal{I}^\star, j\in[M], \theta\in\mathcal{B}_j : E_t(\phi_{i}, \theta_{i}^\star) > E_t(\phi_j,\theta) + \frac{20}{t}\log\frac{4t^3}{\delta'_t} + \frac{1}{t}\right)\\ &\leq \sum_{t=1}^\infty\sum_{i\in\mathcal{I}^\star}\sum_{j\in[M]}\mathbb{P}\left(\exists \theta\in\mathcal{B}_j : E_t(\phi_{i}, \theta_{i}^\star) > E_t(\phi_j,\theta) + \frac{20}{t}\log\frac{4t^3}{\delta'_t} + \frac{1}{t}\right)\\ &\leq \sum_{t=1}^\infty\sum_{i\in\mathcal{I}^\star}\sum_{j\in[M]}\mathbb{P}\left(\exists {\theta}'\in\mathcal{C}_j : E_t(\phi_{i}, \theta_{i}^\star) > E_t(\phi_j,{\theta'}) - \frac{1}{t} + \frac{20}{t}\log\frac{4t^3}{\delta'_t} + \frac{1}{t}\right)\\ &\leq \sum_{t=1}^\infty\sum_{i\in\mathcal{I}^\star}\sum_{j\in[M]}\sum_{{\theta}'\in\mathcal{C}_j}\mathbb{P}\left( E_t(\phi_{i}, \theta_{i}^\star) > E_t(\phi_j,{\theta}') + \frac{20}{t}\log\frac{4t^3}{\delta'_t}\right) \leq \sum_{t=1}^\infty\sum_{i\in\mathcal{I}^\star}\sum_{j\in[M]}\sum_{{\theta}'\in\mathcal{C}_j} \frac{\delta'_t}{t^2}\\ &\leq M^2\sum_{t=1}^\infty \frac{\delta'_t}{t^2}(12L_jS_jt)^{d_j}.
\end{align*}
Here the first inequality is from the union bound, the second one follows by relating $\theta$ with its closest vector in the cover as above, the third one is from another union bound, the fourth one uses Lemma \ref{lemma:mse-single}, and the last one is from the maximum size of the cover. The result follows by setting $\delta'_t = \frac{\delta}{2M^2(12L_jS_jt)^{d_j}}$.
\end{proof}
\begin{lemma}\label{lemma:mse-full}[Validity of representation set]
With probability at least $1-\delta$, for each $i\in\mathcal{I}^\star$ and time $t\geq 1$, $i\in\Phi_t$.
\end{lemma}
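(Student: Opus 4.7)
The plan is to reduce the statement to the high-probability bound already established in Lemma~\ref{lemma:mse-multi} and then a simple induction on $t$. Precisely, I would work on the event
\[
\mathcal{E} := \Big\{ \forall t \geq 1,\, i\in\mathcal{I}^\star,\, j\in[M],\, \theta\in\mathcal{B}_j \,:\, E_t(\phi_i,\theta_i^\star) \leq E_t(\phi_j,\theta) + \alpha_{t+1,j} \Big\},
\]
which by Lemma~\ref{lemma:mse-multi} and the definition of $\alpha_{t+1,j}$ satisfies $\mathbb{P}(\mathcal{E}) \geq 1-\delta$. The key observation is that $\alpha_{t+1,j}$ in the algorithm is exactly the confidence inflation that appears in the statement of Lemma~\ref{lemma:mse-multi} at time $t$.

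Conditional on $\mathcal{E}$, I would prove by induction on $t \geq 1$ that $i \in \Phi_t$ for every $i \in \mathcal{I}^\star$. The base case $t=1$ is immediate from $\Phi_1 = [M]$. For the inductive step, suppose $i \in \Phi_{t-1}$. Since $\theta_i^\star \in \mathcal{B}_i$ by assumption, and $\wt{\theta}_{ti}$ is by definition the minimizer of $\theta \mapsto E_{t-1}(\phi_i,\theta)$ over $\mathcal{B}_i$, we have
\[
E_{t-1}(\phi_i, \wt{\theta}_{ti}) \leq E_{t-1}(\phi_i, \theta_i^\star).
\]
Applying the good event $\mathcal{E}$ at round $t-1$ yields, for every $j \in [M]$ and every $\theta \in \mathcal{B}_j$,
\[
E_{t-1}(\phi_i, \wt{\theta}_{ti}) \leq E_{t-1}(\phi_j, \theta) + \alpha_{t,j}.
\]
Taking the minimum over $j\in[M]$ and $\theta\in\mathcal{B}_j$ shows that $i$ satisfies the inclusion criterion for $\Phi_t$, and since $i \in \Phi_{t-1}$ by inductive hypothesis, we conclude $i \in \Phi_t$.

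The argument is essentially bookkeeping once the right high-probability event is identified, so there is no serious obstacle: the only point that requires care is aligning the time index in $\alpha_{t+1,j}$ (as defined in the algorithm) with the slack term in Lemma~\ref{lemma:mse-multi} (which inflates by a factor depending on $t$ when applied at round $t$). Using the projected estimate $\wt{\theta}_{ti}\in\mathcal{B}_i$ rather than the regularized least-squares estimate $\theta_{ti}$ is crucial here, since it guarantees that the covering argument of Lemma~\ref{lemma:mse-multi} — which quantifies over $\theta \in \mathcal{B}_j$ — can be invoked at $\wt{\theta}_{ti}$, and it ensures the simple sandwich $E_{t-1}(\phi_i, \wt{\theta}_{ti}) \leq E_{t-1}(\phi_i, \theta_i^\star)$ used above.
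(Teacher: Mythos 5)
Your proof is correct and follows essentially the same route as the paper's: the good event of Lemma~\ref{lemma:mse-multi} controls $E_t(\phi_i,\theta_i^\star)$ against every $(\phi_j,\theta)$ with slack $\alpha_{t+1,j}$, and the projection $\wt{\theta}_{t+1,i}\in\mathcal{B}_i$ of the least-squares estimate gives $E_t(\phi_i,\wt{\theta}_{t+1,i})\le E_t(\phi_i,\theta_i^\star)$, so the elimination test is never triggered for realizable representations. The only cosmetic difference is that you make explicit the induction over the nested sets $\Phi_t\subseteq\Phi_{t-1}$, which the paper leaves implicit.
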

\begin{proof}
Clearly $i\in\Phi_1$ by definition.
From Lemma \ref{lemma:mse-multi}, with probability at least $1-\delta$, for any $i\in\mathcal{I}^\star$ and $t\geq 1$,
\begin{align*}
E_t(\phi_{i}, \theta_{i}^\star) &\leq \min_{j\in[M]}\min_{\theta\in\mathcal{B}_j} \left( E_t(\phi_j,\theta) + \frac{20}{t}\log\frac{8M^2(12L_jS_jt)^{d_j}t^3}{\delta} + \frac{1}{t} \right).
\end{align*}
By definition, $\wt{\theta}_{t+1,i}$ is the vector in $\mathcal{B}_i$ minimizing $E_t(\phi_{i}, \theta)$. Therefore, since $\theta_i^\star\in\mathcal{B}_i$,
\begin{align*}
E_t(\phi_{i}, \wt{\theta}_{t+1,i}) \leq E_t(\phi_{i}, \theta_{i}^\star),
\end{align*}
which concludes the proof.
\end{proof}

\subsubsection{Misspecified representations are eventually eliminated}

Next, we show that, thanks to Asm.~\ref{ass:min-missp}, all the misspecified representations are eliminated from $\Phi_t$ at some point.

\begin{lemma}\label{lemma:martingale-cover}
Let $Z_k(\phi,\theta) := \left(\phi(x_k,a_k)^T \theta - y_k\right)^2$. Define $v_t := \sum_{k=1}^t\mathbb{E}_k\left[\mathbb{V}_k[y_k | x_k,a_k]\right]$ and $b_t(\phi,\theta) := \sum_{k=1}^t \mathbb{E}_k\left[\left(\phi(x_k,a_k)^T \theta - \mu(x_k,a_k)\right)^2\right]$ as the sum of conditional variances and biases, respectively. Then, for any $\delta\in(0,1)$,
\begin{align}
\mathbb{P}\left( \exists t\geq 1, i\in[M], \theta\in\mathcal{B}_i : \left| \sum_{k=1}^t Z_k(\phi_i,\theta) - b_t(\phi_i,\theta) - v_t \right| > 4\sqrt{t \log\frac{4Mt(12tS_iL_i)^{d_i}}{\delta}} + 2\right) \leq \delta
\end{align}
\end{lemma}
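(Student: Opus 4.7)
The plan is the standard route: establish pointwise martingale concentration for fixed $(i,\theta)$ via Azuma--Hoeffding, lift it uniformly over $\theta\in\mathcal{B}_i$ by an $\epsilon$-net, and then take union bounds over $i\in[M]$ and $t\ge 1$. The covering step closely mirrors the one already used in the proof of Lemma~\ref{lemma:mse-multi}.

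First, fix $i\in[M]$ and $\theta\in\mathcal{B}_i$. Writing $y_k=\mu(x_k,a_k)+\eta_k$ with $\mathbb{E}_k[\eta_k\mid x_k,a_k]=0$ and expanding the square gives $\mathbb{E}_k[Z_k(\phi_i,\theta)]=\mathbb{E}_k\!\left[(\phi_i(x_k,a_k)^\transp\theta-\mu(x_k,a_k))^2\right]+\mathbb{E}_k[\mathbb{V}_k[y_k\mid x_k,a_k]]$, so summing over $k\le t$ yields $\sum_{k\le t}\mathbb{E}_k[Z_k(\phi_i,\theta)]=b_t(\phi_i,\theta)+v_t$. Hence $X_k:=Z_k(\phi_i,\theta)-\mathbb{E}_k[Z_k(\phi_i,\theta)]$ is a martingale difference sequence, bounded in absolute value by a constant thanks to $|y_k|\le 1$ and the standing boundedness assumptions on $\phi_i$ and $\theta$. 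Azuma--Hoeffding then gives, at confidence $\delta'$, a deviation bound of order $\sqrt{t\log(1/\delta')}$, whose leading constant (after absorbing the range of $X_k$) matches the ``$4$'' in the claim.

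To lift the inequality to all $\theta\in\mathcal{B}_i$, take a $\xi$-cover $\mathcal{C}_i\subset\mathcal{B}_i$ of cardinality at most $(3S_i/\xi)^{d_i}$. The factorization $a^2-b^2=(a-b)(a+b)$ implies both $|Z_k(\phi_i,\theta)-Z_k(\phi_i,\theta')|\le 4L_i\|\theta-\theta'\|_2$ and the analogous $|b_t(\phi_i,\theta)-b_t(\phi_i,\theta')|\le 4tL_i\|\theta-\theta'\|_2$, while $v_t$ is $\theta$-independent and needs no cover. Setting $\xi:=1/(4L_i t)$ makes each discretization error at most $1$, producing the additive ``$+2$'' in the final bound and a cover cardinality $(12S_iL_i t)^{d_i}$ that exactly matches the log argument in the statement. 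Applying Azuma at per-test confidence $\delta':=\delta/(2Mt^2(12S_iL_it)^{d_i})$ for each cover point and union-bounding over $\theta'\in\mathcal{C}_i$, $i\in[M]$, and $t\ge 1$ collapses the time sum to $\sum_{t\ge 1}1/(2t^2)\le 1$, so the overall failure probability is at most $\delta$.

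The only delicate bookkeeping is coupling $\xi$ to $t$ (rather than to $\delta$) so that the cover size inside the log lines up exactly with the target $(12tS_iL_i)^{d_i}$, and then choosing the per-time confidence $\delta/(2Mt^2)$ so that the time factor cancels the cover factor when the union bound is summed in $t$. The rest is a routine composition of Azuma with a standard $\epsilon$-net argument, essentially recycling the discretization already worked out in Lemma~\ref{lemma:mse-multi}.
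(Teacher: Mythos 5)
Your proposal is correct and follows essentially the same route as the paper's proof: the same decomposition of $\mathbb{E}_k[Z_k(\phi_i,\theta)]$ into the bias term plus the conditional variance, an application of Azuma's inequality to the resulting bounded martingale difference sequence, a $\xi$-cover of $\mathcal{B}_i$ with $\xi = 1/(4L_i t)$ yielding the $(12 t S_i L_i)^{d_i}$ cardinality and the additive $+2$ from the Lipschitz control of both the empirical sum and its conditional expectation, and a union bound over the cover, $i\in[M]$, and $t$. The only difference is immaterial bookkeeping: you place an explicit $t^2$ factor in the per-test confidence to sum over time, whereas the paper uses the anytime form of Azuma (Lem.~\ref{lemma:azuma}) to absorb the time dependence into the logarithm.
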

\begin{proof}
We follow a covering argument analogous to the one used in Lem.~\ref{lemma:mse-multi}. It is easy to see that, if two parameters $\theta,\theta'\in\mathcal{B}_i$ are $\xi$-close in $\ell_2$-norm, then
\begin{align*}
\sum_{k=1}^t \left| Z_k(\phi_i,\theta) - Z_k(\phi_i,\theta') \right| \leq \sum_{k=1}^t \underbrace{\big| \phi_i(x_k,a_k)^T (\theta+\theta') - 2y_k\big|}_{\leq 4}\underbrace{ \big|\phi_i(x_k,a_k)^T (\theta-\theta')\big|}_{\leq L_i\xi} \leq 4L_i\xi t.
\end{align*}
For each time $t\geq 1$ and $i\in[M]$, we build a $\xi_{ti}$-cover $\mathcal{C}_{ti}$ of $\mathcal{B}_i$, where $\xi_{ti} = 1/(4L_it)$. Therefore,
\begin{align*}
&\mathbb{P}\left( \exists t\geq 1, i\in[M], \theta\in\mathcal{B}_i : \left| \sum_{k=1}^t Z_k(\phi_i,\theta) - \sum_{k=1}^t  \mathbb{E}_k[Z_k(\phi_i,\theta)] \right| > 4\sqrt{t \log(2t/\delta'_t)} + 2\right)\\ & \qquad \leq \sum_{t\geq 1}\sum_{i\in[M]}\sum_{\theta\in\mathcal{C}_{ti}}\mathbb{P}\left( \left| \sum_{k=1}^t Z_k(\phi_i,\theta) - \sum_{k=1}^t  \mathbb{E}_k[Z_k(\phi_i,\theta)] \right| > 4\sqrt{t \log(2t/\delta'_t)} + 2 - 8L_i\xi_{ti}t \right)\\ & \qquad \leq M\sum_{t\geq 1} \delta'_t(12tS_iL_i)^{d_i},
\end{align*}
where the first inequality is from a union bound and a reduction to the cover $\mathcal{C}_{ti}$ (for both terms inside the absolute value)
while the second one uses Azuma's inequality (Lem.~\ref{lemma:azuma}). The proof is concluded by setting $\delta'_t = \frac{\delta}{2M(12tS_iL_i)^{d_i}}$ and noting that
\begin{align*}
\mathbb{E}_k[Z_k(\phi_i,\theta)] = \mathbb{E}_k\left[\mathbb{V}_k[y_k | x_k,a_k] + \left(\phi_i(x_k,a_k)^T \theta - \mu(x_k,a_k)\right)^2\right].
\end{align*}
\end{proof}

\begin{lemma}[Elimination]\label{lemma:elim-misp}
Let $i\in[M]$ be any misspecified representation. Then, with probability at least $1-2\delta$, we have $i\notin \Phi_{t+1}$ for all $t$ such that
\begin{align*}
t \geq \frac{1}{\gamma_i} \min_{j\in\mathcal{I}^\star} \left\{ 5 + 4\sqrt{t\log\frac{4Mt(12tS_iL_i)^{d_i}}{\delta}} + 4\sqrt{t\log\frac{4Mt(12tS_jL_j)^{d_j}}{\delta}} + 20\log\frac{8M^2(12L_jS_jt)^{d_j}t^3}{\delta} \right\}.
\end{align*}
\end{lemma}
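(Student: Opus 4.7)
The plan is to show that, once $t$ is sufficiently large, the empirical MSE of the misspecified representation $i$ strictly exceeds that of some realizable $j\in\mathcal{I}^\star$ by more than $\alpha_{t+1,j}$, which forces $i$ out of $\Phi_{t+1}$. The two ingredients I will combine are: (i) a per-round bias lower bound derived from Asm.~\ref{ass:min-missp}, which will give $b_t(\phi_i,\theta)\geq t\gamma_i$ uniformly in $\theta\in\mathcal{B}_i$, and (ii) the two-sided concentration of the empirical MSE around $b_t+v_t$ established in Lem.~\ref{lemma:martingale-cover}, applied simultaneously to representations $i$ and $j$.

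For (i), I would argue as follows. At round $k$, conditional on the history $\mathcal{H}_{k-1}$, the action selection rule of E-\algo picks a deterministic action as a function of the current context, so $a_k=\pi_k(x_k)$ for some $\pi_k\in\Pi$. Hence, for any fixed $\theta\in\mathcal{B}_i$,
\begin{equation*}
\mathbb{E}_k\!\left[\left(\phi_i(x_k,a_k)^\transp\theta - \mu(x_k,a_k)\right)^2\right] = \mathbb{E}_{x\sim\rho}\!\left[\left(\phi_i(x,\pi_k(x))^\transp\theta - \mu(x,\pi_k(x))\right)^2\right] \geq \gamma_i
\end{equation*}
by Asm.~\ref{ass:min-missp}. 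Summing over $k\leq t$ yields $b_t(\phi_i,\theta)\geq t\gamma_i$ for every $\theta\in\mathcal{B}_i$, and because the inequality holds uniformly in $\theta$, I can plug in the data-dependent estimator $\wt{\theta}_{t+1,i}\in\mathcal{B}_i$. For any $j\in\mathcal{I}^\star$, realizability gives the trivial bound $b_t(\phi_j,\theta^\star_j)=0$.

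Combining (i) and (ii) via a union bound over the two events produced by Lem.~\ref{lemma:martingale-cover} (one for rep. $i$ with $\theta=\wt{\theta}_{t+1,i}$, one for rep. $j$ with $\theta=\theta^\star_j$), with probability at least $1-2\delta$ the common $v_t$ cancels in the subtraction and
\begin{equation*}
t\bigl[E_t(\phi_i,\wt{\theta}_{t+1,i}) - E_t(\phi_j,\theta^\star_j)\bigr] \geq t\gamma_i - 4\sqrt{t\log\tfrac{4Mt(12tS_iL_i)^{d_i}}{\delta}} - 4\sqrt{t\log\tfrac{4Mt(12tS_jL_j)^{d_j}}{\delta}} - 4.
\end{equation*}
Exclusion from $\Phi_{t+1}$ only requires the left-hand side to strictly exceed $t\alpha_{t+1,j}=20\log\tfrac{8M^2(12L_jS_jt)^{d_j}t^3}{\delta}+1$ for some $j\in\mathcal{I}^\star$; rearranging the resulting inequality in $t$ and minimizing over $j$ produces precisely the threshold in the statement. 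The main delicacy lies in step (i), namely in applying the population bound of Asm.~\ref{ass:min-missp} to the data-dependent $\wt{\theta}_{t+1,i}$: this is exactly where uniformity over the full parameter ball $\mathcal{B}_i$ (together with the reinterpretation of the adaptive sequence $(a_k)$ as conditional policies $\pi_k\in\Pi$) is indispensable, while the rest of the proof is direct algebraic manipulation.
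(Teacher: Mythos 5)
Your proposal is correct and follows essentially the same route as the paper's proof: both apply Lem.~\ref{lemma:martingale-cover} as a lower bound on $E_t(\phi_i,\wt{\theta}_{t+1,i})$ and an upper bound on $E_t(\phi_j,\theta_j^\star)$ for a realizable $j$ (where $b_t=0$ and the common $v_t$ cancels), and both obtain $b_t(\phi_i,\cdot)\geq t\gamma_i$ by viewing the adaptive action sequence as conditional policies $\pi_k\in\Pi$ and exploiting the uniformity of Asm.~\ref{ass:min-missp} over $\mathcal{B}_i$ to handle the data-dependent $\wt{\theta}_{t+1,i}$. The only cosmetic difference is the bookkeeping of the two $\delta$'s (the paper charges one to Lem.~\ref{lemma:martingale-cover} and one to the validity of the representation set, while you charge both to the concentration events), which does not affect the stated $1-2\delta$ guarantee.
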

\begin{proof}
Suppose that the misspecified representation $i\in[M]$ is active at time $t+1$ (i.e., $i \in \Phi_{t+1}$). Let $j\in\mathcal{I}^\star$ be any realizable representation. Then,
\begin{align*}
E_{t}(\phi_{i}, \wt{\theta}_{t+1,i}) &\leq \min_{j\in\Phi_{t+1}}\min_{\theta\in\mathcal{B}_j} \left( E_{t}(\phi_j,\theta) + \alpha_{t+1,i}\right) \leq E_t(\phi_{j},\theta_{j}^\star) + \alpha_{t+1,j}\\ &\leq \frac{v_t}{t} + 4\sqrt{\frac{1}{t}\log\frac{4Mt(12tS_jL_j)^{d_j}}{\delta}} + \frac{2}{t} + \alpha_{t+1,j},
\end{align*}
where the first inequality uses the definition of $\Phi_{t+1}$, the second one uses that $j$ is active at any time, and the last one uses Lem.~\ref{lemma:martingale-cover} together with $b_t(\phi_j,\theta_j^\star)$ = 0 since $j$ is realizable.
Similarly, by another application of Lem.~\ref{lemma:martingale-cover} to the left-hand side,
\begin{align*}
E_{t}(\phi_{i}, \wt{\theta}_{t+1,i}) &\geq \frac{b_t(\phi_i,\wt{\theta}_{t+1,i}) + v_t}{t} - 4\sqrt{\frac{1}{t}\log\frac{4Mt(12tS_iL_i)^{d_i}}{\delta}} - \frac{2}{t}.
\end{align*}
Combining these two and expanding the definition of $\alpha_{t+1,j}$,
\begin{align*}
b_t(\phi_i,\wt{\theta}_{t+1,i}) \leq 5 + 4\sqrt{t\log\frac{4Mt(12tS_iL_i)^{d_i}}{\delta}} + 4\sqrt{t\log\frac{4Mt(12tS_jL_j)^{d_j}}{\delta}} + 20\log\frac{8M^2(12L_jS_jt)^{d_j}t^3}{\delta}.
\end{align*}
Finally, by Asm.~\ref{ass:min-missp}
\begin{align*}
b_t(\phi_i,\wt{\theta}_{t+1,i}) &\geq \min_{\theta \in \mathcal{B}_i} b_t(\phi_i,\theta) = \min_{\theta \in \mathcal{B}_i} \mathbb{E}_{x\sim\rho}\left[\sum_{k=1}^t \left(\phi_i(x,\pi_k(x))^T \theta - \mu(x,\pi_k(x))\right)^2 \right]\\ &\geq t\min_{\theta \in \mathcal{B}_i}\min_{\pi\in\Pi} \mathbb{E}_{x\sim\rho}\left[\left(\phi_i(x,\pi(x))^T \theta - \mu(x,\pi(x))\right)^2 \right] \geq t\gamma_i.
\end{align*}
Relating the last two inequalities, where in the former we can take a minimum over all realizable representations, we obtain an inequality of the form $t\gamma_i \leq o(t)$, from which we can find $t$ such that $i$ is eliminated. The result holds with probability $1-2\delta$ due to the combination of two union bounds: one for the validity of the representation set, and the other for using Lem.~\ref{lemma:martingale-cover}.

\end{proof}

\subsubsection{Regret Analysis}

Given that all misspecified representations are eliminated at some point, the analysis follows straightforwardly from the one of \algo. In particular, if we let $\Gamma_i$ be the first (deterministic) time step such that the inequality in Lem.~\ref{lemma:elim-misp} holds, we can bound the immediate regret for each $t \leq \Gamma_i$ with its maximum value (i.e., $2$). Then, the total regret of E-\algo is
\begin{align*}
R_n = \sum_{t=1}^{\lceil\Gamma\rceil} r_t + \sum_{t=\lceil\Gamma\rceil+1}^n r_t \leq 2 \lceil\Gamma\rceil + \sum_{t=\lceil\Gamma\rceil+1}^n r_t,
\end{align*}
where $\Gamma := \max_{i\in[M]\setminus\mathcal{I}^\star}\Gamma_i$. Then, the second term above can be bounded using exactly the same proof as \algo since only the realizable representations remain active after $\lceil \Gamma \rceil$. Hence, E-\algo suffers only constant regret for eliminating misspecified representations. Moreover, if at least one realizable representation is \hls, or the mixing \hls condition holds, the second term above is also constant, and E-\algo suffers constant regret.

\subsection{Experiments}

We finally report some numerical simulations where we compare E-\algo with other model selection baselines (the one that performed better) on a toy problem with misspecification and on feature representations extracted from real data. 

\paragraph{Toy problem.} For the toy problem, we modified the experiment with varying dimensions of App.~\ref{app:exp.toy.vardim} . In addition to the representations considered in such experiment, we added four misspecified features, one with half the dimensionality of the base representation $\phi_{\mathrm{orig}}$ (see App.~\ref{app:exp.toy.vardim}), one with one third of the same, one generated randomly with dimension 3, and one generated randomly with dimension 9.

\paragraph{Last.fm dataset.} For the experiment with real data, we use the Last.fm dataset \citep{Cantador:RecSys2011}, which is a list of users and music artists, together with information about the artists listened by each user.  This dataset has been obtained from Last.fm (\url{https://www.last.fm/}) online music system.  We first preprocessed the dataset by keeping only artists listened by at least 70 users and users that listened at least to 10 different artists.
We thus obtained a dataset of 1322 users (which we treat as contexts) and 220 artists (which we treat as arms) reporting the number of times each user listened to each artist (which we treat as reward). Then, we generated multiple linear representations as follows. First, we extracted context-arm features via a low-rank factorization (taking only the highest 150 singular values) of the full matrix. Then, for each of $20$ representations we generated a random neural network architecture by sampling the number of hidden layers in $\{1,2\}$, the number of hidden neurons uniformly in $[50,200]$, and the size of the output layer uniformly in $[5,50]$. We obtained $20$ neural networks with $R^2$ score on a test set ranging from $0.5$ to $0.85$. For each of them, we took the last layer of the network as our linear model, thus obtaining $20$ linear representations with varying dimension and misspecification. We found that $12$ of these representations satisfy the \hls condition and took the best (in terms of $R^2$) of these as our ground-truth model. Finally, we normalized all these representations to have $\|\theta^\star\|_2=1$ by properly re-scaling $\phi$.

\paragraph{Results.} The results of running E-\algo and the other model-selection baselines on the two problems described above are shown in Fig.~\ref{fig:lastfm}. We compare to the baselines that perform best in the experiments of the main paper, i.e., \regbal and \regbalelim. For these baselines, we used $d\sqrt{n}$ as the oracle upper bound to the $n$-step regret of \linucb with a $d$-dimensional representation instead of the theoretical upper bound. Fig.~\ref{fig:lastfm} shows that E-\algo outperforms the other baselines and quickly transitions to constant regret. \regbalelim also transitions to constant regret since it eventually eliminates all misspecified representations but it takes an order of magnitude more steps than E-\algo to achieve so.

\begin{figure*}[t]
\centering
    \includegraphics{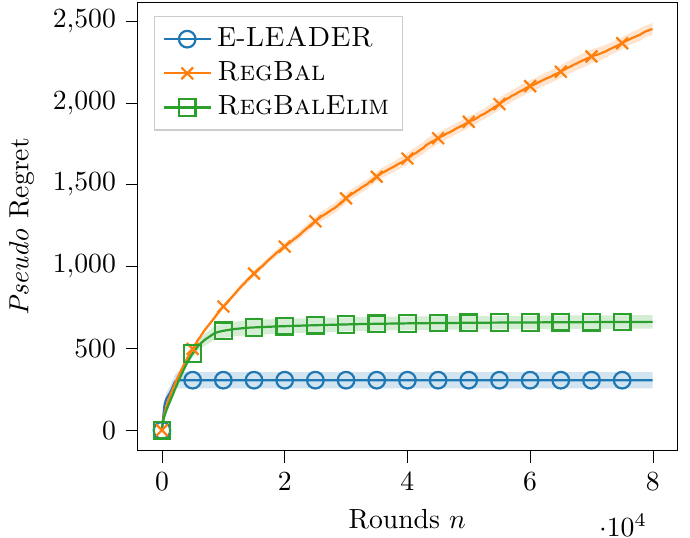}
	\includegraphics{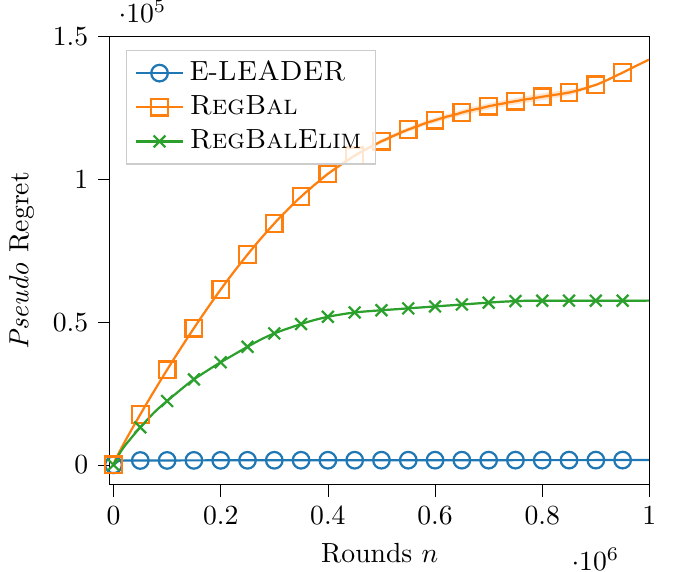}
	\caption{Regret of \algo and model-selection baselines on two misspecified problems. (left) Toy problem with varying dimensions and misspecified representations. (right) $20$ representations extracted from the Last.fm dataset.}
	\label{fig:lastfm}
\end{figure*}

\section{Experimental Details}\label{app:experiments}
In this section we provide further details on how the experiments from Section~\ref{sec:experiments} were designed, the experimental setup, and additional results.

\subsection{Synthetic Problems with Finite Contexts}\label{app.exp.toy}
The two synthetic experiments reported in Section~\ref{sec:experiments} and the motivating example from Figure~\ref{fig:example_intro} all pertain to the same randomly-generated contextual bandit problem with $20$, $5$ actions, uniform context distribution $\rho$, and noise standard deviation $\sigma=0.3$. All the representations we consider are normalized so that $S_i=\norm{\theta^\star_i}=1$, while $L_i$ can change.

To construct the reward function $\mu$ for our problem, we first generate a $6$-dimensional random feature map $\phi_{\text{orig}}:\X\times\A\to\Reals^6$. Specifically, each of the $20\times 5\times 6$ elements of $\phi_{\text{orig}}$ is sampled independently from a standard normal distribution.
We separately sample a $6$-dimensional parameter $\theta^\star_{\text{orig}}$, each element from a uniform distribution over the interval $[-1,1]$. 
We then normalize the parameter to have $S_{\text{orig}}=1$.
The reward function $\mu$ is defined a posteriori as $\mu(x,a)=\phi_{\text{orig}}(x,a)^\transp\theta^\star_{\text{orig}}$, so $\phi_{\text{orig}}$ is linearly realizable by construction.

From Lemma~\ref{lem:hls.properties.random}, $\phi_{\text{orig}}^\star$ is almost surely \hls. However, we verify this by checking that $\lambda_{{\text{orig}},\hls}$ is positive. All the representations we consider in the following are derived from and equivalent to $\phi_{\text{orig}}$.

\begin{figure*}
	\includegraphics{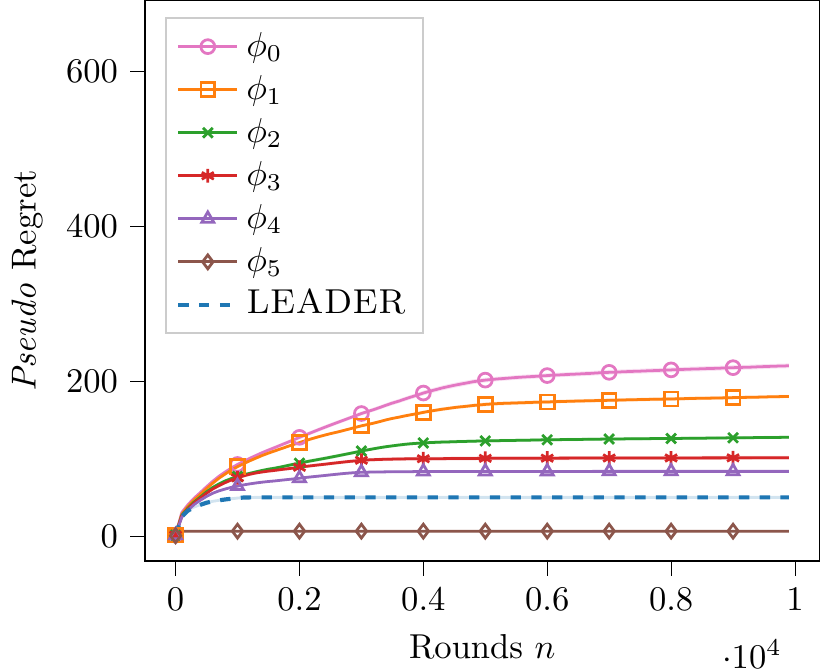}
	\includegraphics{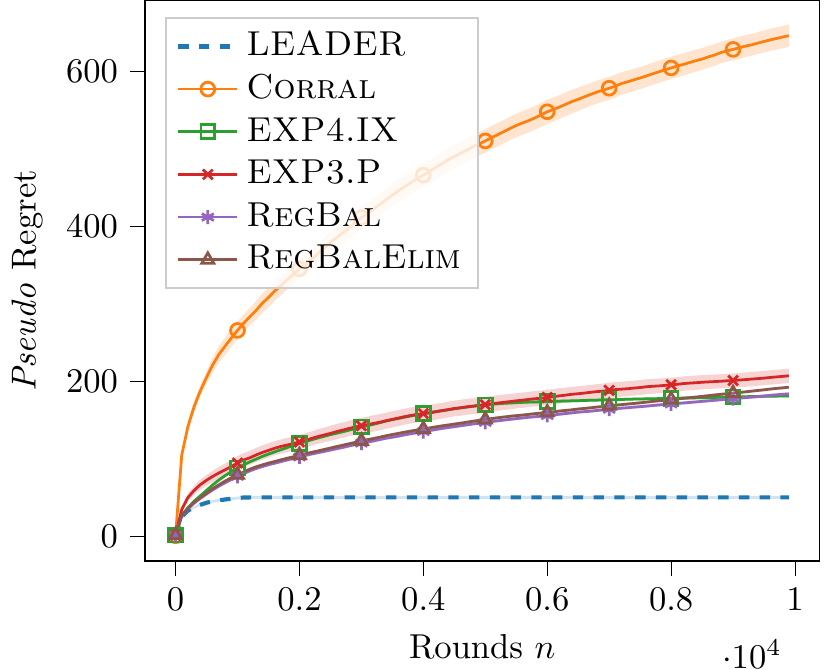}
	\caption{Regret of \algo, \linucb (left) with different representations and model-selection baselines (right) on the motivating representation-selection problem (App.~\ref{app:exp.toy.hls}).}
	\label{fig:app.hls}
\end{figure*}

\begin{figure*}
	\includegraphics{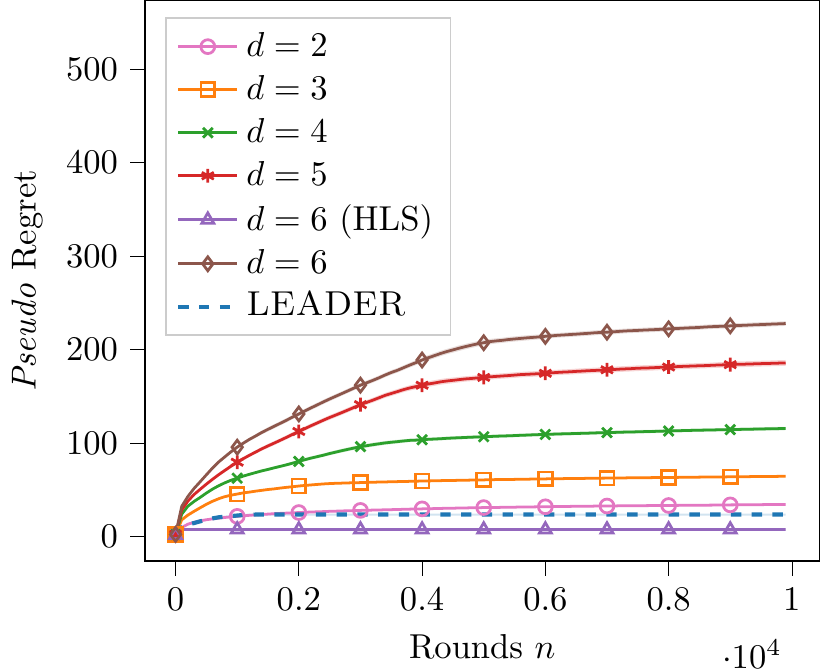}
	\includegraphics{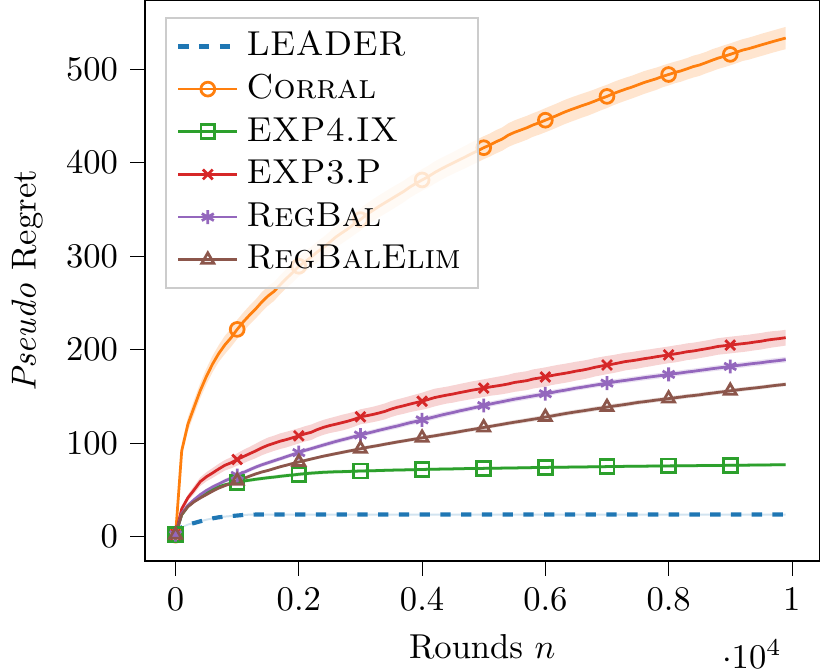}
	\caption{Regret of \algo, \linucb (left) with different representations and model-selection baselines (right) on the representation-selection problem with varying dimension (App.~\ref{app:exp.toy.vardim}).}
	\label{fig:app.vardim}
\end{figure*}

\begin{figure*}
	\includegraphics{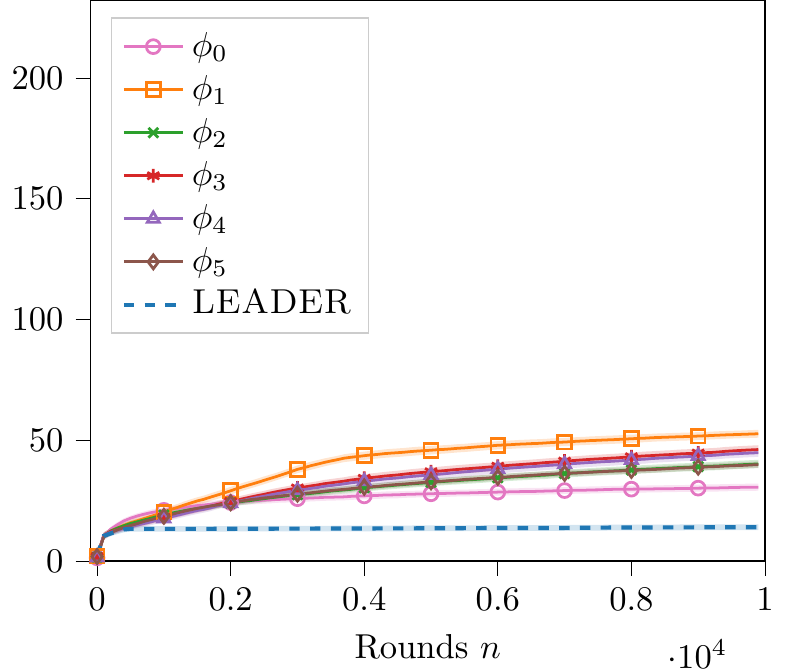}
	\includegraphics{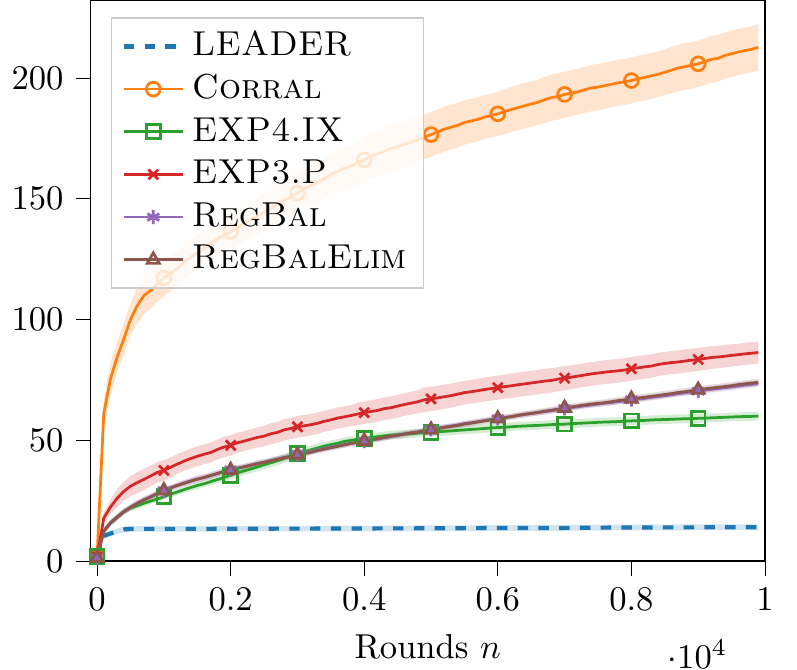}
	\caption{Regret of \algo, \linucb (left) with different representations and model-selection baselines (right) on the problem of mixing representations (App.~\ref{app:exp.toy.mixing}).}
	\label{fig:app.mixing}
\end{figure*}

\subsubsection{Motivating example}\label{app:exp.toy.hls}
The motivating example from Figure~\ref{fig:example_intro} is obtained from $\phi_{\text{orig}}$ ($\phi_5$ in the figure) by reducing the rank of $\EV_{x\sim\rho}[\phi_i^\star(x)\phi_i^\star(x)^\transp]$ from $6$ (full-rank), to $5,4,3,2$ and $1$, using the procedure described in Lemma~\ref{lem:hls.properties.derank}. 
A random linear transformation is then applied to each representation, which is also normalized to have $S_i=1$. Notice that this operations do not affect the rank of $\EV_{x\sim\rho}[\phi_i^\star(x)\phi_i^\star(x)^\transp]$ (Lemma~\ref{lem:hls.properties.linear}).
So we have a total of six representations, where only $\phi_{\text{orig}}$ is \hls. 
The experiment is reproduced in Figure~\ref{fig:app.hls}, where we also report the regret of the model-selection baselines.
The results are in line with the theory. Only \linucb with the \hls representation and \algo achieve constant regret, significantly outperforming the other \linucb instances.

\subsubsection{Varying dimension}\label{app:exp.toy.vardim}
Starting from $\phi_{\text{orig}}$, we construct five equivalent representations with dimension ranging from $2$ to $6$. To reduce the dimension, we just replace a subset of features of $\phi_{\text{orig}}$ (that is, elements of $\phi_{\text{orig}}(x,a)$ for all $x,a$) with a single feature that is a linear combination weighted by the corresponding elements of $\theta^\star_{\text{orig}}$. To obtain an equivalent representation, it is enough to set the corresponding element of the new $\theta^\star_i$ to $1$. We then apply the procedure from Lemma~\ref{lem:hls.properties.derank} to obtain $\rank(\EV_{x\sim\rho}[\phi^\star_i(x)\phi^\star_i(x)^\transp])=1$. Each representation is then passed through a random linear transformation and normalized to have $S_i=1$. So, of the seven representations, only the original $\phi_{\text{orig}}$ is \hls.
The experiment is reproduced in Figure~\ref{fig:app.vardim}, where we also report the regret of the model-selection baselines.

\subsubsection{Mixing representations}\label{app:exp.toy.mixing}
For this experiment, we derive from $\phi_{\text{orig}}$ six non-\hls $6$-dimensional representations with the mixed-\hls property. The original \hls representation itself is \emph{not} included in the set of candidates, to make mixing representations necessary.

To obtain the desired property, we remove two optimal features from each representation, replacing them with two copies of a new one that is an average of the two, similarly to what we did for the previous experiment, but without affecting the dimensionality. This operation makes the new representation non-\hls. However, since we only modify features of optimal arms, the new representation is still non-redundant. We pick the features to remove in a way that ensures the six representations together still have all the original features. Since the original $\phi_{\text{orig}}$ was \hls, it is easy to verify that the set of representations constructed in this way has the mixed-\hls property from Definition~\ref{def:hls.mix}. Again, each representation is passed through a random linear transformation and normalized to have $S_i=1$.
The experiment is reproduced in Figure~\ref{fig:app.mixing}. 
Differently from the experiment reported in Figure~\ref{fig:all} in the main paper, here \emph{all} the base algorithms of \regbal and \regbalelim are updated at each time step for more fair comparison.

\subsection{Synthetic Problem with Continuous Contexts}\label{app:exp.cont}
In this section, we test \algo on a synthetic contextual problem with continuous contexts. This example also clarifies the relationship between feature maps and context distributions in determining the goodness of a representation, showing how a change representation can sometimes correct for a "bad" distribution.

The continuous context space is $\X=\{x\in\Reals^d\mid\norm{x}\le1\}$. The finite action set is $\mathcal{A}=\{[0,0]^\transp, [0,1]^\transp, [1,0]^\transp, [1,1]^\transp\}$\footnote{We directly assign a vector in $\Reals^d$ to each one of the four actions for ease of notation.}. The context distribution $\rho$ is uniform over $\{(x_{[1]},x_{[2]})\in X\mid x_{[2]}\le 0\}$, where we use bracketed subscripts to index elements of vectors. Notice that $\rho$ assigns zero probability to some contexts in $\X$. The reward function is:
\begin{equation*}
	\mu(x,a) = x_{[1]}a_{[1]} + x_{[2]}a_{[2]}.
\end{equation*}
Intuitively, each action selects one, both or none of the elements of the current context and receives the sum of the selected elements as a reward.

A natural $2$-dimensional representation for this problem is:
\begin{equation*}
	\phi_1(x,a) = \left[x_{[1]}a_{[1]},\qquad x_{[2]}a_{[2]}\right]^\transp,
\end{equation*}
with $\theta^\star_1=[1,1]^\transp$. 
This would be a perfectly fine representation if $\rho$ was full-support.
However, since almost surely $x_{[2]}\le 0$, $\phi_1^\star(x)$ is either $[0,0]^\transp$ or $[x_{[1]},0]^\transp$ depending on the sign of $x_{[1]}$.
Hence, optimal features do not span $\Reals^2$, and $\phi_1$ is not \hls.

Consider another representation for the same problem, this time $3$-dimensional:
\begin{equation*}
	\phi_2(x,a) = \left[x_{[1]}a_{[1]}-x_{[1]},\qquad x_{[2]}a_{[2]}-x_{[2]},\qquad x_{[1]}+x_{[2]}\right]^\transp,
\end{equation*}
with $\theta^\star_2=[1,1,1]^\transp$. It is easy to see that the two representations are equivalent. However, now $\phi^\star_2(x)$ can be $[0,-x_{[2]},x_{[1]}+x_{[2]}]^\transp$ or $[-x_{[1]},-x_{[2]},x_{[1]}+x_{[2]}]^\transp$ depending on the sign of $x_{[1]}$, optimal features span $\Reals^3$, and $\phi_2$ is \hls. 

\begin{figure*}[t]
	\centering
	\includegraphics{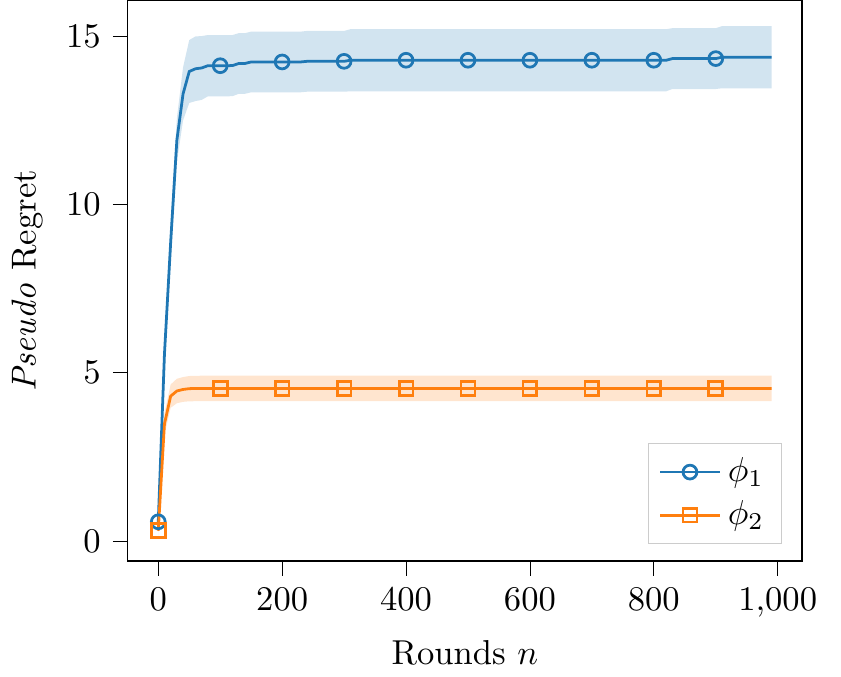}
	\caption{Regret of \linucb with two different representations on the synthetic problem with continuous contexts (App.~\ref{app:exp.cont}).}
	\label{fig:cont_oful}
\end{figure*}

In Figure~\ref{fig:cont_oful}, we show the regret of \linucb with the two representations ($\lambda=1$, $\delta=0.01$, $\sigma=0.2$, $20$ independent runs). Note that only $\phi_2$ achieves constant regret.

\subsection{Jester Dataset}

We report the comparison between \algo and the model-selection baselines on the $7$ feature representations extracted from the Jester dataset. For \regbal and \regbalelim, we use $d\sqrt{n}$ as the oracle upper bound to the regret of \linucb (instead of the theoretical one). Moreover, for all algorithms, we update all base learners with all collected samples (i.e., we share data across representations as \algo does). The results are shown in Fig.~\ref{fig:jester_modelsel}. Consistently with the other experiments, \algo outperforms all baselines and quickly transitions to constant regret thanks to the presence of multiple \hls representations.

\begin{figure*}[t]
\centering
	\includegraphics{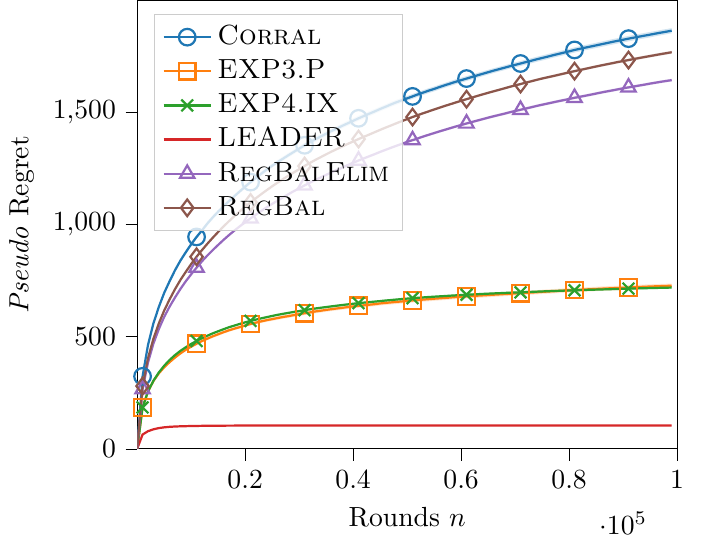}
	\caption{Regret of \algo and model-selection baselines on $7$ representations extracted from the Jester dataset.}
	\label{fig:jester_modelsel}
\end{figure*}


\subsection{Hyperparameters for Model-Selection Baselines}
In this section, we report hyperparameters and other settings used for the model-selection baselines.
All base algorithms are always updated at each time step unless otherwise stated.

\paragraph{\expfour.}
For the implicit-exploration parameter, we used the theoretical value $\gamma=\sqrt{2\log(M)/(nK)}$. The learning rate is $\eta=2\gamma$.

\paragraph{\corral. and \expthree}
We used the theoretical learning rate (respectively, exploration parameter) reported in App.~\ref{app:relatedwork}, with $c(\delta)=\sqrt{d}(\sqrt{\lambda}S + \sigma\sqrt{d-2\log(\delta))}$.

\paragraph{Regret Balancing (with Elimination)} 
As a regret oracle, we used $u:S_{ti}\mapsto4\beta_{ti}(\delta)\sqrt{t\log(\det V_{ti})}$ (for all bases).


\section{Auxiliary Results}\label{app:auxiliary}

\begin{lemma}[Azuma's inequality]\label{lemma:azuma}
Let $\{(Z_t,\mathcal{F}_t)\}_{t\in\mathbb{N}}$ be a martingale difference sequence such that $|Z_t| \leq a$ almost surely for all $t\in\mathbb{N}$. Then, for all $\delta \in (0,1)$,
\begin{align}
\mathbb{P}\left(\forall t \geq 1 : \left|\sum_{k=1}^t Z_k \right| \leq a\sqrt{t \log(2t/\delta)} \right) \geq 1-\delta.
\end{align}
\end{lemma}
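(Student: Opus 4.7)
The plan is to combine the classical (fixed-time) Azuma--Hoeffding inequality with a union bound over the time index, which is the standard recipe for turning a one-shot tail bound into the anytime statement above.

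First, I would establish the fixed-$t$ bound. Let $S_t = \sum_{k=1}^t Z_k$. Since $|Z_k|\le a$ a.s.\ and $\mathbb{E}[Z_k \mid \mathcal{F}_{k-1}]=0$, Hoeffding's lemma applied conditionally yields $\mathbb{E}[e^{\lambda Z_k} \mid \mathcal{F}_{k-1}] \le e^{\lambda^2 a^2/2}$ for every $\lambda\in\mathbb{R}$. Iterating the tower property gives $\mathbb{E}[e^{\lambda S_t}] \le e^{\lambda^2 t a^2/2}$, and a Chernoff bound on both $S_t$ and $-S_t$, optimized in $\lambda$, produces the standard inequality
\begin{equation*}
\mathbb{P}\bigl(|S_t| > a\sqrt{2t\log(2/\delta')}\bigr) \le \delta'
\end{equation*}
for any fixed $t$ and $\delta'\in(0,1)$.

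Second, I would convert this to a uniform-in-$t$ bound by choosing a summable sequence of failure probabilities. Setting $\delta'_t := \delta/(2t^2)$ gives $\sum_{t\ge 1}\delta'_t \le \delta$, so by the union bound, with probability at least $1-\delta$, for every $t\ge 1$,
\begin{equation*}
|S_t| \le a\sqrt{2t\log(4t^2/\delta)} = a\sqrt{4t\log(2t/\delta)},
\end{equation*}
recovering the stated form (with the numerical constant absorbed into the informal bound of the lemma).

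There is no real obstacle: the only mildly delicate point is producing the precise constant $\sqrt{\log(2t/\delta)}$ stated rather than the marginally larger $\sqrt{4\log(2t/\delta)}$ produced by the naive union bound. If the tighter constant is required verbatim, I would instead work directly with the nonnegative supermartingale $M_t(\lambda) = \exp(\lambda S_t - \lambda^2 t a^2/2)$, whose supermartingale property follows from the conditional Hoeffding bound, and apply Ville's maximal inequality to get a time-uniform statement without a union bound; a mixture over $\lambda$ (method of mixtures) then yields an anytime bound of exactly the stated shape. Either route produces the conclusion.
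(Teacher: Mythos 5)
The paper states this lemma in its ``Auxiliary Results'' appendix without any proof, treating it as a standard anytime version of Azuma--Hoeffding, so there is no in-paper argument to compare against. Your route --- conditional Hoeffding's lemma, Chernoff bound at each fixed $t$, then a union bound with $\delta'_t = \delta/(2t^2)$ --- is exactly the standard derivation and is the one implicitly intended (the paper uses the same $\delta/(2t^2)$ stratification elsewhere, e.g.\ in the proof of Lemma~\ref{lem:mineig.basic}). Your argument is correct, and you are right to flag the constant: the union bound yields $a\sqrt{2t\log(4t^2/\delta)} \le 2a\sqrt{t\log(2t/\delta)}$, which is a factor of $2$ larger than the literal statement. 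This is a minor looseness in how the paper states the auxiliary lemma rather than a defect in your proof; every downstream use (e.g.\ Lemma~\ref{lemma:martingale-cover}) absorbs such constants, so nothing in the paper depends on the sharper form. Your suggested alternative via Ville's maximal inequality with a mixture over $\lambda$ would indeed remove the union bound and tighten the constant, at the price of a $\sqrt{\log\log}$-type or mixture-dependent correction; it is a legitimately different and somewhat stronger route, but unnecessary here.
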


\begin{lemma}[Freedman's inequality]\label{lemma:freedman}
Let $\{(Z_t,\mathcal{F}_t)\}_{t\in\mathbb{N}}$ be a martingale difference sequence such that $|Z_t| \leq a$ almost surely for all $t\in\mathbb{N}$. Then, for all $\delta \in (0,1)$,
\begin{align}
\mathbb{P}\left(\forall t \geq 1 : \left|\sum_{k=1}^t Z_k \right| \leq 2\sqrt{\sum_{k=1}^t \mathbb{V}_k[Z_k] \log(4t/\delta)} + 4a\log(4t/\delta) \right) \geq 1-\delta.
\end{align}
\end{lemma}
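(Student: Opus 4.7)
The plan is to combine the classical Bennett/Freedman exponential supermartingale with a peeling argument over the random conditional variance $V_t = \sum_{k \le t}\mathbb{V}_k[Z_k]$, and then stitch across time by a union bound.

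First, I would introduce the Bennett-style supermartingale. Using the elementary bound $\mathbb{E}_k[e^{\lambda Z_k}] \le \exp\!\bigl(\tfrac{\lambda^2 \mathbb{V}_k[Z_k]}{2(1-\lambda a/3)}\bigr)$, valid for $|Z_k|\le a$ and $\lambda \in [0, 3/a)$, the process
\begin{equation*}
M_t(\lambda) \;=\; \exp\!\left(\lambda S_t - \frac{\lambda^2}{2(1-\lambda a/3)}\,V_t\right), \qquad S_t := \sum_{k\le t} Z_k,
\end{equation*}
is a nonnegative supermartingale with $M_0(\lambda)=1$. By Markov's inequality applied to $M_t(\lambda)$, for any fixed $t$, any fixed $\lambda \in [0,3/a)$, and any $\delta' \in (0,1)$, with probability at least $1-\delta'$,
\begin{equation*}
S_t \;\le\; \frac{\lambda\, V_t}{2(1-\lambda a/3)} + \frac{\log(1/\delta')}{\lambda}.
\end{equation*}

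Second, since $V_t$ is random I cannot simply optimize $\lambda$. I would use a geometric peeling argument: on the event $V_t \in [2^{j-1}v_0, 2^j v_0)$ for $j\ge 1$ (with a separate treatment of the tiny-$V_t$ regime), tune $\lambda$ to the upper endpoint $2^j v_0$ and apply the Markov bound with confidence level $\delta'/j^2$. Summing the resulting probabilities over $j$ and consolidating the constants produces, at each fixed $t$ and with probability at least $1-\delta_t$,
\begin{equation*}
S_t \;\le\; 2\sqrt{V_t \log(1/\delta_t)} + \tfrac{4a}{3}\log(1/\delta_t).
\end{equation*}
To get the anytime statement I then pick $\delta_t = \delta/(4t^2)$ so that $\sum_{t\ge 1}\delta_t \le \delta/2$, which upon a final union bound over $t$ and a symmetric application to $\{-Z_k\}$ yields the two-sided, uniform-in-$t$ bound in the stated form (absorbing $\log(4t^2/\delta) \le 2\log(4t/\delta)$ into the constants $2$ and $4$).

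The main obstacle is making the peeling argument clean enough that the constants end up as the stated $2$ in front of the $\sqrt{\cdot}$ and the $4$ in front of the linear term without an accumulation of log-log or multiplicative slack. A conceptually tidier alternative that avoids discretization is the method of mixtures (i.e., mix $M_t(\lambda)$ against a suitable prior on $\lambda$ and apply Ville's inequality to the resulting mixture supermartingale), which directly delivers a variance-adaptive, anytime bound; either route ends with the same union bound over $t$ and over the sign of $S_t$.
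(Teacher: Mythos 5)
The paper never proves this lemma: it is quoted as a standard auxiliary result (it is the variance-adaptive, anytime form of Freedman's inequality used, e.g., by \citet{agarwal2012contextual}, whose argument the paper's Lemma~\ref{lemma:mse-single} reproduces), so there is no in-paper proof to compare against. Your reconstruction is the standard and correct route: the Bernstein moment bound $\EV_k[e^{\lambda Z_k}]\le\exp\big(\tfrac{\lambda^2\mathbb{V}_k[Z_k]}{2(1-\lambda a/3)}\big)$ is valid for $\lambda\in[0,3/a)$, the resulting $M_t(\lambda)$ is indeed a nonnegative supermartingale (note that $\mathbb{V}_k[Z_k]$ is $\mathcal{F}_{k-1}$-measurable, which is what makes the compensator predictable), and Markov plus peeling over the random variance plus a union bound over $t$ and over the sign of $S_t$ delivers a bound of exactly the stated shape. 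The method-of-mixtures alternative you mention works equally well and avoids the discretization.

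The one substantive caveat is the point you flag yourself: the peeling step does not come for free. Tuning $\lambda$ to the upper endpoint of each dyadic bucket costs a constant factor (the realized $V_t$ can be off from the tuning value by a factor of $2$), and allocating confidence $\delta'/j^2$ across the at most $O(\log(ta^2/v_0))$ buckets inserts an additive $\log j^2$, i.e., a $\log\log t$ term, into the logarithm. As sketched, your argument therefore yields $|S_t|\le c_1\sqrt{V_t\log(c_2 t\log t/\delta)}+c_3 a\log(c_2 t\log t/\delta)$ with $c_1>2$, rather than landing exactly on the constants $2$ and $4$ with argument $\log(4t/\delta)$; closing that last gap requires either accepting slightly larger constants or a more careful two-case analysis (small versus large $V_t$) in place of full peeling. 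This is immaterial for the paper, where the lemma's constants only propagate into the absolute constant $20$ appearing in $\alpha_{tj}$ and Lemma~\ref{lemma:mse-single}, but you should either state the lemma with the constants your proof actually delivers or add the extra step that recovers the quoted ones.
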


\begin{lemma}[Matrix Azuma,~\citealp{tropp2012user}]\label{prop:mazuma}
	Let $\{X_k\}_{k=1}^t$ be a finite adapted sequence of symmetric matrices of dimension $d$, and $\{C_k\}_{k=1}^t$ a sequence of symmetric matrices such that for all $k$, $\EV_{k}[X_k]=0$ and $X_k^2\preceq C_k^2$ almost surely. Then, with probability at least $1-\delta$:
	\begin{equation}
	\lambda_{\max}\left(\sum_{k=1}^tX_k\right) \le \sqrt{8\sigma^2\log(d/\delta)},
	\end{equation}
	where $\sigma^2=\norm{\sum_{k=1}^tC_k^2}$.
\end{lemma}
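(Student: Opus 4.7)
The plan is to reproduce the matrix Laplace transform / Chernoff argument of Tropp (2012) adapted to adapted matrix sequences. Let $S_t = \sum_{k=1}^t X_k$. The starting point is the matrix Markov inequality: for every $\theta > 0$,
\begin{equation*}
\Prob\bigl(\lambda_{\max}(S_t) \ge u\bigr) \le e^{-\theta u}\,\EV\bigl[\trace\exp(\theta S_t)\bigr],
\end{equation*}
which follows because $\lambda_{\max}$ and $\exp$ commute (spectral mapping) and $\exp(\theta \lambda_{\max}(S_t))$ is bounded by $\trace\exp(\theta S_t)$.

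The crux is to control the trace matrix mgf $\EV[\trace\exp(\theta S_t)]$ by peeling off one increment at a time using the tower property and Lieb's concavity theorem (the map $A \mapsto \trace\exp(H + \log A)$ is concave on the positive-definite cone for any fixed symmetric $H$). Conditioning on $\mathcal F_{t-1}$ and applying Jensen's inequality inside the trace via Lieb gives
\begin{equation*}
\EV_{t-1}\bigl[\trace\exp(\theta S_{t-1} + \theta X_t)\bigr] \le \trace\exp\bigl(\theta S_{t-1} + \log \EV_{t-1}[\exp(\theta X_t)]\bigr).
\end{equation*}
Iterating this step backwards over $k = t, t-1, \dots, 1$ reduces the problem to bounding the conditional matrix mgf $\EV_{k-1}[\exp(\theta X_k)]$ of each increment in the semidefinite order.

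The next step is a matrix analogue of Hoeffding's lemma: under $\EV_{k-1}[X_k]=0$ and $X_k^2 \preceq C_k^2$ a.s., we will prove $\EV_{k-1}[\exp(\theta X_k)] \preceq \exp(\theta^2 C_k^2 / 2)$. The scalar proof (symmetrize $X_k$ using an independent Rademacher and expand $\cosh(\theta x) \le \exp(\theta^2 x^2/2)$) carries over: since $f(x)=\cosh(\theta x)$ is convex and even, Jensen on a symmetrized copy together with the operator inequality $\cosh(\theta X_k) \preceq \exp(\theta^2 X_k^2/2) \preceq \exp(\theta^2 C_k^2/2)$ (valid because $x \mapsto \exp(\theta^2 x/2)$ is operator monotone on the relevant range after Taylor-comparison) yields the claim. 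Substituting back gives
\begin{equation*}
\EV\bigl[\trace\exp(\theta S_t)\bigr] \le d \cdot \exp\!\left( \tfrac{\theta^2}{2}\, \bigl\lVert\textstyle\sum_{k=1}^t C_k^2\bigr\rVert\right) = d \cdot \exp(\theta^2 \sigma^2 / 2),
\end{equation*}
using monotonicity of $\trace\exp$ in the p.s.d.\ order and $\trace\exp(A) \le d\cdot\exp(\lambda_{\max}(A))$.

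Combining with the Markov bound yields $\Prob(\lambda_{\max}(S_t) \ge u) \le d\exp(-\theta u + \theta^2 \sigma^2/2)$, and optimizing $\theta = u/\sigma^2$ gives $d \exp(-u^2/(2\sigma^2))$. Setting this equal to $\delta$ and solving for $u$ produces $u = \sigma\sqrt{2\log(d/\delta)}$; the stated $\sqrt{8\sigma^2\log(d/\delta)}$ is a slightly loosened form (off by a factor of $2$ in the constant). The main technical obstacle is the matrix Hoeffding step: operator-monotonicity subtleties force the symmetrization argument, and one must verify that the cited inequality $X_k^2 \preceq C_k^2$ indeed produces $\cosh(\theta X_k) \preceq \exp(\theta^2 C_k^2/2)$ without invoking operator monotonicity of $\exp$ (which fails in general), but rather by comparing the two as power series term by term and using that even powers of $X_k$ are bounded by even powers of $C_k$ in semidefinite order.
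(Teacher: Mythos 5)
The paper offers no proof of this lemma: it is imported verbatim from Tropp (2012, Theorem 7.1), so the only meaningful comparison is with Tropp's published argument. Your reconstruction of its architecture --- matrix Laplace transform, Lieb's concavity theorem to peel off the increments of the adapted sequence one conditional expectation at a time, and a per-increment semidefinite bound on the conditional matrix mgf --- is the right skeleton and matches the source.

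The genuine gap is in the matrix Hoeffding step. Your proposed justification of $\cosh(\theta X_k)\preceq\exp(\theta^2 C_k^2/2)$ by comparing power series ``term by term and using that even powers of $X_k$ are bounded by even powers of $C_k$ in semidefinite order'' is invalid: $X^2\preceq C^2$ does \emph{not} imply $X^{2m}\preceq C^{2m}$ for $m\ge 2$, because $t\mapsto t^m$ is not operator monotone. (The implication does hold at the level of traces, via Weyl monotonicity of eigenvalues, but the Lieb iteration requires the semidefinite-order bound $\EV_{k-1}[e^{\theta X_k}]\preceq e^{g(\theta)C_k^2}$ with deterministic $C_k$, and a trace inequality cannot be substituted inside $\trace\exp(H+\log(\cdot))$.) Relatedly, the constant you claim --- $g(\theta)=\theta^2/2$, leading to the deviation $\sigma\sqrt{2\log(d/\delta)}$ --- is not what the known argument delivers for non-commuting $X_k$ and $C_k$: Tropp's per-increment mgf lemma pays a factor in the exponent through the symmetrization with an independent copy and a Rademacher sign, and this is precisely why the theorem is stated with $\sqrt{8\sigma^2\log(d/\delta)}$ rather than $\sqrt{2\sigma^2\log(d/\delta)}$. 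Describing the stated bound as ``a slightly loosened form'' therefore mischaracterizes the situation: the sharper constant is only available when each $X_k$ commutes with $C_k$ (or for independent summands via other techniques), and your derivation of it rests on the invalid operator inequality above. The Markov/Laplace and Lieb-iteration portions of the sketch are fine; what is missing is a correct proof of a deterministic semidefinite upper bound on $\log\EV_{k-1}[e^{\theta X_k}]$ in terms of $C_k^2$, with the constant that actually produces the factor $8$.
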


\begin{lemma}\label{lemma:span-eig}
	Let $\Phi = [\phi_1, \dots, \phi_n]$, where $\phi_i \in \mathbb{R}^d$ for $i\in[n]$, be such that $\mathrm{span}(\Phi) \subset \mathbb{R}^d$. Call $\lambda_i$ the $i$-th eigenvalue (using an arbitrary order) of the matrix $\Phi\Phi^T \in \mathbb{R}^{d\times d}$, and let $u_i$ be its corresponding eigenvector. Then, for any vector $\phi \in \mathbb{R}^d$, if $\phi \notin \mathrm{span}(\Phi)$, there exists $i\in[d]$ such that $\lambda_i = 0$ and $|\phi^T u_i| > 0$.
\end{lemma}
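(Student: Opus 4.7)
The plan is to exploit the fact that $\Phi\Phi^T$ is symmetric positive semidefinite, so the eigenvectors $\{u_i\}_{i=1}^d$ form an orthonormal basis of $\Reals^d$ and all eigenvalues are non-negative. The proof will proceed by a kernel/image decomposition argument: the span of the columns of $\Phi$ coincides with the image of $\Phi\Phi^T$, and hence the orthogonal complement of $\spann(\Phi)$ coincides with the kernel of $\Phi\Phi^T$, which is spanned by the eigenvectors having zero eigenvalue.

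First, I would verify that $\Imm(\Phi\Phi^T) = \spann(\Phi)$. The inclusion $\Imm(\Phi\Phi^T) \subseteq \Imm(\Phi)=\spann(\Phi)$ is immediate. The reverse inclusion follows from the standard fact that $\ker(\Phi^T) = \Imm(\Phi)^\perp$, hence $\ker(\Phi\Phi^T) = \ker(\Phi^T)$ (since $\Phi\Phi^T v = 0$ implies $v^T\Phi\Phi^T v = \norm{\Phi^T v}^2 = 0$), and by the rank--nullity theorem $\dim \Imm(\Phi\Phi^T) = d - \dim\ker(\Phi^T) = \dim\Imm(\Phi)$. Combined with symmetry of $\Phi\Phi^T$, which gives $\ker(\Phi\Phi^T) = \Imm(\Phi\Phi^T)^\perp$, we conclude $\ker(\Phi\Phi^T) = \spann(\Phi)^\perp$.

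Second, I would partition the eigenvectors into the set $\mathcal{P} = \{i : \lambda_i > 0\}$ and $\mathcal{Z} = \{i : \lambda_i = 0\}$. By the spectral theorem, $\spann\{u_i\}_{i\in\mathcal{Z}} = \ker(\Phi\Phi^T) = \spann(\Phi)^\perp$, and correspondingly $\spann\{u_i\}_{i\in\mathcal{P}} = \spann(\Phi)$.

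Finally, I would take any $\phi \in \Reals^d$ with $\phi \notin \spann(\Phi)$ and write $\phi = \sum_{i=1}^d (\phi^T u_i)\, u_i$ using orthonormality. If every $i \in \mathcal{Z}$ satisfied $\phi^T u_i = 0$, then $\phi$ would lie entirely in $\spann\{u_i\}_{i\in\mathcal{P}} = \spann(\Phi)$, contradicting the hypothesis. Hence there must exist some $i \in \mathcal{Z}$ with $|\phi^T u_i| > 0$, which is exactly the claim. There is no real obstacle here; the only subtlety is making sure the identification $\ker(\Phi\Phi^T) = \spann(\Phi)^\perp$ is stated cleanly, since everything else is a direct consequence of the spectral theorem applied to a symmetric PSD matrix.
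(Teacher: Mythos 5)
Your proof is correct, and it takes a somewhat different route from the paper's. You argue directly: you establish the identity $\ker(\Phi\Phi^\transp)=\ker(\Phi^\transp)=\Imm(\Phi)^\perp=\spann(\Phi)^\perp$, invoke the spectral theorem to identify the zero-eigenvalue eigenvectors with an orthonormal basis of this kernel, and conclude that any $\phi\notin\spann(\Phi)$ must have a nonzero coefficient on at least one such eigenvector, since otherwise its eigenbasis expansion would place it in $\spann\{u_i\}_{i\in\mathcal{P}}=\spann(\Phi)$. The paper instead argues by contradiction through a rank-counting argument on the augmented matrix $\bar{\Phi}=[\Phi,\phi]$: assuming $\phi$ is orthogonal to every null eigenvector of $\Phi\Phi^\transp$, it shows that $\bar{\Phi}\bar{\Phi}^\transp=\Phi\Phi^\transp+\phi\phi^\transp$ has the same null eigenvectors, hence the same rank as $\Phi\Phi^\transp$, contradicting $\rank(\bar{\Phi})>\rank(\Phi)$. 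The two arguments rest on the same underlying fact --- that the null space of $\Phi\Phi^\transp$ is exactly $\spann(\Phi)^\perp$ --- but your version states this structure explicitly and derives the claim as an immediate consequence, which is arguably cleaner; the paper's version avoids naming the kernel--image duality and instead packages it into the rank computation for $\bar{\Phi}\bar{\Phi}^\transp$. Both are complete and correct; yours buys transparency about why the lemma is true, while the paper's keeps the argument self-contained at the level of eigenvalue/rank bookkeeping.
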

\begin{proof}
	We prove the lemma by contradiction. Suppose that, for all $i\in[d]$, either $\lambda_i > 0$ or $|\phi^T u_i| = 0$. Note that, since the columns of $\Phi$ do not span $\mathbb{R}^d$, $\mathrm{rank}(\Phi) = \mathrm{rank}(\Phi\Phi^T) < d$. This implies that at least one eigenvalue of $\Phi\Phi^T$ is zero. Moreover, if $\lambda_i=0$, the condition above implies that $|\phi^T u_i| = 0$. This means that $\phi$ is orthogonal to all eigenvectors of $\Phi$ associated to a zero eigenvalue. Define the matrix $\bar{\Phi} := [\Phi, \phi] \in \mathbb{R}^{d\times (n+1)}$. It must be that $\mathrm{rank}(\bar{\Phi}) > \mathrm{rank}(\Phi)$ (otherwise $\phi$ would lie in the span of the columns of $\Phi$). Note that
	\begin{align}
	\bar{\Phi}\bar{\Phi}^T = \sum_{i=1}^n \phi_i\phi_i^T + \phi\phi^T = \Phi\Phi^T + \phi\phi^T.
	\end{align}
	If $u_i$ is an eigenvector of $\Phi\Phi^T$ associated with a zero eigenvalue, then
	\begin{align}
	\bar{\Phi}\bar{\Phi}^Tu_i = \underbrace{\Phi\Phi^T u_i}_{= 0} + \phi\underbrace{\phi^T u_i}_{=0} = 0
	\end{align}
	since $u_i$ is orthogonal to $\phi$. Hence, $u_i$ is still an eigenvector of $\bar{\Phi}\bar{\Phi}^T$ associated with eigenvalue zero. Therefore, the number of non-zero eigenvalues of $\bar{\Phi}\bar{\Phi}^T$ is the same as the one of $\Phi\Phi^T$, which implies that $\mathrm{rank}(\bar{\Phi}) = \mathrm{rank}(\bar{\Phi}\bar{\Phi}^T) = \mathrm{rank}(\Phi\Phi^T) = \mathrm{rank}(\Phi)$. This yields the desired contradiction.
\end{proof}

\begin{lemma}[Kantorovich-like inequality]\label{lem:kanto}
	Let $\boldsymbol{v}\in\Reals^d$ with $\norm{\boldsymbol{v}}=1$ and $A\in\Reals^{d\times d}$ symmetric invertible with non-zero eigenvalues $\lambda_1\le\dots\le\lambda_d$ and corresponding orthonormal eigenvectors $u_1,\dots,u_d$. Let $\mathcal{I}\subseteq[d]$ be any index set. If $\boldsymbol{v}\in\spann\{u_i\}_{i\in \mathcal{I}}$ and $\lambda_i>0$ for all $i\in\mathcal{I}$:
	\begin{equation*}
	\boldsymbol{v}^\transp A^{-1}\boldsymbol{v} \le \frac{(\max_{i\in \mathcal{I}}\lambda_i+\min_{i\in \mathcal{I}}\lambda_i)^2}{4\max_{i\in \mathcal{I}}\lambda_i\min_{i\in \mathcal{I}}\lambda_i} \frac{1}{\boldsymbol{v}^\transp A\boldsymbol{v}}.
	\end{equation*}
\end{lemma}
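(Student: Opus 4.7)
\textbf{Proof plan for Lemma~\ref{lem:kanto}.} The plan is to reduce the inequality to the classical Kantorovich inequality applied to a finite discrete distribution supported on $\{\lambda_i\}_{i\in\mathcal{I}}$, which are all strictly positive by assumption. The restriction $\boldsymbol{v}\in\spann\{u_i\}_{i\in\mathcal{I}}$ is precisely what ensures we never touch the (possibly negative or inconvenient) eigenvalues indexed outside $\mathcal{I}$.

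First, since $u_1,\dots,u_d$ form an orthonormal basis of $\Reals^d$ and $\boldsymbol{v}\in\spann\{u_i\}_{i\in\mathcal{I}}$ with $\norm{\boldsymbol{v}}=1$, I can write $\boldsymbol{v}=\sum_{i\in\mathcal{I}}\alpha_i u_i$ with $\sum_{i\in\mathcal{I}}\alpha_i^2=1$. Because $A$ is symmetric and invertible, $A u_i=\lambda_i u_i$ and $A^{-1}u_i=\lambda_i^{-1}u_i$ hold simultaneously for every $i\in[d]$. Using orthonormality, this yields the diagonal expressions
\begin{equation*}
\boldsymbol{v}^\transp A \boldsymbol{v}=\sum_{i\in\mathcal{I}}\alpha_i^2\lambda_i,\qquad \boldsymbol{v}^\transp A^{-1}\boldsymbol{v}=\sum_{i\in\mathcal{I}}\alpha_i^2\lambda_i^{-1}.
\end{equation*}
Setting $p_i=\alpha_i^2$, the $p_i$ are nonnegative with $\sum_{i\in\mathcal{I}}p_i=1$, so I just need to upper bound $\bigl(\sum_i p_i\lambda_i\bigr)\bigl(\sum_i p_i \lambda_i^{-1}\bigr)$ by $(M+m)^2/(4Mm)$, where $m=\min_{i\in\mathcal{I}}\lambda_i>0$ and $M=\max_{i\in\mathcal{I}}\lambda_i$.

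For this scalar Kantorovich step, I would use the elementary pointwise bound $1/\lambda\le 1/m+1/M-\lambda/(mM)$ valid for every $\lambda\in[m,M]$ (this is just the statement that the convex function $1/\lambda$ lies below its chord between the points $(m,1/m)$ and $(M,1/M)$). Taking a convex combination with weights $p_i$ gives
\begin{equation*}
\sum_{i\in\mathcal{I}}p_i\lambda_i^{-1}\le \frac{1}{m}+\frac{1}{M}-\frac{1}{mM}\sum_{i\in\mathcal{I}}p_i\lambda_i.
\end{equation*}
Writing $s=\sum_i p_i\lambda_i\in[m,M]$ and multiplying by $s$ yields $s\bigl(\sum_i p_i\lambda_i^{-1}\bigr)\le s(M+m)/(mM)-s^2/(mM)$, whose maximum over $s\in\Reals$ is attained at $s=(M+m)/2$ with value $(M+m)^2/(4mM)$. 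Dividing back by $s=\boldsymbol{v}^\transp A\boldsymbol{v}$ delivers the stated inequality.

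I do not foresee a real obstacle: the only thing to be careful about is making sure the restriction $\boldsymbol{v}\in\spann\{u_i\}_{i\in\mathcal{I}}$ is used correctly so that the sums only range over indices with $\lambda_i>0$ (otherwise the chord bound and the square-root arguments would fail). Once that is spelled out, the rest is bookkeeping.
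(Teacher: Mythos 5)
Your proof is correct, and it takes a genuinely different route from the paper's. The paper's proof keeps everything at the matrix level: it replaces the eigenvalues of $A$ indexed outside $\mathcal{I}$ by some $\widetilde{\lambda}\in\{\lambda_i\}_{i\in\mathcal{I}}$ to form a positive definite surrogate matrix $B$ satisfying $\boldsymbol{v}^\transp A\boldsymbol{v}=\boldsymbol{v}^\transp B\boldsymbol{v}$ and $\boldsymbol{v}^\transp A^{-1}\boldsymbol{v}=\boldsymbol{v}^\transp B^{-1}\boldsymbol{v}$ (legitimate because $\boldsymbol{v}$ is orthogonal to the eigenvectors outside $\mathcal{I}$), and then invokes the standard Kantorovich matrix inequality for $B$ as a cited black box, with $\lambda_{\max}(B)=\max_{i\in\mathcal{I}}\lambda_i$ and $\lambda_{\min}(B)=\min_{i\in\mathcal{I}}\lambda_i$. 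You instead expand $\boldsymbol{v}$ in the restricted eigenbasis, reduce to the scalar Kantorovich inequality for the discrete distribution $p_i=\alpha_i^2$ on $\{\lambda_i\}_{i\in\mathcal{I}}$, and prove that scalar inequality from scratch via the chord bound $1/\lambda\le 1/m+1/M-\lambda/(mM)$ followed by maximizing the resulting quadratic in $s=\sum_i p_i\lambda_i$. The trade-off: your argument is fully self-contained and elementary (no external citation needed), at the cost of a few more lines; the paper's is shorter but leans on the matrix Kantorovich inequality as an imported result. Both deliver exactly the constant $(M+m)^2/(4Mm)$, and both use the hypothesis $\boldsymbol{v}\in\spann\{u_i\}_{i\in\mathcal{I}}$ in essentially the same place, namely to discard the contribution of the eigenvalues outside $\mathcal{I}$, which may be negative or zero-adjacent and would otherwise break the bound.
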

\begin{proof}
	Since the eigenvectors are all orthogonal, $\boldsymbol{v}^\transp u_i=0$ for all $i\notin \mathcal{I}$ (meaning $i\in[d]\setminus \mathcal{I}$). Now consider the quadratic form:
	\begin{align}
	\boldsymbol{v}^\transp A\boldsymbol{v} &= \boldsymbol{v}^\transp Q\Sigma Q^\transp \boldsymbol{v} &&\text{(orthogonal diagonalization)} \nonumber\\
	&= \sum_{i=1}^d\lambda_{i}(\boldsymbol{v}^\transp u_i)^2 \nonumber\\
	&= \sum_{i\in \mathcal{I}}\lambda_{i}(\boldsymbol{v}^\transp u_i)^2 + \sum_{i\notin \mathcal{I}}\lambda_{i}(\boldsymbol{v}^\transp u_i)^2 \label{pp:cancel} \\
	&= \sum_{i\in \mathcal{I}}\lambda_{i}(\boldsymbol{v}^\transp u_i)^2 + \sum_{i\notin \mathcal{I}}\widetilde{\lambda}(\boldsymbol{v}^\transp u_i)^2 &&\text{(the second summation is zero anyway)}\label{eq:kanto.1}\\ &\coloneqq \boldsymbol{v}^\transp B\boldsymbol{v} ,\nonumber
	\end{align}
	where $\widetilde{\lambda}$ is any eigenvalue from $\{\lambda_i\}_{i\in \mathcal{I}}$ and $B$ is the symmetric matrix of which~\eqref{eq:kanto.1} is the eigendecomposition. The substitution is legit because the dot products in the second summation in~\eqref{pp:cancel} are all zero. The new matrix $B$ gives the same quadratic form but only has eigenvalues from $\{\lambda_i\}_{i\in \mathcal{I}}$.
	Since by hypothesis the surviving eigenvalues are all positive, $B$ is positive definite.
	We can do exactly the same for $\boldsymbol{v}^\transp A^{-1}\boldsymbol{v}$ since the orthonormal eigenvectors of $A^{-1}$ are the same as $A$. Hence:
	\begin{align}
	\boldsymbol{v}^\transp A^{-1}\boldsymbol{v}\boldsymbol{v}^\transp A\boldsymbol{v} &= \boldsymbol{v}^\transp B^{-1}\boldsymbol{v}\boldsymbol{v}^\transp B\boldsymbol{v} \nonumber\\
	&\le \frac{(\lambda_{\max}(B)+\lambda_{\min}(B))^2}{4\lambda_{\max}(B)\lambda_{\min}(B)} \label{pp:kanto} \\
	&=\frac{(\max_{i\in \mathcal{I}}\lambda_i+\min_{i\in \mathcal{I}}\lambda_i)^2}{4\max_{i\in \mathcal{I}}\lambda_i\min_{i\in \mathcal{I}}\lambda_i},\nonumber
	\end{align}
	where~\eqref{pp:kanto} is the standard Kantorovich matrix inequality~\citep[e.g.,][]{chen2013note}.
\end{proof}

\begin{lemma}\label{lem:minposeig}
	The smallest nonzero eigenvalue of symmetric p.s.d. matrix $A\in\Reals^{d\times d}$ is:
	\begin{equation*}
	\min_{\substack{\boldsymbol{v}\in\Imm(A)\\\norm{\boldsymbol{v}}=1}}\boldsymbol{v}^\transp A\boldsymbol{v},
	\end{equation*}
	where $\Imm(A)$ denotes the column space of $A$.
\end{lemma}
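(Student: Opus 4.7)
My plan is to prove the claim by a direct Rayleigh-quotient argument restricted to the column space of $A$, using the spectral theorem. Since $A$ is symmetric, it admits an orthonormal eigenbasis $u_1,\dots,u_d$ with real eigenvalues $\lambda_1,\dots,\lambda_d$; positive semidefiniteness gives $\lambda_i\ge 0$ for all $i$. Let $\mathcal{I}=\{i : \lambda_i>0\}$ and $\lambda^+ = \min_{i\in\mathcal{I}}\lambda_i$, the quantity we wish to identify.

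First, I would verify the set equality $\Imm(A) = \spann\{u_i\}_{i\in\mathcal{I}}$. The inclusion $\spann\{u_i\}_{i\in\mathcal{I}} \subseteq \Imm(A)$ is immediate since $A u_i = \lambda_i u_i$ and $\lambda_i\neq 0$ means $u_i = A(u_i/\lambda_i)\in\Imm(A)$. For the reverse inclusion, any $y=Ax$ can be expanded in the eigenbasis as $y = \sum_i \lambda_i (u_i^\transp x) u_i$, which lies in $\spann\{u_i\}_{i\in\mathcal{I}}$ because the coefficients for $i\notin\mathcal{I}$ vanish.

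Next, take any unit vector $\boldsymbol{v}\in\Imm(A)$ and write $\boldsymbol{v} = \sum_{i\in\mathcal{I}} c_i u_i$ with $\sum_{i\in\mathcal{I}} c_i^2 = \|\boldsymbol{v}\|^2 = 1$. Then
\begin{equation*}
\boldsymbol{v}^\transp A \boldsymbol{v} = \sum_{i\in\mathcal{I}} \lambda_i c_i^2 \ge \lambda^+ \sum_{i\in\mathcal{I}} c_i^2 = \lambda^+,
\end{equation*}
which yields the lower bound. For the matching upper bound, pick any index $i^\star\in\arginf_{i\in\mathcal{I}}\lambda_i$ and set $\boldsymbol{v} = u_{i^\star}$; this is a unit vector in $\Imm(A)$ by the first step, and $\boldsymbol{v}^\transp A \boldsymbol{v} = \lambda_{i^\star} = \lambda^+$. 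Combining the two bounds shows the infimum equals $\lambda^+$ and is attained, giving the minimum as stated. No real obstacle is expected: the only subtlety is to confirm that $\Imm(A)$ coincides with the span of positive-eigenvalue eigenvectors, which follows cleanly from the spectral decomposition and the symmetry of $A$.
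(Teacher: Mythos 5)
Your proof is correct and takes essentially the same route as the paper's: both rest on the spectral theorem and the identification of $\Imm(A)$ with the span of the eigenvectors having nonzero eigenvalues (equivalently, $\ker(A)^\perp$ for symmetric $A$). The only difference is cosmetic — the paper quotes the recursive Courant--Fischer characterization of eigenvalues, while you unwind that step into an explicit Rayleigh-quotient computation with a matching attainment argument, which makes your version slightly more self-contained.
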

\begin{proof}
	Let $\lambda_1\le \lambda_2 \le \dots \le \lambda_d$ be the eigenvalues of $A$ with corresponding orthonormal eigenvectors $u_1,u_2,\dots,u_d$. The eigenvalues can be computed iteratively as:
	\begin{equation}
	\lambda_i = \min_{\substack{\boldsymbol{v}\in\{u_1,\dots,u_{i-1}\}^\bot\\\norm{\boldsymbol{v}}=1}}\boldsymbol{v}^\transp A\boldsymbol{v}.
	\end{equation}
	When $\lambda_{i}$ is the smallest nonzero eigenvalue of $A$, $\spann\{u_1,\dots,u_{i-1}\}$ is precisely the solution set of:
	\begin{equation}
	A\boldsymbol{v} = 0\boldsymbol{v} = 0,
	\end{equation}
	which is $\ker(A)$. Hence $\{u_1,\dots,u_{i-1}\}^\bot = \ker(A)^\bot = \Imm(A)$ since $A$ is symmetric.
\end{proof}

\end{document}